\def\r{r}
\def\P{\mathbb{P}}
\def\tod{\overset{d}{\to}}
\def\toas{\overset{a.s.}{\to}}
\newcommand{\tabincell}[2]{\begin{tabular}{@{}#1@{}}#2\end{tabular}}
\definecolor{caribbeangreen}{rgb}{0.0, 0.8, 0.6}
\newtheorem{thm}{Theorem}[section]
\newtheorem{lem}{Lemma}[section]
\newtheorem{cor}{Corollary}[section]
\newtheorem{asmp}{Assumption}[section]
\newtheorem{rem}{Remark}[section]
\newtheorem{example}{Example}[section]
\def\1{\bm{1}}
\DeclareMathAlphabet{\mathsfit}{\encodingdefault}{\sfdefault}{m}{sl}
\SetMathAlphabet{\mathsfit}{bold}{\encodingdefault}{\sfdefault}{bx}{n}
\def\M{{\bf M}}
\def\0{{\bf 0}}
\def\1{{\bf 1}}
\def\AM{{\mathcal A}}
\def\DM{{\mathcal D}}
\def\NM{{\mathcal N}}
\def\OM{{\mathcal O}}
\def\PM{{\mathcal P}}
\def\SM{{\mathcal S}}
\def\TM{{\mathcal T}}
\def\UM{{\mathcal U}}
\def\VM{{\mathcal V}}
\def\XM{{\mathcal X}}
\def\RB{{\mathbb R}}
\def\EB{{\mathbb E}}
\def\ZB{{\mathbb Z}}
\def\PB{{\mathbb P}}
\def\argmax{\mathop{\rm argmax}}
\def\diag{\mathrm{diag}}
\title{
Towards Theoretical Understandings of Robust Markov Decision Processes: Sample Complexity and Asymptotics
}
\author{
Wenhao Yang\thanks{Academy for Advanced Interdisciplinary Studies, Peking University; email: \texttt{yangwenhaosms@pku.edu.cn}. } \\
\and
Liangyu Zhang\thanks{Academy for Advanced Interdisciplinary Studies, Peking University; email: \texttt{zhangliangyu@pku.edu.cn}. } \\
\and
Zhihua Zhang\thanks{School of Mathematical Sciences, Peking University; email: \texttt{zhzhang@math.pku.edu.cn}. } \\
}
\begin{document}



\maketitle
\begin{abstract}
 In this paper, we study  the non-asymptotic and asymptotic performances of the optimal robust policy and value function of robust Markov Decision Processes (MDPs), where the optimal robust policy and value function are estimated from a generative model.
 While prior work focusing on non-asymptotic performances of robust MDPs is restricted in the setting of the KL uncertainty set and $(s,a)$-rectangular assumption, we improve their results and also consider other uncertainty sets, including the $L_1$ and $\chi^2$ balls. Our results show that when we assume $(s,a)$-rectangular on uncertainty sets, the sample complexity is about $\widetilde{\OM}\left(\frac{|\SM|^2|\AM|}{\varepsilon^2\rho^2(1-\gamma)^4}\right)$. In addition, we  extend our results from the $(s,a)$-rectangular assumption to the $s$-rectangular assumption. In this scenario, the sample complexity varies with the choice of uncertainty sets and is generally larger than the case under the $(s,a)$-rectangular assumption. Moreover, we also show that the optimal robust value function is asymptotically normal with a typical rate $\sqrt{n}$ under the $(s,a)$ and $s$-rectangular assumptions from both theoretical and empirical perspectives.
\end{abstract}




\section{Introduction}
\label{sec: intro}
Reinforcement Learning (RL) is a machine learning paradigm that addresses sequential decision-making problems in an unknown environment. Unlike the supervised learning scenario in which a labeled training dataset is provided, in RL the agent collects information by interacting with the environment through a course of actions.    
In addition to its success in empirical performance \cite{haarnoja2018soft,mnih2015human,mnih2016asynchronous, silver2016mastering}, several works \cite{jin2018q, jin2020provably, duan2020minimax} provide insightful and solid theoretical understandings of RL. RL is typically formulated as the Markov Decision Processes (MDPs) problem \cite{puterman2014markov}. The difficulty of solving an MDP is 
primarily attributable to the inexact knowledge of the reward $R$ and transition probability $P$. To address the challenge, an alternative approach resorts to offline methods, where the agent only has access to a given explorable dataset generated by a strategy.
Many practical deep RL algorithms employ the offline method and achieve state-of-the-art success empirically \cite{mnih2015human,lillicrap2015continuous, fujimoto2019off}. 
However, it often takes incredibly large datasets to make modern RL algorithms work.
The matter of large sample size greatly hinders the application of RL in areas like policy-making, finance, and healthcare, where it is extremely expensive or even impossible to acquire such a large amount of data.
Recently, there are many works focusing on sample efficiency of offline RL from a theoretical perspective.
Some prior works have provided solid results on model-free offline methods \cite{chen2019information, agarwal2020optimistic, duan2021risk} while others have considered model-based approaches \cite{sidford2018near, xie2019towards, yin2020asymptotically,yin2020near}. 
Through these theoretical efforts, 
sample-efficiently learning a near-optimal policy can be guaranteed, i.e., the sample complexity is polynomial in parameters of the underlying MDPs. 

In reality, sometimes the environment used to generate the offline dataset may be different from the real-world MDPs, resulting in suboptimal performance of the policy obtained by RL algorithms.
A well-known example is \textit{the sim-to-real gap} \cite{peng2018sim, zhao2020sim}, which suggests that an RL-based robot controller trained in a simulated environment may perform poorly in the real world.
A similar phenomenon also occurs in application scenarios such as healthcare and finance problems.
For example, we may seek a dynamic treatment regime that would be deployed in hospital \textit{A} using RL algorithms. However, the only available dataset is collected in hospital \textit{B}.
Naively performing RL algorithms with the given dataset and deploying the resulting regime in hospital \textit{A} may cause bad outcomes.
In addition, \citet{mannor2004bias} also showed that the value function might be sensitive to estimation errors of reward and transition probability, which means a small perturbation of reward and transition probability could incur a significant change in the value function.
Then, robust MDPs \cite{iyengar2005robust,nilim2005robust} have been proposed to handle these issues, where the transition probability is allowed to take values in an uncertainty set (or ambiguity set). In this way, the solution of robust MDPs is less sensitive to model estimation errors with a properly chosen uncertainty set $\widehat{\PM}$.


In order to solve the robust MDP problem efficiently, one commonly makes 
the assumption that the uncertainty set $\widehat{\PM}$ is either $(s,a)$-rectangular or $s$-rectangular \cite{iyengar2005robust,nilim2005robust, wiesemann2013robust}, which stand for  the transition probability $P$ taking values independently for each state-action $(s,a)$ pair or each state $s\in\SM$, respectively. Compared with $(s,a)$-rectangular assumption, $s$-rectangular is a more general assumption to alleviate conservative policies and can provide stronger robustness guarantees \cite{wiesemann2013robust}. 
Without these two assumptions, \citet{wiesemann2013robust}  proved that solving robust MDPs could be NP-hard. 
However, under the $(s,a)$-rectangular or $s$-rectangular assumptions, the near-optimal robust policy and value function can be obtained efficiently. With these assumptions, \citet{iyengar2005robust} and \citet{nilim2005robust} proposed multiple choices of uncertainty sets under rectangular assumptions mentioned above, all of which are specific cases of $f$-divergence balls located around the estimated transition probability, including the $L_1$ distance, $\chi^2$ and KL divergence balls. The most widely studied case is the so-called $L_1$ uncertainty set \cite{petrik2014raam, ghavamzadeh2016safe, ho2020partial, behzadian2021optimizing} because it can be solved by the powerful linear programming methods.

In recent years, many works \cite{lim2013reinforcement, goyal2018robust,ho2020partial} have come up with efficient algorithms to solve robust MDPs, obtaining the optimal robust policy and value function.
However, little theory has been developed on the statistical performances of the optimal robust policy and value function. Specifically, two core questions remain open: (a) How many samples are sufficient to guarantee the accuracy of the robust estimators? (b) Is it possible to make statistical inferences from the robust estimators?
In this paper, we figure out both the finite-sample and asymptotic performances of the optimal robust policy and value function in different scenarios and answer these questions conclusively. Specifically, our non-asymptotic results in Section 3 show that
sample-efficient reinforcement learning is possible in robust MDPs, which breaks the misconception that robust MDPs are exponentially hard in terms of effective horizon $(1-\gamma)^{-1}$ \cite{zhou2021finite}. 
And our asymptotic results in Section 4 allow us to make statistical inferences from the robust estimators.



\subsection{Contributions}
\label{subsec: contribution}


Let ${V}_r^\pi(\mu)$ be the robust value function of policy $\pi$ under uncertainty set $\PM$ (unknown) and initial state distribution $\mu$, and $\widehat{V}_r^\pi(\mu)$ be its empirical version under estimated uncertainty set $\widehat{\PM}$.
We  denote by $\widehat{\pi}^*\in\argmax_\pi \widehat{V}_r^\pi(\mu)$  the optimal robust policy, and by $\widehat{V}_r^*(\mu):=\max_\pi \widehat{V}_r^\pi(\mu)$  the optimal robust value function.  
Rather than providing a new efficient algorithm to solve robust MDPs, we take efforts to study the statistical performances of optimal robust value function $\widehat{V}_r^*(\mu)$ and robust policy $\widehat{\pi}^*$ from both finite-sample and asymptotic perspectives. We mainly consider the frequently used data generating approach (i.e., generative models), from which we are able to estimate the transition probability $\widehat{P}$. Moreover, we consider three different uncertainty sets $\PM$: $L_1$, $\chi^2$, and KL balls under both $(s,a)$ and $s$-rectangular assumptions, which are frequently applied in the field of robust MDPs. Although all of the uncertainty sets can be cast into the family of so-called $f$-divergence uncertainty sets, we find it difficult to analyze their finite-sample performance by a general calculation technique. Thus, we analyze the statistical performance of different settings separately and summarize our results in the following parts. 
For practitioners, our sample complexity results indicate how much data is enough for learning a near-optimal policy in a robust MDP, thus guiding the data-collection process.
Our sample complexity results can also serve as theoretical guarantees for the optimality of the learned policy, i.e. with a fixed dataset we may describe the minimum level of optimality for our learned policy.
In addition, our asymptotic results allows practitioners to make statistical inference for optimal robust value functions.
Here are some take-home messages from our results: 
\begin{enumerate}[label=(\alph*)]
\item Sample-efficient results can be guaranteed with $(s,a)$-rectangular or $s$-rectangular assumptions in robust MDPs (upper bound of finite-sample results);
\item Robust MDPs can have a lower sample complexity than original MDPs when the size of uncertainty set is large (upper and lower bounds of finite-sample results);
\item Robust MDPs under $s$-rectangular assumption require more samples than that with $(s,a)$-rectangular assumption (upper bound of finite-sample results);
\item Statistical inference for optimal robust value function is possible (asymptotic results).
\end{enumerate}


\paragraph*{\textbf{Finite-sample results}}
A key criterion of evaluating the finite-sample performance is the following deviation:
\begin{align}
    \label{eq: gap}
    \max_\pi \; V_r^\pi(\mu)- V_r^{\widehat{\pi}^*}(\mu).
\end{align}
In this paper, we use a uniform convergence analysis to control Eqn.~\eqref{eq: gap}:
\begin{align}
    \label{eq: gap_uniform}
    \max_\pi V_r^\pi(\mu)- V_r^{\widehat{\pi}^*}(\mu)\le 2\sup_{\pi\in\Pi}\left|V_r^\pi(\mu)-\widehat{V}_r^\pi(\mu)\right|.
\end{align}
When the dataset is obtained by a generative model, we present the sample complexity of achieving an $\varepsilon$ deviation bound of Eqn.~\eqref{eq: gap} in different settings in Table~\ref{tab: result}. The overall performance among the different uncertainty sets is nearly the same up to some logarithmic factors in the $(s,a)$-rectangular assumption, which is about $\widetilde{\OM}\left(\frac{|\SM|^2|\AM|}{\varepsilon^2\rho^2(1- \gamma)^4}\right)$\footnote{We use $\widetilde{\OM}(\cdot)$ and $\widetilde{\Omega}(\cdot)$ to hide polylogarithmic factors and universal constants.}. Compared to the most related work \cite{zhou2021finite}, which provided an exponential large sample complexity of robust MDPs, we break the misconception that robust MDPs are exponentially harder than original MDPs in terms of $1/(1-\gamma)$. We leave the detailed discussion of comparison in related work section.

\begin{table}[t!]
    \centering
    \scalebox{1.0}{
    \begin{tabular}{ccc}
    \toprule
    Uncertainty Set & $(s,a)$-rectangular (Theorem~\ref{thm: finite-sa}) & $s$-rectangular (Theorem~\ref{thm: finite-s}) \\
    \toprule
    $L_1$ & $\widetilde{\OM}\left(\frac{|\SM|^2|\AM|(2+\rho)^2}{\varepsilon^2\rho^2(1-\gamma)^4}\right)$ & $\widetilde{\OM}\left(\frac{|\SM|^2|\AM|^2(2+\rho)^2}{\varepsilon^2\rho^2(1-\gamma)^4}\right)$\\

    $\chi^2$ & $\widetilde{\OM}\left(\frac{|\SM|^2|\AM|(1+\rho)^2}{\varepsilon^2(\sqrt{1+\rho}-1)^2(1-\gamma)^4}\right)$ & $\widetilde{\OM}\left(\frac{|\SM|^2|\AM|^3(1+\rho)^2}{\varepsilon^2(\sqrt{1+\rho}-1)^2(1-\gamma)^4}\right)$\\

    KL & $\widetilde{\OM}\left(\frac{|\SM|^2|\AM|}{\varepsilon^2\rho^2\underline{p}^2(1-\gamma)^4}\right)$ & $\widetilde{\OM}\left(\frac{|\SM|^2|\AM|^2}{\varepsilon^2\rho^2\underline{p}^2(1-\gamma)^4}\right)$\\
    \bottomrule
    \end{tabular}
    }
    \caption{The sample complexity of achieving $\varepsilon$ deviation bound~\eqref{eq: gap} in the generative model setting (Theorem~\ref{thm: finite-sa}). Here $|\SM|$ and $|\AM|$ are the sizes of the state space and action space, $\gamma \in (0, 1)$ is a discount factor, $\rho$ represents the size of uncertainty set in Examples~\ref{eg: f-set} and~\ref{eg: f-set-s}, and $\underline{p}=\min_{P^*(s'|s,a)>0}P^*(s'|s,a)$.}
    \label{tab: result}
\end{table}

We also derive sample complexity results under $s$-rectangular assumption, whose theoretical properties are never studied before while it is a significant setting in robust MDPs \cite{wiesemann2013robust}. Notably, the sample complexity would enlarge when we assume the uncertainty sets satisfy the $s$-rectangular assumption in Table~\ref{tab: result}. The main difference is caused by the fact that the optimal robust policy is deterministic \cite{nilim2005robust} in the $(s,a)$-rectangular setting while stochastic \cite{wiesemann2013robust} in the $s$-rectangular setting. 
Thus the uniform bound over the class of all possible policies (including stochastic and deterministic policies) could be worse than that over the class of deterministic policies.

We also extend our analysis from estimation by a generative model to estimation by an offline dataset,
which is generated by a given behavior occupancy measure. As long as the concentrability assumption given in \cite{chen2019information} holds, the result of sample complexity only changes by a factor of the concentrability coefficient, which can be referred to Table~\ref{tab: result-off}.

Lastly, we  show that the sample complexity lower bounds of robust MDPs are $\widetilde{\Omega}\left(\frac{|\SM||\AM|(1-\gamma)}{\varepsilon^2}\notag\right. \\ \left.\min\left\{\frac{1}{(1-\gamma)^4},\frac{1}{\rho^4}\right\}\right)$ for the $L_1$ ball and $\widetilde{\Omega}\left(\frac{|\SM||\AM|}{\varepsilon^2(1-\gamma)^2}\min\left\{\frac{1}{1-\gamma},\frac{1}{\rho}\right\}\right)$ for the $\chi^2$ ball, but the lower bound of the KL uncertainty set is still lack of explicit expression. Both the upper and lower bound results imply that the robust MDPs can have a lower sample complexity than original MDPs with a proper size $\rho$ of uncertainty set.

\begin{table}[t!]
    \centering
    \scalebox{1.0}{
    \begin{tabular}{ccc}
    \toprule
    Uncertainty Set & $(s,a)$-rectangular (Theorem~\ref{thm: finite-sa-off}) & $s$-rectangular (Theorem~\ref{thm: finite-s-off}) \\
    \toprule
    $L_1$ & $\widetilde{\OM}\left(\frac{|\SM|(2+\rho)^2}{\nu_{\min}\varepsilon^2\rho^2(1-\gamma)^4}\right)$ & $\widetilde{\OM}\left(\frac{|\SM||\AM|(2+\rho)^2}{\nu_{\min}\varepsilon^2\rho^2(1-\gamma)^4}\right)$\\
    $\chi^2$ & $\widetilde{\OM}\left(\frac{|\SM|(1+\rho)^2}{\nu_{\min}\varepsilon^2(\sqrt{1+\rho}-1)^2(1-\gamma)^4}\right)$ & $\widetilde{\OM}\left(\frac{|\SM||\AM|^2(1+\rho)^2}{\nu_{\min}\varepsilon^2(\sqrt{1+\rho}-1)^2(1-\gamma)^4}\right)$\\
    KL & $\widetilde{\OM}\left(\frac{|\SM|}{\nu_{\min}\varepsilon^2\rho^2\underline{p}^2(1-\gamma)^4}\right)$ & $\widetilde{\OM}\left(\frac{|\SM||\AM|}{\nu_{\min}\varepsilon^2\rho^2\underline{p}^2(1-\gamma)^4}\right)$\\
    \bottomrule
    \end{tabular}
    }
    \caption{The sample complexity of achieving $\varepsilon$ deviation bound~\eqref{eq: gap} in the offline dataset. 
    Here $\nu_{\min}=\min_{s,a,\nu(s,a)>0}\nu(s,a)$}.
    \label{tab: result-off}
\end{table}

\paragraph*{\textbf{Asymptotic results}}
Indeed, the finite-sample results only imply that $\widehat{V}_r^*(\mu)$ is $\widetilde{O}_P(1/\sqrt{n})$, where a logarithmic factor of $n$ exists. It is not sufficient to guarantee the convergence rate of $\widehat{V}_r^*(\mu)$ to be $1/\sqrt{n}$. 
Thus, statistical inference from finite-sample results is inaccurate and we need more precise asymptotic results for more accurate statistical inference.
Our another contribution is showing that $\widehat{V}_r^*(\mu)$ is $\sqrt{n}$-consistent and also asymptotically normal, and then we can derive statistical inference from data directly. We believe our asymptotic results are novel and may open a new approach to statistical inference in robust MDPs.

\paragraph*{\textbf{Empirical studies}}
Finally, we evaluate our theoretical results on simulation experiments. Under the $(s,a)$-rectangular assumption, we follow the classical algorithm Robust Value Iteration \cite{iyengar2005robust}. Under the $s$-rectangular assumption, which is usually more difficult to solve, Bisection Algorithm \cite{ho2018fast} is applied to obtain the near-optimal robust value function. In both settings, our empirical results show that the performance of the near-optimal robust value function is highly correlated with the number of generative samples. In a large sample regime, we also find that the empirical coverage rate (also called  confidence level) of the robust value function is consistent with our theories. We leave more details in Section~\ref{sec: exp}.

\subsection{Related Work}

In this subsection, we summarize prior works on three topics: offline RL, robust MDPs, and distributionally robust optimization (DRO). 

\paragraph*{\textbf{Offline RL}} Two most fundamental problems in offline RL are Off-Policy Evaluation (OPE) and Off-Policy Learning (OPL). 
These two problems assume the agent is unable to interact with the environment but only has access to a given explorable dataset. In terms of OPE whose purpose is to estimate the value function with a given policy, there are mainly three different methods: Direct Method (DM), Importance Sampling (IS) \cite{hirano2003efficient, li2015toward, liu2018breaking, swaminathan2015self}, and Doubly Robust (DR) method \cite{dudik2014doubly, jiang2016doubly, thomas2016data, farajtabar2018more, kallus2020double}. Here we only discuss the most related method DM. For DM, the usual treatment is firstly estimating the reward and transition probability from the offline dataset, and then applying the model estimators to solve the empirical MDP to obtain the value function. \citet{mannor2004bias}  analyzed the bias and variance of the value function estimation by applying frequency estimators of models in tabular MDPs. To tackle large-scale MDP problems, \citet{jong2007model, grunewalder2012modelling}  proposed other methods to estimate the model of dynamics.
\citet{bertsekas1995neuro, dann2014policy, duan2020minimax} then extended the DM method to the setting of value function approximation  by different algorithms, including regression methods. 
It is more challenging to analyze  OPL (or Batch RL)  than OPE, especially  under function approximation settings, because the goal of OPL is to learn the optimal policy from the given dataset. 
When certain assumptions are made, many works have discussed the necessary and sufficient  conditions for an efficient OPL and provided sample-efficient algorithms within different function hypothesis classes \cite{munos2008finite, lazaric2012finite, le2019batch,chen2019information, xie2020batch, wang2020statistical, yin2020near, duan2021risk}.

\paragraph*{\textbf{Robust MDPs\footnote{During the revising process of this manuscript, we noted one very latest paper \cite{panaganti2021sample} appeared online, which only studies the finite-sample results of robust MDPs under the $(s,a)$-rectangular assumption. Compared with their finite-sample results, our corresponding results keep the same as theirs when the $L_1$ uncertainty set is applied. However, our results have a better dependence on $(1-\gamma)^{-1}$ and $\varepsilon$ in cases of both the $\chi^2$ and KL uncertainty sets, whereas
their bound still has an exponential dependence on $(1-\gamma)^{-1}$ when the KL uncertainty set is applied.}}} 
Robust MDPs are related to DM in offline RL. The usual approach to solving robust MDPs is estimating the reward and transition probabilities firstly, and running dynamic programming algorithms to obtain near-optimal solutions \cite{iyengar2005robust,nilim2005robust}. Different from the conventional MDPs \cite{puterman2014markov}, robust MDPs allow transition probability taking values in an uncertainty set \cite{xu2006robustness, mannor2012lightning} and aim to obtain an optimal robust policy that maximizes the worst-case value function.  \citet{xu2009parametric, petrik2012approximate, ghavamzadeh2016safe}  showed that the solutions of robust MDPs are less sensitive to estimation errors. However, the choice of uncertainty sets still matters with the solutions of robust MDPs. \citet{wiesemann2013robust}  concluded that with the $(s,a)/s$-rectangular and convex set assumptions, the computation complexity of obtaining near-optimal solutions is polynomial.

If the uncertainty set is non-rectangular, the problem becomes NP-hard \cite{wiesemann2013robust}. 
With the $(s,a)/s$-rectangular set assumptions, many works have provided efficient learning algorithms to obtain near-optimal solutions in different uncertainty sets \cite{iyengar2005robust, nilim2005robust,wiesemann2013robust, kaufman2013robust, ho2018fast, smirnova2019distributionally, ho2020partial}. In addition, \citet{goyal2018robust} considered a more general assumption called the $r$-rectangular when MDPs have a low dimensional linear representation. And \citet{derman2020distributional} also proposed an extension of robust MDPs  (called distributionally robust MDPs) under a Wasserstein distance. 

There are few works considering the non-asymptotic performances of optimal robust policy as Eqn.~\eqref{eq: gap} states. \citet{si2020distributionally}  considered the asymptotic and non-asymptotic behaviors of the optimal robust solutions in the bandit case when only the KL divergence is applied in the uncertainty set. 
\citet{zhou2021finite} extended the non-asymptotic results of \citet{si2020distributionally} to the infinite horizon RL case.
More importantly, \citet{zhou2021finite}  gave a sample complexity bound $\widetilde{\OM}\left(\frac{C|\SM|^2 }{\nu_{\min}\varepsilon^2\rho^2(1-\gamma)^2}\right)$. However, 
they only considered the settings when the KL divergence is applied in the uncertainty set and the $(s,a)$-rectangular assumption is made, while we consider the settings of the KL ball and other uncertainty sets under both the $(s,a)$ and $s$-rectangular assumptions. In addition,  the result of \citet{zhou2021finite} is exponentially dependent on $\frac{1}{1-\gamma}$, which is hidden in an unspecified parameter $C=\exp(\frac{1}{\beta(1-\gamma)})$. Indeed, the results of \citet{zhou2021finite} gave readers a misconception that robust MDPs are exponentially hard than original MDPs in terms of $1/(1-\gamma)$. In this paper, we break this misconception and prove that robust MDPs can be sample efficient and have lower sample complexity than original MDPs. It is also worth pointing out that an unknown parameter $\beta$ is hidden in $C$, which is an optimal solution for a convex problem and has no explicit expression. In our work, we improve their results to a polynomial and explicit sample complexity bound, which is shown in Tables~\ref{tab: result} and~\ref{tab: result-off}.

\paragraph*{\textbf{Distributionally Robust Optimization (DRO)}} Handling uncertainty sets in robust MDPs is relevant with Distributionally Robust Optimization (DRO), where the objective function is minimized with a worst-case loss function. The core motivation of DRO is to deal with the distribution shift of data using different uncertainty sets. \citet{bertsimas2018data, delage2010distributionally} formulated the uncertainty set  by moment conditions, while \citet{ben2013robust, duchi2021statistics, duchi2016variance, lam2016robust,duchi2018learning} formulated the uncertainty set   by $f$-divergence balls. In addition, \citet{wozabal2012framework, blanchet2019quantifying, gao2016distributionally, lee2017minimax} also considered Wasserstein balls, which is more computationally challenging. The most related work with our results is \citet{duchi2018learning}, which considered the asymptotic and non-asymptotic performances of the empirical minimizer on a population level. However, the result of \citet{duchi2018learning} is mainly built on the supervised learning scenario, while our results are built on robust MDPs. Recently, a line of works \cite{jin2020pessimism, dai2020coindice, xiao2021optimality} has studied the connection between pessimistic RL and DRO.

\section{Preliminaries}
\label{sec: preli}
\paragraph*{Markov Decision Processes} A discounted Markov decision process is defined by a 5-tuple $(\SM,\AM, R, P, \gamma)$, where $\SM$ is the state space and $\AM$ is the action space. In this paper, we assume both $\SM$ and $\AM$ are finite discrete spaces. The reward function satisfies: $R\colon \SM\times\AM\rightarrow[0,1]$,  the transition probability satisfies: $P\colon \SM\times\AM\rightarrow\Delta(\SM)$, where $\Delta(\XM)=\{P\colon \sum_{x\in\XM}P(x)=1, P(x)\ge0\}$ is a set containing all probability measures on a given finite space $\XM$, and $\gamma \in [0, 1)$ is the discount factor. A stationary policy $\pi$ is defined as $\pi\colon \SM\rightarrow\Delta(\AM)$ and the value function of a policy $\pi$ is defined as $V^\pi_P(s)=\EB_{\tau\sim\pi}[\sum_{t=0}^{\infty}\gamma^t R(s_t,a_t)|s_0=s]$, where $\tau\sim\pi$ stands for the trajectory $\tau=(s_0, a_0, s_1, a_1, \ldots)$  generated according to policy $\pi$ and transition probability $P$. Furthermore, if the initial distribution $\mu$ is given, the value function is  $V^\pi_P(\mu)=\EB_{s_0\sim\mu}V^\pi_P(s_0)$. The goal of learning an MDP is to solve the problem $\max_\pi V^\pi_P(s)$ for all $s\in\SM$ or $\max_\pi V^\pi_P(\mu)$. We denote the optimal value $V_P^*(s):=\max_\pi V_P^\pi(s)$.




\paragraph*{Robust Markov Decision Processes} A robust approach to solving MDP is considering the worst MDP case. The robust value function is $V^\pi_{r}(s)=\inf_{P\in\PM}V^\pi_P(s)$, where transition probability $P$ is taken in a given uncertainty set $\PM$. The goal of learning a robust MDP is to solve the problem $\max_\pi \inf_{P\in\PM} V^\pi_P(s)$ for all $s\in\SM$ or $\max_\pi \inf_{P\in\PM} V^\pi_P(\mu)$. We denote the optimal robust value function as $V_r^*(s)=\max_\pi \inf_{P\in\PM} V^\pi_P(s)$.

\paragraph*{Assumptions on Uncertainty Set $\PM$} Even though there are various choices of uncertainty set $\PM$, the existence of a stationary robust optimal policy w.r.t.\ a robust MDP is only guaranteed when some conditions of uncertainty set $\PM$ are satisfied. \citet{iyengar2005robust,nilim2005robust}  proposed the $(s,a)$-rectangular set assumption on uncertainty set $\PM$, which is detailed in Assumption~\ref{asmp: sa-rect}. 

\begin{asmp}[$(s,a)$-rectangular]
    \label{asmp: sa-rect}
    The uncertainty set $\PM$ is called an $(s,a)$-rectangular set if it satisfies:
    \begin{align*}
        \PM=\bigtimes_{(s,a)\in\SM\times\AM}\PM_{s,a},
    \end{align*}
    where $\PM_{s,a}\subseteq \Delta(\SM)$ and ``$\bigtimes$'' represents the Cartesian product.
\end{asmp}
It is shown that the optimal robust policy is stationary and deterministic\footnote{A deterministic policy stands for $\pi(a|s)\in\{0,1\}$ for all $(s,a)\in\SM\times\AM$.} under Assumption~\ref{asmp: sa-rect}. 
In addition, \citet{epstein2003recursive,wiesemann2013robust} proposed an extensive version $s$-rectangular set, which is detailed in Assumption~\ref{asmp: s-rect}. 

\begin{asmp}[$s$-rectangular]
    \label{asmp: s-rect}
    The uncertainty set $\PM$ is called an $s$-rectangular set if it satisfies:
    \begin{align*}
        \PM=\bigtimes_{s\in\SM}\PM_s,
    \end{align*}
    where $\PM_{s}\subseteq \Delta(\SM)^{|\AM|}$ and $\Delta(\SM)^{|\AM|}:=\{(P_a)_{a\in\AM}|P_a\in\Delta(\SM)\text{, for all $a\in\AM$}\}$.
\end{asmp}

It is shown that the optimal robust policy is stationary, while the optimal robust policy could be stochastic\footnote{A stochastic policy stands for $\pi(a|s)\in[0,1]$ for all $(s,a)\in\SM\times\AM$.} instead of deterministic. For a more general uncertainty set, \citet{wiesemann2013robust} mentioned that it could be NP-hard to obtain the optimal robust policy, which could also be non-stationary and stochastic.

\paragraph*{Examples of uncertainty set} Currently, the most frequently used uncertainty sets can all be categorized to the $f$-divergence set as Examples~\ref{eg: f-set} and~\ref{eg: f-set-s} state, where $P^*$ is the center transition probability and $\rho$ determines the size of sets. \citet{iyengar2005robust}  used the $L_1$ uncertainty set when setting $f(t)=|t-1|$. And \citet{nilim2005robust} used the KL uncertainty set when setting $f(t)=t\log t$. In DRO, \citet{duchi2018learning} used a more general form of $f(t)\propto t^k$ where $k>1$, while we only consider $k=2$ in this paper. As we focus on the statistical performances of robust MDPs, we use $\PM_{s,a}(\rho)$, $\PM_{s}(\rho)$ and $\PM$ to represent the uncertainty sets when true transition probability $P^*$ is applied. And we use $\widehat{\PM}_{s,a}(\rho)$, $\widehat{\PM}_s(\rho)$ and $\widehat{\PM}$ to represent the uncertainty sets when estimated transition probability $\widehat{P}$ is applied.


\begin{example}[$f$-divergence under the $(s,a)$-rectangular assumption]
    \label{eg: f-set}
    For each $(s,a)$ pair, we denote the center probability by $P^*(\cdot|s,a)$ and the size of the set by $\rho>0$. The $f$-divergence $(s,a)$-rectangular set is defined by:
    \begin{align*}
        \PM_{s,a}(\rho)=\left\{P\in\Delta(\SM)\Bigg{|}P\ll P^*(\cdot|s,a)\text{\footnotemark},\hspace{2pt}\sum_{s'\in\SM}f\left(\frac{P(s')}{P^*(s'|s,a)}\right)P^*(s'|s,a)\le\rho\right\}.
    \end{align*}
    \footnotetext{For any two probabilities $P,Q$ supporting on a finite set $\XM$, $P\ll Q$ stands for $P$ is absolutely continuous w.r.t. $Q$, which means for any $x\in\XM$, $Q(x)=0$ implies $P(x)=0$.}
\end{example}

\begin{example}[$f$-divergence under the $s$-rectangular assumption]
    \label{eg: f-set-s}
    For each $s\in\SM$, we denote the center probability by $P^*(\cdot|s,a)$ and the size of the set by $\rho>0$. The $f$-divergence $s$-rectangular set is defined by:
    \begin{align*}
        \PM_{s}(\rho)=\left\{P\in\Delta(\SM)^{|\AM|}\Bigg{|}P(\cdot|a)\ll P^*(\cdot|s,a),\hspace{2pt}\sum_{s'\in\SM, a\in\AM}f\left(\frac{P(s'|a)}{P^*(s'|s,a)}\right)P^*(s'|s,a)\le|\AM|\rho \right\}.
    \end{align*}
\end{example}

 \paragraph*{Connection with Non-robust MDPs} In our settings (Examples~\ref{eg: f-set} and \ref{eg: f-set-s}), the parameter $\rho$ controls the difference between robust value function $V_r^\pi$ and non-robust value function $V^\pi$. Intuitively, we would expect a small difference for a small $\rho$, which is quantified by the following theorem.
\begin{thm}
    \label{thm: non-robust-difference}
    If there exists a monotonically increasing and concave function $h(t)\colon \RB_+\to\RB_+$ such that for any probability distributions $P, Q\in\Delta(\SM)$ with $P\ll Q$:
    \begin{align}
        \label{cond: f_h}
        \sum_{s\in\SM} |P(s)-Q(s)|\le h\left(\sum_{s\in\SM} f\left(\frac{P(s)}{Q(s)}\right)Q(s)\right),
    \end{align}
    
    then for any fixed policy $\pi$, we have:
    \begin{align*}
        \left\|V_r^\pi - V^\pi_{P^*}\right\|_\infty\le
        \begin{cases}
            \frac{\gamma h(\rho)}{(1-\gamma)^2},\quad & \mbox{if } \; \text{Example~\ref{eg: f-set} is applied}, \\
            \frac{\gamma |\AM|h(\rho)}{(1-\gamma)^2}, \quad & \mbox{if } \;  \text{Example~\ref{eg: f-set-s} is applied}.
        \end{cases}
    \end{align*}
    Specifically, if we use $f(t)=|t-1|$, $(t-1)^2$, and $t\log t$, respectively, then $h(t)=t$, $\sqrt{t}$, and $\sqrt{2t}$, respectively\footnote{The specific result is obtained by Cauchy-Schwarz inequality and Pinsker's inequality, see \citet{sason2016f} for details.}.
\end{thm}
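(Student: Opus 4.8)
The plan is to reduce everything to a pointwise bound between the robust Bellman operator and the non-robust Bellman operator, and then propagate that bound through a contraction argument. Fix a policy $\pi$ and recall that $V_r^\pi$ is the fixed point of the robust Bellman operator $T_r^\pi$ defined by $(T_r^\pi V)(s) = \sum_a \pi(a|s)\big(R(s,a) + \gamma \inf_{P \in \PM_{s,a}(\rho)} \sum_{s'} P(s')V(s')\big)$ under Example~\ref{eg: f-set}, and $V^\pi_{P^*}$ is the fixed point of the usual operator $T^\pi$ with $P^*(\cdot|s,a)$ in place of the infimum. Both operators are $\gamma$-contractions in $\|\cdot\|_\infty$, so by the standard fixed-point perturbation inequality $\|V_r^\pi - V^\pi_{P^*}\|_\infty \le \frac{1}{1-\gamma}\sup_V \|T_r^\pi V - T^\pi V\|_\infty$, where the supremum is taken over $V$ with $\|V\|_\infty \le \frac{1}{1-\gamma}$ (which suffices since both fixed points lie in that ball, rewards being in $[0,1]$).

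The first main step is to bound, for a fixed $(s,a)$ and a fixed $V$ with $\|V\|_\infty \le \frac{1}{1-\gamma}$, the quantity $\big|\inf_{P \in \PM_{s,a}(\rho)} \sum_{s'} P(s')V(s') - \sum_{s'} P^*(s'|s,a)V(s')\big|$. Since $P^*(\cdot|s,a) \in \PM_{s,a}(\rho)$ (the center is always feasible, as $f(1) = 0 \le \rho$), the infimum is at most $\sum_{s'} P^*(s'|s,a)V(s')$, so this reduces to lower-bounding the infimum. For any feasible $P$, Hölder's inequality gives $\big|\sum_{s'}(P(s') - P^*(s'|s,a))V(s')\big| \le \|V\|_\infty \sum_{s'}|P(s') - P^*(s'|s,a)| \le \frac{1}{1-\gamma} h\big(\sum_{s'} f(P(s')/P^*(s'|s,a))P^*(s'|s,a)\big) \le \frac{h(\rho)}{1-\gamma}$, where the middle inequality is exactly hypothesis~\eqref{cond: f_h} and the last uses monotonicity of $h$ together with the divergence constraint $\le \rho$. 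Taking the infimum over $P$ preserves this bound, so $\sup_V\|T_r^\pi V - T^\pi V\|_\infty \le \frac{\gamma h(\rho)}{1-\gamma}$. Combining with the contraction inequality yields $\|V_r^\pi - V^\pi_{P^*}\|_\infty \le \frac{\gamma h(\rho)}{(1-\gamma)^2}$, which is the Example~\ref{eg: f-set} case.

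For Example~\ref{eg: f-set-s}, the argument is identical in structure but the constraint couples the $|\AM|$ action-components of $P$ with a budget of $|\AM|\rho$. Here the robust operator is $(T_r^\pi V)(s) = \inf_{(P_a)_a \in \PM_s(\rho)} \sum_a \pi(a|s)\big(R(s,a) + \gamma\sum_{s'}P_a(s')V(s')\big)$. For any feasible $(P_a)_a$, summing the per-action Hölder bounds gives $\sum_a \pi(a|s)\big|\sum_{s'}(P_a(s') - P^*(s'|s,a))V(s')\big| \le \frac{1}{1-\gamma}\sum_a \pi(a|s)\,h(\delta_a)$ where $\delta_a := \sum_{s'} f(P_a(s')/P^*(s'|s,a))P^*(s'|s,a) \ge 0$ and $\sum_a \delta_a \le |\AM|\rho$. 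By concavity of $h$ and $h(0) \ge 0$ one has $h(\delta_a) \le h(\sum_b \delta_b)$ is too crude; instead use $\sum_a \pi(a|s) h(\delta_a) \le \sum_a h(\delta_a) \le |\AM| h\big(\frac{1}{|\AM|}\sum_a \delta_a\big) \le |\AM| h(\rho)$ by Jensen's inequality applied to the concave $h$. This gives $\sup_V\|T_r^\pi V - T^\pi V\|_\infty \le \frac{\gamma |\AM| h(\rho)}{1-\gamma}$ and the contraction step finishes it. The final sentence of the theorem is then immediate: for $f(t) = |t-1|$ condition~\eqref{cond: f_h} holds with $h(t) = t$ trivially; for $f(t) = (t-1)^2$ and $f(t) = t\log t$ it follows from Cauchy--Schwarz and Pinsker's inequality respectively, and each such $h$ is monotone increasing and concave on $\RB_+$.

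The step I expect to be the main obstacle is getting the $s$-rectangular constant exactly right — specifically justifying the passage from $\sum_a \pi(a|s) h(\delta_a)$ to $|\AM| h(\rho)$ without losing a factor, since the naive route through $\sum_a \pi(a|s) = 1$ would give $h(|\AM|\rho)$ rather than $|\AM| h(\rho)$, and the two differ when $h$ is strictly concave. The fix is to drop the weights $\pi(a|s) \le 1$, bound $\sum_a h(\delta_a) \le |\AM| h(\bar\delta)$ with $\bar\delta = \frac{1}{|\AM|}\sum_a\delta_a \le \rho$ by Jensen, and then use monotonicity of $h$; one should double-check this is the route the authors intend, and that $h(0) \ge 0$ (true since $h\colon\RB_+\to\RB_+$) is all that's needed to handle the case where some $\delta_a = 0$.
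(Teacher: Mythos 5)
Your proposal is correct and follows essentially the same route as the paper: a fixed-point perturbation bound contributing the $\frac{\gamma}{(1-\gamma)^2}$ factor, condition~\eqref{cond: f_h} applied per $(s,a)$ pair in the rectangular case, and in the $s$-rectangular case exactly the paper's trick of dropping $\pi(a|s)\le 1$ and applying Jensen's inequality to the concave $h$ before using monotonicity, which is how the constant $|\AM|h(\rho)$ (rather than $h(|\AM|\rho)$) is obtained.
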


\paragraph*{Performance gap of Robust MDPs} We usually do not have access to the true transition probability $P^*$ but an unbiased estimated transition probability $\widehat{P}$ can be obtained from a dataset. The empirical optimal robust policy is given by $\widehat{\pi}^*=\argmax_\pi \widehat{V}_r^\pi(\mu)$, where $\widehat{V}_r^\pi(\mu)=\inf_{P\in\widehat{\PM}}V^\pi_P(\mu)$. To test the performance of empirical solution $\widehat{\pi}^*$, we evaluate it by the following performance gap:
\begin{align}
    \label{eq:dev-bound}
    \max_\pi V_r^\pi(\mu)-V_r^{\widehat{\pi}^*}(\mu).
\end{align}
Following the uniform convergence argument in statistical learning theory \cite{mohri2018foundations, hastie2009elements}, we can bound this gap by a uniform excess risk \cite{mohri2018foundations} as Lemma~\ref{lem: uni-dev-pi} states. For any fixed policy $\pi$, we note that $\widehat{V}_r^\pi$ is a fixed point of robust Bellman operator $\widehat{\TM}_r^\pi$, which is similar to the non-robust case \cite{puterman2014markov}. Thus, we can further bound the uniform excess risk as Lemma~\ref{lem: uni-dev-v} states. Thus, as long as $\widehat{\TM}_r^\pi V$ approximates $\TM_r^\pi V$ with enough samples for fixed $V\in\VM$ and $\pi\in\Pi$, we can bound the supreme of the uniform excess risks by union bound over $\Pi$ and $\VM$.
\begin{lem}
    \label{lem: uni-dev-pi}
    Denote $\widehat{\pi}^*=\argmax_{\pi\in\Pi}\widehat{V}_r^\pi(\mu)$, where $\widehat{V}_r^\pi(\mu)=\inf_{P\in\widehat{\PM}}V^\pi_P(\mu)$ and $\widehat{\PM}$ is the uncertainty set with $\widehat{P}$  applied. Then the following inequality holds:
    \begin{align*}
        0\le \max_\pi V_r^\pi(\mu)-V_r^{\widehat{\pi}}(\mu)\le 2\sup_{\pi\in\Pi}\left|V^\pi_r(\mu)-\widehat{V}^\pi_r(\mu)\right|,
    \end{align*}
    where $\Pi=\Delta(\AM)^{|\SM|}$ contains all probabilities on simplex $\Delta(\AM)$ for each $s\in\SM$.
\end{lem}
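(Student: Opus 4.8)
The statement is a standard ``oracle inequality'' of the kind used throughout statistical learning theory, so the plan is to reduce the performance gap to a uniform deviation by inserting the empirical robust value functions and exploiting the two defining optimality properties: $\pi^*\in\argmax_\pi V_r^\pi(\mu)$ for the population-level problem and $\widehat\pi^*\in\argmax_\pi\widehat V_r^\pi(\mu)$ for the empirical one. The key structural fact I would invoke first is that, under either Assumption~\ref{asmp: sa-rect} or Assumption~\ref{asmp: s-rect}, the optimal robust policy can be taken to be stationary, so both $\pi^*$ and $\widehat\pi^*$ lie in the common policy class $\Pi=\Delta(\AM)^{|\SM|}$; this legitimizes taking a supremum over $\Pi$.

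First I would dispatch the left-hand inequality: since $\widehat\pi^*$ is itself a policy in $\Pi$, by definition of the maximum $V_r^{\widehat\pi^*}(\mu)\le\max_\pi V_r^\pi(\mu)$, giving nonnegativity of the gap. For the right-hand inequality, let $\pi^*$ attain $\max_\pi V_r^\pi(\mu)$ and write
\begin{align*}
\max_\pi V_r^\pi(\mu)-V_r^{\widehat\pi^*}(\mu)
&=\Big(V_r^{\pi^*}(\mu)-\widehat V_r^{\pi^*}(\mu)\Big)
 +\Big(\widehat V_r^{\pi^*}(\mu)-\widehat V_r^{\widehat\pi^*}(\mu)\Big)
 +\Big(\widehat V_r^{\widehat\pi^*}(\mu)-V_r^{\widehat\pi^*}(\mu)\Big).
\end{align*}
The middle bracket is $\le 0$ because $\widehat\pi^*$ maximizes $\pi\mapsto\widehat V_r^\pi(\mu)$ over $\Pi$ and $\pi^*\in\Pi$. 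The first and third brackets are each at most $\sup_{\pi\in\Pi}\big|V_r^\pi(\mu)-\widehat V_r^\pi(\mu)\big|$ (the first directly, the third after taking absolute values). Summing the three estimates yields the claimed factor of $2$.

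There is essentially no deep obstacle here; the only points requiring a word of care are (i) confirming that $\pi^*$ and $\widehat\pi^*$ are both stationary and hence comparable within $\Pi$, which is exactly why the preliminaries recalled the stationarity results of \citet{iyengar2005robust,nilim2005robust,wiesemann2013robust}, and (ii) noting that the argument is agnostic to which uncertainty set (\,$L_1$, $\chi^2$, or KL\,) and which rectangularity assumption is in force, since it uses only the variational/optimality structure and not the geometry of $\PM$. The real work — bounding $\sup_{\pi\in\Pi}|V_r^\pi(\mu)-\widehat V_r^\pi(\mu)|$ itself — is deferred to Lemma~\ref{lem: uni-dev-v} and the per-setting concentration analyses, as the surrounding text indicates.
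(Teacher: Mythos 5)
Your proposal is correct and matches the paper's argument: the paper bounds $\max_\pi V_r^\pi(\mu)-\max_\pi \widehat V_r^\pi(\mu)$ by $\max_\pi|V_r^\pi(\mu)-\widehat V_r^\pi(\mu)|$ and $\widehat V_r^{\widehat\pi}(\mu)-V_r^{\widehat\pi}(\mu)$ by the same supremum, which is exactly your three-term insertion with the nonpositive middle bracket absorbed into the difference of maxima. The left-hand inequality is handled identically (trivially, since $\widehat\pi^*\in\Pi$).
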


\begin{lem}
    \label{lem: uni-dev-v}
    Denoting $V_r^\pi=\left(V_r^\pi(s)\right)_{s\in\SM}$ and $\widehat{V}_r^\pi=\left(\widehat{V}_r^\pi(s)\right)_{s\in\SM}$, we have
    \begin{align*}
        \left\| V_r^\pi-\widehat{V}_r^\pi\right\|_\infty\le\frac{1}{1-\gamma}\sup_{V\in\VM}\left\|\TM_r^\pi V- \widehat{\TM}_r^\pi V\right\|_\infty,
    \end{align*}
    where $\TM^\pi_r V= R^\pi+\gamma\inf_{P\in\PM}P^\pi V $, $\widehat{\TM}^\pi_r V= R^\pi+\gamma\inf_{P\in\widehat{\PM}}P^\pi V $ for any $V\in\VM:=\left[0,\frac{1}{1-\gamma}\right]^{|\SM|}$ and $R^\pi(s):=\sum_{a\in\AM}R(s,a)\pi(a|s)$, $P^\pi(s'|s):=\sum_{a\in\AM}P(s'|s,a)\pi(a|s)$.
\end{lem}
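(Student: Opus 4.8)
The plan is to use a standard fixed-point perturbation argument, exploiting that for a fixed policy $\pi$ the functions $V_r^\pi$ and $\widehat{V}_r^\pi$ are the fixed points of the robust Bellman operators $\TM_r^\pi$ and $\widehat{\TM}_r^\pi$ respectively, and that both of these operators are $\gamma$-contractions with respect to $\|\cdot\|_\infty$. First I would record the two structural facts that make the argument work. (i) Under the rectangularity assumption on $\PM$ (resp.\ $\widehat{\PM}$), the robust Bellman equation holds, i.e.\ $\TM_r^\pi V_r^\pi=V_r^\pi$ and $\widehat{\TM}_r^\pi\widehat{V}_r^\pi=\widehat{V}_r^\pi$; this is the counterpart of the non-robust Bellman equation and is established in \citet{iyengar2005robust,nilim2005robust}. (ii) Because $R(s,a)\in[0,1]$, every value function $V_P^\pi$ takes values in $[0,\tfrac{1}{1-\gamma}]$, and this range is preserved under the infimum over $P$, so $V_r^\pi,\widehat{V}_r^\pi\in\VM=[0,\tfrac{1}{1-\gamma}]^{|\SM|}$.

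Next I would check the contraction property. For any $V,V'\in\VM$ and any state $s\in\SM$, using that the row $(P^\pi(s'|s))_{s'\in\SM}$ is a probability vector,
\begin{align*}
\left|\Big(\inf_{P\in\PM}P^\pi V\Big)(s)-\Big(\inf_{P\in\PM}P^\pi V'\Big)(s)\right|
\le\sup_{P\in\PM}\left|\sum_{s'\in\SM}P^\pi(s'|s)\big(V(s')-V'(s')\big)\right|
\le\|V-V'\|_\infty.
\end{align*}
Since the reward term $R^\pi$ cancels, this gives $\|\TM_r^\pi V-\TM_r^\pi V'\|_\infty\le\gamma\|V-V'\|_\infty$, and identically $\|\widehat{\TM}_r^\pi V-\widehat{\TM}_r^\pi V'\|_\infty\le\gamma\|V-V'\|_\infty$.

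Combining these ingredients, the conclusion is a one-line triangle inequality: with $V_r^\pi=\TM_r^\pi V_r^\pi$ and $\widehat{V}_r^\pi=\widehat{\TM}_r^\pi\widehat{V}_r^\pi$,
\begin{align*}
\left\|V_r^\pi-\widehat{V}_r^\pi\right\|_\infty
&\le\left\|\TM_r^\pi V_r^\pi-\widehat{\TM}_r^\pi V_r^\pi\right\|_\infty+\left\|\widehat{\TM}_r^\pi V_r^\pi-\widehat{\TM}_r^\pi\widehat{V}_r^\pi\right\|_\infty\\
&\le\sup_{V\in\VM}\left\|\TM_r^\pi V-\widehat{\TM}_r^\pi V\right\|_\infty+\gamma\left\|V_r^\pi-\widehat{V}_r^\pi\right\|_\infty,
\end{align*}
where I used $V_r^\pi\in\VM$ to enlarge the first term to the supremum and the contraction of $\widehat{\TM}_r^\pi$ on the second. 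Rearranging yields $(1-\gamma)\|V_r^\pi-\widehat{V}_r^\pi\|_\infty\le\sup_{V\in\VM}\|\TM_r^\pi V-\widehat{\TM}_r^\pi V\|_\infty$, which is the stated bound.

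I do not anticipate a real obstacle. The only point deserving care is fact (i): to claim $V_r^\pi$ is the fixed point of the operator $\TM_r^\pi$ written with a single ``$\inf_{P\in\PM}$'' one needs the rectangular structure of $\PM$, which is exactly what allows the worst-case transition to be chosen state-by-state so that the robust Bellman recursion closes; once this is granted, everything else is the routine Banach-type perturbation estimate.
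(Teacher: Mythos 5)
Your argument is correct and coincides with the paper's own proof: both decompose $\|V_r^\pi-\widehat{V}_r^\pi\|_\infty$ via the triangle inequality at $\widehat{\TM}_r^\pi V_r^\pi$, bound the first term by the supremum over $\VM$ and the second by the $\gamma$-contraction of $\widehat{\TM}_r^\pi$, then rearrange. Your additional verification of the contraction property and the fixed-point facts is a fine (if routine) elaboration of steps the paper leaves implicit.
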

\begin{rem}
    \label{rem: deter-r}
    In this paper we consider $\widehat{\TM}_r^\pi V = R^\pi +\gamma\inf_{P\in\widehat{\PM}}P^\pi V$ with a deterministic reward. If $R(s,a)$ is a bounded random variable for each $(s,a)\in\SM\times\AM$, we could easily obtain that $\sup_{s,a}|\widehat{R}(s,a)-\EB R(s,a)|\le \widetilde{\OM}\left(n^{-1/2}\right)$ with high probability by Hoeffding's inequality, which is much smaller than the statistical error incurred by estimation of transition probability.
\end{rem}

Thus, our ultimate goal is to evaluate the supreme of $\left\|\TM_r^\pi V- \widehat{\TM}_r^\pi V\right\|_\infty$ over $\VM$ and $\Pi$.
To do so, we need to estimate the sizes of $\VM$ and $\Pi$ to apply concentration inequalities over the whole sets. 
Noting that $\VM$ is an infinite subset of $\RB^{|\SM|}$, we apply Lemma~\ref{lem: eps-v} to discretize the value space $\VM$ and bound the performance gap. 
To discretize the policy set $\Pi$, we consider two cases. 
When the $(s,a)$-rectangular assumption holds, the optimal robust policy is deterministic, leading to the policy class being finite (i.e., $|\Pi|=|\AM|^{|\SM|}$).
However, when the $s$-rectangular assumption holds, the optimal policy may be stochastic instead of deterministic, which means the policy class is infinite. 
Thus, we need Lemma~\ref{lem: eps-pi} to help us control the deviation. 
We also prove that the covering numbers of $\VM$ and $\Pi$ are bounded as Lemma~\ref{lem: num-eps-pi} states, which is useful to bound the supreme value over $\VM_\varepsilon$ and $\Pi_\varepsilon$.

\begin{lem}
    \label{lem: eps-v}
    Let $\VM_\varepsilon:=\NM(V,\|\cdot\|_\infty,\varepsilon)$ denote the smallest $\varepsilon$-net of $\VM$ w.r.t. norm $\|\cdot\|_{\infty}$, which satisfies: $\forall V\in\VM$ there exists a $V_0\in\VM_\varepsilon$ such that $\|V-V_0\|_\infty\le\varepsilon$. Then, we have:
    \begin{align*}
        \sup_{V\in\VM}\left\|\TM_r^\pi V-\widehat{\TM}_r^\pi V\right\|_\infty\le2\gamma\varepsilon+\sup_{V\in\VM_\varepsilon}\left\|\TM_r^\pi V-\widehat{\TM}_r^\pi V\right\|_\infty.
    \end{align*}
\end{lem}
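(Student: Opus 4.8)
The whole statement is a standard $\varepsilon$-net argument, and the only thing that needs to be checked is that both robust Bellman operators are $\gamma$-Lipschitz with respect to $\|\cdot\|_\infty$. The plan is therefore two-step: first establish the Lipschitz bound, then glue it to the net via the triangle inequality.

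For the first step I would show: for every $V,V'\in\RB^{|\SM|}$ and every fixed $\pi$,
\begin{align*}
  \left\|\TM_r^\pi V-\TM_r^\pi V'\right\|_\infty\le\gamma\left\|V-V'\right\|_\infty,
  \qquad
  \left\|\widehat{\TM}_r^\pi V-\widehat{\TM}_r^\pi V'\right\|_\infty\le\gamma\left\|V-V'\right\|_\infty.
\end{align*}
Since $\TM_r^\pi V=R^\pi+\gamma\inf_{P\in\PM}P^\pi V$, the reward term cancels and it suffices to bound $\bigl|\inf_{P\in\PM}(P^\pi V)(s)-\inf_{P\in\PM}(P^\pi V')(s)\bigr|$ for each $s$. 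Fix $s$ and $\delta>0$, and pick $P_\delta\in\PM$ with $(P_\delta^\pi V')(s)\le\inf_{P\in\PM}(P^\pi V')(s)+\delta$. Because $P_\delta^\pi(\cdot\mid s)$ is a probability distribution on $\SM$, we have $(P_\delta^\pi V)(s)-(P_\delta^\pi V')(s)=\sum_{s'}P_\delta^\pi(s'\mid s)\bigl(V(s')-V'(s')\bigr)\le\|V-V'\|_\infty$, hence $\inf_{P\in\PM}(P^\pi V)(s)\le(P_\delta^\pi V)(s)\le\inf_{P\in\PM}(P^\pi V')(s)+\|V-V'\|_\infty+\delta$. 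Letting $\delta\to0$ and then swapping the roles of $V$ and $V'$ gives $\bigl|\inf_{P}(P^\pi V)(s)-\inf_{P}(P^\pi V')(s)\bigr|\le\|V-V'\|_\infty$, so multiplying by $\gamma$ and taking the supremum over $s$ yields the claim; the argument for $\widehat{\TM}_r^\pi$ is identical with $\PM$ replaced by $\widehat{\PM}$.

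For the second step, fix $V\in\VM$ and choose $V_0\in\VM_\varepsilon$ with $\|V-V_0\|_\infty\le\varepsilon$, which exists by definition of the $\varepsilon$-net. Then by the triangle inequality and the Lipschitz bounds just proved,
\begin{align*}
  \left\|\TM_r^\pi V-\widehat{\TM}_r^\pi V\right\|_\infty
  &\le\left\|\TM_r^\pi V-\TM_r^\pi V_0\right\|_\infty
  +\left\|\TM_r^\pi V_0-\widehat{\TM}_r^\pi V_0\right\|_\infty
  +\left\|\widehat{\TM}_r^\pi V_0-\widehat{\TM}_r^\pi V\right\|_\infty\\
  &\le\gamma\varepsilon+\sup_{V_0\in\VM_\varepsilon}\left\|\TM_r^\pi V_0-\widehat{\TM}_r^\pi V_0\right\|_\infty+\gamma\varepsilon.
\end{align*}
Taking the supremum over $V\in\VM$ on the left-hand side gives the stated inequality.

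I do not expect a genuine obstacle here; the only point requiring minor care is that the infima defining the robust operators need not be attained, which is handled by the $\delta$-near-optimizer device above (alternatively one invokes that a pointwise infimum of a family of $1$-Lipschitz functions is $1$-Lipschitz). Everything else is routine triangle-inequality bookkeeping.
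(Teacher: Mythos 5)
Your proposal is correct and follows essentially the same route as the paper: a triangle-inequality decomposition through the net point $V_0$ combined with the $\gamma$-Lipschitzness of both robust Bellman operators in $\|\cdot\|_\infty$, yielding the $2\gamma\varepsilon$ slack. The paper states the Lipschitz bound without the near-optimizer detail you spell out, but the argument is the same.
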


\begin{lem}
	\label{lem: eps-pi}
    Let $\Pi_\varepsilon:=\NM(\Pi,\|\cdot\|_1, \varepsilon)$ denote the smallest $\varepsilon$-net of $\Pi$ w.r.t. norm $\|\cdot\|_1$, which satisfies: $\forall \pi\in\Pi$ there exists a $\pi_0\in\Pi_\varepsilon$ such that $\|\pi(\cdot|s)-\pi_0(\cdot|s)\|_1\le\varepsilon$ for all $s\in\SM$. Then, we have:
    \begin{align*}
        \sup_{\pi\in\Pi,V\in\VM}\left\|\TM_r^\pi V-\widehat{\TM}_r^\pi V\right\|_\infty\le\frac{2\gamma\varepsilon}{1-\gamma}+\sup_{\pi\in\Pi_\varepsilon,V\in\VM}\left\|\TM_r^\pi V-\widehat{\TM}_r^\pi V\right\|_\infty.
    \end{align*}
\end{lem}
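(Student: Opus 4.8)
The plan is to reduce the supremum over the infinite policy class $\Pi$ to the supremum over its finite $\varepsilon$-net $\Pi_\varepsilon$ by an approximation argument, exactly parallel to the discretization of the value space in Lemma~\ref{lem: eps-v}: replacing $\pi$ by a nearby $\pi_0\in\Pi_\varepsilon$ perturbs each appearance of the robust Bellman operator by at most $O\!\big(\varepsilon/(1-\gamma)\big)$.

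First I would fix an arbitrary $\pi\in\Pi$ and $V\in\VM$ and pick $\pi_0\in\Pi_\varepsilon$ with $\|\pi(\cdot|s)-\pi_0(\cdot|s)\|_1\le\varepsilon$ for every $s\in\SM$ (possible by definition of the net). The triangle inequality gives
\[
    \TM_r^\pi V-\widehat{\TM}_r^\pi V
    = \big(\TM_r^\pi V-\TM_r^{\pi_0}V\big)
    + \big(\TM_r^{\pi_0}V-\widehat{\TM}_r^{\pi_0}V\big)
    + \big(\widehat{\TM}_r^{\pi_0}V-\widehat{\TM}_r^\pi V\big).
\]
The middle term is at most $\sup_{\pi\in\Pi_\varepsilon,V\in\VM}\|\TM_r^\pi V-\widehat{\TM}_r^\pi V\|_\infty$ in $\|\cdot\|_\infty$ since $\pi_0\in\Pi_\varepsilon$. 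For the outer two terms I would use that the reward is deterministic and common to $\TM_r$ and $\widehat{\TM}_r$ (Remark~\ref{rem: deter-r}), so the contributions $R^\pi-R^{\pi_0}$ and $R^{\pi_0}-R^\pi$ cancel between the first and third terms, leaving only the $\gamma$-discounted worst-case expectations
\[
    \gamma\Big(\inf_{P\in\PM}P^\pi V-\inf_{P\in\PM}P^{\pi_0}V\Big)
    + \gamma\Big(\inf_{P\in\widehat{\PM}}P^{\pi_0}V-\inf_{P\in\widehat{\PM}}P^\pi V\Big).
\]

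The core estimate is a policy-perturbation bound on $P^\pi V$. Writing $(P^\pi V)(s)=\sum_{a\in\AM}\pi(a|s)\,(PV)(s,a)$ with $(PV)(s,a):=\sum_{s'}P(s'|s,a)V(s')\in\big[0,\tfrac{1}{1-\gamma}\big]$ (since $V\in\VM$ and $P(\cdot|s,a)\in\Delta(\SM)$), I get, for every $s$ and every feasible $P$,
\[
    \big|(P^\pi V)(s)-(P^{\pi_0}V)(s)\big|
    \le \tfrac{1}{1-\gamma}\,\|\pi(\cdot|s)-\pi_0(\cdot|s)\|_1
    \le \tfrac{\varepsilon}{1-\gamma}.
\]
Because this bound is uniform in $P$, and because $|\inf_x f(x)-\inf_x g(x)|\le\sup_x|f(x)-g(x)|$ for bounded $f,g$, applying it coordinatewise in $s$ — once over the feasible set $\PM$ and once over $\widehat{\PM}$ — shows the two-term expression above has $\|\cdot\|_\infty$-norm at most $\tfrac{2\gamma\varepsilon}{1-\gamma}$. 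Combining with the bound on the middle term and taking the supremum over $\pi\in\Pi$ and $V\in\VM$ yields the claim.

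I do not expect a genuine obstacle here; the only points requiring care are (i) keeping the policy-perturbation bound pointwise in $P$ so that it survives the infimum — this is also why the argument works verbatim under both the $(s,a)$- and $s$-rectangular assumptions, since the feasible set for $P$ is the same regardless of $\pi$ — and (ii) observing the cancellation of the reward terms, which is what keeps the price at $2\gamma\varepsilon/(1-\gamma)$ rather than incurring an extra additive $\varepsilon$.
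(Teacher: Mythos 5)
Your proposal is correct and follows essentially the same route as the paper: the paper works with $G_r^\pi V=\TM_r^\pi V-\widehat{\TM}_r^\pi V$ (in which the reward term cancels automatically), bounds $\|G_r^\pi V-G_r^{\pi_0}V\|_\infty$ by $\tfrac{2\gamma}{1-\gamma}\|\pi(\cdot|s)-\pi_0(\cdot|s)\|_1$ using exactly your pointwise-in-$P$ estimate $\bigl|\sum_{s'}(P^\pi(s'|s)-P^{\pi_0}(s'|s))V(s')\bigr|\le\tfrac{1}{1-\gamma}\|\pi(\cdot|s)-\pi_0(\cdot|s)\|_1$ together with $|\inf f-\inf g|\le\sup|f-g|$, and then takes suprema over the net and over $\Pi$. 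Your three-term decomposition with the observed reward cancellation is just a regrouping of the same algebra, so there is nothing to add.
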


\begin{lem}
    \label{lem: num-eps-pi}
    The cardinalities of $\VM_\varepsilon$ in Lemma~\ref{lem: eps-v} and $\Pi_\varepsilon$ in Lemma~\ref{lem: eps-pi} can be respectively bounded by:
    \begin{align*}
        \left|\VM_\varepsilon\right|\le\left(1+\frac{1}{(1-\gamma)\varepsilon}\right)^{|\SM|}\; \mbox{ and } \; \hspace{4pt}\left|\Pi_\varepsilon\right|\le\left(1+\frac{4}{\varepsilon}\right)^{|\SM||\AM|}.
    \end{align*}
\end{lem}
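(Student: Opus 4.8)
The plan is to bound both covering numbers by explicit constructions that exploit the product structure of $\VM$ and $\Pi$, using in each case that the ambient metric is a maximum of per-coordinate (resp.\ per-state) distances. The general principle I will invoke is elementary: if $(X,d)$ with $d(x,y)=\max_i d_i(x_i,y_i)$ is a product of metric spaces $(X_i,d_i)$ and $\mathcal{N}_i$ is an $\varepsilon$-net of $X_i$, then $\prod_i \mathcal{N}_i$ is an $\varepsilon$-net of $X$ (pick coordinatewise), so $\NM(X,d,\varepsilon)\le\prod_i\NM(X_i,d_i,\varepsilon)$. For $\VM_\varepsilon$, I would write $\VM=[0,\tfrac{1}{1-\gamma}]^{|\SM|}$ with $\|\cdot\|_\infty$ equal to the max of the $|\SM|$ coordinate distances, so it suffices to cover the interval $[0,\tfrac{1}{1-\gamma}]\subseteq\RB$. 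The grid $\{0,\varepsilon,2\varepsilon,\dots,m\varepsilon\}$ with $m=\lfloor\tfrac{1}{(1-\gamma)\varepsilon}\rfloor$ is an $\varepsilon$-net of this interval (any $v$ lies within $\varepsilon$ of $\lfloor v/\varepsilon\rfloor\varepsilon$) and has $m+1\le1+\tfrac{1}{(1-\gamma)\varepsilon}$ points; taking the product net over the $|\SM|$ coordinates and using that $\VM_\varepsilon$ is by definition a smallest $\varepsilon$-net gives $|\VM_\varepsilon|\le\big(1+\tfrac{1}{(1-\gamma)\varepsilon}\big)^{|\SM|}$.

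For $\Pi_\varepsilon$, the metric used in Lemma~\ref{lem: eps-pi} is $\max_{s\in\SM}\|\pi(\cdot|s)-\pi_0(\cdot|s)\|_1$, again a max over states, so by the same product principle it suffices to produce an $\varepsilon$-net of a single simplex $\Delta(\AM)$ in $\ell_1$ and raise its cardinality to the $|\SM|$-th power. To bound $\NM(\Delta(\AM),\|\cdot\|_1,\varepsilon)$ I would use the standard packing–volume argument: let $\mathcal{N}_0\subseteq\Delta(\AM)$ be a maximal $\varepsilon$-separated set, which is automatically an $\varepsilon$-net; the $\ell_1$-balls $B_1(p,\varepsilon/2)$ for $p\in\mathcal{N}_0$ are pairwise disjoint, and since $\Delta(\AM)$ has $\ell_1$-diameter $2$ they are all contained in $B_1(p_0,2+\varepsilon/2)$ for any fixed $p_0\in\mathcal{N}_0$. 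Comparing Lebesgue volumes in $\RB^{|\AM|}$ via $\mathrm{vol}(B_1(x,r))=r^{|\AM|}\mathrm{vol}(B_1(0,1))$ yields $|\mathcal{N}_0|(\varepsilon/2)^{|\AM|}\le(2+\varepsilon/2)^{|\AM|}$, i.e.\ $|\mathcal{N}_0|\le(1+4/\varepsilon)^{|\AM|}$, and hence $|\Pi_\varepsilon|\le(1+4/\varepsilon)^{|\SM||\AM|}$.

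This lemma is essentially routine, and I do not expect a genuine obstacle; the only points needing a little care are verifying that a product of $\varepsilon$-nets is an $\varepsilon$-net for the max-type metric (immediate from the definitions), and checking that the volume comparison for $\Delta(\AM)$ is unaffected by the simplex being lower-dimensional inside $\RB^{|\AM|}$ — the balls $B_1(p,\varepsilon/2)$ are still full-dimensional, still pairwise disjoint because their centers are $\varepsilon$-separated, and still contained in the enlarged ball $B_1(p_0,2+\varepsilon/2)$, so the inequality goes through verbatim.
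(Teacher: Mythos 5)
Your proof is correct, and the treatment of $\VM_\varepsilon$ (coordinatewise grid on $[0,\tfrac{1}{1-\gamma}]$ combined with the product structure under $\|\cdot\|_\infty$) is exactly the paper's argument. For $\Pi_\varepsilon$ you take a genuinely different route on the single-simplex step: the paper reduces covering the simplex $A=\Delta(\AM)$ at scale $\varepsilon$ to covering the solid set $B=\{q\in\RB_+^{|\AM|-1}:\sum_i q_i\le 1\}$ at scale $\varepsilon/2$ (drop the last coordinate, cover, then restore the last coordinate so the point sums to one, paying an extra $\varepsilon/2$ in $\ell_1$), and then invokes Wainwright's Lemma 5.7 for the $\ell_1$-ball; you instead run the maximal-packing-plus-volume-comparison argument directly in $\RB^{|\AM|}$, with full-dimensional $\ell_1$-balls of radius $\varepsilon/2$ centered at an $\varepsilon$-separated subset of the simplex, all contained in a ball of radius $2+\varepsilon/2$ since the simplex has $\ell_1$-diameter $2$. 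Both are ultimately volumetric and land on the same bound $(1+4/\varepsilon)^{|\AM|}$ per state, hence $(1+4/\varepsilon)^{|\SM||\AM|}$ after taking the product over states; your version is self-contained (no external lemma, no lifting step) and correctly notes that the lower dimensionality of the simplex is harmless because the balls themselves are full-dimensional and a maximal separated set is automatically an internal net, while the paper's reduction is what allows citing the textbook bound and would in principle even yield the slightly sharper exponent $|\AM|-1$ per state, though neither the paper nor the lemma statement exploits that.
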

\begin{rem}
    We give a high-level idea on the construction of $\VM_{\varepsilon}$ and $\Pi_{\varepsilon}$ in Lemma~\ref{lem: num-eps-pi}. 
    For $\VM_{\varepsilon}$, we can just divide $[0,1/(1-\gamma)]$ by a factor $\varepsilon$ at each dimension $s\in\SM$. For $\Pi_{\varepsilon}$, we can use $L_1$ balls in $\RB^{|\AM|-1}$ with size $\varepsilon$ to cover the entire policy space.
\end{rem}

\section{Non-asymptotic Results}
\label{sec: gen}
In this section we assume there is access to a generative model such that for any given pair $(s,a)\in\SM\times\AM$, it is able to return an arbitrary value of next states $s'$ following probability $P^*(\cdot|s,a)$. Thus, according to the generated samples, we can construct the empirical estimation of transition probability $P^*$ by:
\begin{align}
    \label{eq: gen}
    \widehat{P}(s'|s,a)=\frac{1}{n}\sum_{k=1}^n \mathbf{1}(X_k^{s,a}=s'),
\end{align}
where $\{X_k^{s,a}\}_{k=1}^n$ are i.i.d.\ samples generated from $P^*(\cdot|s,a)$. Thus, $\widehat{P}$ is an unbiased estimator of $P^*$. With the generative model, our non-asymptotic results are stated in the following theorems. In our proof, as the dual problems differ for different choices of $f$, it is unlikely to obtain a unified concentration result covering all the three settings ($L_1$, $\chi^2$, and KL cases). 
Before presenting theoretical results, the proof sketch is as follows. 
\begin{itemize}
    \item Firstly, for any fixed $\pi\in\Pi$ and $V\in\VM$, we calculate the dual forms of $\TM_r^\pi V (s)$ and $\widehat{\TM}_r^\pi V (s)$ for all $s\in\SM$ for the different uncertainty sets.
    \item Secondly, we bound the concentration error $\left\|\TM_r^\pi V - \widehat{\TM}_r^\pi V\right\|_\infty$ for fixed $\pi\in\Pi$ and $V\in\VM$ from the dual forms. 
    \item Next, as $\left\|\TM_r^\pi V - \widehat{\TM}_r^\pi V\right\|_\infty$ is Lipschitz w.r.t.\ $V\in\VM$ in norm $\|\cdot\|_\infty$, we can derive a union bound over $V\in\VM$ by Lemma~\ref{lem: eps-v}.
    \item Finally, under the $(s,a)$-rectangular assumption, the optimal robust policy is deterministic. Thus, we can derive a union bound over the deterministic policy class, which is finite and satisfies $|\Pi|=|\AM|^{|\SM|}$. However, when we consider the $s$-rectangular assumption, the optimal robust policy may be stochastic, which leads to the policy class $\Pi$ to be infinitely large. According to Lemma~\ref{lem: eps-pi}, we can also derive a union bound over $\Pi$ by taking an $\varepsilon$-net of $\Pi$.
\end{itemize}

\begin{rem}
    We can also extend our non-asymptotic results in this section to the setting with an offline dataset, which can be referred to in Supplementary Appendix 9.
\end{rem}

\subsection{Results with the $(s,a)$-rectangular assumption}

Taking $f(t)=|t-1|$, $f(t)=(t-1)^2$, and $f(t)=t\log t$ in Example~\ref{eg: f-set}, respectively, we have the following results when the $(s,a)$-rectangular assumption holds.

\begin{thm}
    \label{thm: finite-sa}
    Under the $(s,a)$-rectangular assumption, the following results hold:
    \begin{enumerate}[label=(\alph*), ref=\thethm (\alph*)]
        \item \label{thm: l1-union} If $f(t)=|t-1|$ in Example~\ref{eg: f-set} ($L_1$ balls), then with probability $1-\delta$:
        \begin{align*}
            \max_{\pi} V^\pi_r(\mu)- V_r^{\widehat{\pi}}(\mu)\le\frac{2(2+\rho)\gamma\sqrt{|\SM|}}{\rho(1-\gamma)^2\sqrt{2n}}\left(2+\sqrt{\log\frac{4|\SM||\AM|^2 [1+2(2+\rho)\sqrt{2n} ]^2}{\delta(2+\rho)}}\right).            
        \end{align*}
        \item \label{thm: chi2-uni} If $f(t)=(t-1)^2$ in Example~\ref{eg: f-set} ($\chi^2$ balls), then with probability $1-\delta$:
        \begin{align*}
            \max_\pi V_r^\pi(\mu)-V_r^{\widehat{\pi}}(\mu)\le\frac{2C^2(\rho)\gamma\sqrt{|\SM|}}{(C(\rho) {-} 1)(1 {-} \gamma)^2\sqrt{n}}\left(4+\sqrt{2\log\frac{2|\SM||\AM|^2 [1+(C(\rho) {+} 3)\sqrt{n} ]^2}{\delta C^2(\rho)}}\right),
        \end{align*}
        where $C(\rho)=\sqrt{1+\rho}$.
        \item \label{thm: kl-uni} If $f(t)=t\log t$ in Example~\ref{eg: f-set} (KL balls), then with probability $1-\delta$:
        \begin{align*}
            \max_\pi V_r^\pi(\mu)-V_r^{\widehat{\pi}}(\mu)\le\frac{4\gamma\sqrt{|\SM|}}{\rho(1-\gamma)^2\underline{p}\sqrt{n}}\left(1+\sqrt{\log\frac{2|\SM|^2|\AM|^2 [1+\rho\underline{p}\sqrt{n}] }{\delta}}\right),
        \end{align*}
        where $\underline{p}=\min_{P^*(s'|s,a)>0}P^*(s'|s,a)$.
    \end{enumerate}
\end{thm}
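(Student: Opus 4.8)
The plan is to reduce the performance gap to a single uniform concentration statement about the one-step robust expectation $\sigma_{\PM_{s,a}(\rho)}(V):=\inf_{P\in\PM_{s,a}(\rho)}\langle P,V\rangle$, and then to verify that statement case by case through the Lagrangian dual, keeping all dual variables confined to ranges whose size is polynomial in $(1-\gamma)^{-1}$ and $\rho^{-1}$; this confinement is exactly what avoids the exponential blow-up of \citet{zhou2021finite}. First I would combine Lemma~\ref{lem: uni-dev-pi} and Lemma~\ref{lem: uni-dev-v}: under the $(s,a)$-rectangular assumption both $\argmax_\pi V_r^\pi(\mu)$ and $\widehat\pi^*$ are deterministic, so the quantity in Lemma~\ref{lem: uni-dev-pi} is controlled using only deterministic policies, and for a deterministic $\pi$ one has $\TM_r^\pi V(s)=R(s,\pi(s))+\gamma\,\sigma_{\PM_{s,\pi(s)}(\rho)}(V)$ (the reward contributing nothing, by Remark~\ref{rem: deter-r}), so that $\sup_{\pi\ \mathrm{det}}\|\TM_r^\pi V-\widehat\TM_r^\pi V\|_\infty=\gamma\max_{(s,a)}\big|\sigma_{\PM_{s,a}(\rho)}(V)-\sigma_{\widehat\PM_{s,a}(\rho)}(V)\big|$. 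Feeding this into the $\|\cdot\|_\infty$-net of $\VM$ (Lemma~\ref{lem: eps-v}) at scale $\varepsilon\asymp n^{-1/2}$ and the cardinality bound of Lemma~\ref{lem: num-eps-pi}, it remains to prove that with probability $1-\delta$, simultaneously over the $|\SM||\AM|$ pairs $(s,a)$ and the finite net $\VM_\varepsilon$, the deviation $|\sigma_{\PM_{s,a}(\rho)}(V)-\sigma_{\widehat\PM_{s,a}(\rho)}(V)|$ is of the advertised order. (It is this deterministic-policy reduction that makes the $(s,a)$-case cheaper than the $s$-case of Theorem~\ref{thm: finite-s}.)

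For the dual step: since $\rho>0$, $P^*(\cdot|s,a)$ lies in the relative interior of the $f$-divergence ball, so Slater's condition yields $\sigma_{\PM_{s,a}(\rho)}(V)=\sup_{\lambda\ge 0,\ \eta\in\RB}\{\eta-\lambda\rho-\lambda\,\EB_{s'\sim P^*(\cdot|s,a)}[f^*((\eta-V(s'))/\lambda)]\}$, with $f^*$ the convex conjugate of $f$: piecewise affine for $f(t)=|t-1|$, a clipped quadratic for $f(t)=(t-1)^2$, and $f^*(u)=e^{u-1}$ for $f(t)=t\log t$ (in the last case $\eta$ is eliminated, leaving $\sup_{\lambda>0}\{-\lambda\rho-\lambda\log\EB_{P^*(\cdot|s,a)}[e^{-V/\lambda}]\}$). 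Using $0\le V\le(1-\gamma)^{-1}$, I would show the optimizing $\eta$ lies in $[0,(1-\gamma)^{-1}]$ and the optimizing $\lambda$ in an interval of length $O((\rho(1-\gamma))^{-1})$, and likewise for $\widehat P$. Writing the dual objective as $\EB_{P^*(\cdot|s,a)}[g_{\lambda,\eta,V}]$ plus distribution-free terms then gives $|\sigma_{\PM_{s,a}(\rho)}(V)-\sigma_{\widehat\PM_{s,a}(\rho)}(V)|\le\sup_{(\lambda,\eta)}\big|\EB_{P^*(\cdot|s,a)}[g_{\lambda,\eta,V}]-\EB_{\widehat P(\cdot|s,a)}[g_{\lambda,\eta,V}]\big|$ over the admissible range.

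In the $L_1$ and $\chi^2$ cases, on that range $g_{\lambda,\eta,V}$ is bounded by $O((1-\gamma)^{-1})$ and Lipschitz in $(\lambda,\eta)$, while $\sigma(V)$ is $1$-Lipschitz in $V$ for $\|\cdot\|_\infty$ (consistent with Lemma~\ref{lem: eps-v}). Hence for fixed $(s,a,\lambda,\eta,V)$ Hoeffding's inequality gives $|\EB_{P^*(\cdot|s,a)}[g]-\EB_{\widehat P(\cdot|s,a)}[g]|=O\big((1-\gamma)^{-1}\sqrt{\log(1/\delta)/n}\big)$; a union bound over the $|\SM||\AM|$ pairs, an $n^{-1/2}$-net of the $(\lambda,\eta)$ range (of size polynomial in $\sqrt{n}$, the origin of the $[1+c\sqrt{n}]^2$ factors inside the logarithms), and the net $\VM_\varepsilon$, together with the $O(n^{-1/2})$ discretization errors, delivers parts (a) and (b); the coefficient $C(\rho)=\sqrt{1+\rho}$ in (b) comes directly from the weight on the second-moment term in the $\chi^2$ dual.

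The KL case is the main obstacle: $g_{\lambda,V}(s')=-\lambda e^{-V(s')/\lambda-1}$ has dynamic range $e^{(\max V-\min V)/\lambda}$, which explodes as $\lambda\downarrow 0$, and passing the perturbation through this range together with $\EB_{P^*}[e^{-V/\lambda}]\ge\underline{p}\,e^{-\|V\|_\infty/\lambda}$ is precisely what manufactures the $\exp((\beta(1-\gamma))^{-1})$ factor in \citet{zhou2021finite}. Instead I would bound the perturbation of $-\lambda\log\EB_{P^*(\cdot|s,a)}[e^{-V/\lambda}]$ through the \emph{relative} deviation of $\widehat P$ from $P^*$: whenever $\widehat P(s'|s,a)/P^*(s'|s,a)\in[1-\tau,1+\tau]$ for every $s'$ in the support, then $\EB_{\widehat P(\cdot|s,a)}[e^{-V/\lambda}]/\EB_{P^*(\cdot|s,a)}[e^{-V/\lambda}]\in[1-\tau,1+\tau]$ \emph{uniformly over $V$ and $\lambda$}, so $|\lambda\log\EB_{P^*}[e^{-V/\lambda}]-\lambda\log\EB_{\widehat P}[e^{-V/\lambda}]|\le\lambda\tau/(1-\tau)$; with $\lambda=O((\rho(1-\gamma))^{-1})$ this makes the one-step deviation $O(\tau/(\rho(1-\gamma)))$, where $\tau=\max_{s':P^*(s'|s,a)>0}|\widehat P(s'|s,a)-P^*(s'|s,a)|/P^*(s'|s,a)\le\underline{p}^{-1}\max_{s'}|\widehat P(s'|s,a)-P^*(s'|s,a)|$. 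Hoeffding for $|\widehat P(s'|s,a)-P^*(s'|s,a)|$ with a union bound over $(s',s,a)$ and a polynomial-size net of $\lambda$ (the $[1+\rho\underline{p}\sqrt{n}]$ factor) gives $\tau=O(\underline{p}^{-1}\sqrt{\log(1/\delta)/n})$, hence part (c), with a single $\underline{p}^{-1}$ and no exponential dependence on $(1-\gamma)^{-1}$. To finish, I would multiply through by the $2/(1-\gamma)$ from the reduction, add the negligible net terms $2\gamma\varepsilon/(1-\gamma)$, set $\varepsilon\asymp n^{-1/2}$, and split $\delta$ over the union bound, obtaining the three displayed inequalities. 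The delicate points throughout are the boundedness of the optimal dual variables in quantities polynomial in $(1-\gamma)^{-1}$ and $\rho^{-1}$ and, for KL, the relative-deviation estimate; together these are what turn the previously exponential bound into a polynomial one.
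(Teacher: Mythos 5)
Your overall architecture is the same as the paper's: reduce via Lemmas~\ref{lem: uni-dev-pi} and~\ref{lem: uni-dev-v}, exploit determinism of the optimal robust policy under the $(s,a)$-rectangular assumption, pass to the Lagrangian dual with dual variables confined to intervals of size polynomial in $(1-\gamma)^{-1},\rho^{-1}$, and finish with Hoeffding plus nets over the dual variables and $\VM$ and a union bound over $(s,a)$. Your $L_1$ argument and your KL argument are essentially the paper's proofs (for KL, the paper likewise bounds $|\lambda\log\EB_{\widehat P}e^{-V/\lambda}-\lambda\log\EB_{P^*}e^{-V/\lambda}|\le 2\lambda\max_{s'}|\widehat P(s')/P^*(s')-1|$ with $\lambda\le (\rho(1-\gamma))^{-1}$; note that no net over $\lambda$ is needed there since the ratio bound is uniform in $\lambda$ --- the $[1+\rho\underline p\sqrt n]$ factor actually comes from the $\VM$-net, a harmless misattribution on your part).

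The genuine gap is in part~(b), the $\chi^2$ case. Your concentration step --- ``on that range $g_{\lambda,\eta,V}$ is bounded by $O((1-\gamma)^{-1})$ and Lipschitz in $(\lambda,\eta)$, hence Hoeffding'' --- does not go through in either parametrization of the dual. If you keep $\lambda$ as a dual variable, the $P$-dependent integrand is $\frac{1}{4\lambda}(\eta-V(s'))_+^2$, which is neither bounded by $O((1-\gamma)^{-1})$ nor Lipschitz with a controlled constant as $\lambda\downarrow 0$; and you cannot truncate $\lambda$ away from zero uniformly over the $\VM$-net, since the optimal $\lambda^*(P,V)=\frac{1}{2\sqrt{1+\rho}}\sqrt{\EB_P[(\eta-V)_+^2]}$ vanishes, e.g., when $V$ is (nearly) constant, and any data-dependent lower bound for $\lambda^*(\widehat P,V)$ would itself be random. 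If instead you eliminate $\lambda$ (which is evidently what you have in mind, since $C(\rho)$ appears as ``the weight on the second-moment term''), the statistic to control is $C(\rho)\bigl(\sqrt{\EB_{\widehat P}[(\eta-V)_+^2]}-\sqrt{\EB_{P^*}[(\eta-V)_+^2]}\bigr)$, which is not an empirical mean, so Hoeffding does not apply directly; applying Hoeffding to the inner mean and passing through the square root either costs a rate ($|\sqrt a-\sqrt b|\le\sqrt{|a-b|}$ gives only $n^{-1/4}$) or requires a uniform lower bound on $\EB_{P^*}[(\eta-V)_+^2]$, which fails near $\eta=\min_s V(s)$. The paper closes exactly this hole with two extra ingredients from \citet{duchi2018learning}: concentration of $\frac{1}{\sqrt n}\|Y\|_2$ as a $C(\rho)/\sqrt n$-Lipschitz convex function of the bounded sample vector (their Lemma~6), together with the bias estimate $\frac{1}{\sqrt n}\EB\|Y\|_2\ge\sqrt{\sum_k\EB Y_k^2/n}-O(n^{-1/2})$ (their Lemma~8). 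You need these (or an equivalent bounded-differences argument for the empirical $\ell_2$-norm plus a bias bound) to obtain the $n^{-1/2}$ rate claimed in Theorem~\ref{thm: chi2-uni}; as written, your part~(b) does not follow. Parts~(a) and~(c) are fine --- indeed, in the $L_1$ case your observation that both dual optima lie in $[0,\|V\|_\infty]$ would even yield a slightly sharper constant than the stated $(2+\rho)/\rho$ factor, which of course still implies the stated inequality.
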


As a brief sum-up, in order to achieve an $\varepsilon$ performance gap, the number of generated samples should be $n_{\text{tot}}:=n\times|\SM||\AM|=\widetilde{\OM}\left(\frac{|\SM|^2|\AM|}{\varepsilon^2\rho^2(1-\gamma)^4}\right)$ in all the cases under the $(s,a)$-rectangular assumption, up to some logarithmic factors. 

And notably, in Theorem~\ref{thm: kl-uni}, an additional factor $1/\underline{p}$ occurs in the upper bound, which seems unavoidable. From a high-level point of view, the core step in Theorem~\ref{thm: kl-uni} can be expressed by bounding the deviation $|\log \frac{1}{n}\sum_i X_i-\log\mu|$ with $\EB X_i=\mu$, where the factor $1/\mu$ plays a significant role on the sample complexity. Fortunately, compared to \citet{zhou2021finite}, whose finite-sample result in the KL setting is exponentially dependent on $\frac{1}{1-\gamma}$, our result in the KL setting is only polynomial dependent on $\frac{1}{1-\gamma}$.

\subsection{Results with the $s$-rectangular assumption}

Taking $f(t)=|t-1|$, $f(t)=(t-1)^2$, and $f(t)=t\log t$ in Example~\ref{eg: f-set-s}, respectively, we have the following results when the $s$-rectangular assumption holds.

\begin{thm}
    \label{thm: finite-s}
    Under the $s$-rectangular assumption, the following results hold:
    \begin{enumerate}[label=(\alph*), ref=\thethm (\alph*)]
        \item \label{thm: l1-union-s} If $f(t)=|t-1|$ in Example~\ref{eg: f-set} ($L_1$ balls), then with probability $1-\delta$:
        \begin{align*}
            \max_\pi V_r^\pi(\mu)-V_r^{\widehat{\pi}}(\mu)\le\frac{2\gamma(2+\rho)\sqrt{|\SM||\AM|}}{\rho(1-\gamma)^2\sqrt{2n}}\Bigg{(}4+\sqrt{\log\frac{2|\SM|(1+2\sqrt{2n}(\rho+4))^3}{\delta}}\Bigg{)}.          
        \end{align*}
        \item \label{thm: chi2-s-union} If $f(t)=(t-1)^2$ in Example~\ref{eg: f-set} ($\chi^2$ balls), then with probability $1-\delta$:
        \begin{align*}
            \max_\pi V_r^\pi(\mu)-V^{\widehat{\pi}}_r(\mu)\le\frac{2\gamma C^2(\rho)\sqrt{|\SM||\AM|^2}}{(C(\rho)-1)(1-\gamma)^2\sqrt{n}}\left(6+\sqrt{2\log\frac{2|\SM|(1+8\sqrt{n}C(\rho))^3}{\delta}}\right),
        \end{align*}
        where $C(\rho)=\sqrt{1+\rho}$.
        \item \label{thm: kl-s-union} If $f(t)=t\log t$ in Example~\ref{eg: f-set} (KL balls), then with probability $1-\delta$:
        \begin{align*}
            \max_\pi V_r^\pi(\mu)-V^{\widehat{\pi}}_r(\mu)\le\frac{4\gamma\sqrt{|\SM||\AM|}}{\rho\underline{p}(1-\gamma)^2\sqrt{n}}\left(2+\sqrt{2\log\frac{2|\SM|^2|\AM|\left(1+4\rho\underline{p}\sqrt{n}\right)}{\delta}}\right),
        \end{align*}
        where $\underline{p}=\min_{P^*(s'|s,a)>0}P^*(s'|s,a)$.
    \end{enumerate}
\end{thm}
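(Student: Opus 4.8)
\emph{Proof strategy.} The proof follows the same four-step skeleton as Theorem~\ref{thm: finite-sa}, with the $s$-rectangular geometry threaded through each step. By Lemmas~\ref{lem: uni-dev-pi} and~\ref{lem: uni-dev-v} the performance gap is at most $\frac{2}{1-\gamma}\sup_{\pi\in\Pi,\,V\in\VM}\|\TM_r^\pi V-\widehat{\TM}_r^\pi V\|_\infty$, and by Lemmas~\ref{lem: eps-pi} and~\ref{lem: eps-v} this is, up to a discretization error $O\!\big(\frac{\gamma\varepsilon}{1-\gamma}\big)$, controlled by $\sup_{\pi\in\Pi_\varepsilon,\,V\in\VM_\varepsilon}\|\TM_r^\pi V-\widehat{\TM}_r^\pi V\|_\infty$. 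It therefore suffices to (i) bound $|\TM_r^\pi V(s)-\widehat{\TM}_r^\pi V(s)|$ for a single fixed triple $(s,\pi,V)$ with high probability, (ii) take a union bound over states, over the policy net $\Pi_\varepsilon$ and over the value net $\VM_\varepsilon$ using the cardinalities from Lemma~\ref{lem: num-eps-pi}, and (iii) choose $\varepsilon\asymp n^{-1/2}$ to balance the two contributions. Unlike the $(s,a)$-rectangular case, where the optimal robust policy is deterministic and $\Pi$ may be replaced by the finite set of deterministic policies, here the optimal policy may be stochastic, so the net $\Pi_\varepsilon$ of size $(1+4/\varepsilon)^{|\SM||\AM|}$ is genuinely needed; this is precisely the source of the extra logarithmic power and part of the extra $|\AM|$ dependence relative to Theorem~\ref{thm: finite-sa}.

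The core new ingredient is the dual representation of the $s$-rectangular robust Bellman backup. Fixing $s$, write $\TM_r^\pi V(s)=R^\pi(s)+\gamma\inf_{P\in\PM_s(\rho)}\sum_{a\in\AM}\pi(a|s)\langle P(\cdot|a),V\rangle$. Since Example~\ref{eg: f-set-s} imposes a \emph{single} budget constraint $\sum_{s',a}f\!\big(P(s'|a)/P^*(s'|s,a)\big)P^*(s'|s,a)\le|\AM|\rho$ coupling all actions, I would introduce one scalar Lagrange multiplier $\eta\ge0$; strong duality holds because $P^*$ is a Slater point. The inner infimum becomes $\sup_{\eta\ge0}\big\{-\eta|\AM|\rho+\sum_{a}\inf_{P_a\in\Delta(\SM),\,P_a\ll P^*(\cdot|s,a)}[\pi(a|s)\langle P_a,V\rangle+\eta\sum_{s'}f(P_a(s')/P^*(s'|s,a))P^*(s'|s,a)]\big\}$, and for fixed $\eta$ this decouples into $|\AM|$ per-action sub-problems, each structurally identical to the $(s,a)$-rectangular inner problem analyzed for Theorem~\ref{thm: finite-sa} but with ``temperature'' $\eta/\pi(a|s)$: a truncation-of-$V$ formula for $f(t)=|t-1|$, a shifted-quadratic (variance-type) formula for $f(t)=(t-1)^2$, and a log-sum-exp formula $-\tfrac{\eta}{\pi(a|s)}\log\sum_{s'}P^*(s'|s,a)\,e^{-\pi(a|s)V(s')/\eta}$ (valid for $\eta$ above an explicit threshold) for $f(t)=t\log t$. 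I would then show that the optimal multiplier $\eta^\star$ lies in a bounded range determined by $\rho$, $\mathrm{span}(V)\le 1/(1-\gamma)$, and—in the KL case—$\underline{p}$, so that the optimal value is a controlled Lipschitz function of the vector $(P^*(\cdot|s,a))_{a\in\AM}$.

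With the dual forms in hand, the concentration step bounds $|\TM_r^\pi V(s)-\widehat{\TM}_r^\pi V(s)|$ by the change in the dual value when each $P^*(\cdot|s,a)$ is simultaneously replaced by $\widehat{P}(\cdot|s,a)$. Evaluating the $\eta$-envelope at the true optimizer and using that the envelope is itself Lipschitz, this change is at most a sum over $a$ of per-action deviations: inner products $\langle\widehat{P}(\cdot|s,a)-P^*(\cdot|s,a),g_{s,a}\rangle$ with bounded $g_{s,a}$ for the $L_1$ ball; a quadratic/variance functional of $\widehat{P}(\cdot|s,a)-P^*(\cdot|s,a)$ for the $\chi^2$ ball; and $|\log\langle\widehat{P}(\cdot|s,a),e^{-\alpha V}\rangle-\log\langle P^*(\cdot|s,a),e^{-\alpha V}\rangle|$ for the KL ball. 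Each is controlled by Hoeffding's or Bernstein's inequality on the i.i.d.\ samples $\{X_k^{s,a}\}_{k=1}^n$; summing the $|\AM|$ per-action terms with the correct normalization yields the extra $\sqrt{|\AM|}$ factor for the $L_1$ and KL balls and the extra $|\AM|$ factor for the $\chi^2$ ball visible in the statement, while the KL case additionally incurs the $1/\underline{p}$ through the logarithm exactly as in Theorem~\ref{thm: kl-uni}. Substituting the covering numbers of Lemma~\ref{lem: num-eps-pi}, taking the union bound coordinate-wise over states, and optimizing $\varepsilon\asymp n^{-1/2}$ then produce the three displayed inequalities, with the polynomial-in-$\sqrt{n}$ and the $|\SM|,|\AM|$ factors inside the logarithms tracking the sizes of the value net, the policy net, and the auxiliary scalar dual variables.

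The step I expect to be the main obstacle is controlling the per-action temperature $\eta/\pi(a|s)$ in the decoupled sub-problems: when $\pi(a|s)$ is small it blows up, which would destroy the Lipschitz constants if handled naively. The resolution is to keep the weight $\pi(a|s)$ attached throughout and argue that the sub-problem values, their gradients in $P^*(\cdot|s,a)$, and the admissible range of $\eta$ all depend on $\eta$ and $\mathrm{span}(V)$ only—equivalently, a small $\pi(a|s)$ means action $a$ contributes proportionally little to both the objective and the constraint. Carrying this bookkeeping through all three divergences at once, together with the self-bounding variance estimate for the $\chi^2$ ball and the lower bound on $\eta^\star$ for the KL ball in the coupled $s$-rectangular geometry, is where essentially all the work lies; the value- and policy-net union bound is then routine given Lemmas~\ref{lem: eps-v}--\ref{lem: num-eps-pi}.
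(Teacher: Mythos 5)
Your overall skeleton matches the paper's: reduce the gap via Lemmas~\ref{lem: uni-dev-pi} and~\ref{lem: uni-dev-v}, concentrate $\|\TM_r^\pi V-\widehat{\TM}_r^\pi V\|_\infty$ for a fixed pair, then union-bound over the nets of Lemmas~\ref{lem: eps-v}--\ref{lem: num-eps-pi} with $\varepsilon\asymp n^{-1/2}$, the policy net being the source of the extra $|\AM|$ relative to Theorem~\ref{thm: finite-sa}. Where you genuinely diverge is the dual treatment of the coupled constraint for the $L_1$ and $\chi^2$ balls. You dualize only the budget $\sum_{s',a}f(\cdot)P^*\le|\AM|\rho$ with a single scalar multiplier and keep $|\AM|$ penalized per-action subproblems in closed form; the paper (Lemmas~\ref{lem: f-eq-s}, \ref{lem: l1-s}, \ref{lem: chi2-s}) instead dualizes both the budget and the $|\AM|$ normalization constraints, eliminates the budget multiplier, and is left with an $|\AM|$-dimensional dual vector $\eta$ restricted to a bounded set (Lemmas~\ref{lem: l1-value}, \ref{lem: chi2-value}); concentration is then done by aggregating the actions per sample index into a single bounded i.i.d.\ scalar ($Z_k=\sum_a Y_k^a$ for $L_1$, $Z_k=\sqrt{\sum_a(Y_k^a)^2}$ for $\chi^2$, Theorems~\ref{thm: l1-fix-s}, \ref{thm: chi2-s-fix}), and the action dependence enters only through the $\varepsilon$-net of the $|\AM|$-dimensional dual set and the policy net. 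For the KL ball your scalar-multiplier route is literally the paper's dual (Lemma~\ref{lem: kl-s}), with $\lambda^\star\in[0,\tfrac{1}{|\AM|\rho(1-\gamma)}]$, so that case is essentially identical.

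The soft spot in your plan is the concentration bookkeeping for $L_1$ and $\chi^2$: you assert that bounding each per-action deviation by Hoeffding/Bernstein and summing over $a$ ``with the correct normalization'' yields the $\sqrt{|\AM|}$ (resp.\ $|\AM|$) factors, but that is exactly the step that needs a real argument. Summing $|\AM|$ separate deviation bounds can cost a full factor of $|\AM|$ unless the weights $\pi(a|s)$ and the admissible range of your scalar multiplier conspire to make the per-action constants summable (for $L_1$, keeping $\pi(a|s)$ attached does make the envelope gradients sum to $O(1/(1-\gamma))$, but you must also prove the scalar multiplier's range, the analogue of Lemmas~\ref{lem: l1-value} and \ref{lem: chi2-value}, and make the bound uniform over it). Note also that in the paper the extra $|\AM|$ in the $\chi^2$ bound does not come from summing per-action concentration at all: it comes from the $\sqrt{(\rho+1)|\AM|}$ coefficient in the dual objective together with the $|\AM|$-dimensional dual net, on top of the $|\SM||\AM|$ policy-net covering. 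So your attribution of where the action factors arise does not match the mechanism that actually produces the displayed constants, and until the per-action summation (or an aggregation trick like the paper's $Z_k$) is carried out explicitly, the $|\AM|$-dependence of your route for the $L_1$ and especially the $\chi^2$ case remains unverified, even though the overall strategy is plausible and could well deliver bounds of the same order.
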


Under the $s$-rectangular assumption, the optimal robust policy can be stochastic. In this case, the policy class $\Pi$ is infinitely large. By controlling the deviation through Lemma~\ref{lem: eps-pi}, there could be an amplification in the statistical error. In the cases of both the $L_1$ and KL balls, the total sample complexity to achieve an $\varepsilon$ performance gap is $n_{\text{tot}}=\widetilde{\OM}\left(\frac{|\SM|^2|\AM|^2}{\varepsilon^2\rho^2(1-\gamma)^4}\right)$. But in the case of the $\chi^2$ balls, the total sample complexity is $n_{\text{tot}}=\widetilde{\OM}\left(\frac{|\SM|^2|\AM|^3}{\varepsilon^2\rho^2(1-\gamma)^4}\right)$, which is larger than others and caused by the specific dual solution of $\TM_r^\pi V$.

\subsection{Discussion on $\rho$}
\label{subsec: discussrho}
All of the results under the $(s,a)$ and $s$-rectangular assumptions suggest that the sample complexity would be unbounded when $\rho\to0$. To illustrate this phenomenon, we consider a simple distributionally robust optimization problem:
\begin{align}
    \inf_Q &\sum_{i=1}^{|\XM|} Q_iV_i,\label{eq:dro_primal}\\
    \text{s.t. } &\sum_{i=1}^{|\XM|} f\left(\frac{Q_i}{P_i}\right)P_i\le\rho,\label{eq:dro_constraint}\\
    &\sum_{i=1}^{|\XM|} Q_i = 1, \; Q_i\ge 0 \hspace{2pt}, \; \forall i=1, \ldots, |\XM|.
\end{align}
Here we assume $P\in \Delta(\XM)$ and  $P_i>0$ for all $i$. In addition, $f$ is a convex function such that $f(1)=0$ and $V_i\in[0,M]$ for all $i$. We denote the optimal value of the above problem~\eqref{eq:dro_primal} as $g(P,\rho)$. Now if we have an unbiased estimator $\widehat{P}$ of $P$, we would like to know the absolute error between $g(P,\rho)$ and $g(\widehat{P},\rho)$. However, we cannot apply concentration inequality to $g(\widehat{P},\rho)$ directly as the randomness is hidden in the constraint~\eqref{eq:dro_constraint}. Fortunately, we can write the dual problem of $g(P,\rho)$ and prove the strong duality. In this case, the randomness is displayed in the dual objective~\eqref{eq:dro_dual}, where $f^*(y)=-\inf_{x\ge0}f(x)-x y$ is the conjugate function of $f$.
We denote the dual objective~\eqref{eq:dro_dual} as $d(P,\rho)$,
That is, 
\begin{align}
    \sup_{\lambda\ge0,\beta\in\RB}-\sum_{i=1}^{|\XM|}\lambda P_i f^*\left(-\frac{V_i+\beta}{\lambda}\right)-\lambda\rho-\beta\label{eq:dro_dual}.
\end{align}

To control the error $\left|d(\widehat{P},\rho)-d(P,\rho)\right|$, we have to determine the range of dual variables $\lambda$ and $\beta$ based on the specific choice of $f$. Then we can apply concentration inequalities uniformly over the range of dual variables. However, the range of dual variables will enlarge to infinity when $\rho$ goes to zero. In this case, the uniform concentration inequalities will suffer error amplification, which leads to infinite sample complexity. Alternatively, by Theorem~\ref{thm: non-robust-difference} (setting $\gamma=0$ in this case) and Hoeffding's inequality \cite{wainwright2019high}, we can upper bound the primal error by:
\begin{align*}
    \left|g(\widehat{P},\rho)-g(P,\rho)\right|&\le\left|g(\widehat{P},\rho)-g(\widehat{P},0)\right|+\left|g(P,\rho)-g(P,0)\right|+\left|g(\widehat{P},0)-g(P,0)\right|\\
    &\le \OM\left(h(\rho)\right)+\left|g(\widehat{P},0)-g(P,0)\right|\\
    &= \OM\left(h(\rho)\right)+\widetilde{\OM}\left(\frac{1}{\sqrt{ n}}\right).
\end{align*}

In this case, when $\rho$ approaches zero, we can apply the non-robust results instead. 
When we extend the analysis to the setting of robust MDPs  with  $\rho\to0$, we can alternatively upper bound the performance gap by:
\begin{align*}
    \left\|V_r^*-V_r^{\widehat{\pi}}\right\|_{\infty}&\le2\sup_{\pi\in\Pi}\left\|V_r^\pi-\widehat{V}_r^\pi\right\|_\infty\\
    &\le2\sup_{\pi\in\Pi}\left\|V_r^\pi-V_{P^*}^\pi\right\|_\infty+2\sup_{\pi\in\Pi}\left\|V_{P^*}^\pi-V_{\widehat{P}}^\pi\right\|_\infty+2\sup_{\pi\in\Pi}\left\|V_{\widehat{P}}^\pi-\widehat{V}_r^\pi\right\|_\infty\\
    &\le \OM\left(\frac{h(\rho)}{(1-\gamma)^2}\right)+2\sup_{\pi\in\Pi}\left\|V_{P^*}^\pi-V_{\widehat{P}}^\pi\right\|_\infty\\
    &= \OM\left(\frac{h(\rho)}{(1-\gamma)^2}\right)+\widetilde{\OM}\left(\sqrt{\frac{|\SM|}{(1-\gamma)^4 n}}\right),
\end{align*}
where the first inequality is due to Lemma~\ref{lem: uni-dev-pi}, the second inequality holds by error decomposition, the third inequality holds by Theorem~\ref{thm: non-robust-difference}, and the last equality holds by sample complexity of non-robus MDPs with a generative model \cite{azar2013minimax, agarwal2020model}.
In other words, we should not expect robustness when $\rho\to0$, which also coincides with the theoretical results of the lower bound in the next part.

\subsection{Lower Bound}
\label{sec: lb}

To complement our non-asymptotic analysis, here we provide the lower bound results of robust MDPs with a generative model. The MDP we construct in Theorem~\ref{thm: lb} is a classic 2-state MDP with only one action, which is frequently analyzed in \citep{azar2013minimax,duan2021risk}. The details can be found in Supplementary Appendix~\ref{apd: lb}.
\begin{thm}[Lower bound]
    \label{thm: lb}
    There exists a class of robust MDPs with a $f$-divergence uncertainty set, such that for every $(\varepsilon,\delta)$-correct robust RL algorithm $\mathscr{A}$, the total number of generated samples needs to be at least:
    \begin{align*}
        \widetilde{\Omega}\left(\frac{|\SM||\AM|(g'(p))^2 p(1-p)}{\varepsilon^2(1-\gamma g(p))^4}\right),
    \end{align*}
    where $p\in(0,1)$, $g(p)=\inf_{D_f(q\|p)\le\rho}q$ and $D_f(q\|p)=pf(\frac{q}{p})+(1-p)f(\frac{1-q}{1-p})$.
\end{thm}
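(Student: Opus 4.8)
The plan is to run the classical two-point (Le Cam) argument for generative-model lower bounds, specialized so that the sensitivity of the \emph{robust} value to a transition parameter is exactly governed by $g$ and $g'$.

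\textbf{The building block and its robust value.} I would start from the two-state, single-action chain of \citet{azar2013minimax,duan2021risk}: a rewarding state $s_+$ (reward $1$) that self-loops with probability $p$ and otherwise falls into an absorbing zero-reward state. Since moving mass away from $s_+$ can only hurt, the adversary in the robust problem always pushes the next-state mass on $s_+$ down to the smallest admissible value $g(p)=\inf_{D_f(q\|p)\le\rho}q$, so the robust value at $s_+$ is exactly $1/(1-\gamma g(p))$ and, differentiating, $\frac{d}{dp}V_r^*(s_+)=\gamma g'(p)/(1-\gamma g(p))^2$ at every $p$ where $g$ is differentiable (which holds for the $L_1$, $\chi^2$ and KL divergences at generic $p$, by the implicit function theorem applied to the convex program defining $g$ through $D_f(q\|p)=\rho$). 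To match the stated dimensions I would take $\Theta(|\SM||\AM|)$ independent copies of this block, one attached to each state-action pair, so that a generative query to pair $(s,a)$ returns an independent Bernoulli$(p_{s,a})$ sample and the optimal robust value decomposes over the blocks (the precise wiring into a single connected MDP is routine and can be deferred to the appendix).

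\textbf{Two hypotheses and the separation.} I would perturb each block's self-loop probability to $p\pm\Delta$. Flipping one sign changes that block's contribution to $V_r^*$ (equivalently, to the corresponding robust $Q$-value) by $\asymp \Delta\,\gamma|g'(p)|/(1-\gamma g(p))^2$; choosing $\Delta\asymp \varepsilon(1-\gamma g(p))^2/(\gamma|g'(p)|)$ makes any two instances that disagree on a block's sign differ in robust value by more than $2\varepsilon$, so a single output of an $(\varepsilon,\delta)$-correct algorithm cannot be valid for both — hence the algorithm must recover the sign of each perturbation with error probability at most $\delta$. The only information available to it is, per block, $n$ i.i.d.\ Bernoulli$(p\pm\Delta)$ draws, independent across blocks.

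\textbf{Information inequality and the bottleneck.} Recovering the sign from $n$ Bernoulli draws with error $\le\delta$ requires $n \gtrsim \frac{1}{\mathrm{KL}(\mathrm{Ber}(p-\Delta)\,\|\,\mathrm{Ber}(p+\Delta))}\log\frac1\delta \asymp \frac{p(1-p)}{\Delta^2}\log\frac1\delta$ (Bretagnolle--Huber, or Pinsker combined with Le Cam's two-point bound); substituting $\Delta$ gives $n \gtrsim \frac{(g'(p))^2 p(1-p)}{\varepsilon^2 (1-\gamma g(p))^4}$ per block up to logarithmic factors, and summing over the $\Theta(|\SM||\AM|)$ blocks yields the claimed total $\widetilde{\Omega}\!\big(\frac{|\SM||\AM|(g'(p))^2 p(1-p)}{\varepsilon^2(1-\gamma g(p))^4}\big)$; if one prefers, replacing the block-wise two-point argument by Assouad's lemma over the full sign hypercube produces the $|\SM||\AM|$ factor in one shot. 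The step I expect to be the real work is the building block: establishing that for all three divergences the worst-case next-state distribution sits on the lower boundary $q=g(p)$, and then controlling $g'(p)$ — existence, non-vanishing, and a usable lower bound on $|g'(p)|$, together with its explicit form for $L_1$ and $\chi^2$ — since the entire $\Delta$-to-$\varepsilon$ conversion, and thus the shape of the lower bound, rides on it; a lesser nuisance is checking that the multi-block embedding does not let discounting or the initial distribution $\mu$ dilute the per-block sensitivity.
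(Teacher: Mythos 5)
Your proposal follows essentially the same route as the paper: the paper uses exactly the two-state self-loop block of \citet{azar2013minimax}, notes $V_r(z_0)=1/(1-\gamma g(p))$, perturbs $p$ by $\delta\asymp\varepsilon(1-\gamma g(p))^2/(\gamma g'(p))$ (using convexity of $g$ to get the one-sided bound $g(p+\delta)-g(p)\ge\delta g'(p)$ in place of your first-order/implicit-function step), invokes the Bernoulli-distinguishing bound (Lemma 17 of Azar et al.) for the per-block $p(1-p)/\Delta^2$ cost, and then scales up by the standard $|\SM||\AM|$-copy construction (Lemma 18 of Azar et al.). The only cosmetic difference is that the paper proves the needed regularity of $g$ (monotonicity, convexity, a.s.\ differentiability, $g'=-\partial_p D_f/\partial_q D_f$) as a standalone lemma rather than via the implicit function theorem, so your plan is correct and matches the paper's argument.
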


In Theorem~\ref{thm: lb}, the parameter $p$ can take arbitrary values in $(0,1)$ while we always set $p$ close to $1$. Next, we give the exact lower bounds in the following corollaries when the $L_1$ and $\chi^2$ uncertainty sets are considered. However, when we consider the KL uncertainty set, there is no explicit form of lower bound by the fact that there is no closed-form expression of $g(p)$ when $f(t)=t\log t$.
\begin{cor}[Lower bound for the $L_1$ case]
    \label{cor: lb_l1}
    Given that $f(t)=|t-1|$ and $p=\frac{2\gamma-1}{\gamma}$ in Theorem~\ref{thm: lb}, the lower bound of sample complexity is:
    \begin{align*}
        \widetilde{\Omega}\left(\frac{|\SM||\AM|(1-\gamma)}{\varepsilon^2}\min\left\{\frac{1}{(1-\gamma)^4}, \frac{1}{\rho^4}\right\}\right).
    \end{align*}
\end{cor}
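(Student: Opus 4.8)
The plan is to specialize the general lower bound of Theorem~\ref{thm: lb} to $f(t)=|t-1|$ at the point $p=\frac{2\gamma-1}{\gamma}$, which lies in $(0,1)$ whenever $\gamma>\tfrac12$ (and the bound is only interesting when $\gamma$ is close to $1$), and then to simplify the three problem-dependent quantities $g'(p)$, $p(1-p)$, and $1-\gamma g(p)$ that enter the bound.

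First I would compute the one-dimensional divergence. For $f(t)=|t-1|$,
\begin{align*}
    D_f(q\|p)=p\left|\frac{q}{p}-1\right|+(1-p)\left|\frac{1-q}{1-p}-1\right|=|q-p|+|p-q|=2|q-p|,
\end{align*}
so the feasible set $\{q\in[0,1]:D_f(q\|p)\le\rho\}$ is the interval $\big[\max\{0,p-\rho/2\},\,\min\{1,p+\rho/2\}\big]$, whence $g(p)=\max\{0,\,p-\rho/2\}$. As long as $\rho\le 2p$ --- and since $2p=\frac{2(2\gamma-1)}{\gamma}\to 2$ as $\gamma\to1$, this is exactly the regime in which the claimed bound is non-vacuous, a strictly larger radius forcing $g(p)=0$ and collapsing the constructed instance --- we have $g(p)=p-\rho/2$ and therefore $g'(p)=1$.

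Next I would evaluate the remaining quantities at $p=\frac{2\gamma-1}{\gamma}$. Since $\gamma p=2\gamma-1$ and $1-p=\frac{1-\gamma}{\gamma}$,
\begin{align*}
    1-\gamma g(p)=1-\gamma p+\frac{\gamma\rho}{2}=2(1-\gamma)+\frac{\gamma\rho}{2},\qquad p(1-p)=\left(2-\frac1\gamma\right)\frac{1-\gamma}{\gamma}.
\end{align*}
Plugging these, together with $g'(p)=1$, into Theorem~\ref{thm: lb} yields the lower bound
\begin{align*}
    \widetilde{\Omega}\!\left(\frac{|\SM||\AM|\left(2-\frac1\gamma\right)\frac{1-\gamma}{\gamma}}{\varepsilon^2\left(2(1-\gamma)+\frac{\gamma\rho}{2}\right)^{4}}\right).
\end{align*}
It then remains to simplify: restricting to $\gamma$ bounded away from $\tfrac12$ and $1$ (say $\gamma\in[\tfrac23,1)$, which is the regime where the bound is largest), one has $\left(2-\tfrac1\gamma\right)/\gamma=\Theta(1)$, so the numerator is $\Theta\big(|\SM||\AM|(1-\gamma)\big)$, and $2(1-\gamma)+\tfrac{\gamma\rho}{2}=\Theta\big(\max\{1-\gamma,\rho\}\big)$; since $\max\{a,b\}^{-4}=\min\{a^{-4},b^{-4}\}$ for $a,b>0$, the bound becomes exactly $\widetilde{\Omega}\!\left(\frac{|\SM||\AM|(1-\gamma)}{\varepsilon^2}\min\big\{\frac{1}{(1-\gamma)^4},\frac{1}{\rho^4}\big\}\right)$, as claimed.

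I do not expect a genuine obstacle: the corollary is a direct specialization of Theorem~\ref{thm: lb}, so the only care needed is the constant bookkeeping above --- checking that $p$ and the ratio $(1-\gamma g(p))/\max\{1-\gamma,\rho\}$ stay pinned between two absolute constants on the chosen $\gamma$-range, and that the restriction $\rho\le 2p$ under which $g'(p)=1$ really contains the full regime in which the stated bound is non-trivial (outside it the uncertainty set is so large that the robust value of the constructed two-state instance is identically zero, and no information-theoretic lower bound beyond the non-robust one should be expected).
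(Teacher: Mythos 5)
Your proposal is correct and follows essentially the same route as the paper's own proof: compute $D_f(q\|p)=2|q-p|$, hence $g(p)=p-\rho/2$ and $g'(p)=1$, substitute $p=\frac{2\gamma-1}{\gamma}$ so that $1-\gamma g(p)=2(1-\gamma)+\frac{\gamma\rho}{2}\asymp\max\{1-\gamma,\rho\}$ and $p(1-p)\asymp 1-\gamma$, then plug into Theorem~\ref{thm: lb}; your handling of the boundary case $\rho>2p$ is extra care that the paper handles implicitly by taking $\rho<2$ and $\gamma\in(3/4,1)$. (Only a cosmetic slip in a side remark: when $g(p)=0$ the robust value of the two-state instance is identically $1$, i.e.\ constant and uninformative, rather than zero.)
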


\begin{cor}[Lower bound for the $\chi^2$ case]
    \label{cor: lb_chi2}
    Given that $f(t)=(t-1)^2$ and $p=\frac{2\gamma-1}{\gamma}$ in Theorem~\ref{thm: lb}, the lower bound of sample complexity is:
    \begin{align*}
        \widetilde{\Omega}\left(\frac{|\SM||\AM|}{\varepsilon^2(1-\gamma)^2}\min\left\{\frac{1}{1-\gamma}, \frac{1}{\rho}\right\}\right).
    \end{align*}
\end{cor}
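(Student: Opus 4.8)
The plan is to specialize the general bound of Theorem~\ref{thm: lb} to $f(t)=(t-1)^2$ and $p=\frac{2\gamma-1}{\gamma}$, so that the whole argument reduces to evaluating $g(p)$, $g'(p)$ and $1-\gamma g(p)$, followed by an elementary case split. First I would compute the $f$-divergence between two Bernoulli distributions: for $f(t)=(t-1)^2$,
\[
D_f(q\|p)=p\Bigl(\tfrac{q}{p}-1\Bigr)^{2}+(1-p)\Bigl(\tfrac{1-q}{1-p}-1\Bigr)^{2}=\frac{(q-p)^{2}}{p(1-p)},
\]
i.e.\ the ordinary $\chi^2$-divergence. Hence $\{q:D_f(q\|p)\le\rho\}=\bigl[p-\sqrt{\rho p(1-p)},\,p+\sqrt{\rho p(1-p)}\bigr]$, so $g(p)=p-\sqrt{\rho p(1-p)}$, which is a valid probability as soon as $p>\rho(1-p)$ — true for $\gamma$ close to $1$ and $\rho$ bounded, the only regime relevant to these asymptotic lower bounds. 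Differentiating, $g'(p)=1-\dfrac{\sqrt{\rho}\,(1-2p)}{2\sqrt{p(1-p)}}$.

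Next I would substitute $p=\frac{2\gamma-1}{\gamma}$, using $1-p=\frac{1-\gamma}{\gamma}$, $\gamma p=2\gamma-1$, and $\gamma\sqrt{\rho p(1-p)}=\sqrt{\rho(2\gamma-1)(1-\gamma)}$. This gives
\[
1-\gamma g(p)=2(1-\gamma)+\sqrt{\rho(2\gamma-1)(1-\gamma)}=\Theta\!\Bigl(\sqrt{1-\gamma}\,\bigl(\sqrt{1-\gamma}+\sqrt{\rho}\bigr)\Bigr),
\]
together with $p(1-p)=\Theta(1-\gamma)$ and $g'(p)=1+\Theta\!\bigl(\sqrt{\rho/(1-\gamma)}\bigr)=\Theta\!\bigl((\sqrt{1-\gamma}+\sqrt{\rho})/\sqrt{1-\gamma}\bigr)$, with constants uniform for $\gamma$ sufficiently close to $1$. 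Plugging these into the quantity $\frac{(g'(p))^{2}p(1-p)}{(1-\gamma g(p))^{4}}$ from Theorem~\ref{thm: lb}: the numerator is $\Theta\bigl((\sqrt{1-\gamma}+\sqrt{\rho})^{2}\bigr)$ and the denominator is $\Theta\bigl((1-\gamma)^{2}(\sqrt{1-\gamma}+\sqrt{\rho})^{4}\bigr)$, so the ratio collapses to $\Theta\bigl(\frac{1}{(1-\gamma)^{2}(\sqrt{1-\gamma}+\sqrt{\rho})^{2}}\bigr)$.

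Finally, using $(\sqrt{1-\gamma}+\sqrt{\rho})^{2}=\Theta(\max\{1-\gamma,\rho\})$ (checked by AM--GM, or equivalently by splitting into the cases $\rho\le1-\gamma$ and $\rho\ge1-\gamma$), the ratio equals $\Theta\bigl(\frac{1}{(1-\gamma)^{2}}\min\{\frac{1}{1-\gamma},\frac{1}{\rho}\}\bigr)$, and reinstating the $\frac{|\SM||\AM|}{\varepsilon^{2}}$ factor yields exactly the claimed $\widetilde{\Omega}\bigl(\frac{|\SM||\AM|}{\varepsilon^{2}(1-\gamma)^{2}}\min\{\frac{1}{1-\gamma},\frac{1}{\rho}\}\bigr)$. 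The computations are routine; I expect the only delicate points to be bookkeeping the $\Theta(\cdot)$ constants uniformly in $\gamma$ near $1$ and confirming that this $p$ is admissible for the construction behind Theorem~\ref{thm: lb} — in particular that $g(p)>0$ and $1-\gamma g(p)\in(0,1)$, which follow from $0<\sqrt{\rho p(1-p)}<p$ and from $1-\gamma g(p)\ge 2(1-\gamma)>0$ respectively.
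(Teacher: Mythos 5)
Your proposal is correct and follows essentially the same route as the paper: compute the Bernoulli $\chi^2$-divergence to get $g(p)=p-\sqrt{\rho p(1-p)}$ and $g'(p)=1+\frac{\sqrt{\rho}(2p-1)}{2\sqrt{p(1-p)}}$, substitute $p=\frac{2\gamma-1}{\gamma}$ into the bound of Theorem~\ref{thm: lb}, and simplify. The only difference is that you spell out the $\Theta(\cdot)$ bookkeeping (and the admissibility checks for $\gamma$ near $1$) that the paper leaves implicit, which is a welcome addition but not a different argument.
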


From Corollaries~\ref{cor: lb_l1} and \ref{cor: lb_chi2}, we observe that when $\rho\le(1-\gamma)$, the lower bound is exactly $\widetilde{\Omega}\left(\frac{|\SM||\AM|}{\varepsilon^2(1-\gamma)^3}\right)$, which coincides with the lower bound of classic MDPs with a generative model \citep{azar2013minimax}. When $\rho>(1-\gamma)$, the lower bounds are $\widetilde{\Omega}\left(\frac{|\SM||\AM|(1-\gamma)}{\varepsilon^2\rho^4}\right)$ for the $L_1$ case and $\widetilde{\Omega}\left(\frac{|\SM||\AM|}{\varepsilon^2\rho(1-\gamma)^2}\right)$ for the $\chi^2$ case. 

It is worth noting that a gap exists between the upper bound and lower bound. It is because we obtain the upper bound via a uniform convergence analysis over the whole value space $\VM$ and policy space $\Pi$. If we are able to find the local deviation bound near the optimal robust value function $V_r^*$, the upper bound can be tighter and the gap may also be closed. Unfortunately, we have no additional information of $V_r^*$ except for a robust Bellman equation $V_r^*=\TM_r V_r^*$, which is insufficient to perform a precise local analysis. We think it is an important work to close the gap and we leave it to subsequent works.

\section{Asymptotic Results}
\label{sec: asymp}
From the theoretical results of Section~\ref{sec: gen}, we obtain that the statistical convergence rate of robust MDPs is $\widetilde{O}_p(1/\sqrt{n})$\footnote{For any two random variable sequences $\{X_n\}_{n\ge1}$ and $\{Y_n\}_{n\ge1}$, $X_n=o_P(Y_n)$ stands for $X_n/Y_n$ converges to zero in probability as $n$ goes infinity, and $X_n=O_P(Y_n)$ stands for $X_n/Y_n$ is bounded in probability. See \cite{van2000asymptotic} for more details.}. In this section we investigate the asymptotic properties of robust MDPs. Specifically, in the context of robust MDPs, we show that the robust value function $\widehat V_r^\pi(\mu)$ (given policy $\pi$) and the optimal robust value function $\widehat{V}_r^*(\mu)$ are $\sqrt{n}$-consistent and asymptotically normal in both the $(s,a)/s$-rectangular settings. Before presenting our results, we first give a high-level idea about how we prove empirical robust value function to be asymptotically normal.

\begin{itemize}
    \item Firstly, for any fixed policy $\pi\in\Pi$, we prove the empirical robust Bellman noise is asymptotically normal with a variance matrix $\Lambda^\pi\in\RB^{|\SM|\times|\SM|}$:
    \begin{align*}
        \sqrt{n}\left(\widehat{\TM}_r^\pi V_r^\pi-\TM_r^\pi V_r^\pi\right)\tod\NM(0,\;\Lambda^\pi).
    \end{align*}
    \item Noting that $\widehat{V}_r^\pi=\widehat{\TM}_r^\pi\widehat{V}_r^\pi$, we prove that there exists a matrix $\widehat{\M}^\pi\in\RB^{|\SM|\times|\SM|}$, which is the derivative of the operator $I-\widehat{\TM}_r^\pi$ at the point $V_r^\pi$, such that:
    \begin{align*}
        \sqrt{n}\left(\widehat{\TM}_r^\pi V_r^\pi-\TM_r^\pi V_r^\pi\right)&=\sqrt{n}\left(\widehat{\TM}_r^\pi V_r^\pi-\widehat{\TM}_r^\pi\widehat{V}_r^\pi\right)-\sqrt{n}\left(\TM_r^\pi V_r^\pi-\widehat{\TM}_r^\pi\widehat{V}_r^\pi\right)\\
        &=\sqrt{n}\left(\widehat{\TM}_r^\pi V_r^\pi-\widehat{\TM}_r^\pi\widehat{V}_r^\pi\right)-\sqrt{n}\left(V_r^\pi-\widehat{V}_r^\pi\right)\\
        &=-\widehat{\M}^\pi\cdot\sqrt{n}\left(V_r^\pi-\widehat{V}_r^\pi\right)+o_P(\sqrt{n}\|V_r^\pi-\widehat{V}_r^\pi\|).
    \end{align*}
    \item Because the LHS above is asymptotically normal,  we can prove that $\sqrt{n}\left(V_r^\pi {-} \widehat{V}_r^\pi\right)=O_P(1)$. By proving that $\widehat{\M}^\pi$ is consistent to $\M^\pi$, we obtain the final result:
    \begin{align*}
        \sqrt{n}\left(V_r^\pi-\widehat{V}_r^\pi\right)\tod\NM\left(0,\;(\M^\pi)^{-1}\Lambda^\pi(\M^\pi)^{-\top}\right).
    \end{align*}
    \item Also, we can extend the results to the optimal case and leave the discussion of this result in Sections~\ref{subsec: asymp_sa} and~\ref{subsec: asymp_s}:
    \begin{align*}
        \sqrt{n}\left(\max_\pi V_r^\pi-\max_\pi \widehat{V}_r^\pi\right)\tod\NM\left(0,\;(\M^{\pi^*})^{-1}\Lambda^{\pi^*}(\M^{\pi^*})^{-\top}\right).
    \end{align*}
\end{itemize}

\subsection{Results with the $(s,a)$-rectangular assumption}
\label{subsec: asymp_sa}

We consider the asymptotic behaviors of robust value function under $(s,a)$-rectangular assumptions.    From Section~\ref{sec: gen}, we can deduce that the estimator $\widehat{V}_r^\pi(\mu)$ converges to $V_r^\pi(\mu)$ almost surely (also converges in probability) for a given policy $\pi$, which can be seen in Supplementary Appendix 11. Furthermore, $\widehat{V}_r^\pi(\mu)$ is also asymptotically normal with rate $\sqrt{n}$ by the following results. 


\begin{thm}
    \label{thm: normal-sa} Without loss of generality, we  assume $V_r^\pi(s_1)<\cdots<V_r^\pi(s_{|\SM|})$.
    Under the $(s,a)$-rectangular assumption,   we have that for any fixed policy $\pi\in\Pi$,
    \begin{align*}
        \sqrt{n}\left(\widehat{V}_r^\pi(\mu)-V_r^\pi(\mu)\right)\tod\NM\left(0,\; \mu^\top (\M^\pi)^{-1}\Lambda^\pi (\M^\pi)^{-\top}\mu\right),
    \end{align*}
    where $\Lambda^\pi$ is the asymptotic variance of empirical robust Bellman error, and $\M^\pi$ is the derivative of the operator $I-\TM_r^\pi$ at the point $V_r^\pi$.
    
    Specifically, $\Lambda^\pi=\diag\{\sigma_1^2(\pi), \ldots, \sigma^2_{|\SM|}(\pi)\}$ where $\sigma^2_s(\pi)=\gamma^2\sum_{a\in\AM}\pi(a|s)^2\sigma^2(P^*(\cdot|s,a), V_r^\pi)$.  And 
    
    \begin{enumerate}[label=(\alph*), ref=\thethm (\alph*)]
        \item \label{thm: l1_sa_normal} If $f(t)=|t-1|$ in Example~\ref{eg: f-set} ($L_1$ balls), then the $(i,j)$th element of $\M^{\pi}$ is:
        \begin{align*}
            \M^\pi(i,j)&=\mathbf{1}\{i=j\}-\gamma\sum_{a\in\AM}\pi(a|s_i)\Bigg[P(s_j|s_i,a)\mathbf{1}\{j<K(P(\cdot|s_i,a))\}\\
            & \quad -\left(\sum_{k< K(P(\cdot|s_i,a))}P(s_k|s_i,a)-(1-\frac{\rho}{2})\right)\mathbf{1}\{j=K(P(\cdot|s_i,a))\}+\frac{\rho}{2}\mathbf{1}\{j=1\}\Bigg],
        \end{align*}
        where $K(P):=\min\{l\in\ZB_+|\sum_{k\le l}P(s_k)>1-\rho/2\}$ for any $P\in\Delta(\SM)$. And  $\sigma^2(P^*(\cdot|s,a), V_r^\pi)=(b_{s,a}^{\pi})^\top\Sigma_{s,a} b_{s,a}^\pi$ where
        \begin{align*}
            \Sigma_{s,a}(i,j)&=-P^*(s_i|s,a)P^*(s_j|s,a)+P^*(s_i|s,a)\mathbf{1}\{i=j\},\\
            b_{s,a}^{\pi}(i)&=-(\eta^*(P^*(\cdot|s,a), V_r^\pi)-V_r^\pi(s_i))_+,
        \end{align*}
       and $\eta^*$ is the dual solution of $\TM_r^\pi V_r^\pi$.
        \item \label{thm: chi2_sa_normal} If $f(t)=(t-1)^2$ in Example~\ref{eg: f-set} ($\chi^2$ balls), then the $(i,j)$th element of $\M^{\pi}$ is:
        \begin{align*}
            \M^\pi(i,j)=&\mathbf{1}\{i=j\}\\
            &-\gamma\sum_a \pi(a|s_i)C(\rho)\frac{P^*(s_j|s_i,a)(\eta^*(P^*(\cdot|s_i,a),V_r^\pi)-V_r^\pi(s_j))_+}{\sqrt{\sum_{\tilde s\in\SM}P^*(\tilde s|s_i,a)(\eta^*(P^*(\cdot|s_i,a),V_r^\pi)-V_r^\pi(\tilde s))_+^2}},
        \end{align*}
        where $C(\rho)=\sqrt{1+\rho}$. And $\sigma^2(P^*(\cdot|s,a), V_r^\pi)=(b_{s,a}^{\pi})^\top\Sigma_{s,a} b_{s,a}^{\pi}$ where
        \begin{align*}
            \Sigma_{s,a}(i,j)&=-P^*(s_i|s,a)P^*(s_j|s,a)+P^*(s_i|s,a)\mathbf{1}\{i=j\},\\
            b_{s,a}^\pi(i)&=-C(\rho)\frac{(\eta^*(P^*(\cdot|s,a), V_r^\pi)-V_r^\pi(s_i))_+^2}{2\sqrt{\sum_{s'\in\SM}P^*(s'|s,a)(\eta^*(P^*(\cdot|s,a),V_r^\pi)-V_r^\pi(s'))_+^2}},
        \end{align*}
        and $\eta^*$ is the dual solution of $\TM_r^\pi V_r^\pi$.
        \item \label{thm: kl_sa_normal} If $f(t)=t\log t$ in Example~\ref{eg: f-set} (KL balls), then the $(i,j)$th element of $\M^{\pi}$ is:
        \begin{align*}
            \M^\pi(i,j)=\mathbf{1}\{i=j\}-\gamma\sum_a \pi(a|s_i)\frac{P^*(s_j|s_i,a)\exp(-\frac{V_r^\pi(s_j)}{\lambda^*(P^*(\cdot|s_i,a),V^\pi_r)})}{\sum_{\tilde s\in\SM} P^*(\tilde s|s_i,a)\exp(-\frac{V_r^\pi(\tilde s)}{\lambda^*( P^*(\cdot|s_i,a),V_r^\pi)})}.
        \end{align*}
        And 
         $\sigma^2(P^*(\cdot|s,a), V_r^\pi)=(b_{s,a}^\pi)^\top\Sigma_{s,a} b_{s,a}^\pi$ where
        \begin{align*}
            \Sigma_{s,a}(i,j)&=-P^*(s_i|s,a)P^*(s_j|s,a)+P^*(s_i|s,a)\mathbf{1}\{i=j\},\\
            b_{s,a}^\pi(i)&=-\frac{\lambda^*(P^*(\cdot|s,a),V_r^\pi)\exp(-\frac{V_r^\pi(s_i)}{\lambda^*(P^*(\cdot|s,a),V_r^\pi)})}{\sum_{s'\in\SM}P^*(s'|s,a)\exp(-\frac{V_r^\pi(s')}{\lambda^*(P(\cdot|s,a),V_r^\pi(s'))})},
        \end{align*}
        and $\lambda^*$ is the dual solution of $\TM_r^\pi V_r^\pi$.
    \end{enumerate}
\end{thm}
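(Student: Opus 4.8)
The plan is to carry out the four-step program sketched at the start of Section~\ref{sec: asymp}, supplying the divergence-specific dual computations for parts (a)--(c). Fix $\pi\in\Pi$. Under the $(s,a)$-rectangular assumption $\TM_r^\pi V(s)=R^\pi(s)+\gamma\sum_{a\in\AM}\pi(a|s)\,\sigma_{\PM_{s,a}(\rho)}(V)$ with $\sigma_{\PM_{s,a}(\rho)}(V):=\inf_{P\in\PM_{s,a}(\rho)}\langle P,V\rangle$, and $\widehat\TM_r^\pi$ is the analogue built from $\widehat\PM_{s,a}(\rho)$. Since $\rho>0$ and $P^*(\cdot|s,a)$ lies in the relative interior of its ball, Slater's condition holds, so strong duality writes $\sigma_{\PM_{s,a}(\rho)}(V)$ as a supremum over one scalar multiplier $\eta$ (for $L_1$) or two, $\lambda\ge0$ and $\eta$ (for $\chi^2$ and KL), in which the randomness of $\widehat\TM_r^\pi$ enters only through the frequency vector $\widehat P(\cdot|s,a)$ --- this is exactly the dual object $d(\widehat P,\rho)$ from Section~\ref{subsec: discussrho}.

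\textbf{Step 1 (pointwise CLT for the robust Bellman noise).} Regarding $q\mapsto\sigma_{\PM_{s,a}(\rho)}(V_r^\pi)$ as a function $\phi_{s,a}(q)$ of the probability vector $q=P^*(\cdot|s,a)$, the genericity hypothesis $V_r^\pi(s_1)<\cdots<V_r^\pi(s_{|\SM|})$ guarantees a unique, non-degenerate worst-case distribution (equivalently a unique dual optimizer $\eta^*$ or $(\lambda^*,\eta^*)$), so $\phi_{s,a}$ is differentiable at $P^*(\cdot|s,a)$ with gradient $b_{s,a}^\pi$ obtained from the dual optimizer via Danskin's envelope theorem. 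Combining the multinomial CLT $\sqrt n(\widehat P(\cdot|s,a)-P^*(\cdot|s,a))\tod\NM(0,\Sigma_{s,a})$ with the delta method gives $\sqrt n(\sigma_{\widehat\PM_{s,a}(\rho)}(V_r^\pi)-\sigma_{\PM_{s,a}(\rho)}(V_r^\pi))\tod\NM(0,(b_{s,a}^\pi)^\top\Sigma_{s,a}b_{s,a}^\pi)$, and independence of the samples across all $(s,a)$ yields the joint limit $\sqrt n(\widehat\TM_r^\pi V_r^\pi-\TM_r^\pi V_r^\pi)\tod\NM(0,\Lambda^\pi)$ with $\Lambda^\pi$ diagonal and $\sigma_s^2(\pi)=\gamma^2\sum_a\pi(a|s)^2\,\sigma^2(P^*(\cdot|s,a),V_r^\pi)$. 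Evaluating the worst-case distribution and $b_{s,a}^\pi$ in closed form --- water-filling onto the smallest values (cutoff index $K$) for $f(t)=|t-1|$, the normalized $(\eta^*-V)_+$ profile for $f(t)=(t-1)^2$, and exponential tilting by $e^{-V/\lambda^*}$ for $f(t)=t\log t$ --- reproduces the expressions for $\M^\pi$ and $\sigma^2(P^*(\cdot|s,a),V_r^\pi)$ in parts (a)--(c); I would present these three convex-duality calculations one divergence at a time.

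\textbf{Steps 2--3 (linearization and conclusion).} Using $\TM_r^\pi V_r^\pi=V_r^\pi$ and $\widehat\TM_r^\pi\widehat V_r^\pi=\widehat V_r^\pi$,
\begin{align*}
\sqrt n\big(\widehat\TM_r^\pi V_r^\pi-\TM_r^\pi V_r^\pi\big)=\sqrt n\big(\widehat\TM_r^\pi V_r^\pi-\widehat\TM_r^\pi\widehat V_r^\pi\big)-\sqrt n\big(V_r^\pi-\widehat V_r^\pi\big).
\end{align*}
The same genericity hypothesis makes $V\mapsto\widehat\TM_r^\pi V$ differentiable near $V_r^\pi$ with derivative $\gamma\widehat P^{\mathrm{wc}}$, the $\pi$-weighted worst-case transition matrix for $V_r^\pi$ under the \emph{empirical} ball (nonnegative entries, row sums $\gamma$); writing $\widehat\M^\pi:=I-\gamma\widehat P^{\mathrm{wc}}$ for the derivative of $I-\widehat\TM_r^\pi$ at $V_r^\pi$, continuity of the worst-case map at $(P^*,V_r^\pi)$ gives $\widehat\M^\pi\toas\M^\pi$, and substituting and cancelling a copy of $\sqrt n(V_r^\pi-\widehat V_r^\pi)$ turns the display into $\sqrt n(\widehat\TM_r^\pi V_r^\pi-\TM_r^\pi V_r^\pi)=-\widehat\M^\pi\sqrt n(V_r^\pi-\widehat V_r^\pi)+o_P(\sqrt n\|V_r^\pi-\widehat V_r^\pi\|)$. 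Since $\|\gamma\widehat P^{\mathrm{wc}}\|_{\infty\to\infty}\le\gamma<1$, $\widehat\M^\pi$ is invertible with inverse bounded by $1/(1-\gamma)$; Step 1 makes the left-hand side $O_P(1)$, so $\sqrt n(V_r^\pi-\widehat V_r^\pi)=O_P(1)$, the remainder is genuinely $o_P(1)$, and Slutsky together with $\widehat\M^\pi\toas\M^\pi$ and consistency $\widehat V_r^\pi\toas V_r^\pi$ (from Section~\ref{sec: gen}) give $\sqrt n(V_r^\pi-\widehat V_r^\pi)\tod\NM(0,(\M^\pi)^{-1}\Lambda^\pi(\M^\pi)^{-\top})$. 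Left-multiplying by the fixed vector $\mu$ and using continuity of $v\mapsto\mu^\top v$ yields the claim.

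\textbf{Main obstacle.} The crux is the differentiability/non-degeneracy analysis underlying both the delta method and the linearization: for each divergence one must verify that at $V=V_r^\pi$ the robust one-step problem has a unique optimizer whose value is jointly $C^1$ in $(P^*(\cdot|s,a),V)$ near the truth. For $L_1$ this is precisely where the strict ordering $V_r^\pi(s_1)<\cdots<V_r^\pi(s_{|\SM|})$ is used --- it pins down the cutoff $K$ and removes the kink in $\sigma_{\PM_{s,a}(\rho)}$; for $\chi^2$ and KL one needs the dual optimizer $\eta^*$ (resp.\ $\lambda^*>0$) interior and the active set $\{s':\eta^*>V_r^\pi(s')\}$ locally constant, so that Danskin's theorem returns an honest gradient rather than a mere one-sided directional derivative. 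Everything else --- strong duality, the multinomial CLT, invertibility of $\M^\pi$ from the $\gamma$-contraction structure, and Slutsky --- is routine.
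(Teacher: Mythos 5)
Your proposal follows essentially the same route as the paper: Danskin's theorem plus the multinomial CLT and the delta method give the asymptotic normality of $\widehat{\TM}_r^\pi V_r^\pi-\TM_r^\pi V_r^\pi$ with the stated diagonal $\Lambda^\pi$; then the fixed-point identities, a linearization of $\widehat{\TM}_r^\pi$ at $V_r^\pi$, an $O_P(1)$ bootstrapping step (the paper's Lemma on $Z_n=X_n+Y_n$), consistency of the empirical derivative matrix, and Slutsky finish the argument, exactly as in the appendix. Two remarks. First, for the $L_1$ case the paper does not linearize with an $o_P$ remainder: it exploits that on the cone where the ordering of $V$ is preserved the operator $\TM_r^\pi-I$ is exactly affine (its Lemma on $\Phi^\pi(V_1)-\Phi^\pi(V_2)=-\M^\pi(V_1-V_2)$), combined with the almost-sure fact that $\widehat V_r^\pi$ eventually shares the ordering of $V_r^\pi$; your uniform ``differentiable near $V_r^\pi$'' treatment works here too, but only because local affineness holds, and your observation that $\M^\pi=I-\gamma P^{\mathrm{wc}}$ with $P^{\mathrm{wc}}$ row-stochastic gives a cleaner invertibility bound $\|(\M^\pi)^{-1}\|_{\infty\to\infty}\le 1/(1-\gamma)$ than the paper's unstated $\sigma_{\min}(\M^\pi)>0$ assumption. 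Second, a genuine (if modest) gap: for the $L_1$ ball you attribute the needed non-degeneracy entirely to the strict ordering $V_r^\pi(s_1)<\cdots<V_r^\pi(s_{|\SM|})$, but differentiability of $P\mapsto\inf_{Q\in\PM_{s,a}(\rho)}\langle Q,V_r^\pi\rangle$ at $P^*(\cdot|s,a)$, and the a.s.\ convergence $K(\widehat P(\cdot|s,a))\to K(P^*(\cdot|s,a))$ behind $\widehat\M^\pi\toas\M^\pi$, additionally require $\sum_{i\le m}P^*(s_i|s,a)\neq 1-\rho/2$ for all $m$ (a condition on $P^*$, not on the ordering), which the paper imposes explicitly in its delta-method lemma and consistency lemma; similarly, the interior/uniqueness checks for the $\chi^2$ and KL dual optimizers that you flag as the ``main obstacle'' are carried out in the paper via its continuity-of-argmax lemma and are needed to make the envelope-theorem gradients and the $o_P$ remainder legitimate.
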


Notably, the asymptotic variance is determined by robust value function $V_r^\pi(\mu)$, $P^*$ and optimal dual variables $\lambda^*$, $\eta^*$ of robust optimization problem $\TM_r^\pi V_r^\pi$. To estimate the asymptotic variance, we can substitute these variables with consistent estimators $\widehat{V}_r^\pi$, $\widehat{P}$, and $\widehat{\lambda}^*$, $\widehat{\eta}^*$, where $\widehat{\lambda}^*$ and $\widehat{\eta}^*$ are dual solutions of problem $\widehat{\TM}_r^\pi\widehat{V}_r^\pi$.
Thus, an asymptotic confidence interval for a given policy $\pi$ can be given by Slutsky's lemma.

We now give the asymptotic results of $\max_\pi \widehat{V}^\pi_r(\mu)$. Prior to that, we define the robust Q-value function $Q_r^\pi(s,a)$ as:
\begin{align*}
    Q_r^\pi(s,a)=R(s,a)+\gamma\inf_{P\in\PM_{s,a}(\rho)}P^\top V^\pi_r.
\end{align*}
Under some mild assumptions, we show that the asymptotic normality of $\max_\pi \widehat{V}_r^\pi(\mu)$ still holds in the following corollary.
\begin{cor}
    \label{cor: sup_asymp}
    Assuming $\min_{s, a_1\not=a_2}\left|Q_r^*(s,a_1)-Q_r^*(s,a_2)\right|>0$, we have:
    \begin{align*}
        \sqrt{n}\left(\max_\pi\widehat{V}_r^\pi(\mu)-\max_\pi V_r^\pi(\mu)\right)\tod\NM\left(0, \; \mu^\top(\M^{\pi^*})^{-1}\Lambda^{\pi^*}(\M^{\pi^*})^{-\top}\mu\right),
    \end{align*}
    where $\pi^*\in\argmax_\pi V_r^\pi(\mu)$, where $\M^{\pi^*}$ and $\Lambda^{\pi^*}$ are defined in Theorem~\ref{thm: normal-sa}.
\end{cor}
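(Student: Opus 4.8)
The plan is to leverage the special structure of the $(s,a)$-rectangular case: the optimal robust policy is deterministic, so the outer maximization ranges over the \emph{finite} set of deterministic policies, and the separation hypothesis guarantees that the empirical maximizer eventually coincides with the true one, $\pi^*$. Once that is in place, the statement collapses to Theorem~\ref{thm: normal-sa} applied to the single fixed policy $\pi=\pi^*$, together with a routine ``agreement on an event of probability tending to one'' argument.

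Concretely, I would carry out the following steps. \textbf{(i)} Recall from \citep{nilim2005robust} that under Assumption~\ref{asmp: sa-rect} an optimal robust policy is greedy with respect to the robust $Q$-value function; since $\min_{s,a_1\ne a_2}|Q_r^*(s,a_1)-Q_r^*(s,a_2)|>0$, the set $\argmax_a Q_r^*(s,a)=\{a^*(s)\}$ is a singleton for each $s$, so $\pi^*$ is the \emph{unique} optimal robust policy and $\max_\pi V_r^\pi(\mu)=V_r^{\pi^*}(\mu)$. \textbf{(ii)} I would establish the componentwise consistency $\|\widehat{V}_r^*-V_r^*\|_\infty\le\sup_{\pi\in\Pi}\|\widehat{V}_r^\pi-V_r^\pi\|_\infty\to0$ (in probability, or almost surely via Borel--Cantelli) from Theorem~\ref{thm: finite-sa}, and $\widehat{P}(\cdot|s,a)\to P^*(\cdot|s,a)$ from the law of large numbers, observing that $\widehat{P}$ almost surely eventually has exactly the support of $P^*$ at every $(s,a)$. \textbf{(iii)} Using the continuity of the robust one-step map $(P_0,V)\mapsto\inf_{P\in\PM_{s,a}(\rho)}P^\top V$ in its center $P_0$ and in $V$ — read off from the dual forms computed in the proof of Theorem~\ref{thm: finite-sa} — I would deduce $\widehat{Q}_r^*(s,a)\to Q_r^*(s,a)$ for all $(s,a)$, where $\widehat{Q}_r^*(s,a):=R(s,a)+\gamma\inf_{P\in\widehat{\PM}_{s,a}(\rho)}P^\top\widehat{V}_r^*$. \textbf{(iv)} Let $E_n$ be the event that $\argmax_a\widehat{Q}_r^*(s,a)=\{a^*(s)\}$ for every $s$; by (i) and (iii), $\PB(E_n)\to1$. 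On $E_n$ the deterministic policy $\pi^*$ is greedy for $\widehat{Q}_r^*$, hence optimal for the empirical robust MDP by robust Bellman optimality under $(s,a)$-rectangularity, so $\max_\pi\widehat{V}_r^\pi(\mu)=\widehat{V}_r^{\pi^*}(\mu)$. \textbf{(v)} On $E_n$, combining with (i),
\[
\sqrt{n}\Big(\max_\pi\widehat{V}_r^\pi(\mu)-\max_\pi V_r^\pi(\mu)\Big)=\sqrt{n}\Big(\widehat{V}_r^{\pi^*}(\mu)-V_r^{\pi^*}(\mu)\Big),
\]
and Theorem~\ref{thm: normal-sa} with $\pi=\pi^*$ (after relabeling states so that $V_r^{\pi^*}(s_1)<\cdots<V_r^{\pi^*}(s_{|\SM|})$) shows the right-hand side converges in distribution to $\NM\big(0,\mu^\top(\M^{\pi^*})^{-1}\Lambda^{\pi^*}(\M^{\pi^*})^{-\top}\mu\big)$. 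Since the two displayed sequences differ only on $E_n^c$, whose probability vanishes, their difference is $o_P(1)$ and Slutsky's lemma yields the claim.

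I expect the crux to be steps (iii)--(iv). The delicate point in (iii) is that the empirical uncertainty set $\widehat{\PM}_{s,a}(\rho)$ is defined through the absolute-continuity constraint $P\ll\widehat{P}(\cdot|s,a)$, so one must argue that the $f$-divergence ball, and hence its dual value, depends continuously on the \emph{center} $\widehat{P}(\cdot|s,a)$ and not merely on $V$; for the KL ball this is cleanest after first passing to the almost-sure event on which $\widehat{P}$ has exactly the support of $P^*$. In (iv) the subtlety is the passage from ``the greedy actions of $\widehat{Q}_r^*$ match those of $Q_r^*$'' to ``$\pi^*$ is empirically optimal'', which relies on the robust policy-improvement / Bellman-optimality characterization valid under Assumption~\ref{asmp: sa-rect}. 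A purely analytic alternative — a Hadamard directional delta method for the map $\pi\mapsto\max$ combined with Lemma~\ref{lem: uni-dev-pi} — is conceivable, but the finiteness of the deterministic policy class under Assumption~\ref{asmp: sa-rect} makes the ``eventual coincidence'' route above substantially simpler, and that is the one I would pursue.
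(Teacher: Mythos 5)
Your proposal is correct and follows essentially the same route as the paper: establish $\widehat{Q}_r^*\to Q_r^*$ (via uniform consistency of $\widehat{V}_r^\pi$ and continuity of the dual value in the center $\widehat{P}$), use the gap condition to conclude that the empirical greedy/optimal policy coincides with $\pi^*$ on an event of probability tending to one, and then reduce to Theorem~\ref{thm: normal-sa} at $\pi=\pi^*$ with an $o_P(1)$ plus Slutsky argument. The subtleties you flag in steps (iii)--(iv) are exactly the ones the paper handles through its consistency lemmas for the dual solutions and the deterministic greedy characterization under Assumption~\ref{asmp: sa-rect}.
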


Here we give a high-level idea on why Corollary~\ref{cor: sup_asymp} holds. When sample size $n$ is large, we can prove that $\widehat{Q}_r^*(s,a)$ approximates $Q_r^*(s,a)$ for each $(s,a)$ pair. In addition, as the $(s,a)$-rectangular assumption holds, we also know that $\widehat{\pi}^*(s)=\argmax_a \widehat{Q}_r^*(s,a)$ and $\pi^*(s)=\argmax_a Q_r^*(s,a)$ are both deterministic policies. Thus, by Assumption $\min_{s, a_1\not=a_2}\left|Q_r^*(s,a_1)- Q_r^*(s,a_2)\right|>0$, we conclude that $\widehat{\pi}^*=\pi^*$ when sample size $n$ is large. Thus, we can safely consider $\max_\pi \widehat{V}_r^\pi(\mu)=\widehat{V}_r^{\pi^*}(\mu)$ in an asymptotic regime and obtain Corollary~\ref{cor: sup_asymp} by applying $\pi=\pi^*$ in Theorems~\ref{thm: l1_sa_normal},~\ref{thm: chi2_sa_normal} and~\ref{thm: kl_sa_normal}.

\subsection{Results with the $s$-rectangular assumption}
\label{subsec: asymp_s}

We extend the asymptotic results from the $(s,a)$-rectangular assumption to the $s$-rectangular assumption. Unfortunately, when the $L_1$ uncertainty set is applied, the asymptotic behavior is not guaranteed by the fact that the Bellman operator $\TM_r$ is neither differentiable nor affine w.r.t.\ $V$. Instead, the asymptotic normality still holds when either the $\chi^2$ uncertainty set or the KL uncertainty set is applied, which is presented as follows. 

\begin{thm}
    \label{thm: normal-s}
    Without loss of generality, we assume $V_r^\pi(s_1)<\cdots<V_r^\pi(s_{|\SM|})$. Under the $s$-rectangular assumption,  
    we have that for any fixed policy $\pi\in\Pi$,
    \begin{align*}
        \sqrt{n}\left(\widehat{V}_r^\pi(\mu)-V_r^\pi(\mu)\right)\tod\NM\left(0,\; \mu^\top (\M^\pi)^{-1}\Lambda^\pi (\M^\pi)^{-\top}\mu\right),
    \end{align*}
      where $\Lambda^\pi$ is the asymptotic variance of empirical robust Bellman error, and $\M^\pi$ is the derivative of the operator $I-\TM_r^\pi$ at the point $V_r^\pi$.
    
    Specifically,  $\Lambda^\pi=\diag\{\sigma_1^2(\pi), \ldots, \sigma^2_{|\SM|}(\pi)\}$ where $\sigma^2_s(\pi)=\gamma^2\sigma^2(\pi, P^*(\cdot|s,\cdot), V_r^\pi)$. And 
    
    \begin{enumerate}[label=(\alph*), ref=\thethm (\alph*)]
        \item \label{thm: chi2_s_normal} If $f(t)=(t-1)^2$ in Example~\ref{eg: f-set} ($\chi^2$ balls), then the $(i,j)$th element of $\M^{\pi}$ is:
        \begin{align*}
            \M^\pi(i,j)=&\mathbf{1}\{i=j\}\\
            &-\gamma\sqrt{|\AM|}C(\rho)\frac{\sum_a \pi(a|s_i)P^*(s_j|s_i,a)(\eta^*_a(P^*(\cdot|s_i,\cdot),V_r^\pi)-\pi(a|s_i)V_r^\pi(s_j))_+}{\sqrt{\sum_{\tilde s,a}P^*(\tilde s|s_i,a)(\eta^*_a(P^*(\cdot|s_i,\cdot),V_r^\pi)-\pi(a|s_i)V_r^\pi(\tilde s))_+^2}},
        \end{align*}
        where $C(\rho)=\sqrt{1+\rho}$. And $\sigma^2(\pi, P^*(\cdot|s,\cdot), V_r^\pi)=(b_{s}^{\pi})^\top\Sigma_{s} b_{s}^{\pi}$ where for two pairs $(s_i, a_k)$ and $(s_j, a_l)$:
        \begin{align*}
            \Sigma_s((s_i,a_k),(s_j,a_l))&=\left(-P^*(s_i|s, a_k)\cdot P(s_j|s, a_l)+P(s_i|s, a_k)\mathbf{1}\{ s_i=s_j\}\right)\mathbf{1}\{a_k=a_l\},\\
            b_{s}^\pi(s_i, a_k)&=-   \frac{\sqrt{|\AM|}C(\rho)  (\eta_{a_k}^*(P^*(\cdot|s, \cdot), V_r^\pi)-\pi(a_k|s)V_r^\pi( s_i))_+^2}{2\sqrt{\sum_{s'\in\SM,a'\in\AM}P(s'|s, a')(\eta^*_{a'}(P^*(\cdot|s,\cdot), V_r^\pi)-\pi(a'|s)V_r^\pi(s'))_+^2}},
        \end{align*}
        and $\eta^*$ is the dual solution of $\TM_r^\pi V_r^\pi$.
        \item \label{thm: kl_s_normal} If $f(t)=t\log t$ in Example~\ref{eg: f-set} (KL balls), then the $(i,j)$th element of $\M^{\pi}$ is:
        \begin{align*}
            \M^\pi(i,j)=\mathbf{1}\{i=j\}-\gamma\sum_a \frac{\pi(a|s_i)P^*(s_j|s_i,a)\exp(-\frac{\pi(a|s_i)V_r^\pi(s_j)}{\lambda^*(P^*(\cdot|s_i,\cdot),V^\pi_r)})}{\sum_{\tilde s\in\SM} P^*(\tilde s|s_i,a)\exp(-\frac{\pi(a|s_i)V_r^\pi(\tilde s)}{\lambda^*( P^*(\cdot|s_i,\cdot),V_r^\pi)})},
        \end{align*}
         And $\sigma^2(P^*(\cdot|s,\cdot), V_r^\pi)=(b_{s}^\pi)^\top\Sigma_{s} b_{s}^\pi$ where for two pairs $(s_i, a_k)$ and $(s_j, a_l)$:
        \begin{align*}
            \Sigma_s((s_i,a_k),(s_j,a_l))&=\left(-P^*(s_i|s, a_k)\cdot P(s_j|s, a_l)+P(s_i|s, a_k)\mathbf{1}\{ s_i=s_j\}\right)\mathbf{1}\{a_k=a_l\},\\
            b_s( s_i,  a_k)&=-\frac{\lambda^*(P^*(\cdot|s,\cdot),V_r^\pi)\exp\left(-\frac{\pi(a_k|s)V_r^\pi(s_i)}{\lambda^*(P^*(\cdot|s, \cdot),V_r^\pi)}\right)}{\sum_{s'\in\SM}P(s'|a_k, s)\exp\left(-\frac{\pi(a_k|s)V_r^\pi(s')}{\lambda^*(P(\cdot|s,\cdot),V_r^\pi)}\right)},
        \end{align*}
        and $\lambda^*$ is the dual solution of $\TM_r^\pi V_r^\pi$.
    \end{enumerate}
\end{thm}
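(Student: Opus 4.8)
The plan is to follow, for a fixed policy $\pi$, the same three-step template used for Theorem~\ref{thm: normal-sa}, now applied to the coupled $s$-rectangular inner problem; I keep the normalization $V_r^\pi(s_1)<\cdots<V_r^\pi(s_{|\SM|})$ throughout. First I would dualize, for fixed $V$, the inner minimization $\inf_{(P_a)\in\widehat{\PM}_s(\rho)}\sum_a\pi(a|s)P_a^\top V$, attaching one multiplier $\lambda\ge0$ to the shared budget $\sum_{s',a}f(P_a(s')/P^*(s'|s,a))P^*(s'|s,a)\le|\AM|\rho$ and one multiplier $\eta_a$ to each simplex constraint $\sum_{s'}P_a(s')=1$. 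Strong duality holds because $P^*(\cdot|s,\cdot)$ is strictly feasible for $\rho>0$ (Slater), and minimizing out $P_a(s')\ge0$ gives the stationarity relation $f'(P_a(s')/P^*(s'|s,a))=(\eta_a-\pi(a|s)V(s'))/\lambda$, hence the closed forms $P_a(s')\propto P^*(s'|s,a)(\eta_a-\pi(a|s)V(s'))_+$ for $\chi^2$ and $P_a(s')\propto P^*(s'|s,a)\exp(-\pi(a|s)V(s')/\lambda)$ for KL. Since $f^*$ is $C^\infty$ in both cases, the dual objective is smooth in $(\lambda,(\eta_a)_a,P^*(\cdot|s,\cdot),V)$; assuming the budget is active ($\lambda^*>0$) and the active support $\{s':\pi(a|s)V(s')<\eta_a^*\}$ is strict at $V_r^\pi$ (guaranteed generically by the strict ordering), the implicit function theorem makes the optimal $(\lambda^*,(\eta_a^*)_a)$ a locally $C^1$ function of $(P^*,V)$ near $(P^*,V_r^\pi)$, and so is the inner value $g_s(P^*,V)$. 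This is exactly where the $L_1$ ball is excluded: its conjugate is non-smooth and its inner value only piecewise linear in $V$.

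\textbf{Step 2 (CLT for the Bellman noise).} Given the dual representation, $\widehat{\TM}_r^\pi V_r^\pi(s)$ is a smooth function of the independent empirical multinomials $\widehat P(\cdot|s,a)$, $a\in\AM$. Combining the multivariate CLT $\sqrt n(\widehat P(\cdot|s,a)-P^*(\cdot|s,a))\tod\NM(0,\Sigma_{s,a})$ (with $\Sigma_{s,a}$ the multinomial covariance and independence across $(s,a)$) with the delta method and the envelope-theorem gradient $b_s^\pi$ of the dual value with respect to $P^*(\cdot|s,\cdot)$ — which is precisely the vector written in the statement, the block-diagonal $\Sigma_s$ over actions being inherited from sample independence — yields $\sqrt n(\widehat{\TM}_r^\pi V_r^\pi-\TM_r^\pi V_r^\pi)\tod\NM(0,\Lambda^\pi)$ with $\Lambda^\pi=\diag\{\gamma^2\sigma^2(\pi,P^*(\cdot|s,\cdot),V_r^\pi)\}_{s\in\SM}$ and $\sigma^2(\pi,P^*(\cdot|s,\cdot),V_r^\pi)=(b_s^\pi)^\top\Sigma_s b_s^\pi$.

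\textbf{Step 3 (linearization and Slutsky).} Set $\M^\pi:=I-D\TM_r^\pi|_{V_r^\pi}$; by Danskin's theorem the $(i,j)$ entry of $D\TM_r^\pi|_{V_r^\pi}$ equals $\gamma\sum_a\pi(a|s_i)P_a^{\mathrm{worst}}(s_j)$, and substituting the closed-form worst-case laws from Step~1 and eliminating $\lambda^*$ through the active-budget identity reproduces the matrices in parts (a) and (b). Invertibility of $\M^\pi$ with $\|(\M^\pi)^{-1}\|_{\infty\to\infty}\le(1-\gamma)^{-1}$ follows from $\TM_r^\pi$ being a $\gamma$-contraction in $\|\cdot\|_\infty$, so $\|D\TM_r^\pi|_{V_r^\pi}\|_{\infty\to\infty}\le\gamma$. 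Writing $\widehat V_r^\pi-V_r^\pi=(\widehat{\TM}_r^\pi\widehat V_r^\pi-\widehat{\TM}_r^\pi V_r^\pi)+(\widehat{\TM}_r^\pi V_r^\pi-\TM_r^\pi V_r^\pi)$, then using the consistency $\widehat V_r^\pi\toas V_r^\pi$ (Supplementary Appendix~11), the convergences $\widehat\lambda^*\to\lambda^*$, $\widehat\eta_a^*\to\eta_a^*$, $\widehat P\to P^*$ and the eventual preservation of the strict ordering to Taylor-expand the first bracket as $(D\TM_r^\pi|_{V_r^\pi}+o_P(1))(\widehat V_r^\pi-V_r^\pi)+o_P(\|\widehat V_r^\pi-V_r^\pi\|)$, I obtain $\M^\pi(\widehat V_r^\pi-V_r^\pi)=(\widehat{\TM}_r^\pi V_r^\pi-\TM_r^\pi V_r^\pi)+o_P(\|\widehat V_r^\pi-V_r^\pi\|)$; hence $\sqrt n(\widehat V_r^\pi-V_r^\pi)=O_P(1)$ and $\sqrt n(\widehat V_r^\pi-V_r^\pi)=(\M^\pi)^{-1}\sqrt n(\widehat{\TM}_r^\pi V_r^\pi-\TM_r^\pi V_r^\pi)+o_P(1)$. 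Left-multiplying by $\mu^\top$ and applying Slutsky's lemma gives the stated limit $\NM(0,\mu^\top(\M^\pi)^{-1}\Lambda^\pi(\M^\pi)^{-\top}\mu)$.

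\textbf{Main obstacle.} The delicate point is Step~1: in the $(s,a)$-rectangular setting of Theorem~\ref{thm: normal-sa} the inner problem splits into $|\AM|$ independent one-dimensional dual problems, whereas here the budget couples all actions, so I must show that the joint dual solution $(\lambda^*,(\eta_a^*)_a)$ exists, is unique, and is a smooth function of $(P^*,V)$ near $(P^*,V_r^\pi)$. This requires a nondegeneracy check on the Hessian of the dual objective in the dual variables together with local constancy of the active support, which is exactly what the strict-ordering hypothesis and $\lambda^*>0$ buy us; it also explains why the $L_1$ ball cannot be treated. Once differentiability is secured, the remaining bookkeeping — the explicit forms of $b_s^\pi$ and $\M^\pi$ and the elimination of $\lambda^*$ — is routine.
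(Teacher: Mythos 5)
Your proposal is correct and follows essentially the same route as the paper: dualize the coupled $s$-rectangular inner problem, get the CLT for $\sqrt{n}(\widehat{\TM}_r^\pi V_r^\pi-\TM_r^\pi V_r^\pi)$ via Danskin's theorem plus the delta method, then linearize the fixed-point relation $\widehat{V}_r^\pi=\widehat{\TM}_r^\pi\widehat{V}_r^\pi$ at $V_r^\pi$, establish consistency of the derivative matrix, and finish with Slutsky. Your contraction argument for invertibility of $\M^\pi$ (operator norm of $D\TM_r^\pi$ at most $\gamma$ in $\|\cdot\|_\infty$) is a clean justification that the paper leaves implicit.

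The only substantive divergence is in your Step 1, and it is where you take on more than you need. The paper first eliminates one multiplier in closed form ($\lambda$ for $\chi^2$, $\eta$ for KL), so the remaining dual problem is a concave maximization over a compact set in $\eta\in\RB^{|\AM|}$ (resp.\ a scalar $\lambda$); continuity of the maximizer in $(P,V)$ is then obtained from a uniform-convergence-of-concave-functions lemma, and differentiability of the value in $P$ and $V$ comes from Danskin/envelope arguments, exploiting that $x\mapsto(x)_+^2$ is $C^1$ for $\chi^2$ and everything is smooth for KL. By contrast, you keep both multipliers and invoke the implicit function theorem on the dual first-order conditions, which forces you to assume $\lambda^*>0$, local constancy of the active support, and nondegeneracy of the dual Hessian. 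None of these is part of the theorem, and the strict ordering $V_r^\pi(s_1)<\cdots<V_r^\pi(s_{|\SM|})$ does not deliver them (e.g.\ for large $\rho$ the budget can be slack, and ties $\pi(a|s)V_r^\pi(s')=\eta_a^*$ are not excluded by the ordering); so as written your argument proves the statement only under extra "generic" regularity. The fix is exactly the paper's: you never actually need smoothness of $(\lambda^*,\eta^*)$ in $(P,V)$ — Danskin in $P$ (for the delta method) and Danskin in $V$ applied to the primal $\min_{P\in\PM_s}\sum_a\pi(a|s)P_a^\top V$ (for $D\TM_r^\pi$, using uniqueness of the worst-case kernel) suffice, after which your Steps 2 and 3 go through unchanged.
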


However, different from the $(s,a)$-rectangular setting, the optimal policies $\widehat{\pi}^*\in\argmax_{\pi}\widehat{V}_r^\pi$ and $\pi^*\in\argmax_{\pi}V_r^\pi$ could be stochastic. Thus, we can only obtain $\widehat{\pi}^*\toas \pi^*$ in the $s$-rectangular setting (Theorem~\ref{thm: sup_pi_consistent}) and can not just set $\pi=\pi^*$ in Theorems~\ref{thm: chi2_s_normal} and \ref{thm: kl_s_normal}. Fortunately, we could still obtain a result of asymptotic normality  of $\widehat{V}_r^*$ in Corollary~\ref{cor: sup_asymp_s}, where the details can be found in Supplementary Appendix 11.

\begin{thm}
    \label{thm: sup_pi_consistent}
    Assuming $\pi^*\in\argmax_\pi V_r^\pi$ is unique, we have that $\widehat{V}_r^*\toas V_r^*$ and $\widehat{\pi}^*\toas \pi^*$.
\end{thm}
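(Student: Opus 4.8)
The plan is to deduce both assertions from one ingredient, the \emph{uniform almost-sure convergence}
\[
  \varepsilon_n:=\sup_{\pi\in\Pi}\bigl|\widehat{V}_r^\pi(\mu)-V_r^\pi(\mu)\bigr|\toas 0 .
\]
To get this I would revisit the proofs of Theorems~\ref{thm: finite-sa} and~\ref{thm: finite-s}: although those are phrased in terms of the performance gap~\eqref{eq: gap}, their arguments (through Lemmas~\ref{lem: uni-dev-pi}--\ref{lem: num-eps-pi}, together with $|V_r^\pi(\mu)-\widehat{V}_r^\pi(\mu)|\le\|V_r^\pi-\widehat{V}_r^\pi\|_\infty$ since $\mu$ is a probability vector) actually bound $\varepsilon_n$ by a quantity of order $\widetilde{\OM}\bigl(n^{-1/2}\sqrt{\log(1/\delta)}\bigr)$ with probability at least $1-\delta$, for every $\delta\in(0,1)$. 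Choosing $\delta=\delta_n$ with $\sum_n\delta_n<\infty$ (e.g.\ $\delta_n=n^{-2}$) and applying the Borel--Cantelli lemma yields $\varepsilon_n\toas 0$. The first claim is then immediate from the elementary inequality $|\max_\pi a_\pi-\max_\pi b_\pi|\le\sup_\pi|a_\pi-b_\pi|$ applied with $a_\pi=\widehat{V}_r^\pi(\mu)$, $b_\pi=V_r^\pi(\mu)$, which gives $\bigl|\widehat{V}_r^*-V_r^*\bigr|\le\varepsilon_n\toas 0$.

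For $\widehat{\pi}^*\toas\pi^*$ I would run the classical argmax (M-estimation) consistency argument, e.g.\ \citet[Theorem~5.7]{van2000asymptotic}. Its hypotheses hold here: (i) $\Pi=\Delta(\AM)^{|\SM|}$ is compact; (ii) $\pi\mapsto V_r^\pi(\mu)$ is continuous on $\Pi$ --- for each fixed $V$ the map $\pi\mapsto\TM_r^\pi V=R^\pi+\gamma\inf_{P\in\PM}P^\pi V$ is continuous ($R^\pi$ is linear in $\pi$, and under $s$-rectangularity $\pi(\cdot|s)\mapsto\inf_{(P_a)_a\in\PM_s}\sum_a\pi(a|s)P_a^\top V$ is a finite infimum of linear functions, hence concave and continuous), and $\TM_r^\pi$ is a $\gamma$-contraction in $V$ uniformly in $\pi$, so $\|V_r^{\pi_1}-V_r^{\pi_2}\|_\infty\le\frac{1}{1-\gamma}\|\TM_r^{\pi_1}V_r^{\pi_2}-\TM_r^{\pi_2}V_r^{\pi_2}\|_\infty\to0$ as $\pi_1\to\pi_2$; (iii) by hypothesis $\pi^*$ is the unique maximizer of $\pi\mapsto V_r^\pi(\mu)$; (iv) $\varepsilon_n\toas 0$. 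Now fix a sample point in the probability-one event $\{\varepsilon_n\to0\}$ and suppose $\widehat{\pi}^*\not\to\pi^*$ there. By compactness extract a subsequence with $\widehat{\pi}^*_{n_k}\to\widetilde{\pi}\neq\pi^*$. Using $\varepsilon_{n_k}$ twice and the optimality identity $\widehat{V}_r^{\widehat{\pi}^*_{n_k}}(\mu)=\widehat{V}_r^*\ge\widehat{V}_r^{\pi^*}(\mu)$,
\begin{align*}
    V_r^{\widehat{\pi}^*_{n_k}}(\mu)
    &\ge\widehat{V}_r^{\widehat{\pi}^*_{n_k}}(\mu)-\varepsilon_{n_k}
    =\widehat{V}_r^*-\varepsilon_{n_k}\\
    &\ge\widehat{V}_r^{\pi^*}(\mu)-\varepsilon_{n_k}
    \ge V_r^{\pi^*}(\mu)-2\varepsilon_{n_k}.
\end{align*}
Letting $k\to\infty$ and using the continuity in (ii) gives $V_r^{\widetilde{\pi}}(\mu)\ge V_r^{\pi^*}(\mu)$, contradicting (iii); hence $\widehat{\pi}^*\to\pi^*$ on this event, i.e.\ $\widehat{\pi}^*\toas\pi^*$. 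The argument used only $\widehat{V}_r^{\widehat{\pi}^*}(\mu)=\widehat{V}_r^*$, so it is insensitive to which (measurable) selection $\widehat{\pi}^*\in\argmax_\pi\widehat{V}_r^\pi(\mu)$ is taken.

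The argmax step is routine once its inputs are in place, so the real work lies in those inputs. First, one must extract the \emph{uniform} deviation bound for $\varepsilon_n$ cleanly from the Section~\ref{sec: gen} machinery (which is organized around the performance gap rather than around $\varepsilon_n$ itself) and upgrade the high-probability estimate to almost-sure convergence via Borel--Cantelli; this is the step I expect to require the most care. Second, one must verify that $\pi\mapsto V_r^\pi(\mu)$ is genuinely continuous on all of $\Pi$, including at policies on the boundary of the simplices and uniformly across the $L_1$, $\chi^2$ and KL specifications (where the infimum is evaluated through its dual), since this continuity is what allows the subsequential limit $\widetilde{\pi}$ to be compared to $\pi^*$. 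The remaining ingredients --- compactness of $\Pi$, the $\max$ inequality of the first claim, and the contradiction argument --- are standard.
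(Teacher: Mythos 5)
Your proposal is correct and follows essentially the same route as the paper: almost-sure uniform convergence of $\widehat{V}_r^\pi$ over $\Pi$ (high-probability bounds from Section~3 upgraded via Borel--Cantelli) combined with the classical argmax-consistency argument under the uniqueness hypothesis, which is exactly how the paper proceeds (it invokes the Argmax Theorem of van der Vaart and Wellner, phrases the uniqueness step as a well-separation constant $\varepsilon(\delta)$, and applies Borel--Cantelli at the end). Your subsequence/compactness formulation and explicit verification of the continuity of $\pi\mapsto V_r^\pi$ merely spell out what the paper's terse separation step leaves implicit, so the two proofs differ only in organization, not in substance.
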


\begin{cor}
    \label{cor: sup_asymp_s}
    Assuming $\pi^*$ is unique, we have:
    \begin{align*}
        \sqrt{n}\left(\max_\pi\widehat{V}_r^\pi(\mu)-\max_\pi V_r^\pi(\mu)\right)\tod\NM\left(0, \; \mu^\top(\M^{\pi^*})^{-1}\Lambda^{\pi^*}(\M^{\pi^*})^{-\top}\mu\right),
    \end{align*}
    where $\pi^*\in\argmax_\pi V_r^\pi(\mu)$,  where $\M^{\pi^*}$ and $\Lambda^{\pi^*}$ are defined in Theorem~\ref{thm: normal-sa}.
\end{cor}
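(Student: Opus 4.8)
The plan is to squeeze $\max_\pi\widehat{V}_r^\pi(\mu)-\max_\pi V_r^\pi(\mu)$ between two quantities sharing the limiting law of Theorem~\ref{thm: normal-s}, and to use the consistency $\widehat{\pi}^*\toas\pi^*$ from Theorem~\ref{thm: sup_pi_consistent}. Let $\pi^*$ be the unique maximizer of $\pi\mapsto V_r^\pi(\mu)$ and $\widehat{\pi}^*\in\argmax_\pi\widehat{V}_r^\pi(\mu)$. Optimality of $\widehat{\pi}^*$ for $\widehat{V}_r$ and of $\pi^*$ for $V_r$ gives
\[
\widehat{V}_r^{\pi^*}(\mu)-V_r^{\pi^*}(\mu)\ \le\ \max_\pi\widehat{V}_r^\pi(\mu)-\max_\pi V_r^\pi(\mu)\ \le\ \widehat{V}_r^{\widehat{\pi}^*}(\mu)-V_r^{\widehat{\pi}^*}(\mu).
\]
Scaled by $\sqrt n$, the left endpoint converges to $\NM(0,V^*)$ with $V^*:=\mu^\top(\M^{\pi^*})^{-1}\Lambda^{\pi^*}(\M^{\pi^*})^{-\top}\mu$, by Theorem~\ref{thm: normal-s} at $\pi=\pi^*$. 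Since at any continuity point of the limit distribution the c.d.f.\ of the middle term is squeezed between those of the two endpoints, the squeeze theorem for convergence in distribution reduces the corollary to showing that the right endpoint, scaled by $\sqrt n$, also converges to $\NM(0,V^*)$.

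To this end I would make the fixed-point linearization behind Theorem~\ref{thm: normal-s} uniform over a small ball $B$ around $\pi^*$ in $\Pi=\Delta(\AM)^{|\SM|}$. For the $\chi^2$ and KL sets the dual solution of $\TM_r^\pi V_r^\pi$ is unique and $C^1$ in $\pi$ on $B$ — here one uses that the strict ordering $V_r^\pi(s_1)<\dots<V_r^\pi(s_{|\SM|})$, hence the active set of the $(\cdot)_+$ terms, is preserved on $B$ — so each coordinate $\widehat{\TM}_r^\pi V_r^\pi(s)$ is a $C^1$ function $\phi_s(\widehat{P}(\cdot|s,\cdot),\pi,V_r^\pi)$ of the empirical transitions out of $s$, and $\M^\pi,\Lambda^\pi$ are continuous in $\pi$. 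A first-order expansion in $\widehat{P}-P^*$, together with the already-established linearization of $I-\widehat{\TM}_r^\pi$ at $V_r^\pi$ by $\widehat{\M}^\pi$, then yields, uniformly over $\pi\in B$,
\[
\sqrt n\bigl(\widehat{V}_r^\pi(\mu)-V_r^\pi(\mu)\bigr)=\mu^\top(\widehat{\M}^\pi)^{-1}J(\pi)^\top\sqrt n\bigl(\widehat{P}-P^*\bigr)+r_n(\pi),\qquad \sup_{\pi\in B}|r_n(\pi)|\overset{P}{\to}0,
\]
where $J(\pi)$ is the deterministic (continuous in $\pi$) Jacobian of $(\phi_s)_s$ at $P^*$, and the random factor $\sqrt n(\widehat{P}-P^*)=O_P(1)$ — the joint CLT term for the per-$(s,a)$ frequency estimators — does \emph{not} depend on $\pi$. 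In particular $J(\pi^*)^\top\sqrt n(\widehat{P}-P^*)$ is asymptotically $\NM(0,\Lambda^{\pi^*})$, recovering Theorem~\ref{thm: normal-s}.

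Finally, evaluating the display at $\pi=\widehat{\pi}^*$ — legitimate since $\widehat{\pi}^*\in B$ eventually by $\widehat{\pi}^*\toas\pi^*$ — and using $J(\widehat{\pi}^*)\to J(\pi^*)$, $\widehat{\M}^{\widehat{\pi}^*}\to\M^{\pi^*}$ (joint continuity in $(\pi,\widehat{P})$ plus $\widehat{P}\toas P^*$ and $V_r^{\widehat{\pi}^*}\to V_r^{\pi^*}$), $\sqrt n(\widehat{P}-P^*)=O_P(1)$, and $\sup_{\pi\in B}|r_n(\pi)|\overset{P}{\to}0$, we get
\[
\sqrt n\bigl(\widehat{V}_r^{\widehat{\pi}^*}(\mu)-V_r^{\widehat{\pi}^*}(\mu)\bigr)=\mu^\top(\M^{\pi^*})^{-1}J(\pi^*)^\top\sqrt n\bigl(\widehat{P}-P^*\bigr)+o_P(1)\ \tod\ \NM(0,V^*),
\]
the same limit as the left endpoint; the squeeze finishes the proof. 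The main obstacle is the uniformity on $B$: one must verify that the implicit-function argument producing the dual solution, $\M^\pi$, $J(\pi)$ and the remainder $r_n(\pi)$ is \emph{locally uniform} in $\pi$ and jointly continuous in $(\pi,\widehat{P})$. This is exactly where smoothness of the conjugate $f^*$ in the $\chi^2$ and KL cases is indispensable, and it is why the corollary — inheriting the restriction from Theorem~\ref{thm: normal-s} — excludes the $L_1$ ball, for which $\TM_r^\pi$ is merely piecewise affine in $V$ and the dual solution fails to be differentiable.
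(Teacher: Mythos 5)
Your route is genuinely different from the paper's, and its first half is sound: the sandwich $\widehat{V}_r^{\pi^*}(\mu)-V_r^{\pi^*}(\mu)\le \max_\pi\widehat{V}_r^\pi(\mu)-\max_\pi V_r^\pi(\mu)\le \widehat{V}_r^{\widehat{\pi}^*}(\mu)-V_r^{\widehat{\pi}^*}(\mu)$ is correct, the left endpoint is handled by Theorem~\ref{thm: normal-s} at $\pi=\pi^*$, and the c.d.f.\ squeeze for convergence in distribution is valid. The problem is the right endpoint. Everything there hinges on the display you state "uniformly over $\pi\in B$", i.e.\ a locally uniform-in-$\pi$ linearization with $\sup_{\pi\in B}|r_n(\pi)|\overset{P}{\to}0$. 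That is not a corollary of Theorem~\ref{thm: normal-s}; it is a strict strengthening of it, and it is precisely the hard part. To make it rigorous you would need, at least: uniform over $B$ $\sqrt{n}$-tightness of $\widehat{V}_r^\pi-V_r^\pi$ (the paper only proves this pointwise, via Lemma~\ref{lem: consistent} applied at a fixed $\pi$); joint continuity and local stability in $\pi$ of the dual solutions $\eta^*(\cdot)$, $\lambda^*(\cdot)$, of $\M^\pi$, and of the linearization point $V_r^\pi$ itself (which moves with $\pi$, so you are expanding around a $\pi$-dependent fixed point, not a single one); and, in the $\chi^2$ case, an active-set stability condition ensuring no term $\eta^*_a-\pi^*(a|s)V_r^{\pi^*}(s')$ sits exactly at zero — an extra non-degeneracy hypothesis that the corollary does not assume. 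As written, the proposal asserts this machinery ("the main obstacle is the uniformity on $B$") rather than supplying it, so there is a genuine gap.

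The paper avoids the uniformity issue entirely by linearizing the \emph{optimal} robust Bellman operator rather than the fixed-policy ones: writing $\TM_r V=\max_\pi\TM_r^\pi V$ and $\Phi=\TM_r-I$, it notes $\widehat{V}_r^*$ and $V_r^*$ are fixed points of $\widehat{\TM}_r$ and $\TM_r$, applies Danskin's theorem to the joint maximization over $(\pi,\eta)$ (using uniqueness of $(\pi^*,\eta^*)$, which is exactly the corollary's assumption) to get a CLT for $\sqrt{n}(\widehat{\TM}_r V_r^*-V_r^*)$ with variance $\Lambda^{\pi^*}$, and uses the envelope theorem to differentiate $\Phi$ at the single point $V_r^*$, obtaining derivative $-\M^{\pi^*}$ and a consistent plug-in $\widehat{\M}_n$. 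One linearization at one point then finishes the argument exactly as in the fixed-policy proofs, with no data-dependent policy ever entering. If you want to salvage your sandwich, the cleanest fix is to replace the uniform expansion for the right endpoint by this optimal-operator argument — at which point the sandwich itself becomes unnecessary.
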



\subsection{Interpretation of asymptotic variance} 
\label{subsec: discussasympvar}

The asymptotic variances of robust MDPs under different assumptions all share a similar expression $\mu^\top(\M^{\pi})^{-1}\Lambda^{\pi}(\M^{\pi})^{-\top}\mu$, where $\M^\pi$ and $\Lambda^\pi$ are determined by the choice of uncertainty sets. The term $\Lambda^\pi$ is actually the asymptotic variance of $\sqrt{n}\left(\widehat{\TM}_r^\pi V_r^\pi-\TM_r^\pi V_r^\pi\right)$, which reduces to the variance of empirical Bellman noise~\cite{li2021polyak} in the setting of  non-robust MDPs. Recall that $\M^\pi$ is the derivative of 
operator $I - \TM_r^\pi$ at the point $V_r^\pi$, which reduces to the matrix $I-\gamma P^\pi$ in the non-robust MDPs setting. In other words, the asymptotic variances in robust MDPs share a similar structure with the asymptotic variance in non-robust MDPs~\cite{jiang2016doubly, li2021polyak}. Besides, the asymptotic results imply the empirical robust value function converges to the true robust value function with a typical rate $\sqrt{n}$. Thus, a direct application of our asymptotic results is to construct a confidence interval for the robust value function, as long as we have a good estimation of asymptotic variances. Indeed, the asymptotic variances have explicit forms in our results (Theorems~\ref{thm: normal-sa} and \ref{thm: normal-s}). Thus, to construct a confidence interval from the dataset, we can plug empirical estimator $\widehat{P}$ and robust value function $\widehat{V}_r^*$ into the asymptotic variance. We leave the details in the experiments section.

\section{Experiments}
\label{sec: exp}

To evaluate the statistical performance of robust MDPs, we conduct several numerical experiments in this section. We choose randomly generated MDPs as experiment environments. Under the $(s,a)$-rectangular setting, we run the classic algorithm Robust Value Iteration (RVI) \citep{iyengar2005robust} on random MDPs to show that we can obtain a near-optimal value function $\widehat{V}^*_r$ and policy $\widehat{\pi}\in\argmax\widehat{V}_r^\pi$ efficiently. Under the $s$-rectangular setting, we run the Bisection algorithm, which was proposed by \citet{ho2018fast}. The details of environments and algorithms are given in Supplementary Appendix 12.

\subsection{Convergence Guarantees}
\label{subsec: converge}

We first investigate the convergence performance of RVI on a random MDP, where $|\SM|=20, |\AM|=10$, $\gamma=0.9$. We leave the generation mechanism details in Supplementary Appendix 12. For every choice of $f\in\{L_1, \chi^2, \text{KL}\}$, $\rho\in\{0.1, 0.5, 1.0\}$ and $n\in\{10, 50, 100, 500, 1000\}$, we run RVI independently for $1000$ times and draw average performances in Fig~\ref{fig: converge_1}. In Fig~\ref{fig: converge_1}, the x-axis stands for the number of iteration steps and the y-axis stands for estimation error $\|V_t-V_r^*\|_\infty$, where the $V_t$ come from RVI. 

\begin{figure}[t!]
    \centering
    \begin{tabular}{cc}
        \includegraphics[width=.5\columnwidth]{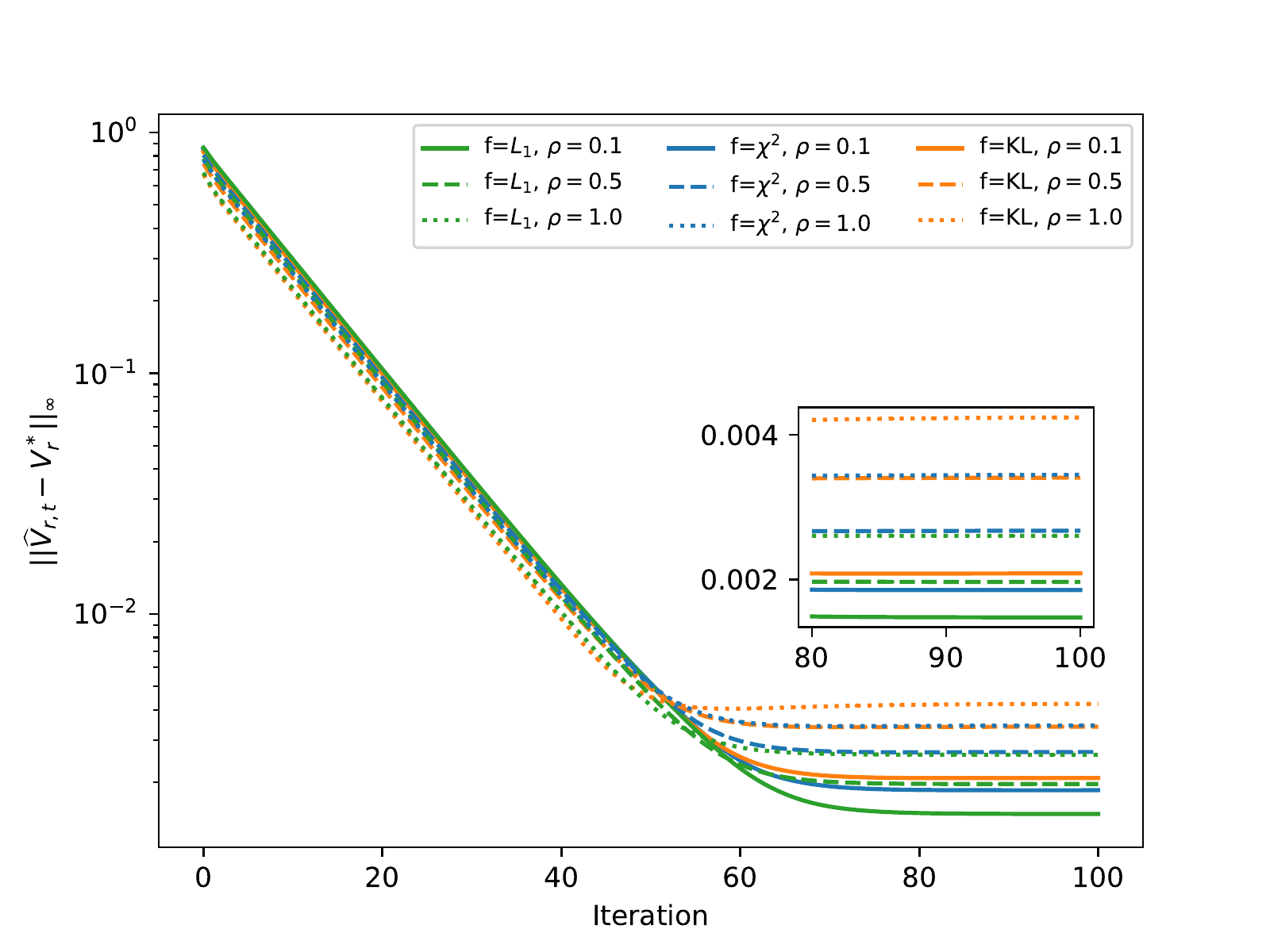}&
        \includegraphics[width=.5\columnwidth]{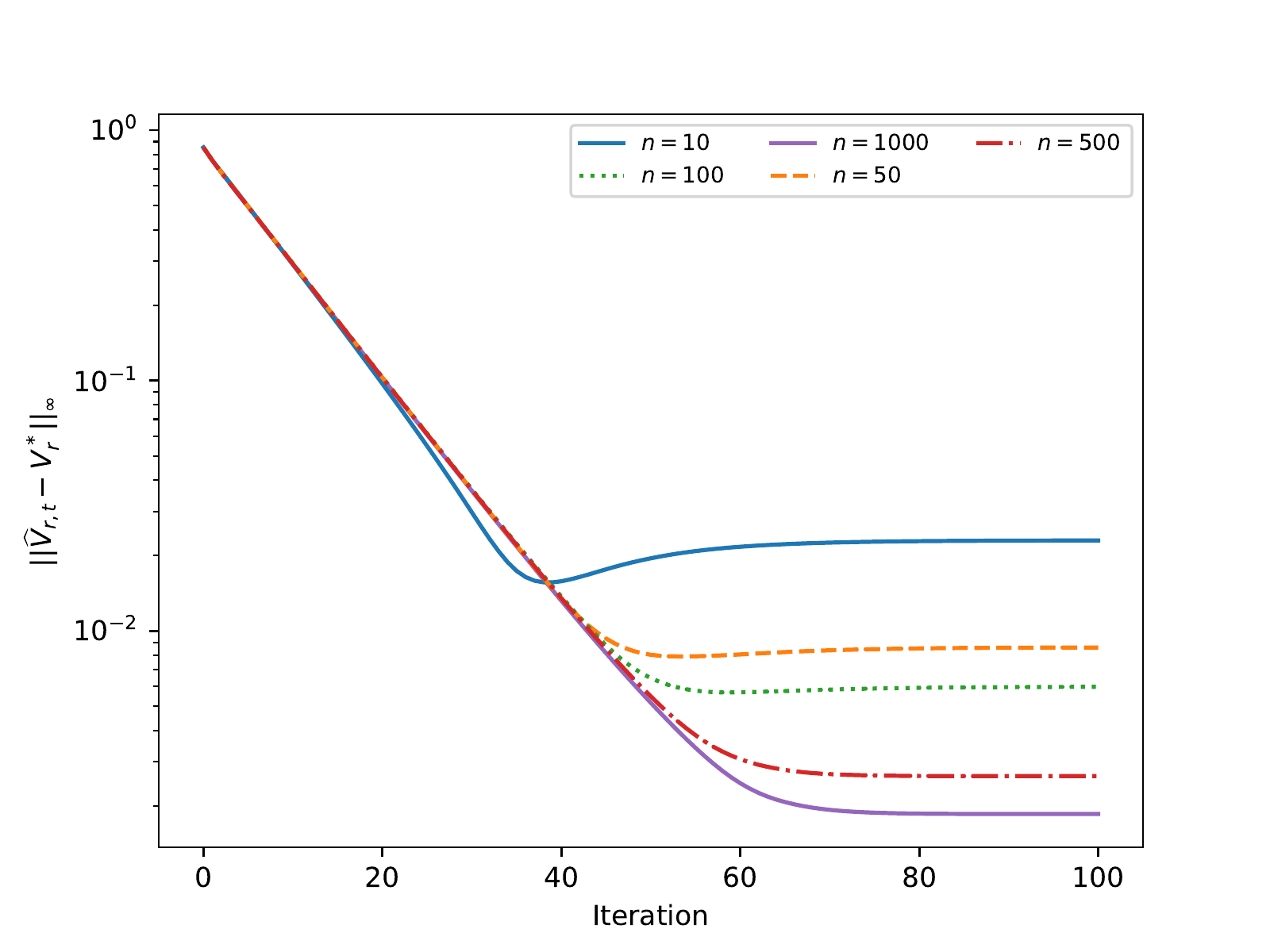}\\
        (a) $n=1000$ &(b) $f=\chi^2$, $\rho=0.1$
    \end{tabular}
    \caption{Convergence results of RVI under $(s,a)$-rectangular settings. (a): Results of all cases with $n=1000$. (b): Results when $f=\chi^2$ and $\rho=0.1$.}
    \label{fig: converge_1}
\end{figure}

In Fig~\ref{fig: converge_1}(a), we show the convergence results with all the cases. It can be observed that the convergence rate is linear at the first stage and then becomes stable at a certain error level for all the settings. Indeed, $V_t$ converges to $\widehat{V}_r^*$ at linear rate and there exist statistical errors between $\widehat{V}_r^*$ and $V_r^*$. Thus, the first stage in Fig~\ref{fig: converge_1}(a) is due to linear convergence rate of $\|V_t-\widehat{V}_r^*\|_\infty$ and the second stage is due to statistical error $\|\widehat{V}_r^*-V_r^*\|_\infty$. In Fig~\ref{fig: converge_1}(b), we set $f=\chi^2$,  $\rho=0.1$ and $n\in\{10, 50, 100, 500, 1000\}$. It is worth noting that the final performance is correlated with the choice of $n$. In fact, the statistical error $\|\widehat{P}_{s,a}-P_{s,a}\|_1$ is correlated with $n$. When $n$ is small, it is no wonder that the final performance is bad.

In addition, we run the Bisection algorithm on another random MDP with $|\SM|=|\AM|=5$, $\gamma=0.9$ under the $s$-rectangular setting. We choose a smaller MDP than the $(s,a)$-rectangular case by the fact that $s$-rectangular problems are more difficult to deal with. For every choice of $f\in\{\chi^2, \text{KL}\}$, $\rho\in\{0.05, 0.1, 0.5\}$ and $n\in\{10, 50, 100, 500, 1000\}$, we also run the Bisection algorithm independently for 1000 times and draw average performances in Fig~\ref{fig: converge_2}. In Fig~\ref{fig: converge_2}(a), we show the results with all the cases. In comparison with the $(s,a)$-rectangular setting, the final statistical errors vary less among the different choices of $f$. In Fig~\ref{fig: converge_2}(b), we choose $f=\chi^2$,  $\rho=0.1$ and $n\in\{10, 50, 100, 500, 1000\}$. From Fig~\ref{fig: converge_2}, it is easily  observed that the convergence performances are similar with the $s$-rectangular settings.
\begin{figure}[t!]
    \centering
    \begin{tabular}{ccc}
        \includegraphics[width=.5\columnwidth]{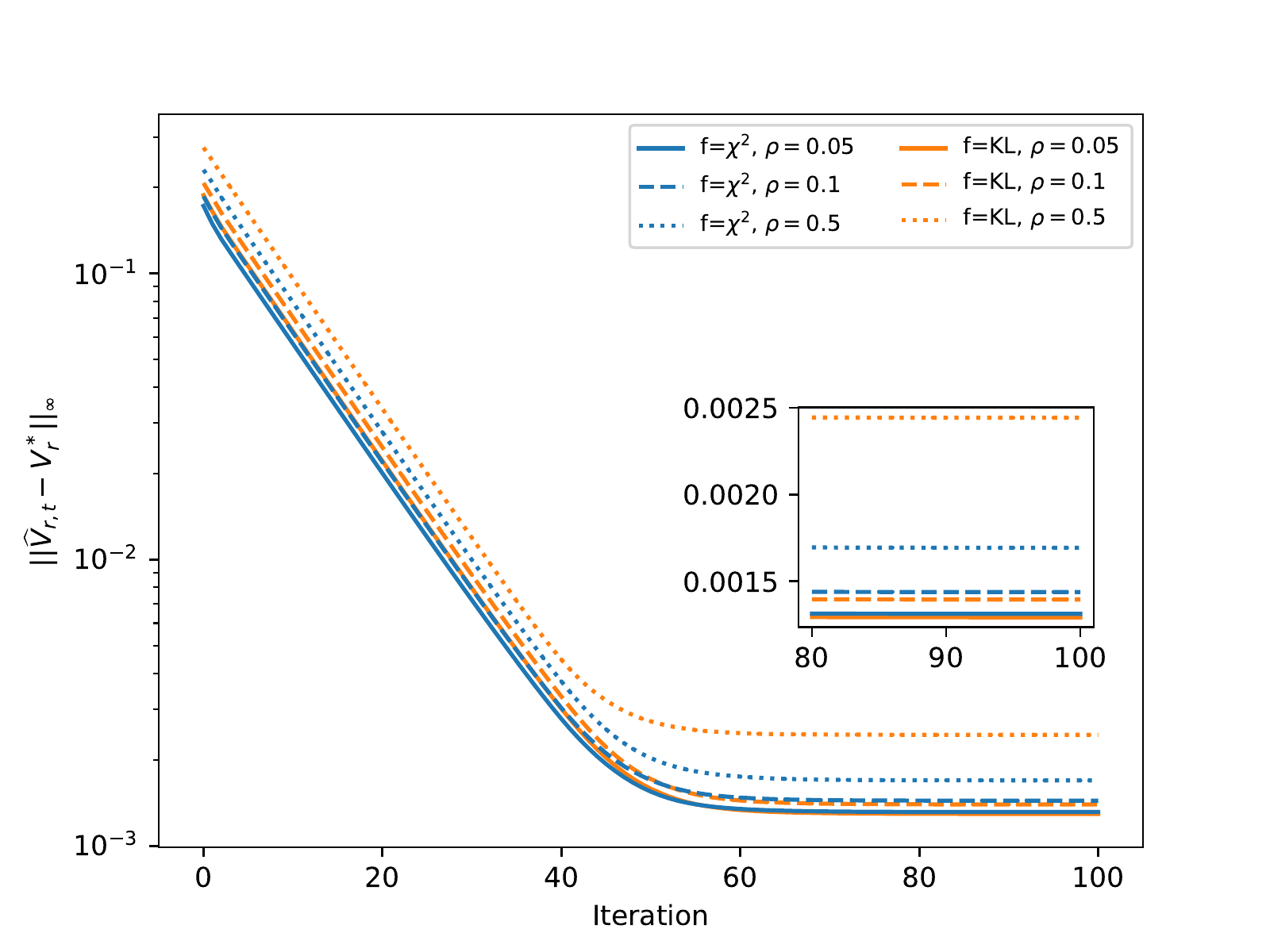}&
        \includegraphics[width=.5\columnwidth]{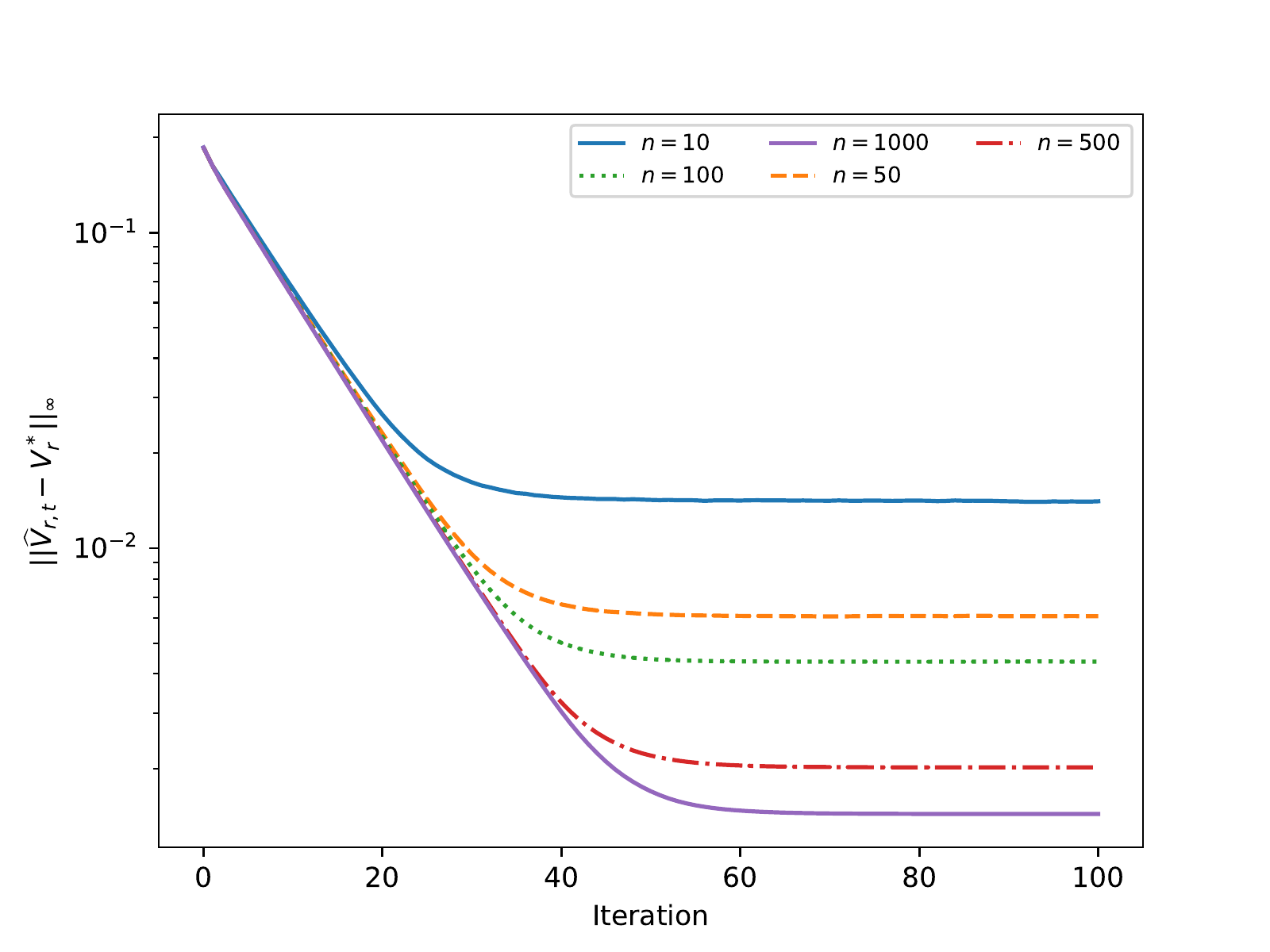}\\
        (a) $f\in\{\chi^2$, KL$\}$ &(b) $n\in\{10, 50, 100, 500, 1000\}$
    \end{tabular}
    \caption{Convergence results of Bisection Algorithm under $s$-rectangular settings. (a): Results of all cases with $n=1000$. (b): Results when $f=\chi^2$ and $\rho=0.1$.}
    \label{fig: converge_2}
\end{figure}

\subsection{Asymptotics}

Next, we follow the theoretical results from Section~\ref{sec: asymp} to make inference on $\widehat{V}_T(\mu)$ empirically under the same settings as in Section~\ref{subsec: converge}. First of all, based on RVI under the $(s,a)$-rectangular setting, we estimate $\widehat{\Lambda}$ and $\widehat{\M}^{\pi_T}$ with $\widehat{V}_T$ and obtain $\widehat{\sigma}=\sqrt{\mu^\top (\widehat{\M}^{\pi_T})^{-1}\widehat{\Lambda}(\widehat{\M}^{\pi_T})^{-\top}\mu}$, where $\Lambda$ and $\M^\pi$ are defined in Section~\ref{sec: asymp}. Then we are able to construct a confidence interval $\text{CI}_n(p)=[\widehat{V}_T - z_{p}\frac{\widehat{\sigma}}{\sqrt{n}}, \widehat{V}_T + z_{p}\frac{\widehat{\sigma}}{\sqrt{n}}]$, where $z_p$ is the $p$-quantile of the standard normal distribution $\NM(0,1)$. By the fact that $\widehat{\M}^\pi$ and $\widehat{\Lambda}$ are consistent (refer to the detailed proofs in Section~\ref{apd: asymp}),  we can safely say $\lim_{n\rightarrow\infty}\PB(V_r^*(\mu)\in\text{CI}_n(p))=1-2(1-p)$. To evaluate our theory, we test the empirical coverage rate in Table~\ref{tab: coverage} and Figure~\ref{fig: ci}, where we set $p=0.975$ and $z_p = 1.96$. We observe that the empirical coverage rate approximates the desired true coverage rate and the length of confidence interval decreases as the number of samples increases in all the cases. Interestingly, it seems that the length of confidence interval increases as $\rho$ increases.

\begin{figure}[t!]
    \centering
    \begin{tabular}{ccc}
        \includegraphics[width=.32\columnwidth]{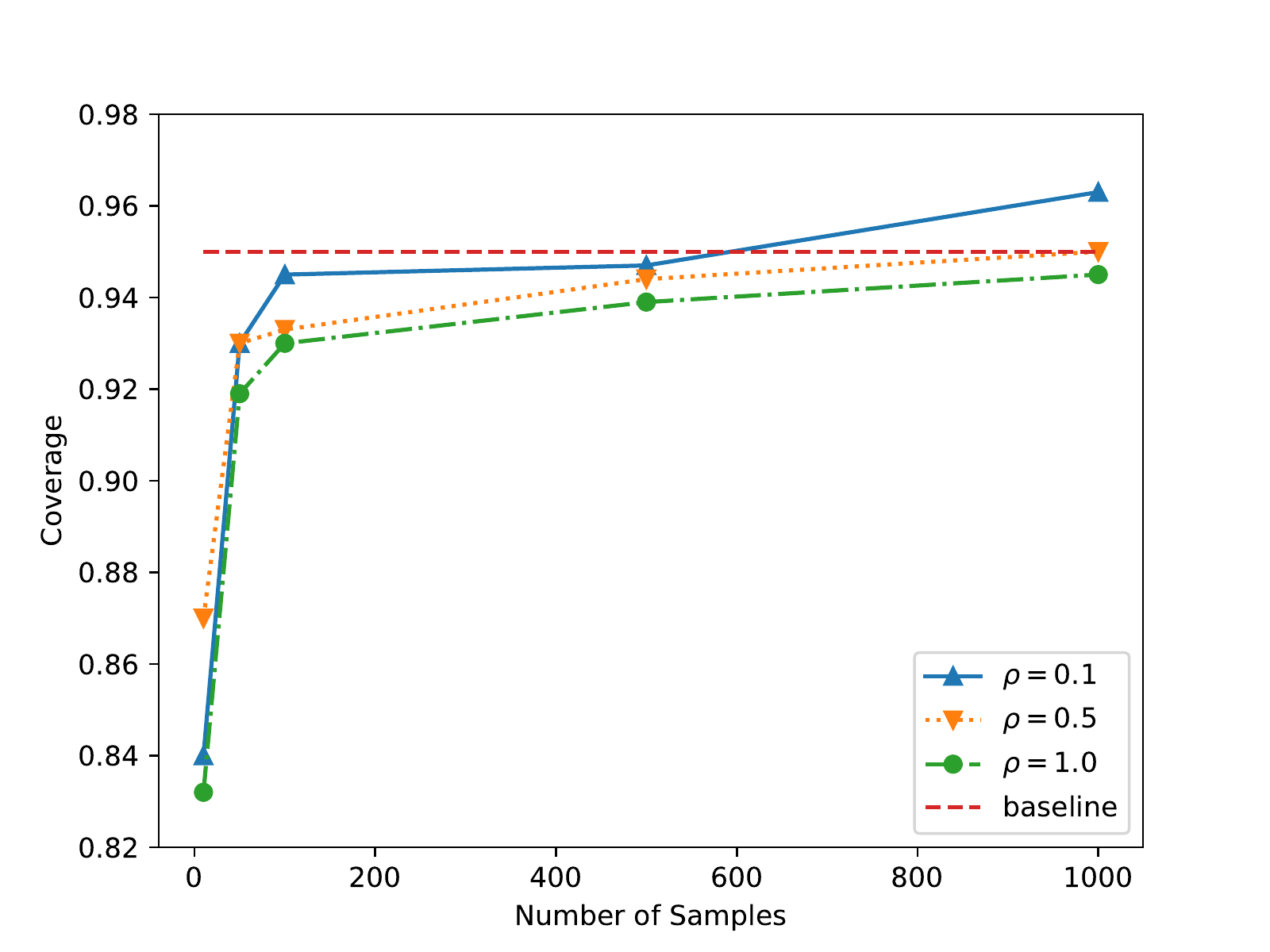}&
        \includegraphics[width=.32\columnwidth]{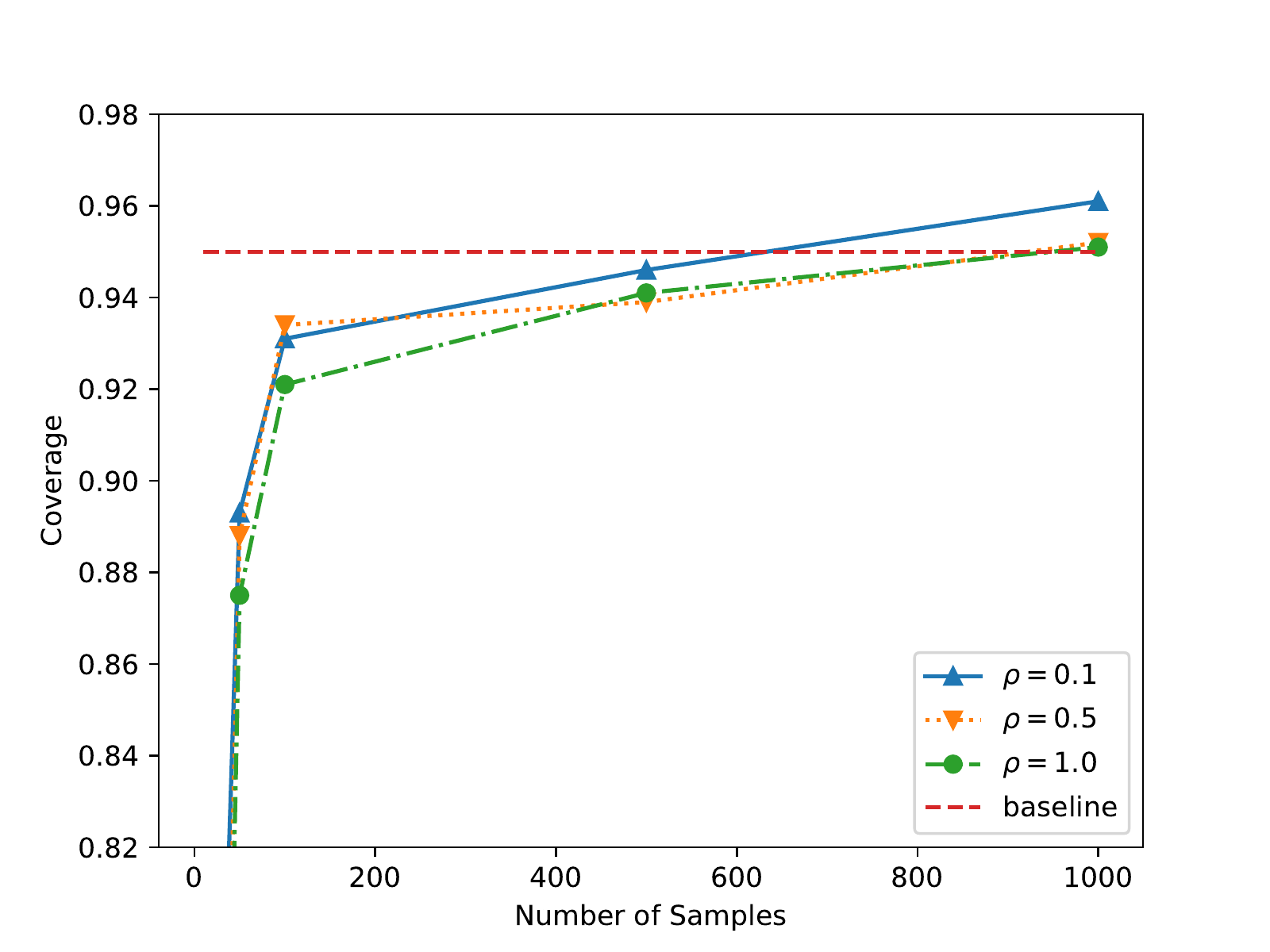}&
        \includegraphics[width=.32\columnwidth]{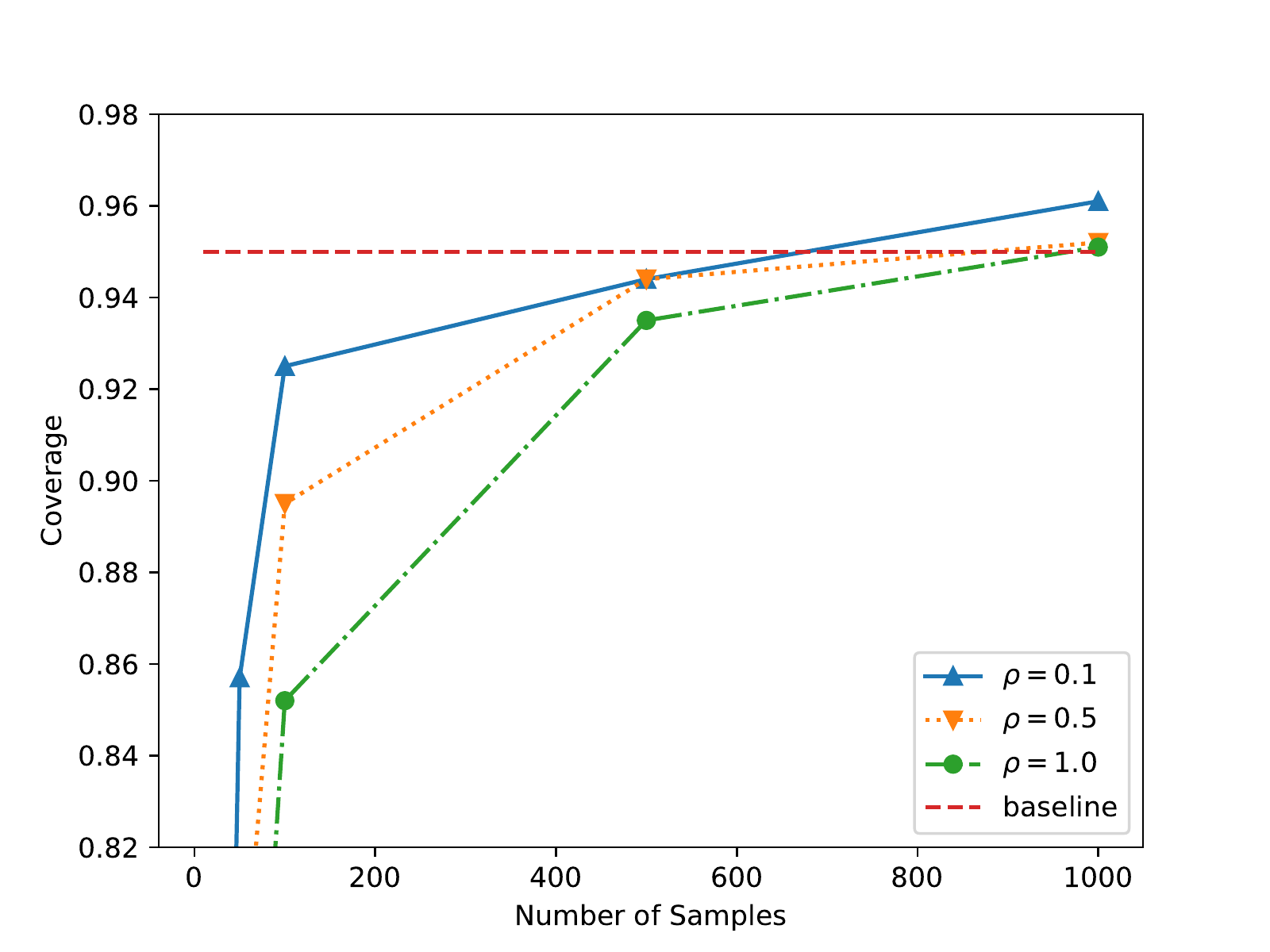}\\
        (a) CR ($f=L_1$) &(b) CR ($f=\chi^2$) & (c) CR ($f=\text{KL}$)\\
        \includegraphics[width=.32\columnwidth]{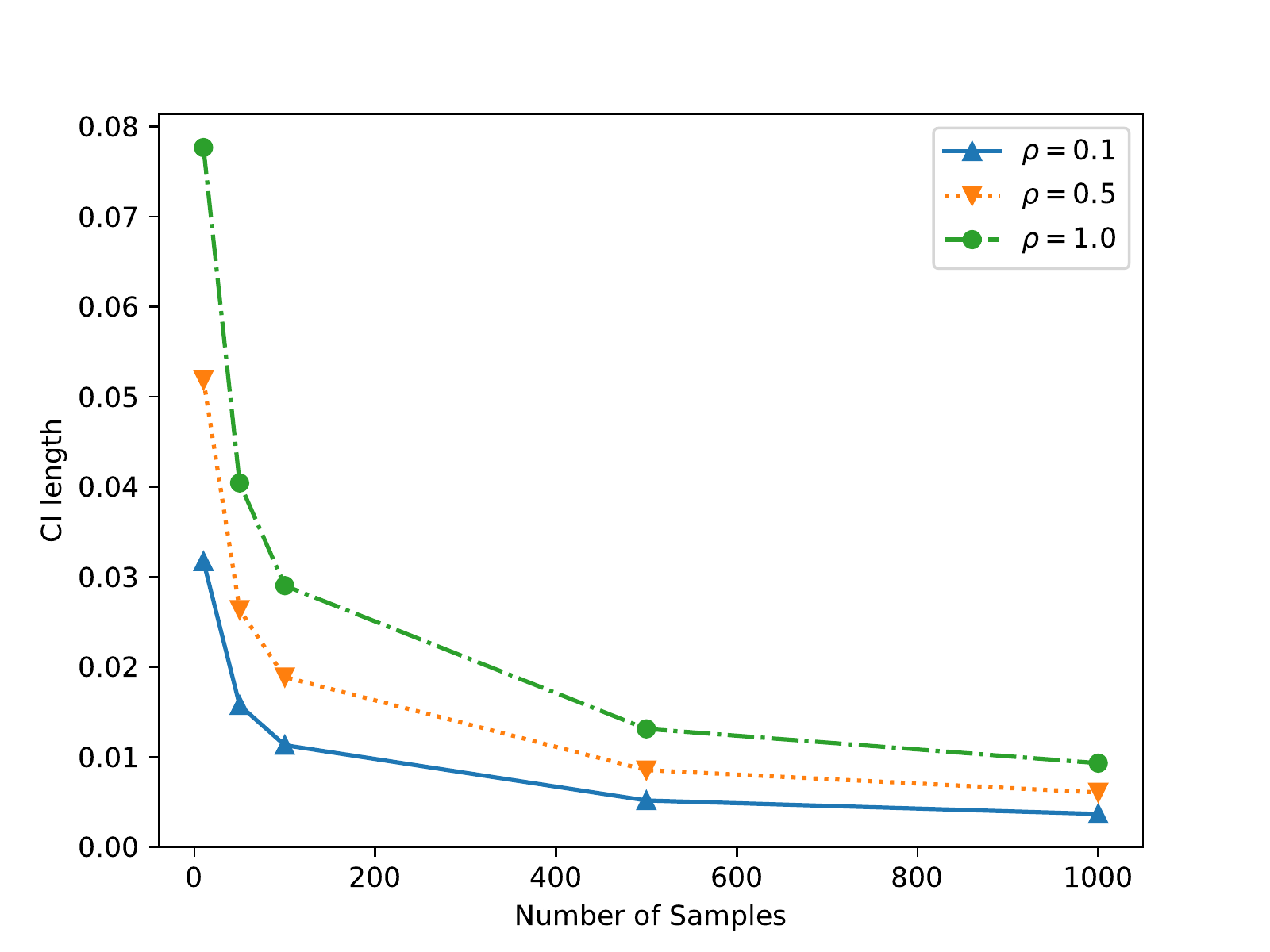}&
        \includegraphics[width=.32\columnwidth]{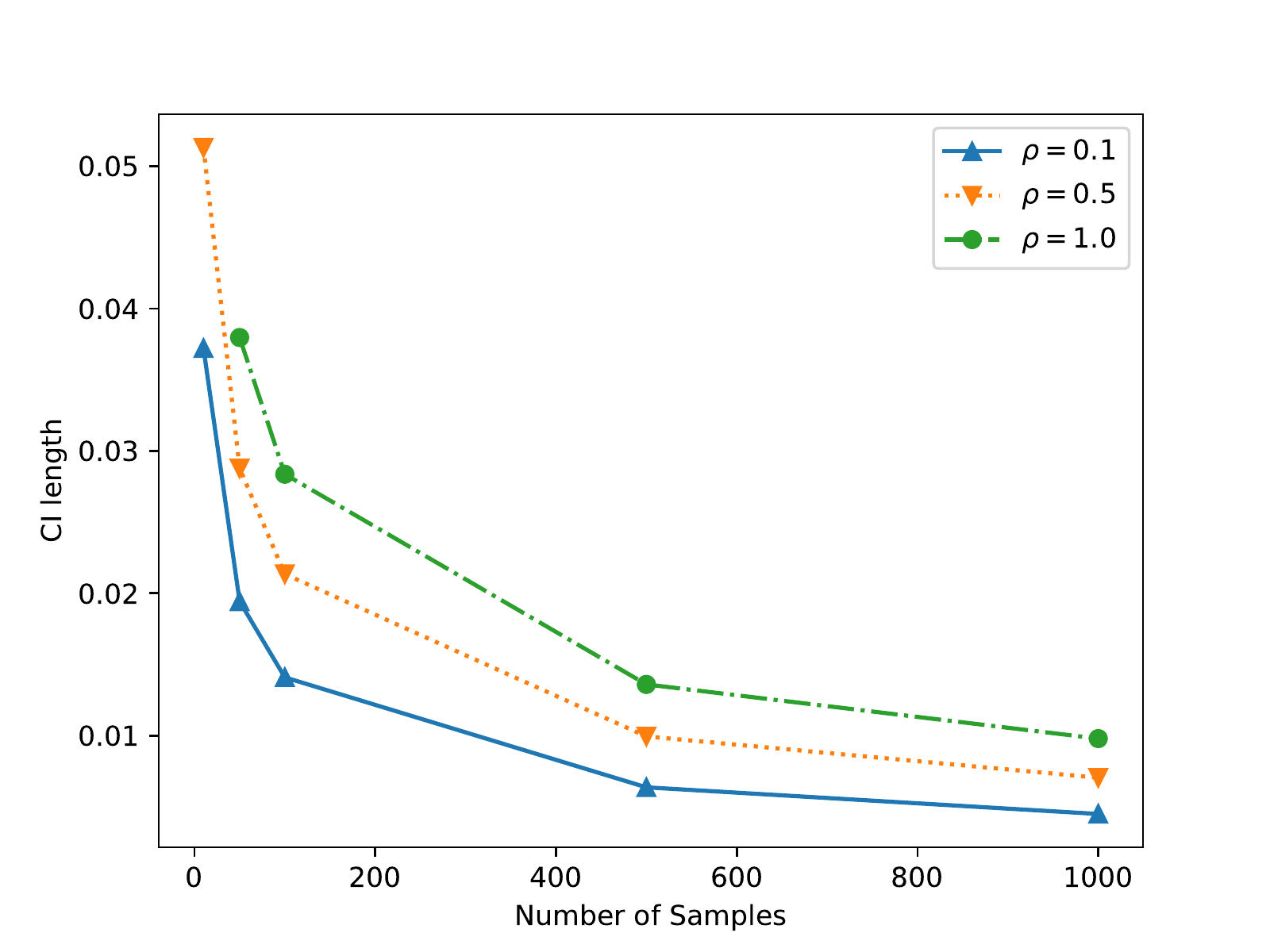}&
        \includegraphics[width=.32\columnwidth]{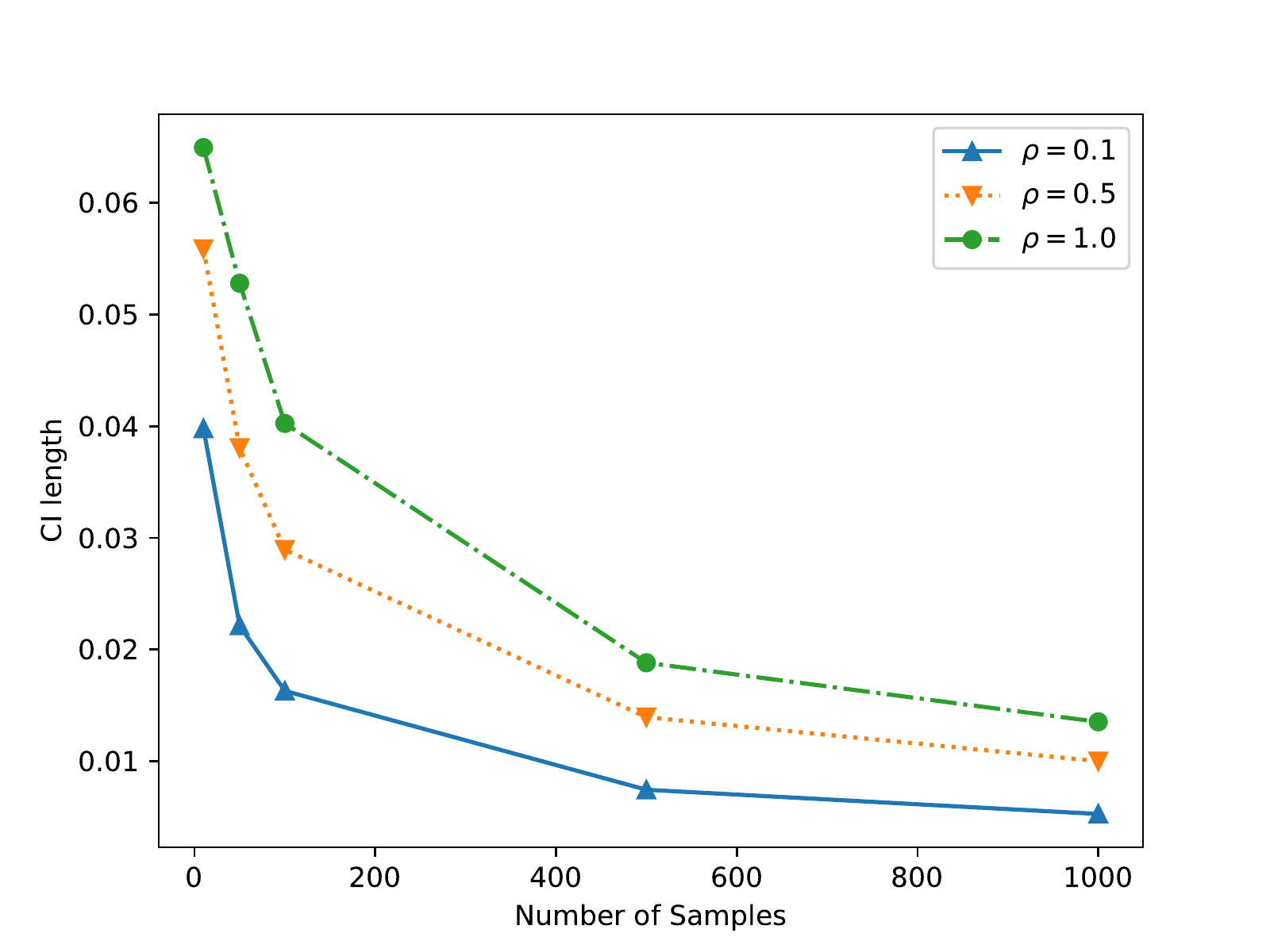}\\
        (d) CIL ($f=L_1$) &(e) CIL ($f=\chi^2$) & (f) CIL ($f=\text{KL}$)\\
    \end{tabular}
    \caption{Coverage rates (CR) and average CI lengths (CIL) under $(s,a)$-rectangular settings.}
    \label{fig: ci}
\end{figure}

\begin{table}[t!]
    \resizebox{\textwidth}{!}{%
    \begin{tabular}{c|c|ccc|ccc|ccc}
    \toprule
    \multicolumn{2}{c}{\multirow{2}{*}{Items}} & \multicolumn{3}{c|}{$n=10$} & \multicolumn{3}{c|}{$n=100$} & \multicolumn{3}{c}{$n=1000$} \\ \cmidrule{3-11} 
    \multicolumn{2}{l}{}  & $\rho=0.1$ & $\rho=0.5$ & $\rho=1.0$ & $\rho=0.1$ & $\rho=0.5$ & $\rho=1.0$ & $\rho=0.1$ & $\rho=0.5$ & $\rho=1.0$  \\ \midrule
    \multirow{3}{*}{\tabincell{c}{CR\\($\%$)}} & $L_1$ & 84.0(1.159) & 87.0(1.063) & 83.2(1.182) & 94.5(0.721) & 93.3(0.791) & 93.0(0.801) & 96.3(0.597) & 95.0(0.689) & 94.5(0.721) \\ 
    & $\chi^2$ & 64.3(1.515) & 54.9(1.574) & 50.1(1.581) & 93.1(0.801) & 93.4(0.785) & 92.1(0.853) & 96.1(0.612) & 95.2(0.676) & 95.1(0.683) \\ 
    & KL & 44.8(1.573) & 13.6(1.084) & 5.8(0.739) & 92.5(0.833) & 89.5(0.969) & 85.2(1.123) & 96.1(0.612) & 95.2(0.676) & 95.1(0.683) \\ \midrule
    \multirow{3}{*}{\tabincell{c}{CIL\\($10^{-2}$)}} & $L_1$ & 3.170(0.410) & 5.187(1.590) & 7.767(3.034) & 1.129(0.058) & 1.885(0.188) & 2.901(0.327) & 0.365(0.006) & 0.604(0.019) & 0.929(0.033) \\ 
    & $\chi^2$ & 3.722(0.469) & 5.131(0.851) & 6.411(1.505) & 1.409(0.075) & 2.135(0.163) & 2.836(0.291) & 0.450(0.008) & 0.705(0.019) & 0.979(0.049) \\ 
    & KL & 3.979(0.566) & 5.588(1.335) & 6.496(2.162) & 1.628(0.102) & 2.893(0.291) & 4.025(0.418) & 0.526(0.011) & 0.999(0.321) & 1.351(0.078)      \\ \bottomrule
    \end{tabular}%
    }
    \caption{Results of coverage rate (CR) and confidence interval length (CIL) under $(s,a)$-rectangular settings: The standard errors of CR $\widehat{p}$ are computed via $\sqrt{\widehat{p}(1-\widehat{p})/1000}\times100\%$ and reported inside the parentheses.}
    \label{tab: coverage}
\end{table}


Similarly, we also conduct experiments under the $s$-rectangular setting, where we choose $f\in\{\chi^2,\text{KL}\}$, $\rho\in\{0.05, 0.1, 0.5\}$ and $n\in\{10, 50,100,500,1000\}$, and run the Bisection algorithm on the random MDP ($|\SM|=|\AM|=5$) for $1000$ times. 
We also conclude the coverage under the $s$-rectangular setting in Table~\ref{tab: coverage_s} and Figure~\ref{fig: ci_s}, where the results of the empirical coverage  meet our expectation.

\begin{figure}[t!]
    \centering
    \begin{tabular}{ccc}
        \includegraphics[width=.32\columnwidth]{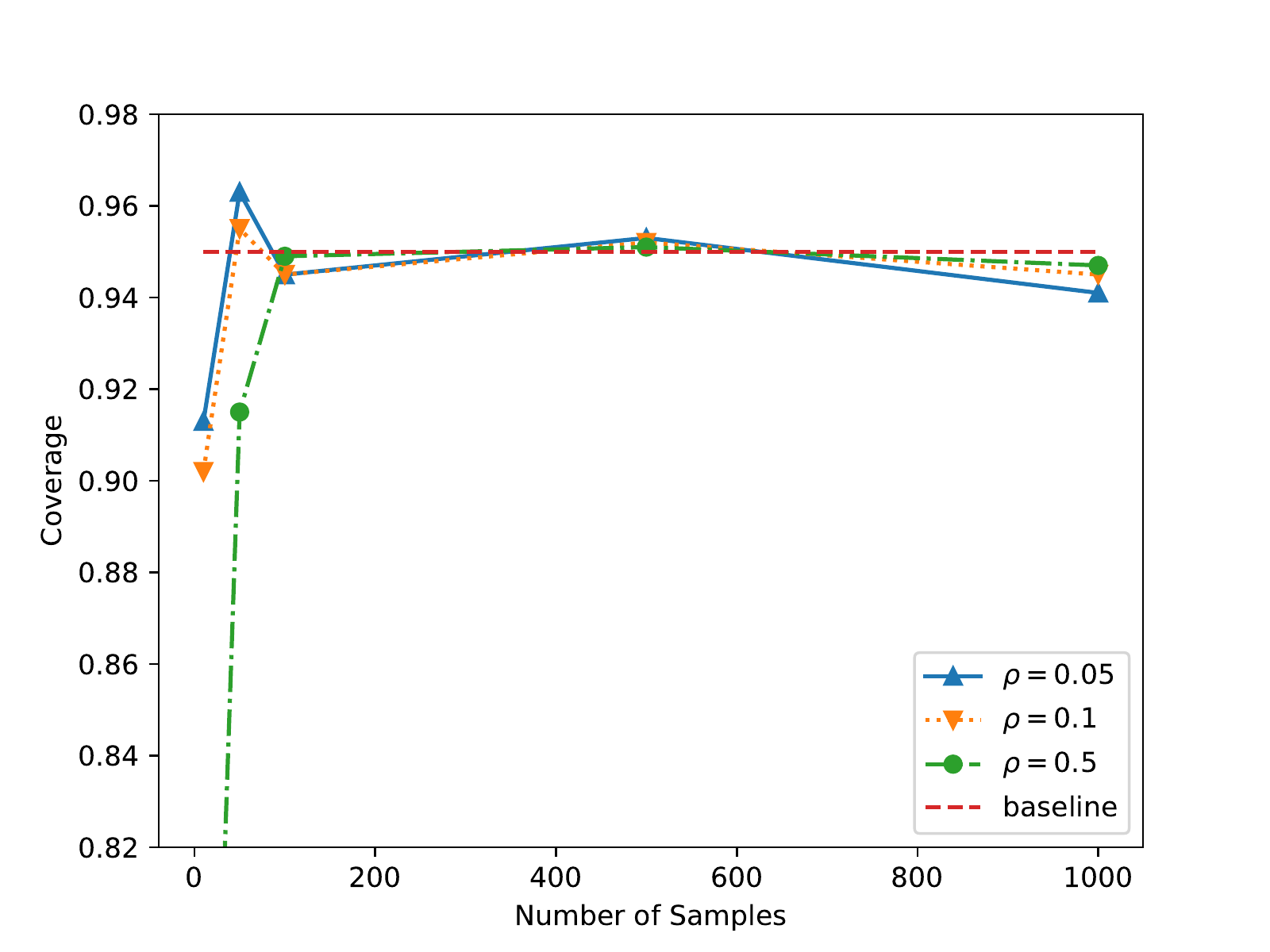}&
        \includegraphics[width=.32\columnwidth]{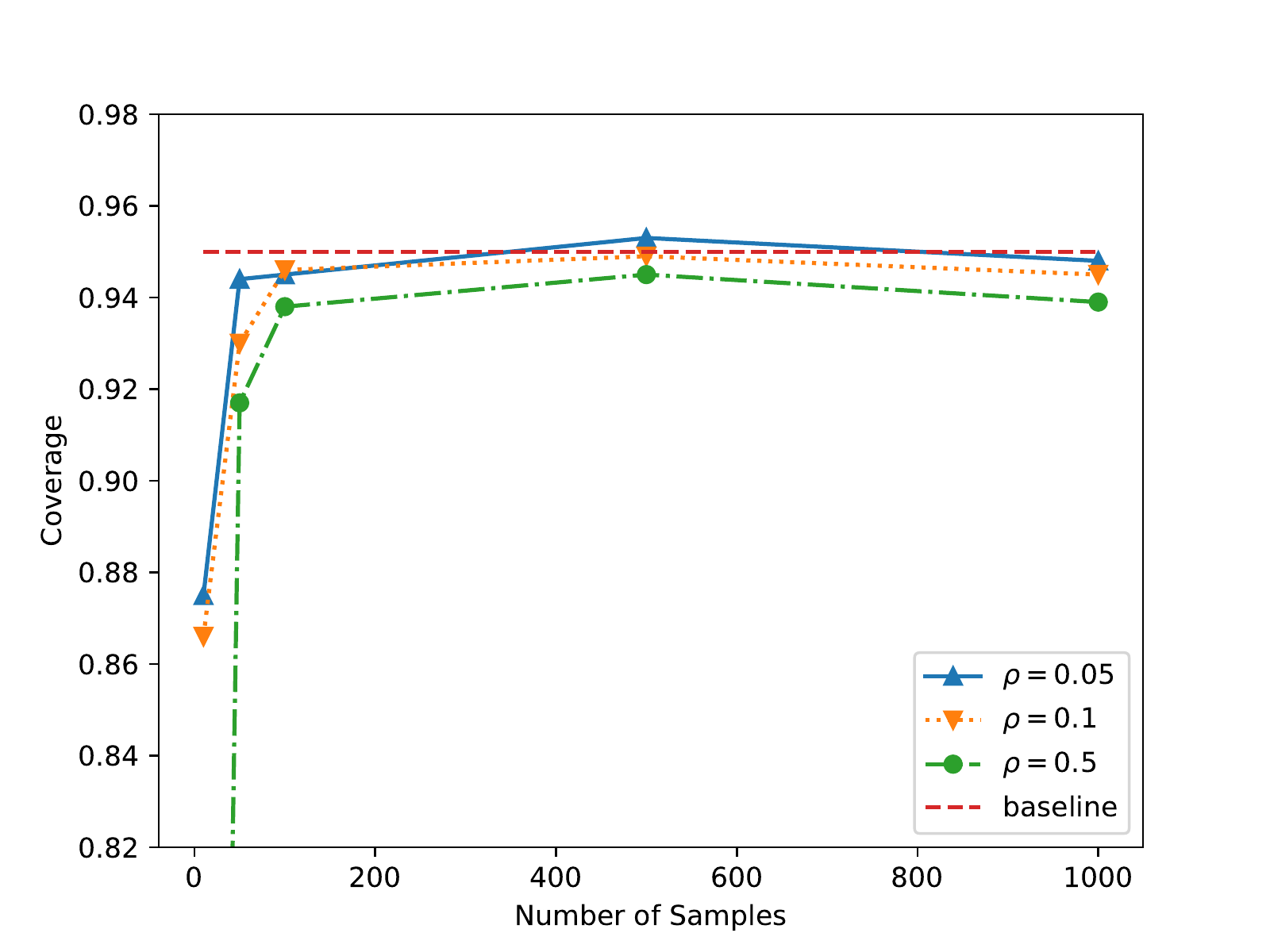}\\
        (a) CR ($f=\chi^2$) & (b) CR ($f=\text{KL}$)\\
        \includegraphics[width=.32\columnwidth]{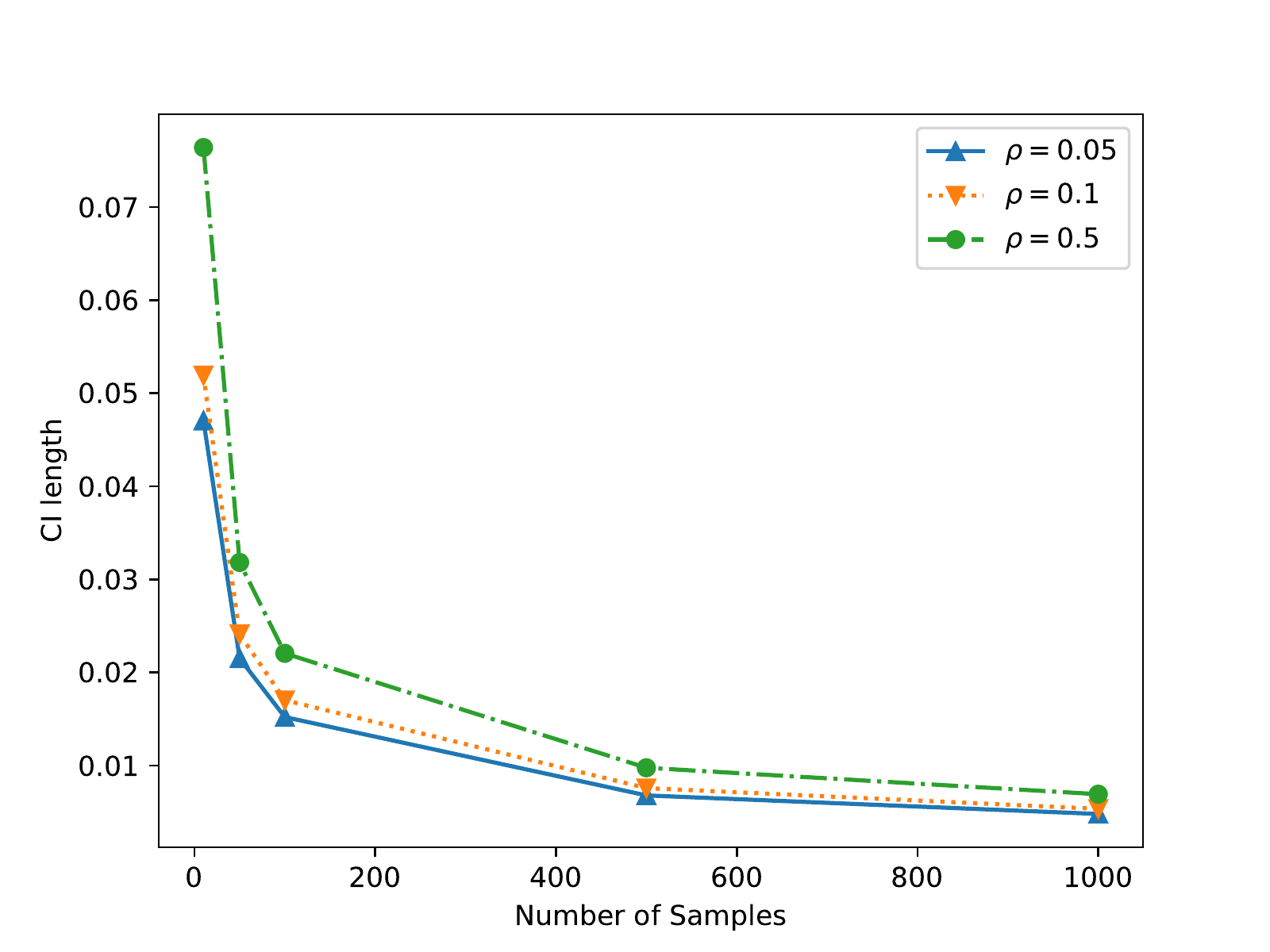}&
        \includegraphics[width=.32\columnwidth]{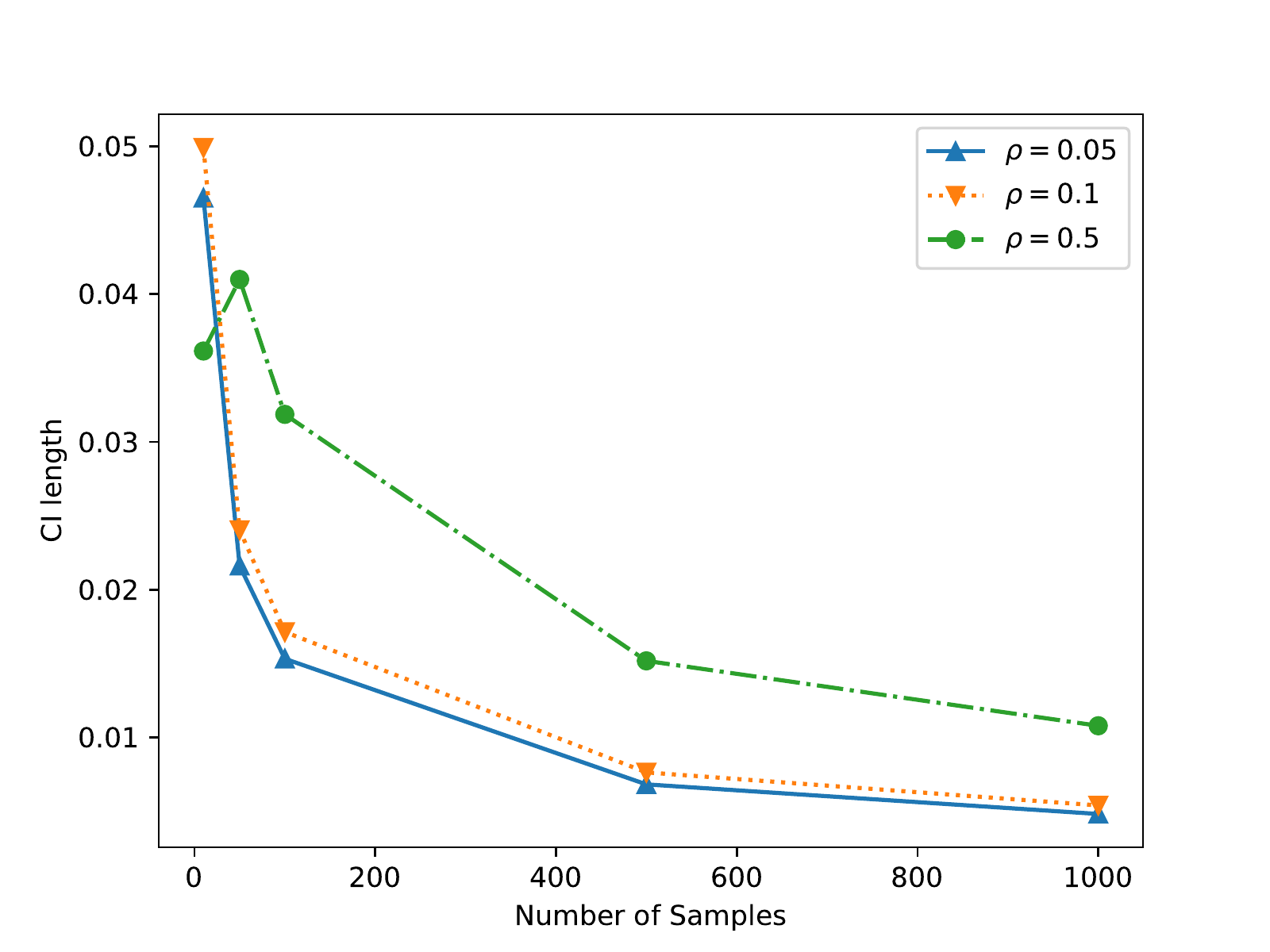}\\
        (c) CIL ($f=\chi^2$) & (d) CIL ($f=\text{KL}$)\\
    \end{tabular}
    \caption{Coverage rates (CR) and average CI lengths (CIL) under $s$-rectangular settings.}
    \label{fig: ci_s}
\end{figure}

\begin{table}[t!]
    \resizebox{\textwidth}{!}{%
    \begin{tabular}{c|c|ccc|ccc|ccc}
    \toprule
    \multicolumn{2}{c}{\multirow{2}{*}{Items}} & \multicolumn{3}{c|}{$n=10$} & \multicolumn{3}{c|}{$n=100$} & \multicolumn{3}{c}{$n=1000$} \\ \cmidrule{3-11} 
    \multicolumn{2}{l}{}  & $\rho=0.05$  & $\rho=0.1$ & $\rho=0.5$    & $\rho=0.05$      & $\rho=0.1$      &  $\rho=0.5$     & $\rho=0.05$      & $\rho=0.1$      &  $\rho=0.5$    \\ \midrule
    \multirow{2}{*}{\tabincell{c}{CR\\($\%)$}}  & $\chi^2$ & 91.3(0.891) & 90.2(0.940) & 68.2(1.473) & 94.5(0.721) & 94.5(0.721) & 94.9(0.696) & 94.1(0.745) & 94.5(0.721) & 94.7(0.708)   \\ 
    &  KL & 87.5(1.046) & 86.6(1.077) & 41.9(1.560) & 94.5(0.721) & 94.6(0.715) & 93.8(0.763) & 94.8(0.702) & 94.5(0.721) & 93.9(0.757)      \\ \midrule
    \multirow{2}{*}{\tabincell{c}{CIL\\($10^{-2}$)}} & $\chi^2$ & 4.704(0.931) & 5.193(1.196) & 7.639(21.657) & 1.522(0.081) & 1.703(0.098) & 2.206(0.336) & 0.481(0.008) & 0.536(0.010) & 0.693(0.031)      \\ 
    &  KL   & 4.650(1.086) & 4.991(1.545) & 3.615(1.720) & 1.533(0.093) & 1.715(0.140) & 3.186(0.612) & 0.484(0.009) & 0.542(0.014) & 1.080(0.046) \\ \hline
    \end{tabular}%
    }
    \caption{Results of coverage rate (CR) and confidence interval length (CIL) under $s$-rectangular settings: The standard errors of CR $\widehat{p}$ are computed via $\sqrt{\widehat{p}(1-\widehat{p})/1000}\times100\%$ and reported inside the parentheses.}
    \label{tab: coverage_s}
\end{table}

\section{Discussion}
\label{sec: conclusion}

In this paper we have studied robust MDPs, which are the foundation of robust RL problems.   
Our primary concern  focuses on the statistical performances  of the optimal robust policy and value function obtained from empirical estimation, including finite-sample results and asymptotics based on the most commonly used uncertainty sets: $L_1$, $\chi^2$, and KL balls. In particular, we have shown that with a polynomial number of samples in the dataset, the performance gap can be controlled well under both the $(s,a)$ and $s$-rectangular assumptions. Furthermore, we have also shown that the empirical robust optimal value function converges with rate $O_P(1/\sqrt{n})$ and converges to a normal distribution, from which we are able to make inferences from the estimators. 

However, some issues still remain open. Firstly, in this paper, the size of the uncertainty set is chosen to be controlled by a positive constant parameter $\rho>0$. The finite-sample results in Section~\ref{sec: gen} tell us that a proper choice of $\rho$ can reduce the sample complexity. There are some prior works \citep{petrik2019beyond,derman2020distributional},  mentioning that the size of the uncertainty set could also be controlled by a shrinking parameter $\rho_n$ (such as $\rho/\sqrt{n}$), whose statistical properties are still unclear. Thus, understanding the adaptive choice of $\rho$ is a vital topic in future research. 

Secondly, in terms of finite-sample results, it is worth noting that there still exists a gap between upper bounds and lower bounds regarding factors $|\SM|$, $|\AM|$ and $1/(1-\gamma)$. Improving the dependence of these parameters is also a significant research direction.

Last but not the least, in the context of asymptotics, we have proved that $\widehat{V}_r^*(\mu)$ is asymptotically normal with rate $\sqrt{n}$ in both the $(s,a)$ and $s$-rectangular assumptions. Under the $(s,a)$-rectangular assumption, the empirical optimal robust policy $\widehat{\pi}^*\in\argmax_\pi \widehat{V}_r^\pi(\mu)$ is exactly the same as $\pi^*\in\argmax_\pi V_r^\pi(\mu)$ when the sample size $n$ is large enough. However, under the $s$-rectangular assumption, we only know that $\widehat{\pi}^*$ converges to $\pi^*$ almost surely without a specific convergence rate. According to \citet{van2000asymptotic}, we argue that if we could have a precise estimate of  $\beta$ in the following equation 
\[
    \EB \sup_{d(\pi_1, \pi_2)<\delta}\sqrt{n}\left|\widehat{V}_r^{\pi_1}(\mu)-V_r^{\pi_1}(\mu)-\widehat{V}_r^{\pi_2}(\mu)+V_r^{\pi_2}(\mu)\right|\le C\delta^\beta,
\]
the convergence rate of $\widehat{\pi}^*$ then could be determined, and then making inference for $\widehat{\pi}^*$ become possible. We would leave it to future work.

\section{Acknowledgments}
The authors thank Xiang Li and Dachao Lin for a discussion related to DRO and some inequalities.

\bibliographystyle{plainnat}
\bibliography{bib/refer.bib}

\begin{thebibliography}{85}
\providecommand{\natexlab}[1]{#1}
\providecommand{\url}[1]{\texttt{#1}}
\expandafter\ifx\csname urlstyle\endcsname\relax
  \providecommand{\doi}[1]{doi: #1}\else
  \providecommand{\doi}{doi: \begingroup \urlstyle{rm}\Url}\fi

\bibitem[Agarwal et~al.(2020{\natexlab{a}})Agarwal, Kakade, and
  Yang]{agarwal2020model}
Alekh Agarwal, Sham Kakade, and Lin~F. Yang.
\newblock Model-based reinforcement learning with a generative model is minimax
  optimal.
\newblock In \emph{Proceedings of Thirty Third Conference on Learning Theory},
  pages 67--83, 2020{\natexlab{a}}.

\bibitem[Agarwal et~al.(2020{\natexlab{b}})Agarwal, Schuurmans, and
  Norouzi]{agarwal2020optimistic}
Rishabh Agarwal, Dale Schuurmans, and Mohammad Norouzi.
\newblock An optimistic perspective on offline reinforcement learning.
\newblock In \emph{Proceedings of the 37th International Conference on Machine
  Learning}, pages 104--114, 2020{\natexlab{b}}.

\bibitem[Azar et~al.(2013)Azar, Munos, and Kappen]{azar2013minimax}
Mohammad~Gheshlaghi Azar, R{\'e}mi Munos, and Hilbert~J Kappen.
\newblock Minimax pac bounds on the sample complexity of reinforcement learning
  with a generative model.
\newblock \emph{Machine learning}, 91\penalty0 (3):\penalty0 325--349, 2013.

\bibitem[Behzadian et~al.(2021)Behzadian, Russel, Petrik, and
  Ho]{behzadian2021optimizing}
Bahram Behzadian, Reazul~Hasan Russel, Marek Petrik, and Chin~Pang Ho.
\newblock Optimizing percentile criterion using robust mdps.
\newblock In \emph{Proceedings of The 24th International Conference on
  Artificial Intelligence and Statistics}, pages 1009--1017, 2021.

\bibitem[Ben-Tal et~al.(2013)Ben-Tal, Den~Hertog, De~Waegenaere, Melenberg, and
  Rennen]{ben2013robust}
Aharon Ben-Tal, Dick Den~Hertog, Anja De~Waegenaere, Bertrand Melenberg, and
  Gijs Rennen.
\newblock Robust solutions of optimization problems affected by uncertain
  probabilities.
\newblock \emph{Management Science}, 59\penalty0 (2):\penalty0 341--357, 2013.

\bibitem[Bertsekas and Tsitsiklis(1995)]{bertsekas1995neuro}
Dimitri~P Bertsekas and John~N Tsitsiklis.
\newblock Neuro-dynamic programming: an overview.
\newblock In \emph{Proceedings of 1995 34th IEEE conference on decision and
  control}, volume~1, pages 560--564, 1995.

\bibitem[Bertsimas et~al.(2018)Bertsimas, Gupta, and Kallus]{bertsimas2018data}
Dimitris Bertsimas, Vishal Gupta, and Nathan Kallus.
\newblock Data-driven robust optimization.
\newblock \emph{Mathematical Programming}, 167\penalty0 (2):\penalty0 235--292,
  2018.

\bibitem[Blanchet and Murthy(2019)]{blanchet2019quantifying}
Jose Blanchet and Karthyek Murthy.
\newblock Quantifying distributional model risk via optimal transport.
\newblock \emph{Mathematics of Operations Research}, 44\penalty0 (2):\penalty0
  565--600, 2019.

\bibitem[Boyd et~al.(2004)Boyd, Boyd, and Vandenberghe]{boyd2004convex}
Stephen Boyd, Stephen~P Boyd, and Lieven Vandenberghe.
\newblock \emph{Convex optimization}.
\newblock Cambridge university press, 2004.

\bibitem[Chen and Jiang(2019)]{chen2019information}
Jinglin Chen and Nan Jiang.
\newblock Information-theoretic considerations in batch reinforcement learning.
\newblock In \emph{Proceedings of the 36th International Conference on Machine
  Learning}, pages 1042--1051, 2019.

\bibitem[Dai et~al.(2020)Dai, Nachum, Chow, Li, Szepesv{\'a}ri, and
  Schuurmans]{dai2020coindice}
Bo~Dai, Ofir Nachum, Yinlam Chow, Lihong Li, Csaba Szepesv{\'a}ri, and Dale
  Schuurmans.
\newblock Coindice: Off-policy confidence interval estimation.
\newblock \emph{Advances in neural information processing systems},
  33:\penalty0 9398--9411, 2020.

\bibitem[Dann et~al.(2014)Dann, Neumann, Peters, et~al.]{dann2014policy}
Christoph Dann, Gerhard Neumann, Jan Peters, et~al.
\newblock Policy evaluation with temporal differences: A survey and comparison.
\newblock \emph{Journal of Machine Learning Research}, 15:\penalty0 809--883,
  2014.

\bibitem[Delage and Ye(2010)]{delage2010distributionally}
Erick Delage and Yinyu Ye.
\newblock Distributionally robust optimization under moment uncertainty with
  application to data-driven problems.
\newblock \emph{Operations research}, 58\penalty0 (3):\penalty0 595--612, 2010.

\bibitem[Derman and Mannor(2020)]{derman2020distributional}
Esther Derman and Shie Mannor.
\newblock Distributional robustness and regularization in reinforcement
  learning.
\newblock \emph{arXiv preprint arXiv:2003.02894}, 2020.

\bibitem[Duan et~al.(2020)Duan, Jia, and Wang]{duan2020minimax}
Yaqi Duan, Zeyu Jia, and Mengdi Wang.
\newblock Minimax-optimal off-policy evaluation with linear function
  approximation.
\newblock In \emph{Proceedings of the 37th International Conference on Machine
  Learning}, pages 2701--2709, 2020.

\bibitem[Duan et~al.(2021)Duan, Jin, and Li]{duan2021risk}
Yaqi Duan, Chi Jin, and Zhiyuan Li.
\newblock Risk bounds and rademacher complexity in batch reinforcement
  learning.
\newblock In \emph{Proceedings of the 38th International Conference on Machine
  Learning}, pages 2892--2902, 2021.

\bibitem[Duchi and Namkoong(2019)]{duchi2016variance}
John Duchi and Hongseok Namkoong.
\newblock Variance-based regularization with convex objectives.
\newblock \emph{The Journal of Machine Learning Research}, 20\penalty0
  (1):\penalty0 2450--2504, 2019.

\bibitem[Duchi and Namkoong(2021)]{duchi2018learning}
John~C Duchi and Hongseok Namkoong.
\newblock Learning models with uniform performance via distributionally robust
  optimization.
\newblock \emph{The Annals of Statistics}, 49\penalty0 (3):\penalty0
  1378--1406, 2021.

\bibitem[Duchi et~al.(2021)Duchi, Glynn, and Namkoong]{duchi2021statistics}
John~C Duchi, Peter~W Glynn, and Hongseok Namkoong.
\newblock Statistics of robust optimization: A generalized empirical likelihood
  approach.
\newblock \emph{Mathematics of Operations Research}, 46\penalty0 (3):\penalty0
  946--969, 2021.

\bibitem[Dud{\'\i}k et~al.(2014)Dud{\'\i}k, Erhan, Langford, Li,
  et~al.]{dudik2014doubly}
Miroslav Dud{\'\i}k, Dumitru Erhan, John Langford, Lihong Li, et~al.
\newblock Doubly robust policy evaluation and optimization.
\newblock \emph{Statistical Science}, 29\penalty0 (4):\penalty0 485--511, 2014.

\bibitem[Epstein and Schneider(2003)]{epstein2003recursive}
Larry~G Epstein and Martin Schneider.
\newblock Recursive multiple-priors.
\newblock \emph{Journal of Economic Theory}, 113\penalty0 (1):\penalty0 1--31,
  2003.

\bibitem[Farajtabar et~al.(2018)Farajtabar, Chow, and
  Ghavamzadeh]{farajtabar2018more}
Mehrdad Farajtabar, Yinlam Chow, and Mohammad Ghavamzadeh.
\newblock More robust doubly robust off-policy evaluation.
\newblock In \emph{Proceedings of the 35th International Conference on Machine
  Learning}, pages 1447--1456, 2018.

\bibitem[Fujimoto et~al.(2019)Fujimoto, Meger, and Precup]{fujimoto2019off}
Scott Fujimoto, David Meger, and Doina Precup.
\newblock Off-policy deep reinforcement learning without exploration.
\newblock In \emph{Proceedings of the 36th International Conference on Machine
  Learning}, pages 2052--2062, 2019.

\bibitem[Gao and Kleywegt(2016)]{gao2016distributionally}
Rui Gao and Anton~J Kleywegt.
\newblock Distributionally robust stochastic optimization with wasserstein
  distance.
\newblock \emph{arXiv preprint arXiv:1604.02199}, 2016.

\bibitem[Ghavamzadeh et~al.(2016)Ghavamzadeh, Petrik, and
  Chow]{ghavamzadeh2016safe}
Mohammad Ghavamzadeh, Marek Petrik, and Yinlam Chow.
\newblock Safe policy improvement by minimizing robust baseline regret.
\newblock \emph{Advances in Neural Information Processing Systems},
  29:\penalty0 2298--2306, 2016.

\bibitem[Goyal and Grand-Clement(2022)]{goyal2018robust}
Vineet Goyal and Julien Grand-Clement.
\newblock Robust markov decision processes: Beyond rectangularity.
\newblock \emph{Mathematics of Operations Research}, 2022.

\bibitem[Gr{\"u}new{\"a}lder et~al.(2012)Gr{\"u}new{\"a}lder, Lever,
  Baldassarre, Pontil, and Gretton]{grunewalder2012modelling}
Steffen Gr{\"u}new{\"a}lder, Guy Lever, Luca Baldassarre, Massimilano Pontil,
  and Arthur Gretton.
\newblock Modelling transition dynamics in mdps with rkhs embeddings.
\newblock In \emph{Proceedings of the 29th International Coference on
  International Conference on Machine Learning}, pages 1603--1610, 2012.

\bibitem[Haarnoja et~al.(2018)Haarnoja, Zhou, Abbeel, and
  Levine]{haarnoja2018soft}
Tuomas Haarnoja, Aurick Zhou, Pieter Abbeel, and Sergey Levine.
\newblock Soft actor-critic: Off-policy maximum entropy deep reinforcement
  learning with a stochastic actor.
\newblock In \emph{Proceedings of the 35th International Conference on Machine
  Learning}, pages 1861--1870, 2018.

\bibitem[Hastie et~al.(2009)Hastie, Tibshirani, Friedman, and
  Friedman]{hastie2009elements}
Trevor Hastie, Robert Tibshirani, Jerome~H Friedman, and Jerome~H Friedman.
\newblock \emph{The elements of statistical learning: data mining, inference,
  and prediction}, volume~2.
\newblock Springer, 2009.

\bibitem[Hirano et~al.(2003)Hirano, Imbens, and Ridder]{hirano2003efficient}
Keisuke Hirano, Guido~W Imbens, and Geert Ridder.
\newblock Efficient estimation of average treatment effects using the estimated
  propensity score.
\newblock \emph{Econometrica}, 71\penalty0 (4):\penalty0 1161--1189, 2003.

\bibitem[Ho et~al.(2018)Ho, Petrik, and Wiesemann]{ho2018fast}
Chin~Pang Ho, Marek Petrik, and Wolfram Wiesemann.
\newblock Fast bellman updates for robust mdps.
\newblock In \emph{Proceedings of the 35th International Conference on Machine
  Learning}, pages 1979--1988, 2018.

\bibitem[Ho et~al.(2021)Ho, Petrik, and Wiesemann]{ho2020partial}
Chin~Pang Ho, Marek Petrik, and Wolfram Wiesemann.
\newblock Partial policy iteration for l1-robust markov decision processes.
\newblock \emph{Journal of Machine Learning Research}, 22:\penalty0 275, 2021.

\bibitem[Iyengar(2005)]{iyengar2005robust}
Garud~N Iyengar.
\newblock Robust dynamic programming.
\newblock \emph{Mathematics of Operations Research}, 30\penalty0 (2):\penalty0
  257--280, 2005.

\bibitem[Jiang and Li(2016)]{jiang2016doubly}
Nan Jiang and Lihong Li.
\newblock Doubly robust off-policy value evaluation for reinforcement learning.
\newblock In \emph{Proceedings of The 33rd International Conference on Machine
  Learning}, pages 652--661, 2016.

\bibitem[Jin et~al.(2018)Jin, Allen-Zhu, Bubeck, and Jordan]{jin2018q}
Chi Jin, Zeyuan Allen-Zhu, Sebastien Bubeck, and Michael~I Jordan.
\newblock Is q-learning provably efficient?
\newblock \emph{Advances in neural information processing systems}, 31, 2018.

\bibitem[Jin et~al.(2020)Jin, Yang, Wang, and Jordan]{jin2020provably}
Chi Jin, Zhuoran Yang, Zhaoran Wang, and Michael~I Jordan.
\newblock Provably efficient reinforcement learning with linear function
  approximation.
\newblock In \emph{Proceedings of Thirty Third Conference on Learning Theory},
  pages 2137--2143, 2020.

\bibitem[Jin et~al.(2021)Jin, Yang, and Wang]{jin2020pessimism}
Ying Jin, Zhuoran Yang, and Zhaoran Wang.
\newblock Is pessimism provably efficient for offline rl?
\newblock In \emph{Proceedings of the 38th International Conference on Machine
  Learning}, pages 5084--5096, 2021.

\bibitem[Jong and Stone(2007)]{jong2007model}
Nicholas~K Jong and Peter Stone.
\newblock Model-based function approximation in reinforcement learning.
\newblock In \emph{Proceedings of the 6th international joint conference on
  Autonomous agents and multiagent systems}, pages 1--8, 2007.

\bibitem[Kallus and Uehara(2020)]{kallus2020double}
Nathan Kallus and Masatoshi Uehara.
\newblock Double reinforcement learning for efficient off-policy evaluation in
  markov decision processes.
\newblock \emph{Journal of Machine Learning Research}, 21\penalty0
  (167):\penalty0 1--63, 2020.

\bibitem[Kaufman and Schaefer(2013)]{kaufman2013robust}
David~L Kaufman and Andrew~J Schaefer.
\newblock Robust modified policy iteration.
\newblock \emph{INFORMS Journal on Computing}, 25\penalty0 (3):\penalty0
  396--410, 2013.

\bibitem[Lam(2016)]{lam2016robust}
Henry Lam.
\newblock Robust sensitivity analysis for stochastic systems.
\newblock \emph{Mathematics of Operations Research}, 41\penalty0 (4):\penalty0
  1248--1275, 2016.

\bibitem[Lazaric et~al.(2012)Lazaric, Ghavamzadeh, and
  Munos]{lazaric2012finite}
Alessandro Lazaric, Mohammad Ghavamzadeh, and R{\'e}mi Munos.
\newblock Finite-sample analysis of least-squares policy iteration.
\newblock \emph{Journal of Machine Learning Research}, 13:\penalty0 3041--3074,
  2012.

\bibitem[Le et~al.(2019)Le, Voloshin, and Yue]{le2019batch}
Hoang Le, Cameron Voloshin, and Yisong Yue.
\newblock Batch policy learning under constraints.
\newblock In \emph{Proceedings of the 36th International Conference on Machine
  Learning}, pages 3703--3712, 2019.

\bibitem[Lee and Raginsky(2018)]{lee2017minimax}
Jaeho Lee and Maxim Raginsky.
\newblock Minimax statistical learning with wasserstein distances.
\newblock \emph{Advances in Neural Information Processing Systems}, 31, 2018.

\bibitem[Li et~al.(2015)Li, Munos, and Szepesv{\'a}ri]{li2015toward}
Lihong Li, R{\'e}mi Munos, and Csaba Szepesv{\'a}ri.
\newblock Toward minimax off-policy value estimation.
\newblock In \emph{Proceedings of the 18th International Conference on
  Artificial Intelligence and Statistics}, pages 608--616, 2015.

\bibitem[Li et~al.(2021)Li, Yang, Zhang, and Jordan]{li2021polyak}
Xiang Li, Wenhao Yang, Zhihua Zhang, and Michael~I Jordan.
\newblock Polyak-ruppert averaged q-leaning is statistically efficient.
\newblock \emph{arXiv preprint arXiv:2112.14582}, 2021.

\bibitem[Lillicrap et~al.(2015)Lillicrap, Hunt, Pritzel, Heess, Erez, Tassa,
  Silver, and Wierstra]{lillicrap2015continuous}
Timothy~P Lillicrap, Jonathan~J Hunt, Alexander Pritzel, Nicolas Heess, Tom
  Erez, Yuval Tassa, David Silver, and Daan Wierstra.
\newblock Continuous control with deep reinforcement learning.
\newblock In \emph{Proceedings of the 4th International Conference on Learning
  Representations}, 2015.

\bibitem[Lim et~al.(2013)Lim, Xu, and Mannor]{lim2013reinforcement}
Shiau~Hong Lim, Huan Xu, and Shie Mannor.
\newblock Reinforcement learning in robust markov decision processes.
\newblock \emph{Advances in Neural Information Processing Systems},
  26:\penalty0 701--709, 2013.

\bibitem[Liu et~al.(2018)Liu, Li, Tang, and Zhou]{liu2018breaking}
Qiang Liu, Lihong Li, Ziyang Tang, and Dengyong Zhou.
\newblock Breaking the curse of horizon: Infinite-horizon off-policy
  estimation.
\newblock \emph{Advances in Neural Information Processing Systems}, 31, 2018.

\bibitem[Mannor et~al.(2004)Mannor, Simester, Sun, and
  Tsitsiklis]{mannor2004bias}
Shie Mannor, Duncan Simester, Peng Sun, and John~N Tsitsiklis.
\newblock Bias and variance in value function estimation.
\newblock In \emph{Proceedings of the 21st International Conference on Machine
  Learning}, page~72, 2004.

\bibitem[Mannor et~al.(2012)Mannor, Mebel, and Xu]{mannor2012lightning}
Shie Mannor, Ofir Mebel, and Huan Xu.
\newblock Lightning does not strike twice: robust mdps with coupled
  uncertainty.
\newblock In \emph{Proceedings of the 29th International Conference on Machine
  Learning}, pages 451--458, 2012.

\bibitem[Mnih et~al.(2015)Mnih, Kavukcuoglu, Silver, Rusu, Veness, Bellemare,
  Graves, Riedmiller, Fidjeland, Ostrovski, et~al.]{mnih2015human}
Volodymyr Mnih, Koray Kavukcuoglu, David Silver, Andrei~A Rusu, Joel Veness,
  Marc~G Bellemare, Alex Graves, Martin Riedmiller, Andreas~K Fidjeland, Georg
  Ostrovski, et~al.
\newblock Human-level control through deep reinforcement learning.
\newblock \emph{Nature}, 518\penalty0 (7540):\penalty0 529, 2015.

\bibitem[Mnih et~al.(2016)Mnih, Badia, Mirza, Graves, Lillicrap, Harley,
  Silver, and Kavukcuoglu]{mnih2016asynchronous}
Volodymyr Mnih, Adria~Puigdomenech Badia, Mehdi Mirza, Alex Graves, Timothy
  Lillicrap, Tim Harley, David Silver, and Koray Kavukcuoglu.
\newblock Asynchronous methods for deep reinforcement learning.
\newblock In \emph{Proceedings of The 33rd International Conference on Machine
  Learning}, pages 1928--1937, 2016.

\bibitem[Mohri et~al.(2018)Mohri, Rostamizadeh, and
  Talwalkar]{mohri2018foundations}
Mehryar Mohri, Afshin Rostamizadeh, and Ameet Talwalkar.
\newblock \emph{Foundations of machine learning}.
\newblock MIT press, 2018.

\bibitem[Munos(2003)]{munos2003error}
R{\'e}mi Munos.
\newblock Error bounds for approximate policy iteration.
\newblock In \emph{Proceedings of the 20th International Conference on Machine
  Learning}, pages 560--567, 2003.

\bibitem[Munos and Szepesv{\'a}ri(2008)]{munos2008finite}
R{\'e}mi Munos and Csaba Szepesv{\'a}ri.
\newblock Finite-time bounds for fitted value iteration.
\newblock \emph{Journal of Machine Learning Research}, 9\penalty0 (5):\penalty0
  815--857, 2008.

\bibitem[Nilim and El~Ghaoui(2005)]{nilim2005robust}
Arnab Nilim and Laurent El~Ghaoui.
\newblock Robust control of markov decision processes with uncertain transition
  matrices.
\newblock \emph{Operations Research}, 53\penalty0 (5):\penalty0 780--798, 2005.

\bibitem[Panaganti and Kalathil(2022)]{panaganti2021sample}
Kishan Panaganti and Dileep Kalathil.
\newblock Sample complexity of robust reinforcement learning with a generative
  model.
\newblock In \emph{Proceedings of The 25th International Conference on
  Artificial Intelligence and Statistics}, pages 9582--9602, 2022.

\bibitem[Peng et~al.(2018)Peng, Andrychowicz, Zaremba, and Abbeel]{peng2018sim}
Xue~Bin Peng, Marcin Andrychowicz, Wojciech Zaremba, and Pieter Abbeel.
\newblock Sim-to-real transfer of robotic control with dynamics randomization.
\newblock In \emph{2018 IEEE international conference on robotics and
  automation (ICRA)}, pages 3803--3810, 2018.

\bibitem[Petrik(2012)]{petrik2012approximate}
Marek Petrik.
\newblock Approximate dynamic programming by minimizing distributionally robust
  bounds.
\newblock In \emph{Proceedings of the 29th International Conference on Machine
  Learning}, pages 1595--1602, 2012.

\bibitem[Petrik and Russel(2019)]{petrik2019beyond}
Marek Petrik and Reazul~Hasan Russel.
\newblock Beyond confidence regions: Tight bayesian ambiguity sets for robust
  mdps.
\newblock \emph{Advances in neural information processing systems}, 32, 2019.

\bibitem[Petrik and Subramanian(2014)]{petrik2014raam}
Marek Petrik and Dharmashankar Subramanian.
\newblock Raam: The benefits of robustness in approximating aggregated mdps in
  reinforcement learning.
\newblock \emph{Advances in Neural Information Processing Systems}, 27, 2014.

\bibitem[Puterman(2014)]{puterman2014markov}
Martin~L Puterman.
\newblock \emph{Markov decision processes: discrete stochastic dynamic
  programming}.
\newblock John Wiley \& Sons, 2014.

\bibitem[Sason and Verd{\'u}(2016)]{sason2016f}
Igal Sason and Sergio Verd{\'u}.
\newblock $ f $-divergence inequalities.
\newblock \emph{IEEE Transactions on Information Theory}, 62\penalty0
  (11):\penalty0 5973--6006, 2016.

\bibitem[Si et~al.(2020)Si, Zhang, Zhou, and Blanchet]{si2020distributionally}
Nian Si, Fan Zhang, Zhengyuan Zhou, and Jose Blanchet.
\newblock Distributionally robust policy evaluation and learning in offline
  contextual bandits.
\newblock In \emph{Proceedings of the 37th International Conference on Machine
  Learning}, pages 8884--8894, 2020.

\bibitem[Sidford et~al.(2018)Sidford, Wang, Wu, Yang, and Ye]{sidford2018near}
Aaron Sidford, Mengdi Wang, Xian Wu, Lin~F Yang, and Yinyu Ye.
\newblock Near-optimal time and sample complexities for solving markov decision
  processes with a generative model.
\newblock In \emph{Advances in Neural Information Processing Systems},
  volume~31, 2018.

\bibitem[Silver et~al.(2016)Silver, Huang, Maddison, Guez, Sifre, Van
  Den~Driessche, Schrittwieser, Antonoglou, Panneershelvam, Lanctot,
  et~al.]{silver2016mastering}
David Silver, Aja Huang, Chris~J Maddison, Arthur Guez, Laurent Sifre, George
  Van Den~Driessche, Julian Schrittwieser, Ioannis Antonoglou, Veda
  Panneershelvam, Marc Lanctot, et~al.
\newblock Mastering the game of go with deep neural networks and tree search.
\newblock \emph{nature}, 529\penalty0 (7587):\penalty0 484--489, 2016.

\bibitem[Smirnova et~al.(2019)Smirnova, Dohmatob, and
  Mary]{smirnova2019distributionally}
Elena Smirnova, Elvis Dohmatob, and J{\'e}r{\'e}mie Mary.
\newblock Distributionally robust reinforcement learning.
\newblock \emph{arXiv preprint arXiv:1902.08708}, 2019.

\bibitem[Swaminathan and Joachims(2015)]{swaminathan2015self}
Adith Swaminathan and Thorsten Joachims.
\newblock The self-normalized estimator for counterfactual learning.
\newblock \emph{Advances in Neural Information Processing Systems}, 28, 2015.

\bibitem[Thomas and Brunskill(2016)]{thomas2016data}
Philip Thomas and Emma Brunskill.
\newblock Data-efficient off-policy policy evaluation for reinforcement
  learning.
\newblock In \emph{Proceedings of The 33rd International Conference on Machine
  Learning}, pages 2139--2148, 2016.

\bibitem[Van~der Vaart(2000)]{van2000asymptotic}
Aad~W Van~der Vaart.
\newblock \emph{Asymptotic statistics}, volume~3.
\newblock Cambridge university press, 2000.

\bibitem[Van Der~Vaart and Wellner(1996)]{van1996weak}
Aad~W Van Der~Vaart and Jon~A Wellner.
\newblock Weak convergence.
\newblock In \emph{Weak convergence and empirical processes}, pages 16--28.
  Springer, 1996.

\bibitem[Wainwright(2019)]{wainwright2019high}
Martin~J Wainwright.
\newblock \emph{High-dimensional statistics: A non-asymptotic viewpoint},
  volume~48.
\newblock Cambridge University Press, 2019.

\bibitem[Wang et~al.(2020)Wang, Foster, and Kakade]{wang2020statistical}
Ruosong Wang, Dean Foster, and Sham~M Kakade.
\newblock What are the statistical limits of offline rl with linear function
  approximation?
\newblock In \emph{Proceedings of the 9th International Conference on Learning
  Representations}, 2020.

\bibitem[Wiesemann et~al.(2013)Wiesemann, Kuhn, and
  Rustem]{wiesemann2013robust}
Wolfram Wiesemann, Daniel Kuhn, and Ber{\c{c}} Rustem.
\newblock Robust markov decision processes.
\newblock \emph{Mathematics of Operations Research}, 38\penalty0 (1):\penalty0
  153--183, 2013.

\bibitem[Wozabal(2012)]{wozabal2012framework}
David Wozabal.
\newblock A framework for optimization under ambiguity.
\newblock \emph{Annals of Operations Research}, 193\penalty0 (1):\penalty0
  21--47, 2012.

\bibitem[Xiao et~al.(2021)Xiao, Wu, Mei, Dai, Lattimore, Li, Szepesvari, and
  Schuurmans]{xiao2021optimality}
Chenjun Xiao, Yifan Wu, Jincheng Mei, Bo~Dai, Tor Lattimore, Lihong Li, Csaba
  Szepesvari, and Dale Schuurmans.
\newblock On the optimality of batch policy optimization algorithms.
\newblock In \emph{Proceedings of the 38th International Conference on Machine
  Learning}, pages 11362--11371, 2021.

\bibitem[Xie and Jiang(2021)]{xie2020batch}
Tengyang Xie and Nan Jiang.
\newblock Batch value-function approximation with only realizability.
\newblock In \emph{Proceedings of the 38th International Conference on Machine
  Learning}, pages 11404--11413, 2021.

\bibitem[Xie et~al.(2019)Xie, Ma, and Wang]{xie2019towards}
Tengyang Xie, Yifei Ma, and Yu-Xiang Wang.
\newblock Towards optimal off-policy evaluation for reinforcement learning with
  marginalized importance sampling.
\newblock \emph{Advances in Neural Information Processing Systems}, 32, 2019.

\bibitem[Xu and Mannor(2006)]{xu2006robustness}
Huan Xu and Shie Mannor.
\newblock The robustness-performance tradeoff in markov decision processes.
\newblock \emph{Advances in Neural Information Processing Systems}, 19, 2006.

\bibitem[Xu and Mannor(2009)]{xu2009parametric}
Huan Xu and Shie Mannor.
\newblock Parametric regret in uncertain markov decision processes.
\newblock In \emph{Proceedings of the 48h IEEE Conference on Decision and
  Control (CDC) held jointly with 2009 28th Chinese Control Conference}, pages
  3606--3613, 2009.

\bibitem[Yin and Wang(2020)]{yin2020asymptotically}
Ming Yin and Yu-Xiang Wang.
\newblock Asymptotically efficient off-policy evaluation for tabular
  reinforcement learning.
\newblock In \emph{Proceedings of the 23rd International Conference on
  Artificial Intelligence and Statistics}, pages 3948--3958, 2020.

\bibitem[Yin et~al.(2021)Yin, Bai, and Wang]{yin2020near}
Ming Yin, Yu~Bai, and Yu-Xiang Wang.
\newblock Near-optimal provable uniform convergence in offline policy
  evaluation for reinforcement learning.
\newblock In \emph{Proceedings of The 24th International Conference on
  Artificial Intelligence and Statistics}, pages 1567--1575, 2021.

\bibitem[Zhao et~al.(2020)Zhao, Queralta, and Westerlund]{zhao2020sim}
Wenshuai Zhao, Jorge~Pe{\~n}a Queralta, and Tomi Westerlund.
\newblock Sim-to-real transfer in deep reinforcement learning for robotics: a
  survey.
\newblock In \emph{2020 IEEE Symposium Series on Computational Intelligence
  (SSCI)}, pages 737--744, 2020.

\bibitem[Zhou et~al.(2021)Zhou, Bai, Zhou, Qiu, Blanchet, and
  Glynn]{zhou2021finite}
Zhengqing Zhou, Qinxun Bai, Zhengyuan Zhou, Linhai Qiu, Jose Blanchet, and
  Peter Glynn.
\newblock Finite-sample regret bound for distributionally robust offline
  tabular reinforcement learning.
\newblock In \emph{Proceedings of The 24th International Conference on
  Artificial Intelligence and Statistics}, pages 3331--3339, 2021.

\end{thebibliography}

\newpage
\appendix
\begin{appendix}
	\onecolumn
	\begin{center}
		{\huge \textbf{Appendix}}
	\end{center}

In this supplementary material, we provide proofs of theoretical results and experiment details in our manuscript.


\section{Proofs of Section~\ref{sec: preli}}
\begin{proof}[Proof of Theorem~\ref{thm: non-robust-difference}]
	By Bellman equation, we have $V_r^\pi = R^\pi+\gamma\inf_{P\in\PM}P^\pi V_r^\pi$ and $V^\pi = R^\pi+\gamma (P^*)^\pi V^\pi$. Thus,
	\begin{align*}
		\left\|V_r^\pi-V^\pi\right\|_\infty&=\gamma\left\|\inf_{P\in\PM}P^\pi V_r^\pi-(P^*)^\pi V^\pi\right\|_\infty\\
		&\le\gamma\left\|\inf_{P\in\PM}P^\pi V_r^\pi - \inf_{P\in\PM}P^\pi V^\pi \right\|_\infty+\gamma\left\|\inf_{P\in\PM}P^\pi V^\pi-(P^*)^\pi V^\pi\right\|_\infty\\
		&\le \gamma\left\|V_r^\pi-V^\pi\right\|_\infty+\gamma\left\|\inf_{P\in\PM}P^\pi V^\pi-(P^*)^\pi V^\pi\right\|_\infty.
	\end{align*}
	Rearranging the above inequality, we have:
	\begin{align*}
		\left\|V_r^\pi-V^\pi\right\|_\infty&\le\frac{\gamma}{1-\gamma}\left\|\inf_{P\in\PM}P^\pi V^\pi-(P^*)^\pi V^\pi\right\|_\infty\\
		&=\frac{\gamma}{1-\gamma}\max_{s}\left|\inf_{P(\cdot|s,\cdot)\in\PM_s}\sum_{s',a}\pi(a|s)\left(P(s'|s,a)-P^*(s'|s,a)\right)V^\pi(s')\right|\\
		&\le\frac{\gamma}{(1-\gamma)^2}\max_s\sup_{P(\cdot|s,\cdot)\in\PM_s}\sum_{s',a}\pi(a|s)\left|P(s'|s,a)-P^*(s'|s,a)\right|,
	\end{align*}
	where the last inequality is applied by $\|V^\pi\|_\infty\le1/(1-\gamma)$. 
	\paragraph*{\textbf{On Example~\ref{eg: f-set}}} When the $(s,a)$-rectangular assumption is made, we have:
	\begin{align*}
		\left\|V_r^\pi-V^\pi\right\|_\infty&\le\frac{\gamma}{(1-\gamma)^2}\max_s\sum_{a}\pi(a|s)\sup_{P(\cdot|s,a)\in\PM_{s,a}}\sum_{s'}\left|P(s'|s,a)-P^*(s'|s,a)\right|\\
		&\le\frac{\gamma}{(1-\gamma)^2}\max_{s,a}\sup_{P(\cdot|s,a)\in\PM_{s,a}}\sum_{s'}\left|P(s'|s,a)-P^*(s'|s,a)\right|\\
		&\le\frac{\gamma h(\rho)}{(1-\gamma)^2},
	\end{align*}
	where the final step holds because $h$ is monotonically increasing.

	\paragraph*{\textbf{On Example~\ref{eg: f-set-s}}} When the $s$-rectangular assumption is made, we have:
	\begin{align*}
		\left\|V_r^\pi-V^\pi\right\|_\infty&\overset{(a)}{\le}\frac{\gamma}{(1-\gamma)^2}\max_s \sup_{P(\cdot|s,\cdot)\in\PM_s}\sum_{s',a}|P(s'|s,a)-P^*(s'|s,a)|\\
		&\overset{(b)}{\le}\frac{\gamma}{(1-\gamma)^2}\max_s \sup_{P(\cdot|s,\cdot)\in\PM_s}\sum_a h\left(\sum_{s'}f\left(\frac{P(s'|s,a)}{P^*(s'|s,a)}\right)P^*(s'|s,a)\right)\\
		&\overset{(c)}{\le}\frac{\gamma}{(1-\gamma)^2}\max_s\sup_{P(\cdot|s,\cdot)\in\PM_s}|\AM|\cdot h\left(\frac{1}{|\AM|}\sum_{s',a}f\left(\frac{P(s'|s,a)}{P^*(s'|s,a)}\right)P^*(s'|s,a)\right)\\
		&\overset{(d)}{\le}\frac{\gamma |\AM|h(\rho)}{(1-\gamma)^2},
	\end{align*}
	where  (a) we eliminate $\pi(a|s)$ because $\pi(a|s)\le 1$,   (b) we use  Inequality~\eqref{cond: f_h},  (c) we apply Jensen's Inequality by $h(\cdot)$ is concave, and  (d) is due to the fact $h$ is monotonically increasing.
	
\end{proof}

\begin{proof}[Proof of Lemma~\ref{lem: uni-dev-pi}]
	The left hand inequality is trivial. Next we prove the right hand inequality.
	\begin{align*}
		\max_\pi V_r^\pi(\mu)-V_r^{\widehat{\pi}}(\mu)&=\max_\pi V_r^\pi(\mu)-\max_\pi \widehat{V}_r^\pi+\max_\pi \widehat{V}_r^\pi-V_r^{\widehat{\pi}}(\mu)\\
		&\le\max_\pi\left|V_r^\pi(\mu)-\widehat{V}_r^\pi(\mu)\right|+\left(\widehat{V}_r^{\widehat{\pi}}(\mu)-V_r^{\widehat{\pi}}(\mu)\right)\\
		&\le2\sup_{\pi\in\Pi}\left|V_r^\pi(\mu)-\widehat{V}_r^\pi(\mu)\right|.
	\end{align*}
\end{proof}

\begin{proof}[Proof of Lemma~\ref{lem: uni-dev-v}]
	Noting that $V_r^\pi$ and $\widehat{V}_r^\pi$ are fixed points of $\TM_r^\pi$ and $\widehat{\TM}_r^\pi$, we have:
	\begin{align*}
		\left\|V_r^\pi-\widehat{V}_r^\pi\right\|_\infty&=\left\|\TM_r^\pi V_r^\pi-\widehat{\TM}_r^\pi\widehat{V}_r^\pi\right\|_\infty\\
		&\le\left\|\TM_r^\pi V_r^\pi-\widehat{\TM}_r^\pi V_r^\pi\right\|_\infty+\left\|\widehat{\TM}_r^\pi V_r^\pi-\widehat{\TM}_r^\pi \widehat{V}_r^\pi\right\|_\infty\\
		&\le\sup_{V\in\VM}\left\|\TM_r^\pi V - \widehat{\TM}_r^\pi V\right\|_\infty+\gamma\left\|V_r^\pi-\widehat{V}_r^\pi\right\|_\infty.
	\end{align*}
	Arranging both the sides, we obtain:
	\begin{align*}
		\left\|V_r^\pi-\widehat{V}_r^\pi\right\|_\infty\le\frac{1}{1-\gamma}\sup_{V\in\VM}\left\|\TM_r^\pi V - \widehat{\TM}_r^\pi V\right\|_\infty.
	\end{align*}
\end{proof}

\begin{proof}[Proof of Lemma~\ref{lem: eps-v}]
	To simplify, we denote $G^\pi_r V=\TM^\pi_r V-\widehat{\TM}^\pi_r V$. For any $V\in\VM$, there exists $V_0\in\VM_\varepsilon$ such that $\|V-V_0\|_\infty\le\varepsilon$. Thus, we have:
	\[
		\left\|G_r^\pi V\right\|_\infty\le \|G_r^\pi V- G_r^\pi V_0\|_\infty + \|G_r^\pi V_0\|_\infty.
	\]
	Noting that $|\TM_r^\pi V(s) - \TM_r^\pi V_0(s)|\le \gamma\varepsilon$, we conclude that:
	\[
		\left\|G_r^\pi V\right\|_\infty\le2\gamma\varepsilon + \sup_{V\in\VM_\varepsilon}\|G_r^\pi V\|_\infty.
	\]
	Finally, taking supremum over $V$ at LHS, we obtain our results.
\end{proof}

\begin{proof}[Proof of Lemma~\ref{lem: eps-pi}]
	Similar with Lemma~\ref{lem: eps-v}, we denote $G^\pi_r V=\TM^\pi_r V-\widehat{\TM}^\pi_r V$ for any $V\in\VM$. For any $\pi\in\Pi$, there exists $\pi_0\in\NM(\Pi,\|\cdot\|_1,\varepsilon)$, such that $\|\pi(\cdot|s)-\pi_0(\cdot|s)\|_1\le\varepsilon$ for all $s\in\SM$. Thus, we have:
	\begin{align*}
		\sup_{V\in\VM}\left\|G^\pi_r V\right\|_\infty\le\sup_{V\in\VM}\left\|G^\pi_r V-G^{\pi_0}_r V\right\|_\infty+\sup_{V\in\VM}\left\|G^{\pi_0}_r V\right\|_\infty.
	\end{align*}
	For any fixed $V$, we have:
	\begin{align*}
		\left|G_r^\pi V (s) -G_r^{\pi_0} V(s)\right|&\le\gamma\sup_{P\in\PM(s)}\left|\sum_{s'\in\SM}(P^\pi(s'|s)-P^{\pi_0}(s'|s))V(s')\right|\\
		&+\gamma\sup_{P\in\widehat{\PM}(s)}\left|\sum_{s'\in\SM}(P^\pi(s'|s)-P^{\pi_0}(s'|s))V(s')\right|\\
		&\le \frac{2\gamma}{1-\gamma}\left\|\pi(\cdot|s)-\pi_0(\cdot|s)\right\|_1,
	\end{align*}
	where the last step is due to:
	\begin{align*}
		\left|\sum_{s'\in\SM}(P^\pi(s'|s)-P^{\pi_0}(s'|s))V(s')\right|
		&\le\frac{1}{1-\gamma}\sum_{s'\in\SM}\left|(P^\pi(s'|s)-P^{\pi_0}(s'|s))\right|\\
		&\le\frac{1}{1-\gamma}\sum_{s'\in\SM,a\in\AM}P(s'|s,a)\left|\pi(a|s)-\pi_0(a|s)\right|\\
		&=\frac{1}{1-\gamma}\|\pi(\cdot|s)-\pi_0(\cdot|s)\|_1.
	\end{align*}
	Thus, we have:
	\begin{align*}
		\sup_{V\in\VM}\left\|G^\pi_r V\right\|_\infty\le\frac{2\gamma\varepsilon}{1-\gamma}+\sup_{V\in\VM}\left\|G^{\pi_0}_r V\right\|_\infty\le\frac{2\gamma\varepsilon}{1-\gamma}+\sup_{\pi\in\Pi_\varepsilon}\sup_{V\in\VM}\left\|G^{\pi}_r V\right\|_\infty.
	\end{align*}
	In conclusion, taking the supremum of $\pi$ at the left hand side, we have:
	\begin{align*}
		\sup_{\pi\in\Pi, V\in\VM}\left\|G^\pi_r V\right\|_\infty\le\frac{2\gamma\varepsilon}{1-\gamma}+\sup_{\pi\in\Pi_\varepsilon, V\in\VM}\left\|G^{\pi}_r V\right\|_\infty.
	\end{align*}
\end{proof}

\begin{proof}[Proof of Lemma~\ref{lem: num-eps-pi}]
	For $|\VM_\varepsilon|$, we divide $[0, 1/(1-\gamma)]$ by a factor $\varepsilon$ at each dimension, which leads to:
	\begin{align*}
		|\NM(\VM, \|\cdot\|_\infty, \varepsilon)|\le\left(1+\frac{1}{(1-\gamma)\varepsilon}\right)^{|\SM|}.
	\end{align*}
	For $|\Pi_\varepsilon|$, we first consider $|\SM|=1$ and denote sets  $A:=\{p\in\RB_+^{|\AM|}|\sum_{i}p_i=1\}$ and $B:=\{q\in\RB_+^{|\AM|-1}|\sum_{i}q_i\le1\}$. We aim to prove that $|\NM(A,\|\cdot\|_1,\varepsilon)|\le|\NM(B,\|\cdot\|_1,\varepsilon/2)|$. 

	For any $p\in A$, we have $(p_1, \ldots, p_{|\AM|-1})\in B$, there exists $(p_{0,1}, \ldots, p_{0,|\AM|-1})\in\NM(B,\|\cdot\|_1,\varepsilon/2)$ such that:
	\begin{align*}
		\sum_{i=1}^{|\AM|-1}\left|p_i-p_{0,i}\right|\le\frac{\varepsilon}{2}.
	\end{align*}
	And we let $p_{0,|\AM|}=1-\sum_{i=1}^{|\AM|-1}p_{0,i}$, which guarantees that $(p_{0,1}, \ldots, p_{0,|\AM|})\in A$ and:
	\begin{align*}
		\sum_{i=1}^{|\AM|} \left|p_i-p_{0,i}\right|\le\frac{\varepsilon}{2}+\left|p_n-p_{0,|\AM|}\right|
		\le\frac{\varepsilon}{2}+\sum_{i=1}^{|\AM|-1}\left|p_i-p_{0,i}\right|\le\varepsilon.
	\end{align*}
	Thus, we can construct a set $\{p\in\RB_+^{|\AM|}|p_{1:|\AM|-1}\in \NM(B, \|\cdot\|_1,\varepsilon/2), p_{|\AM|}=1-\sum_{i=1}^{|\AM|-1}p_i\}$, which is a $\varepsilon$-net of $A$, from which we obtain the desired result. And also, from Lemma 5.7 of \citet{wainwright2019high}, we obtain that:
	\[
		\left|\NM(B, \|\cdot\|_1,\varepsilon/2)\right|\le\left(1+\frac{4}{\varepsilon}\right)^{|\AM|}.
	\]
	Finally, for $|\SM|>1$, the smallest covering number of $\Pi$ is bounded by:
	\[
		\left|\NM(\Pi,\|\cdot\|_1, \varepsilon)\right|\le\left(1+\frac{4}{\varepsilon}\right)^{|\SM||\AM|}.
	\]
\end{proof}

\section{Proofs of Section~\ref{sec: gen}}
\subsection{Main results with the $(s,a)$-rectangular assumption}
\begin{lem}
	\label{lem: f-eq}
	For any $f$-divergence uncertainty set as Example~\ref{eg: f-set} shows, the convex optimization problem
	\begin{align*}
		\inf_{P}&\sum_{s\in\SM}P(s)V(s), \\
		\text{s.t.}&\hspace{2pt}D_f(P\|P^*)\le\rho, \hspace{4pt}P\in\Delta(\SM),\hspace{4pt}P\ll P^*
	\end{align*}
	can be reformulated as:
	\begin{align*}
		\sup_{\lambda\ge0,\eta\in\RB}-\lambda\sum_{s\in\SM}P^*(s)f^*\left(\frac{\eta-V(s)}{\lambda}\right)-\lambda\rho+\eta,
	\end{align*}
	where $f^*(t)=-\inf_{s\ge0}(f(s)-st)$ is the convex conjugate function \cite{boyd2004convex} of $f$ with restriction to $s\ge0$.
\end{lem}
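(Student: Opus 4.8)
The plan is to prove the identity by Lagrangian duality applied to the convex program on the left-hand side. First I would reduce to the support $\mathrm{supp}(P^*)=\{s\in\SM:P^*(s)>0\}$: since every feasible $P$ satisfies $P\ll P^*$, the coordinates $P(s)$ with $P^*(s)=0$ are forced to vanish and contribute nothing to the objective or the constraint, so after the change of variables $t_s:=P(s)/P^*(s)\ge 0$ (for $s\in\mathrm{supp}(P^*)$) the problem becomes: minimize $\sum_s P^*(s)\,t_s V(s)$ subject to $\sum_s P^*(s)t_s=1$, $\sum_s P^*(s)f(t_s)\le\rho$, and $t_s\ge0$. This is a finite-dimensional convex program because $f$ is convex.

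Next I would verify strong duality via Slater's condition. Taking $P=P^*$, i.e.\ $t_s\equiv 1$, gives $\sum_s P^*(s)f(1)=f(1)=0<\rho$, so there is a point strictly feasible for the inequality constraint (the remaining constraint $\sum_s P^*(s)t_s=1$ is affine and needs no interior point), hence the primal optimal value equals the dual optimal value by standard convex duality; see \cite{boyd2004convex}. Introducing a multiplier $\lambda\ge0$ for $D_f(P\|P^*)\le\rho$ and $\eta\in\RB$ for $\sum_s P^*(s)t_s=1$ while keeping $t_s\ge0$ explicit, the Lagrangian is
\[
	\mathcal{L}=\sum_s P^*(s)t_sV(s)+\lambda\Big(\sum_s P^*(s)f(t_s)-\rho\Big)+\eta\Big(1-\sum_s P^*(s)t_s\Big),
\]
and its infimum over $t_s\ge0$ separates across $s$:
\[
	\inf_{t_s\ge0}\mathcal{L}=-\lambda\rho+\eta+\sum_s P^*(s)\inf_{t\ge0}\Big(\lambda f(t)-t\,(\eta-V(s))\Big).
\]
For $\lambda>0$ each inner infimum equals $-\lambda f^*\!\big((\eta-V(s))/\lambda\big)$ directly from the definition $f^*(u)=-\inf_{t\ge0}(f(t)-tu)$ of the conjugate restricted to the nonnegative half-line, which is exactly the dual objective in the statement; maximizing over $\lambda\ge0$ and $\eta\in\RB$ concludes.

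The one delicate point is the boundary $\lambda=0$. One can either observe that $\lambda f^*(u/\lambda)$ is the perspective of $f^*$ and extends lower-semicontinuously to $\lambda=0$ (with value the support function of $\mathrm{dom}\,f^*$ evaluated at $u$), so the displayed supremum over $\lambda\ge0$ is well-defined under this standard convention, or argue that when $\rho>0$ the supremum is already attained at some $\lambda>0$ in the nondegenerate cases relevant here. Beyond this, the argument is routine finite-dimensional convex duality — the substantive work in the later sections is to solve this dual in closed form for $f(t)=|t-1|$, $(t-1)^2$, and $t\log t$, not to establish the reformulation itself.
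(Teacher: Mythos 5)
Your proposal is correct and follows essentially the same route as the paper's proof: the same change of variables $r(s)=P(s)/P^*(s)$ on the support of $P^*$, the same Lagrangian with multipliers $\lambda\ge0$ and $\eta$, the same per-coordinate conjugate computation yielding $-\lambda f^*\bigl((\eta-V(s))/\lambda\bigr)$, and strong duality via Slater's condition (which you verify explicitly using $f(1)=0<\rho$, a point the paper only invokes implicitly). Your extra care about the $\lambda=0$ boundary via the perspective-function convention is a minor refinement beyond what the paper records, not a different argument.
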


\begin{proof}
	As we assume $P\ll P^*$, we can set $P^*(s)>0$ for all $s$ without loss of generality. We replace the variable $P$ with $r(s)=P(s)/P^*(s)$, then the original optimization problem can be reformulated as:
	\begin{align*}
		\inf_{r}&\sum_{s\in\SM}r(s)V(s)P^*(s), \\
		\text{s.t.}\hspace{2pt}&\sum_{s\in\SM}f(r(s))P^*(s)\le\rho, \\
		&\sum_{s\in\SM}r(s)P^*(s)=1, \\
		&r(s)\ge0\hspace{2pt}\text{for all $s\in\SM$}.
	\end{align*}
	Thus, we can obtain the Lagrangian function of the problem with domain $r\ge0$, $\lambda\ge0$ and $\eta\in\RB$:
	\begin{align*}
		L(r,\lambda, \eta)=\sum_{s\in\SM}r(s)V(s)P^*(s)+\lambda\left(\sum_{s\in\SM}f(r(s)))P^*(s)-\rho\right)-\eta\left(\sum_{s\in\SM}r(s)P^*(s)-1\right).
	\end{align*}
	Denoting $f^*(t)=-\inf_{s\ge0}(f(s)-st)$, we have:
	\begin{align*}
		\inf_{r\ge0}L(r,\lambda,\eta)=-\lambda\sum_{s\in\SM}P^*(s)f^*\left(\frac{\eta-V(s)}{\lambda}\right)-\lambda\rho+\eta.
	\end{align*}
	By Slater's condition, the primal value equals to the dual value $\sup_{\lambda\ge0,\eta\in\RB} \inf_{r\ge0}L(r,\lambda,\eta)$.
\end{proof}
\subsubsection*{\textbf{Case 1: $L_1$ balls}}
In this case, we set $f(t)=|t-1|$ in Example~\ref{eg: f-set}. The uncertainty set is formulated as: 
\begin{example}[$L_1$ balls]
    For each $(s,a)\in\SM\times\AM$ and by assuming $\rho<2$, the uncertainty sets are defined as:
    \begin{align*}
        &\PM_{s,a}(\rho)=\left\{P(\cdot|s,a)\in\Delta(\SM)\Bigg{|}P(\cdot|s,a)\ll P^*(\cdot|s,a),\hspace{2pt}\sum_{s'\in\SM}\left|P(s'|s,a)-P^*(s'|s,a)\right|\le\rho\right\},\\
        &\widehat{\PM}_{s,a}(\rho)=\left\{P(\cdot|s,a)\in\Delta(\SM)\Bigg{|}P(\cdot|s,a)\ll \widehat{P}(\cdot|s,a),\hspace{2pt}\sum_{s'\in\SM}\left|P(s'|s,a)-\widehat{P}(s'|s,a)\right|\le\rho\right\}.
    \end{align*}
    By the $(s,a)$-rectangular set assumption, we have $\PM=\bigtimes_{(s,a)\in\SM\times\AM}\PM_{s,a}(\rho)$ and $\widehat{\PM}=\bigtimes_{(s,a)\in\SM\times\AM}\widehat{\PM}_{s,a}(\rho)$.
\end{example}

Thus, for any given $V\in\VM$, the explicit forms of $\TM^\pi_r V$ and $\widehat{\TM}^\pi_r V$ are:
\begin{lem}
    \label{lem: l1}
    Under the $(s,a)$-rectangular assumption and $L_1$ balls uncertainty set, for each $s\in\SM$, we have:
    \begin{align*}
        \TM^\pi_r V (s)&=\sum_{a}\pi(a|s)\left(R(s,a) {+} \gamma\sup_{\eta\in\RB}\left( {-} \sum_{s'}P^*(s'|s,a)(\eta {-} V(s'))_+ {-} \frac{(\eta {-} \min_{s'}V(s'))_{+}}{2}\rho {+} \eta \right)\right),\\
        \widehat{\TM}^\pi_r V (s)&=\sum_{a}\pi(a|s)\left(R(s,a) {+} \gamma\sup_{\eta\in\RB}\left( {-} \sum_{s'}\widehat{P}(s'|s,a)(\eta {-} V(s'))_+ {-} \frac{(\eta {-} \min_{s'}V(s'))_{+}}{2}\rho {+} \eta \right)\right).
    \end{align*}
    Moreover, the dual variable $\eta$ can be restricted to interval $[0,\frac{2+\rho}{\rho(1-\gamma)}]$.
\end{lem}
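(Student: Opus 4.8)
The plan is to peel $\TM_r^\pi V(s)$ (and its empirical counterpart) into a sum over actions of one-dimensional concave maximization problems, by combining $(s,a)$-rectangularity with the dual formula of Lemma~\ref{lem: f-eq} specialized to $f(t)=|t-1|$. First I would use $\TM_r^\pi V = R^\pi + \gamma\inf_{P\in\PM}P^\pi V$ together with $\PM=\bigtimes_{(s,a)}\PM_{s,a}(\rho)$ and $\pi(a|s)\ge 0$ to decouple the infimum across $(s,a)$ pairs:
\[
  \TM_r^\pi V(s)=\sum_a \pi(a|s)\Big(R(s,a)+\gamma\inf_{P(\cdot|s,a)\in\PM_{s,a}(\rho)}\sum_{s'}P(s'|s,a)V(s')\Big),
\]
and identically for $\widehat{\TM}_r^\pi V(s)$ with $\widehat{\PM}_{s,a}(\rho)$ (whose center is $\widehat{P}$). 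So it suffices to evaluate each inner infimum.

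Since $\sum_{s'}|P(s')-P^*(s')| = D_f(P\|P^*)$ for $f(t)=|t-1|$, each inner problem is exactly the one treated in Lemma~\ref{lem: f-eq}, and that lemma (via Slater's condition, noting $P^*(\cdot|s,a)$ itself is feasible) gives the dual value $\sup_{\lambda\ge0,\eta\in\RB}\big(-\lambda\sum_{s'}P^*(s'|s,a)f^*(\tfrac{\eta-V(s')}{\lambda})-\lambda\rho+\eta\big)$. A short case analysis of the conjugate restricted to $s\ge0$ yields $f^*(t)=t$ on $[-1,1]$, $f^*(t)=-1$ for $t<-1$, and $f^*(t)=+\infty$ for $t>1$; consequently $-\lambda f^*\big(\tfrac{\eta-V(s')}{\lambda}\big)=\min(V(s')-\eta,\lambda)$ whenever $\eta\le V(s')+\lambda$ and $=-\infty$ otherwise, so the dual equals
\[
  \sup\Big\{\sum_{s'}P^*(s'|s,a)\min(V(s')-\eta,\lambda)-\lambda\rho+\eta \ :\ \lambda\ge0,\ \eta\le\lambda+\min_{s'}V(s')\Big\}.
\]

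Next I would eliminate $\lambda$ with the substitution $\theta=\eta+\lambda$. Using $\min(V(s')-\eta,\lambda)=\lambda-(\theta-V(s'))_+$ and $\sum_{s'}P^*(s'|s,a)=1$, the objective becomes $\theta-\sum_{s'}P^*(s'|s,a)(\theta-V(s'))_+-\lambda\rho$, which is strictly decreasing in $\lambda$, while the constraint $\eta\le\lambda+\min_{s'}V(s')$ together with $\lambda\ge0$ reads $\lambda\ge\tfrac12(\theta-\min_{s'}V(s'))_+$. Substituting the minimal feasible $\lambda$ and renaming $\theta$ back to $\eta$ gives exactly $\sup_{\eta\in\RB}\big(-\sum_{s'}P^*(s'|s,a)(\eta-V(s'))_+-\tfrac{\rho}{2}(\eta-\min_{s'}V(s'))_+ +\eta\big)$, and the verbatim computation with $\widehat{P}$ in place of $P^*$ yields the claimed expression for $\widehat{\TM}_r^\pi V(s)$.

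For the range of $\eta$, write $g(\eta)$ for the bracketed expression; it is a sum of a concave, a concave, and a linear term, hence concave, with $g(0)=0$ and $g(\eta)=\eta<0$ for $\eta<0$ (all positive parts vanish since $V\ge 0$). For $\eta\ge\tfrac{1}{1-\gamma}\ge\max_{s'}V(s')$ all positive parts are active, so $g(\eta)=\sum_{s'}P^*(s'|s,a)V(s')+\tfrac{\rho}{2}\min_{s'}V(s')-\tfrac{\rho}{2}\eta\le\tfrac{1}{1-\gamma}\big(1+\tfrac{\rho}{2}\big)-\tfrac{\rho}{2}\eta$, which is $\le 0=g(0)$ once $\eta\ge\tfrac{2+\rho}{\rho(1-\gamma)}$; concavity then forces $g(\eta)\le g\big(\tfrac{2+\rho}{\rho(1-\gamma)}\big)$ for all larger $\eta$ (and $g<g(0)$ for $\eta<0$), so the supremum is unchanged when restricted to $[0,\tfrac{2+\rho}{\rho(1-\gamma)}]$. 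I expect the step needing the most care to be the conjugate computation together with the $\theta=\eta+\lambda$ reduction: getting $f^*$ right on the half-line $s\ge0$ (the $-1$ plateau is precisely what collapses the two dual variables to one $\eta$) and correctly carrying the constraint $\eta\le\lambda+\min_{s'}V(s')$ through the substitution; the concavity argument for the $\eta$-interval is then routine.
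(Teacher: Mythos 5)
Your proposal is correct and follows essentially the same route as the paper's proof: the same $(s,a)$-rectangular decomposition, the dual of Lemma~\ref{lem: f-eq} with the conjugate of $f(t)=|t-1|$, the change of variables $\theta=\eta+\lambda$ (the paper's $\tilde\eta$) with $\lambda$ eliminated at its minimal feasible value $\tfrac12(\theta-\min_{s'}V(s'))_+$, and the same boundary comparison (via concavity of the objective, equivalently convexity of its negative) to restrict $\eta$ to $[0,\tfrac{2+\rho}{\rho(1-\gamma)}]$. No gaps to flag.
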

\begin{proof}[Proof of Lemma~\ref{lem: l1}]
	By the $(s,a)$-rectangular set assumption, we have:
	\begin{align*}
		\TM_r^\pi V(s) &= \sum_{a}\pi(a|s)R(s,a)+\gamma\inf_{P\in\PM}\sum_{s'\in\SM}P^\pi(s'|s)V(s')\\
		&= \sum_{a}\pi(a|s)R(s,a)+\gamma\inf_{P\in\PM}\sum_{s'\in\SM, a\in\AM}P(s'|s,a)\pi(a|s)V(s')\\
		&= \sum_{a}\pi(a|s)\left(R(s,a)+\gamma\inf_{P_{s,a}\in\PM_{s,a}(\rho)}\sum_{s'\in\SM}P(s'|s,a)V(s')\right).
	\end{align*}
	Then we solve the following convex optimization problem, where we ignore the dependence on $(s,a)$ in $P(\cdot|s,a)$:
	\begin{align*}
		\inf_{P}&\sum_{s\in\SM}P(s)V(s), \\
		\text{s.t.}\hspace{2pt} &\sum_{s\in\SM}|P(s)-P^*(s)|\le\rho, \\
		&P\in\Delta(\SM),\hspace{4pt}P\ll P^*.
	\end{align*}
	By setting $f(t)=|t-1|$, we have :
	\begin{align*}
		f^*(s)=
		\begin{cases}
			-1,& s\le-1, \\
			s,& s\in[-1,1], \\
			+\infty,& s>1.
		\end{cases}
	\end{align*}
	Thus, by Lemma~\ref{lem: f-eq}, the value of the convex optimization problem is equal to:
	\begin{align*}
		\sup_{\lambda\ge0,\eta\in\RB,\frac{\eta-V(s)}{\lambda}\le1}-\lambda\sum_{s\in\SM}P^*(s)\max\left\{\frac{\eta-V(s)}{\lambda},-1\right\}-\lambda\rho+\eta.
	\end{align*}
	By replacing $\eta$ with $\tilde{\eta}-\lambda$, the problem is equal to:
	\begin{align*}
		\sup_{\lambda\ge0,\tilde{\eta}\in\RB,\frac{\tilde{\eta}-V(s)}{\lambda}\le2}-\sum_{s\in\SM}P^*(s)\max\left\{\tilde{\eta}-\lambda-V(s),-\lambda\right\}-\lambda\rho+\tilde{\eta}-\lambda.
	\end{align*}
	Because $\max\{a,b\}=(a-b)_++b$, the problem turns to 
	\begin{align*}
		\sup_{\lambda\ge0,\tilde{\eta}\in\RB,\frac{\tilde{\eta}-V(s)}{\lambda}\le2}-\sum_{s\in\SM}P^*(s)\left(\tilde{\eta}-V(s)\right)_+ -\lambda\rho+\tilde{\eta}.
	\end{align*}
By optimizing over $\lambda$, the final result is obtained:
	\begin{align*}
		\sup_{\tilde{\eta}\in\RB}-\sum_{s\in\SM}P^*(s)\left(\tilde{\eta}-V(s)\right)_+-\frac{\left(\tilde{\eta}-\min_s V(s)\right)_+}{2}\rho+\tilde{\eta}.
	\end{align*}
	Besides, we denote $g(\eta,P)=\sum_{s\in\SM}P(s)(\eta-V(s))_++\frac{(\eta-\min_s V(s))_+}{2}\rho-\eta$, which is convex in $\eta$. It is easy to observe that $g(\eta,P)=-\eta\ge0$ when $\eta\le0$ and 
    \begin{align*}
        g\left(\frac{2+\rho}{\rho(1-\gamma)},P\right)=-\sum_{s\in\SM}P(s)V(s)+\frac{\frac{2+\rho}{\rho(1-\gamma)}-\min_s V(s)}{2}\rho\ge0
    \end{align*}
    due to $V(s)\in[0,\frac{1}{1-\gamma}]$ for all $s\in\SM$. Thus, we can restrict the dual variable $\eta$ in $[0,\frac{2+\rho}{\rho(1-\gamma)}]$.
\end{proof}

Then, by Lemma~\ref{lem: l1}, we can easily bound the error of $\TM_r^\pi V$ and $\widehat{\TM}_r^\pi V$ with fixed $V\in\VM$ and $\pi\in\Pi$ as Theorem~\ref{thm: l1-fix} states:
\begin{thm}
    \label{thm: l1-fix}
    In the setting of $L_1$ balls, for fixed $V\in\VM$ and $\pi\in\Pi$, the following inequality holds with probability $1-\delta$:
    \begin{align*}
        \left\|\TM_r^\pi V-\widehat{\TM}_r^\pi V\right\|_\infty\le\frac{(2+\rho)\gamma}{\sqrt{2n}\rho(1-\gamma)}\left(1+\sqrt{\log\frac{2|\SM||\AM|(1+(4+\rho)\sqrt{2n})}{\delta}}\right).
    \end{align*}
\end{thm}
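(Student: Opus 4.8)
The plan is to argue directly from the dual representations established in Lemma~\ref{lem: l1} and to reduce the claim to a uniform deviation bound for an average of bounded i.i.d.\ random variables, which can then be handled by Hoeffding's inequality together with a covering of the (one-dimensional, compact) domain of the dual variable. Write $B:=\frac{2+\rho}{\rho(1-\gamma)}$ for the radius of the interval to which $\eta$ may be restricted, and for a transition vector $P(\cdot|s,a)$ put
\[
    g_{s,a}(P):=\sup_{\eta\in[0,B]}\left(-\sum_{s'\in\SM}P(s'|s,a)(\eta-V(s'))_+-\frac{(\eta-\min_{s'}V(s'))_+}{2}\rho+\eta\right).
\]
By Lemma~\ref{lem: l1} one has $\TM_r^\pi V(s)-\widehat{\TM}_r^\pi V(s)=\gamma\sum_{a}\pi(a|s)\bigl(g_{s,a}(P^*)-g_{s,a}(\widehat P)\bigr)$. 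Since $\sum_a\pi(a|s)=1$, since $\bigl|\sup_\eta F(\eta)-\sup_\eta\widetilde F(\eta)\bigr|\le\sup_\eta|F(\eta)-\widetilde F(\eta)|$ for any pair of functions, and since the two non-random terms inside the supremum do not depend on the transition law and therefore cancel in the difference, I would first reduce to
\[
    \bigl\|\TM_r^\pi V-\widehat{\TM}_r^\pi V\bigr\|_\infty\le\gamma\sup_{(s,a)\in\SM\times\AM}\ \sup_{\eta\in[0,B]}\left|\sum_{s'\in\SM}\bigl(\widehat P(s'|s,a)-P^*(s'|s,a)\bigr)(\eta-V(s'))_+\right|.
\]

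Next I would handle the right-hand side by a standard net-plus-union-bound argument. For a fixed pair $(s,a)$ and a fixed $\eta\in[0,B]$, $\sum_{s'}\widehat P(s'|s,a)(\eta-V(s'))_+=\tfrac1n\sum_{k=1}^n(\eta-V(X_k^{s,a}))_+$ is an average of i.i.d.\ random variables with mean $\sum_{s'}P^*(s'|s,a)(\eta-V(s'))_+$, and because $V\ge0$ and $0\le\eta\le B$ each summand lies in $[0,B]$; Hoeffding's inequality then yields $\PB(|\cdot|>t)\le2\exp(-2nt^2/B^2)$. To make this uniform over $\eta$, I would use that $\eta\mapsto\sum_{s'}(\widehat P(s'|s,a)-P^*(s'|s,a))(\eta-V(s'))_+$ is Lipschitz with constant $\|\widehat P(\cdot|s,a)-P^*(\cdot|s,a)\|_1\le2$, so that controlling it on an $\varepsilon$-net of $[0,B]$ of cardinality at most $\lceil B/\varepsilon\rceil+1$ controls it everywhere up to an additive $2\varepsilon$. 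A union bound over the $|\SM||\AM|$ state--action pairs and the net points then gives, with probability at least $1-\delta$,
\[
    \sup_{(s,a)}\ \max_{\eta_j}\left|\tfrac1n\textstyle\sum_{k}(\eta_j-V(X_k^{s,a}))_+-\EB\bigl[(\eta_j-V(X_1^{s,a}))_+\bigr]\right|\le\frac{B}{\sqrt{2n}}\sqrt{\log\frac{2|\SM||\AM|(1+B/\varepsilon)}{\delta}}.
\]

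Combining the last two displays yields $\|\TM_r^\pi V-\widehat{\TM}_r^\pi V\|_\infty\le\gamma\bigl(2\varepsilon+\tfrac{B}{\sqrt{2n}}\sqrt{\log\frac{2|\SM||\AM|(1+B/\varepsilon)}{\delta}}\bigr)$ with probability $1-\delta$. Choosing $\varepsilon=\frac{B}{2\sqrt{2n}}$ makes the discretization error exactly $\gamma B/\sqrt{2n}$ (the leading ``$1$'' inside the parentheses of the claim) and bounds the net cardinality by $1+2\sqrt{2n}\le1+(4+\rho)\sqrt{2n}$; substituting $B=\frac{2+\rho}{\rho(1-\gamma)}$ then reproduces the stated inequality. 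The one point that genuinely needs care is that the Hoeffding step is legitimate only because Lemma~\ref{lem: l1} restricts the dual variable to the \emph{fixed} compact interval $[0,B]$, independent of the sample, which is what keeps the summands bounded; the rest -- the Lipschitz estimate underlying the net and the arithmetic of the logarithmic factor -- is routine bookkeeping, and I do not expect any step beyond this to be an obstacle.
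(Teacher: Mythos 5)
Your proposal is correct and follows essentially the same route as the paper's proof: the dual form from Lemma~\ref{lem: l1}, Hoeffding's inequality for each fixed $\eta$ (using that the dual variable is confined to the fixed interval $[0,\tfrac{2+\rho}{\rho(1-\gamma)}]$), an $\varepsilon$-net over that interval, and a union bound over the net and all $(s,a)$ pairs. The only difference is cosmetic bookkeeping: you cancel the non-random terms before the Lipschitz step (constant $2$ for the difference, net size $1+2\sqrt{2n}$) whereas the paper Lipschitz-bounds each dual objective separately (constant $2+\rho/2$ each, hence $(4+\rho)\varepsilon$ and net size $1+(4+\rho)\sqrt{2n}$), and both choices of $\varepsilon$ land on the stated inequality.
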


\begin{proof}[Proof of Theorem~\ref{thm: l1-fix}]
	To simplify the notation, we also ignore dependence on $(s,a)$. Denote $g(\eta,P)=\sum_{s\in\SM}P(s)(\eta-V(s))_++\frac{(\eta-\min_s V(s))_+}{2}\rho-\eta$, we have:
	\begin{align*}
		g(\eta,\widehat{P})&=\sum_{s\in\SM}\widehat{P}(s)(\eta-V(s))_++\frac{(\eta-\min_s V(s))_+}{2}\rho-\eta\\
		&=\frac{1}{n}\sum_{k=1}^n\sum_{s\in\SM}1(X_k=s)(\eta-V(s))_++\frac{(\eta-\min_s V(s))_+}{2}\rho-\eta\\
		&:=\frac{1}{n}\sum_{k=1}^n Z_k+\frac{(\eta-\min_s V(s))_+}{2}\rho-\eta,
	\end{align*}
	where $\{X_k\}_{k=1}^n$ are i.i.d.\ samples generated by $P(s)$ and we denote $Z_k=\sum_{s\in\SM}1(X_k=s)(\eta-V(s))_+$. Thus, $\EB g(\eta,\widehat{P})=g(\eta,\P)$. With $\eta\in[0,\frac{2+\rho}{\rho(1-\gamma)}]$, we know $Z_k\in[0,\frac{2+\rho}{\rho(1-\gamma)}]$. By Hoeffding's inequality, we have the following inequality:
	\begin{align*}
		\PB\left(\left|g(\eta,\widehat{P})-g(\eta,P)\right|\ge\frac{2+\rho}{\rho(1-\gamma)}\sqrt{\frac{\log\frac{2}{\delta}}{2n}}\right)\le\delta.
	\end{align*}
	Noting that $g(\eta,P)$ is $(2+\frac{\rho}{2})$-Lipschitz w.r.t.\ $\eta$, we can take the $\varepsilon$-net of $[0,\frac{2+\rho}{\rho(1-\gamma)}]$ as $\NM_\varepsilon$ w.r.t.\ metric $|\cdot|$. The size of $\NM_\varepsilon$ is bounded by:
	\begin{align*}
		|\NM_\varepsilon|\le 1+\frac{2+\rho}{\varepsilon\rho(1-\gamma)}.
	\end{align*}
	Thus, we have:
	\begin{align*}
		\sup_{\eta\in[0,\frac{2+\delta}{\delta(1-\gamma)}]}\left|g(\eta,\widehat{P})-g(\eta,P)\right|\le(4+\rho)\varepsilon+\sup_{\eta\in\NM_\varepsilon}\left|g(\eta,\widehat{P})-g(\eta,P)\right|.
	\end{align*}
	Taking $\varepsilon=\frac{2+\rho}{\sqrt{2n}\rho(4+\rho)(1-\gamma)}$, we have the following inequality holds with probability $1-\delta$:
	\begin{align*}
		\sup_{\eta\in[0,\frac{2+\delta}{\delta(1-\gamma)}]}\left|g(\eta,\widehat{P})-g(\eta,P)\right|&\le\frac{2+\rho}{\sqrt{2n}\rho(1-\gamma)}+\frac{2+\rho}{\rho(1-\gamma)}\sqrt{\frac{\log\frac{2(1+(4+\rho)\sqrt{2n})}{\delta}}{2n}}\\
		&=\frac{2+\rho}{\sqrt{2n}\rho(1-\gamma)}\left(1+\sqrt{\log\frac{2(1+(4+\rho)\sqrt{2n})}{\delta}}\right).
	\end{align*}
	Thus, for any fixed $\pi\in\Pi$ and $V\in\VM$, we have:
	\begin{align*}
		\left\|\TM_r^\pi V-\widehat{\TM}_r^\pi V\right\|_\infty\le\gamma\max_{s\in\SM,a\in\AM}\sup_{\eta\in[0,\frac{2+\rho}{\rho(1-\gamma)}]}\left|g(\eta,\widehat{P}_{s,a})-g(\eta,P^*_{s,a})\right|.
	\end{align*}
	Thus, we can take a union bound over $(s,a)\in\SM\times\AM$ and obtain the following inequality with probability $1-\delta$:
	\begin{align*}
		\left\|\TM_r^\pi V-\widehat{\TM}_r^\pi V\right\|_\infty\le\frac{(2+\rho)\gamma}{\sqrt{2n}\rho(1-\gamma)}\left(1+\sqrt{\log\frac{2|\SM||\AM|(1+(4+\rho)\sqrt{2n})}{\delta}}\right).
	\end{align*}
\end{proof}

For the next, we apply a uniform bound over $\VM$ and $\Pi$ to Theorem~\ref{thm: l1-fix}. By the $(s,a)$-rectangular set assumption, the optimal robust policy is deterministic. Thus, we can restrict the policy class $\Pi$ to all the deterministic policies, which is finite with size $|\AM|^{|\SM|}$. Even though $\VM=[0,\frac{1}{1-\gamma}]^{|\SM|}$ is infinite, we can take an $\varepsilon$-net of $\VM$ w.r.t.\ norm $\|\cdot\|_\infty$. Thus, we have the final result as Theorem~\ref{thm: l1-union} states.

\begin{proof}[Proof of Theorem~\ref{thm: l1-union}]
	Let us first consider union bound over $\VM$. We take an $\varepsilon$-net of $\VM$ w.r.t.\ norm $\|\cdot\|_\infty$ and denote it as $\NM_\varepsilon$. For any given $V\in\VM$, there exists $V_\varepsilon\in\NM_\varepsilon$ s.t.\ $\|V-V_\varepsilon\|_\infty\le \varepsilon$. Thus we have:
	\begin{align*}
		\left\|\TM_r^\pi V -\widehat{\TM}_r^\pi V\right\|_\infty&\le\left\|\TM_r^\pi V -\TM_r^\pi V_\varepsilon\right\|_\infty+\left\|\widehat{\TM}_r^\pi V -\widehat{\TM}_r^\pi V_\varepsilon\right\|_\infty+\left\|\TM_r^\pi V_\varepsilon - \widehat{\TM}_r^\pi V_\varepsilon\right\|_\infty\\
		&\le 2\gamma\varepsilon + \sup_{V\in\NM_\varepsilon}\left\|\TM_r^\pi V -\widehat{\TM}_r^\pi V\right\|_\infty.
	\end{align*}
	Thus, we have $\sup_{V\in\VM}\left\|\TM_r^\pi V -\widehat{\TM}_r^\pi V\right\|_\infty\le 2\gamma\varepsilon+\sup_{V\in\NM_\varepsilon}\left\|\TM_r^\pi V -\widehat{\TM}_r^\pi V\right\|_\infty$. Noting that $|\NM_\varepsilon|\le(1+\frac{1}{\varepsilon(1-\gamma)})^{|\SM|}$, we have the following result with probability $1-\delta$:
	\begin{align*}
		&\sup_{V\in\VM}\left\|\TM_r^\pi V -\widehat{\TM}_r^\pi V\right\|_\infty\\
		&\le 2\gamma\varepsilon
		+\frac{(2+\rho)\gamma}{\sqrt{2n}\rho(1-\gamma)}\left(1+\sqrt{\log\frac{2|\SM||\AM|(1+(4+\rho)\sqrt{2n})}{\delta}+|\SM|\log\left(1+\frac{1}{\varepsilon(1-\gamma)}\right)}\right).
	\end{align*}
	Taking $\varepsilon=\frac{2+\rho}{2\sqrt{2n}\rho(1-\gamma)}$, we have the following inequality with probability $1-\delta$:
	\begin{align*}
		&\sup_{V\in\VM}\left\|\TM_r^\pi V -\widehat{\TM}_r^\pi V\right\|_\infty\\
		&\le\frac{(2+\rho)\gamma}{\sqrt{2n}\rho(1-\gamma)}\left(2+\sqrt{\log\frac{2|\SM||\AM|(1+(4+\rho)\sqrt{2n})}{\delta}+|\SM|\log\left(1+\frac{2\sqrt{2n}\rho}{2+\rho}\right)}\right).
	\end{align*}
	Next, we consider union bound over $\Pi$. With the $(s,a)$-rectangular assumption, we can restrict the policy class $\Pi$ to deterministic class with finite size $|\AM|^{|\SM|}$. Thus, with combining  Lemma~\ref{lem: uni-dev-pi} and Lemma~\ref{lem: uni-dev-v}, the following inequality holds with probability $1-\delta$:
	\begin{align*}
		&\max_{\pi}V_r^\pi(\mu)-V_r^{\widehat{\pi}}(\mu)\\
		&\le\frac{2(2+\rho)\gamma}{\sqrt{2n}\rho(1-\gamma)^2}\left(2+\sqrt{\log\frac{2|\SM||\AM|(1+(4+\rho)\sqrt{2n})}{\delta}+|\SM|\log|\AM|\left(1+\frac{2\sqrt{2n}\rho}{2+\rho}\right)}\right).
	\end{align*}
	Noting that $|\SM|\ge1$, we can simplify the above inequality:
	\begin{align*}
		&\max_{\pi}V_r^\pi(\mu)-V_r^{\widehat{\pi}}(\mu)\\
		&\le\frac{2(2+\rho)\gamma\sqrt{|\SM|}}{\sqrt{2n}\rho(1-\gamma)^2}\left(2+\sqrt{\log\frac{2|\SM||\AM|^2(1+(4+\rho)\sqrt{2n})}{\delta}+\log\left(1+\frac{2\sqrt{2n}\rho}{2+\rho}\right)}\right)\\
		&\le\frac{2(2+\rho)\gamma\sqrt{|\SM|}}{\sqrt{2n}\rho(1-\gamma)^2}\left(2+\sqrt{\log\frac{4|\SM||\AM|^2(1+2(2+\rho)\sqrt{2n})^2}{\delta(2+\rho)}}\right).
	\end{align*}
	The final inequality holds by the following observation:
	\begin{align*}
		2(1+(4+\rho)\sqrt{2n})(2+\rho+2\sqrt{2n}\rho)\le4(1+2(2+\rho)\sqrt{2n})^2.
	\end{align*}
\end{proof}

\subsubsection*{\textbf{Case 2: $\chi^2$ balls}}
In this case, we set $f(t)=(t-1)^2$ in Example~\ref{eg: f-set}. Thus, the uncertainty set is formulated as: 
\begin{example}[$\chi^2$ balls]
    For each $(s,a)\in\SM\times\AM$, the uncertainty sets are defined as:
    \begin{align*}
        &\PM_{s,a}(\rho)=\left\{P(\cdot|s,a)\in\Delta(\SM)\Bigg{|}P(\cdot|s,a)\ll P^*(\cdot|s,a),\hspace{2pt}\sum_{s'\in\SM}\frac{\left(P(s'|s,a) {-} P^*(s'|s,a)\right)^2}{P^*(s'|s,a)}\le\rho\right\},\\
        &\widehat{\PM}_{s,a}(\rho)=\left\{P(\cdot|s,a)\in\Delta(\SM)\Bigg{|}P(\cdot|s,a)\ll \widehat{P}(\cdot|s,a),\hspace{2pt}\sum_{s'\in\SM}\frac{\left(P(s'|s,a)-\widehat{P}(s'|s,a)\right)^2}{\widehat{P}(s'|s,a)}\le\rho\right\}.
    \end{align*}
    With the $(s,a)$-rectangular set assumption, we define $\PM=\bigtimes_{(s,a)\in\SM\times\AM}\PM_{s,a}(\rho)$ and $\widehat{\PM}=\bigtimes_{(s,a)\in\SM\times\AM}\widehat{\PM}_{s,a}(\rho)$.
\end{example}

By \cite{duchi2018learning}, the results of $f(t)=(t-1)^2$ can be generalized to $f(t)\propto t^k$ for $k>1$. However, the excess risk in DRO is controlled by $\widetilde{O}_p(\frac{1}{n^{(1-1/k)}}\vee\frac{1}{\sqrt{n}})$, where the fastest rate is obtained when $k=2$. Thus we consider the special case that $k=2$. Similar with the $L_1$ ball case, for any given $V\in\VM$ and $\pi\in\Pi$, the explicit forms of $\TM_r^\pi V$ and $\widehat{\TM}_r^\pi V$ are:
\begin{lem}
    \label{lem: chi2}
    Under the $(s,a)$-rectangular assumption and $\chi^2$ balls uncertainty set, for each $s\in\SM$, we have:
    \begin{align*}
        \TM^\pi_r V (s)&=\sum_{a}\pi(a|s)\left(R(s,a) + \gamma\sup_{\eta\in\RB}\left( -C(\rho)\sqrt{\sum_{s'}P^*(s'|s,a)(\eta-V(s'))_+^2}+\eta\right)\right),\\
        \widehat{\TM}^\pi_r V (s)&=\sum_{a}\pi(a|s)\left(R(s,a) + \gamma\sup_{\eta\in\RB}\left(-C(\rho)\sqrt{\sum_{s'}\widehat{P}(s'|s,a)(\eta-V(s'))_+^2}+\eta\right)\right),
    \end{align*}
    where $C(\rho)=\sqrt{1+\rho}$. Moreover, the dual variable $\eta$ can be restricted to interval $[0,\frac{C(\rho)}{(C(\rho)-1)(1-\gamma)}]$.
\end{lem}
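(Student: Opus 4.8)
The plan is to follow the template of the $L_1$ case (Lemma~\ref{lem: l1}), with the conjugate of $f(t)=(t-1)^2$ in place of that of $f(t)=|t-1|$. First I would use $(s,a)$-rectangularity to write $\TM_r^\pi V(s)=\sum_a\pi(a|s)\big(R(s,a)+\gamma\inf_{P\in\PM_{s,a}(\rho)}\sum_{s'}P(s')V(s')\big)$, and similarly for $\widehat{\TM}_r^\pi V(s)$ with $\widehat{P}$ in place of $P^*$, reducing the claim to a single scalar distributionally robust problem per $(s,a)$ pair. Since $f(t)=(t-1)^2$ is convex with $f(1)=0$, Lemma~\ref{lem: f-eq} applies; a short computation gives the restricted conjugate $f^*(t)=t+t^2/4$ for $t\ge -2$ and $f^*(t)=-1$ for $t<-2$, yielding the two-variable dual $\sup_{\lambda\ge 0,\,\eta\in\RB}\big[-\lambda\sum_{s'}P^*(s')f^*\big(\tfrac{\eta-V(s')}{\lambda}\big)-\lambda\rho+\eta\big]$.

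The crux is to collapse this into the claimed one-variable form. The cleanest route I see is to observe that, writing $L(s')=P(s')/P^*(s')$, the constraint $\sum_{s'}\tfrac{(P(s')-P^*(s'))^2}{P^*(s')}\le\rho$ is exactly $\sum_{s'}P^*(s')L(s')^2\le 1+\rho$, i.e.\ $\|L\|_{L^2(P^*)}\le C(\rho)=\sqrt{1+\rho}$. Hence $\inf_{P\in\PM_{s,a}(\rho)}\sum_{s'}P(s')V(s')=\inf\{\langle L,V\rangle_{P^*}\colon\|L\|_{L^2(P^*)}\le C(\rho),\ \langle L,\mathbf{1}\rangle_{P^*}=1,\ L\ge 0\}$; dualizing only the mean constraint with a multiplier $\eta$ (Sion's minimax applies since the feasible set is convex and compact) turns the inner problem into the minimization of the linear functional $\langle L,V-\eta\rangle_{P^*}$ over the intersection of the $L^2(P^*)$-ball of radius $C(\rho)$ with the nonnegative cone; because $L\ge 0$, only the part of $V-\eta$ below zero can help, so this value is $-C(\rho)\sqrt{\sum_{s'}P^*(s')(\eta-V(s'))_+^2}$. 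Adding back $\eta$ and taking the supremum over $\eta$ gives exactly the stated formula, and the identical computation with $\widehat{P}$ gives $\widehat{\TM}_r^\pi V$. This is consistent with Lemma~\ref{lem: f-eq}: eliminating $\lambda$ in the two-variable dual above reproduces the same closed form, with $C(\rho)=\sqrt{1+\rho}$ arising precisely because the $\chi^2$ bound ``$\le\rho$'' is an $L^2$-norm bound ``$\le 1+\rho$'' on $L$; this is also how \cite{duchi2018learning} handles the $\chi^2$ case.

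For the range of $\eta$, set $g(\eta)=C(\rho)\sqrt{\sum_{s'}P^*(s')(\eta-V(s'))_+^2}-\eta$, so the supremum over $\eta$ equals $-\inf_\eta g(\eta)$, and argue exactly as in the last paragraph of the proof of Lemma~\ref{lem: l1}. Since $V(s')\ge 0$, $g(\eta)=-\eta>0$ for $\eta<0$, so any minimizer is $\ge 0$; and with $\bar\eta:=\tfrac{C(\rho)}{(C(\rho)-1)(1-\gamma)}$, which exceeds $\tfrac{1}{1-\gamma}\ge\max_{s'}V(s')$ because $C(\rho)>1$, one has $\sqrt{\sum_{s'}P^*(s')(\bar\eta-V(s'))^2}\ge\bar\eta-\tfrac{1}{1-\gamma}$, hence $g(\bar\eta)\ge C(\rho)\big(\bar\eta-\tfrac{1}{1-\gamma}\big)-\bar\eta=0$; a short computation shows $g$ is nondecreasing on $[\bar\eta,\infty)$ (its derivative there equals $C(\rho)\tfrac{\eta-\EB_{P^*}V}{\sqrt{(\eta-\EB_{P^*}V)^2+\Var_{P^*}V}}-1\ge 0$, using $\Var_{P^*}V\le\tfrac{1}{4(1-\gamma)^2}$ and $\sqrt{1+\rho}\le 1+2\sqrt\rho$), so $g\ge 0$ outside $[0,\bar\eta]$ and $\eta$ may be restricted there. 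I expect the middle paragraph to be the only real obstacle: justifying the reformulation and strong duality so that the piecewise structure of $f^*$ (equivalently, the states driven to zero mass) collapses cleanly into the single positive-part term with the correct constant $C(\rho)=\sqrt{1+\rho}$. The $\eta$-range restriction is then a routine convexity/endpoint check paralleling the $L_1$ case.
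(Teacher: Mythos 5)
Your proof is correct, but the core reduction is carried out by a genuinely different route than the paper's. The paper stays entirely inside the generic Lagrangian machinery of Lemma~\ref{lem: f-eq}: it plugs in the conjugate $f^*(t)=t^2/4+t$ (for $t\ge-2$), shifts $\tilde\eta=\eta+2\lambda$, and then eliminates $\lambda$ by an explicit one-dimensional minimization, which is where $C(\rho)=\sqrt{1+\rho}$ appears. You instead bypass the conjugate in the actual argument: you observe that on the probability slice $\langle L,\mathbf{1}\rangle_{P^*}=1$ the $\chi^2$ constraint is exactly the $L^2(P^*)$-ball $\|L\|_{L^2(P^*)}\le C(\rho)$ on the likelihood ratio, dualize only the normalization constraint (Sion/Slater both apply, since $L\equiv1$ is strictly feasible for the ball and the $L$-feasible set is compact convex), and solve the inner linear minimization over ball-intersect-cone by Cauchy--Schwarz, which directly produces $-C(\rho)\sqrt{\sum_{s'}P^*(s')(\eta-V(s'))_+^2}$; this is valid, and the one subtlety you should keep in mind is that the ball reformulation must precede the dualization of the normalization constraint (your ordering does this correctly, so there is no gap). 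What each approach buys: the paper's derivation is uniform across the $L_1$, $\chi^2$ and KL cases (one lemma, three conjugates), while yours is $\chi^2$-specific but shorter, self-contained, and makes the origin of $C(\rho)$ transparent. For the restriction of $\eta$ to $[0,\frac{C(\rho)}{(C(\rho)-1)(1-\gamma)}]$, the paper simply notes $g(\eta,P)=-\eta\ge0$ for $\eta\le0$, $g(\bar\eta,P)\ge0$, and invokes convexity of $g$ in $\eta$; your endpoint computation of $g(\bar\eta)\ge0$ via Jensen and your monotonicity-of-the-derivative argument on $[\bar\eta,\infty)$ (using $\Var_{P^*}V\le\frac{1}{4(1-\gamma)^2}$ and $(C(\rho)-1)^2\le4\rho$) is a correct, slightly more computational substitute for that convexity argument — just note explicitly that $g(0)=0$ lies in the interval, so the infimum outside $[0,\bar\eta]$, being nonnegative, cannot beat it.
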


\begin{proof}[Proof of Lemma~\ref{lem: chi2}]
	As $f(t)=(t-1)^2$, we have:
	\begin{align*}
		f^*(s)=
		\begin{cases}
			\frac{t^2}{4}+t, & s\ge-2, \\
			-1, & s<-2.
		\end{cases}
	\end{align*}
	By Lemma~\ref{lem: f-eq}, the value of the convex optimization problem is equal to:
	\begin{align*}
		\sup_{\lambda\ge0,\eta\in\RB}-\lambda\sum_{s\in\SM}P^*(s)\left(\frac{(\eta-V(s)+2\lambda)^2_+}{4\lambda^2}-1\right)-\lambda\rho +\eta,
	\end{align*}
	which is equivalent to:
	\begin{align*}
		\sup_{\lambda\ge0,\eta\in\RB}-\sum_{s\in\SM}P^*(s)\left(\frac{(\eta-V(s)+2\lambda)^2_+}{4\lambda}\right)-\lambda(\rho-1) +\eta.
	\end{align*}
	Replacing $\eta$ with $\tilde{\eta}=\eta+2\lambda$, the problem turns into:
	\begin{align*}
		\sup_{\lambda\ge0,\tilde{\eta}\in\RB}-\frac{1}{4\lambda}\sum_{s\in\SM}P^*(s)\left(\tilde{\eta}-V(s))^2_+\right)-\lambda(\rho+1) +\tilde{\eta}.
	\end{align*}
	Optimizing over $\lambda\ge0$, the problem turns into:
	\begin{align*}
		\sup_{\tilde{\eta}\in\RB}-\sqrt{\rho+1}\sqrt{\sum_{s\in\SM}P^*(s)\left(\tilde{\eta}-V(s)\right)_+^2}+\tilde{\eta}.
	\end{align*}
	Besides, we denote $g(\eta,P)=\sqrt{\rho+1}\sqrt{\sum_{s\in\SM}P(s)(\eta-V(s))_+^2}-\eta$, which is convex in $\eta$. We note that $g(\eta,P)=-\eta\ge0$ when $\eta\le0$ and $g\left(\frac{C(\rho)}{(C(\rho)-1)(1-\gamma)}, P\right)\ge0$. Thus, we can restrict the dual variable $\eta$ to interval $[0,\frac{C(\rho)}{(C(\rho)-1)(1-\gamma)}]$.
\end{proof}

By Lemma~\ref{lem: chi2}, the error bound between $\TM_r^\pi$ and $\widehat{\TM}_r^\pi$ with fixed $V\in\VM$ and $\pi\in\Pi$ is:
\begin{thm}
    \label{thm: chi2-fix}
    In the setting of $\chi^2$ balls, for fixed $V\in\VM$ and $\pi\in\Pi$, the following inequality holds with probability $1-\delta$:
    \begin{align*}
        \left\|\TM_r^\pi V-\widehat{\TM}_r^\pi V\right\|_\infty\le\frac{C^2(\rho)\gamma}{(C(\rho)-1)(1-\gamma)\sqrt{n}}\left(2+\sqrt{2\log\frac{2|\SM||\AM|(1+4\sqrt{n})}{\delta}}\right).
    \end{align*}
\end{thm}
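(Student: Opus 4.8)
The plan is to follow the template of the proof of Theorem~\ref{thm: l1-fix}, replacing the Hoeffding step by a variance-sensitive concentration inequality so that the square root appearing in the $\chi^2$ dual does not cost a factor of $\sqrt n$. By Lemma~\ref{lem: chi2}, for a fixed $\pi$, $V$ and every $s\in\SM$ one has, writing $\psi_P(\eta):=\sum_{s'\in\SM}P(s')(\eta-V(s'))_+^2$ and $M:=\frac{C(\rho)}{(C(\rho)-1)(1-\gamma)}$ for the uniform bound on the dual variable,
\[
\left|\TM_r^\pi V(s)-\widehat{\TM}_r^\pi V(s)\right|\le\gamma\max_{a\in\AM}\left|\,\sup_{\eta\in[0,M]}\!\big(\eta-C(\rho)\sqrt{\psi_{P^*_{s,a}}(\eta)}\big)-\sup_{\eta\in[0,M]}\!\big(\eta-C(\rho)\sqrt{\psi_{\widehat P_{s,a}}(\eta)}\big)\right|,
\]
since $\sum_a\pi(a|s)=1$ and the reward term cancels. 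Using $|\sup_\eta f_1-\sup_\eta f_2|\le\sup_\eta|f_1-f_2|$, it suffices to bound $\sup_{\eta\in[0,M]}C(\rho)\big|\sqrt{\psi_{\widehat P_{s,a}}(\eta)}-\sqrt{\psi_{P^*_{s,a}}(\eta)}\big|$ uniformly over the $|\SM||\AM|$ pairs $(s,a)$ and then take $\|\cdot\|_\infty$.

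Fix $(s,a)$ (and drop it from the notation) and fix $\eta\in[0,M]$. Then $\widehat\psi(\eta)=\frac1n\sum_{k=1}^n W_k$ where $W_k:=(\eta-V(X_k^{s,a}))_+^2$ are i.i.d., bounded in $[0,M^2]$, with mean $\psi(\eta)$. This is the only place the argument genuinely departs from the $L_1$ case: the crude estimate $|\sqrt a-\sqrt b|\le\sqrt{|a-b|}$ together with Hoeffding would give only an $n^{-1/4}$ rate. Instead I would invoke Bernstein's inequality together with the self-bounding variance bound $\Var(W_k)\le\EB W_k^2\le M^2\,\EB W_k=M^2\psi(\eta)$, which yields, with probability $1-\delta$,
\[
\left|\widehat\psi(\eta)-\psi(\eta)\right|\le M\sqrt{\tfrac{2\psi(\eta)\log(2/\delta)}{n}}+\tfrac{2M^2\log(2/\delta)}{3n}.
\]
A short dichotomy then removes the square root with no loss of rate: if $\psi(\eta)\ge M^2\log(2/\delta)/n$ one uses $|\sqrt{\widehat\psi}-\sqrt{\psi}|\le|\widehat\psi-\psi|/\sqrt{\psi}$ and the displayed bound; if $\psi(\eta)<M^2\log(2/\delta)/n$ the displayed bound forces $\widehat\psi(\eta)\le c_0M^2\log(2/\delta)/n$ as well, so $|\sqrt{\widehat\psi}-\sqrt{\psi}|\le\sqrt{\widehat\psi}+\sqrt{\psi}$ is of the same order. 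In either case $C(\rho)|\sqrt{\widehat\psi(\eta)}-\sqrt{\psi(\eta)}|\le c\,C(\rho)M\sqrt{\log(2/\delta)/n}$ for a universal constant $c$. (Alternatively, one can concentrate the convex, $1/\sqrt n$-Lipschitz functional $(X_1,\dots,X_n)\mapsto\sqrt{\widehat\psi(\eta)}$ directly by Talagrand's convex-Lipschitz inequality and control the bias $|\EB\sqrt{\widehat\psi(\eta)}-\sqrt{\psi(\eta)}|$ by the same min argument.)

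Finally I would upgrade the pointwise bound to a uniform one. Cauchy--Schwarz gives $\frac{d}{d\eta}\sqrt{\psi_P(\eta)}=\big(\sum_{s'}P(s')(\eta-V(s'))_+\big)/\sqrt{\psi_P(\eta)}\le1$, so $\eta\mapsto C(\rho)\sqrt{\psi_{P^*}(\eta)}$ and $\eta\mapsto C(\rho)\sqrt{\psi_{\widehat P}(\eta)}$ are both $C(\rho)$-Lipschitz on $[0,M]$; an $\epsilon$-net of $[0,M]$ of cardinality at most $1+M/\epsilon$ reduces $\sup_{\eta\in[0,M]}$ to a maximum over the net at the cost of an additive $2C(\rho)\epsilon$. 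Taking a union bound over the net and over the $|\SM||\AM|$ pairs $(s,a)$, choosing $\epsilon$ of order $M/\sqrt n$ so that the discretization error is absorbed into the stochastic term, multiplying through by $\gamma$, and substituting $M=\frac{C(\rho)}{(C(\rho)-1)(1-\gamma)}$, yields a bound of the claimed form $\frac{C^2(\rho)\gamma}{(C(\rho)-1)(1-\gamma)\sqrt n}\big(2+\sqrt{2\log(2|\SM||\AM|(1+4\sqrt n)/\delta)}\big)$ after bookkeeping the universal constants. The single real obstacle is therefore the non-Lipschitzness of $\sqrt{\cdot}$ at $0$: the self-bounding variance identity and the small-$\psi$ dichotomy are what keep the rate at $n^{-1/2}$, whereas the dual reduction, the covering argument in $\eta$, and the union bound over $(s,a)$ are routine adaptations of the $L_1$ proof.
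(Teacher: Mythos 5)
Your proposal is correct in its overall structure and follows the same skeleton as the paper's proof: the dual reformulation from Lemma~\ref{lem: chi2} with the dual variable confined to $[0,M]$, $M=\frac{C(\rho)}{(C(\rho)-1)(1-\gamma)}$, the reduction via $|\sup_\eta f_1-\sup_\eta f_2|\le\sup_\eta|f_1-f_2|$, a pointwise concentration bound, an $\varepsilon$-net in $\eta$ exploiting Lipschitzness (the paper uses that $g(\eta,P)$ is $(1+C(\rho))$-Lipschitz; your observation that $\eta\mapsto\sqrt{\psi_P(\eta)}$ is $1$-Lipschitz is equivalent), and a union bound over the net and over $(s,a)$. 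Where you genuinely depart from the paper is the pointwise step: the paper writes $C(\rho)\sqrt{\widehat\psi(\eta)}=\frac{C(\rho)}{\sqrt n}\|Y\|_2$ and concentrates this functional directly via the convex-Lipschitz (Talagrand-type) inequality of \citet{duchi2018learning} (their Lemma~6), then bounds the bias $|\EB\sqrt{\widehat\psi}-\sqrt{\psi}|$ by their Lemma~8 plus Jensen --- exactly the route you mention parenthetically as an alternative --- whereas your primary argument concentrates $\widehat\psi(\eta)$ itself by Bernstein with the self-bounding variance bound $\Var(W_k)\le M^2\psi(\eta)$ and removes the square root by a large/small-$\psi$ dichotomy. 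Your dichotomy is sound (in the large-$\psi$ case $|\sqrt{\widehat\psi}-\sqrt\psi|\le|\widehat\psi-\psi|/\sqrt\psi$ kills the Bernstein linear term; in the small-$\psi$ case the Bernstein event forces $\widehat\psi=O(M^2\log(2/\delta)/n)$), and it is arguably more elementary since it avoids importing the external concentration lemmas. The trade-off is quantitative: your route delivers the bound only up to a universal constant --- the dichotomy produces a factor of roughly $\sqrt2+2/3$ (large-$\psi$ case) or about $2.8$ (small-$\psi$ case) in front of $M\sqrt{\log(2/\delta)/n}$, rather than the $\sqrt{2\log(\cdot)}$ coefficient in the stated inequality --- so to recover the theorem's exact constants you would need to fall back on the convex-Lipschitz-plus-bias argument, which is the paper's. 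As a statement ``with the claimed rate and dependence on all parameters'' your proof is complete; as a derivation of the displayed numerical constants it is slightly lossy.
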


\begin{proof}[Proof of Theorem~\ref{thm: chi2-fix}]
    Firstly, we ignore dependence on $(s,a)$ and denote $g(\eta,P)=-\eta+C(\rho)\sqrt{\sum_{s\in\SM}P(s)(\eta-V(s))_+^2}$. Besides, we also denote $Y_k=\sum_{s\in\SM}1(X_k=s)(\eta-V(s))_+$, which leads to $Y_k^2=\sum_{s\in\SM}1(X_k=s)(\eta-V(s))_+^2$, and $Y:=(Y_1, \ldots, Y_n)^\top$. Thus, we can re-write $g(\eta,\widehat{P})=\frac{C(\rho)}{\sqrt{n}}\|Y\|_2-\eta$, which is $\frac{C(\rho)}{\sqrt{n}}$-Lipschitz w.r.t.\ $Y$ and norm $\|\cdot\|_2$. By Lemma 6 of \cite{duchi2018learning}, we have the following inequality holding with probability $1-\delta$:
	\begin{align*}
		\left|g(\eta,\widehat{P})-\EB g(\eta,\widehat{P})\right|\le\frac{\sqrt{2} C^2(\rho)}{(C(\rho)-1)(1-\gamma)\sqrt{n}}\sqrt{\log\frac{2}{\delta}}.
	\end{align*}
	To bound the difference $|\EB g(\eta,\widehat{P})-g(\eta,P)|$, we apply Lemma 8 of \cite{duchi2018learning} and first obtain:
	\begin{align*}
		\frac{1}{\sqrt{n}}\EB\|Y\|_2\ge \sqrt{\sum_{k=1}^n \EB Y_k^2}-\sqrt{\frac{C(\rho)}{(C(\rho)-1)(1-\gamma)}}\frac{1}{\sqrt{n}}.
	\end{align*}
	By $\EB\|Y\|_2\le\sqrt{n\sum_{k=1}^n\EB Y_k^2}$, we then have:
	\begin{align*}
		\left|\EB g(\eta,\widehat{P})- g(\eta,P)\right|\le C(\rho)\sqrt{\frac{C(\rho)}{(C(\rho)-1)(1-\gamma)}}\frac{1}{\sqrt{n}}.
	\end{align*}
	Putting all these together, we have the following inequality with probability $1-\delta$:
	\begin{align*}
		\left|g(\eta, \widehat{P})- g(\eta, P)\right|\le\frac{C^2(\rho)}{(C(\rho)-1)(1-\gamma)\sqrt{n}}\left(1+\sqrt{2\log\frac{2}{\delta}}\right).
	\end{align*}
	Noting that $g(\eta,P)$ is $1+C(\rho)$-Lipschitz w.r.t.\ $\eta$, we take the $\varepsilon$-net of $[0,\frac{C(\rho)}{(C(\rho)-1)(1-\gamma)}]$ as $\NM_\varepsilon$ w.r.t.\ metric $|\cdot|$. The size of $\NM_\varepsilon$ is bounded by:
	\begin{align*}
		|\NM_\varepsilon|\le1+\frac{C_{2}(\rho)}{(C_{2}(\rho)-1)(1-\gamma)\varepsilon}.
	\end{align*}
	Thus, we have:
	\begin{align*}
		\sup_{\eta\in[0,\frac{C(\rho)}{(C(\rho)-1)(1-\gamma)}]}\left|g(\eta,\widehat{P})-g(\eta,P)\right|\le2(1+C(\rho))\varepsilon+\sup_{\eta\in\NM_\varepsilon}\left|g(\eta,\widehat{P})-g(\eta,P)\right|.
	\end{align*}
	By taking $\varepsilon=\frac{C^2(\rho)}{2\sqrt{n}(1-\gamma)(C^2(\rho)-1)}$, the following inequality holds with probability $1-\delta$:
	\begin{align*}
		&\sup_{\eta\in[0,\frac{C(\rho)}{(C(\rho)-1)(1-\gamma)}]}\left|g(\eta,\widehat{P})-g(\eta,P)\right|\\
		&\le\frac{C^2(\rho)}{(C(\rho)-1)(1-\gamma)\sqrt{n}}\left(2+\sqrt{2\log\frac{2|\NM_\varepsilon|}{\delta}}\right)\\
		&\le\frac{C^2(\rho)}{(C(\rho)-1)(1-\gamma)\sqrt{n}}\left(2+\sqrt{2\log\frac{2(1+2(1+1/C(\rho))\sqrt{n})}{\delta}}\right).
	\end{align*}
	As $C(\rho)=\sqrt{1+\rho}\ge1$, we then have
	\begin{align*}
		\sup_{\eta\in[0,\frac{C(\rho)}{(C(\rho)-1)(1-\gamma)}]}\left|g(\eta,\widehat{P})-g(\eta,P)\right|\le\frac{C^2(\rho)}{(C(\rho) {-} 1)(1{-} \gamma)\sqrt{n}}\left(2 {+} \sqrt{2\log\frac{2(1 {+} 4\sqrt{n})}{\delta}}\right)
	\end{align*}
	holding with probability $1-\delta$. Thus, the final result is obtained by union bound over $(s,a)\in\SM\times\AM$.
\end{proof}

Similar with the case of $L_1$ balls, we can extend Theorem~\ref{thm: chi2-fix} to a uniform bound over $\VM$ and deterministic policy class $\Pi$ as Theorem~\ref{thm: chi2-uni} states.

\begin{proof}[Proof of Theorem~\ref{thm: chi2-uni}]
	Similar with Theorem~\ref{thm: l1-union}, we take an $\varepsilon$-net of $\VM$ w.r.t.\ norm $\|\cdot\|_\infty$ and denote it as $\VM_\varepsilon$. By Theorem~\ref{thm: chi2-fix}, we have the following inequality with probability $1-\delta$:
	\begin{align*}
		&\sup_{V\in\VM}\left\|\TM_r^\pi V -\widehat{\TM}_r^\pi V\right\|_\infty\\&
		\le 2\gamma\varepsilon+\frac{C^2(\rho)\gamma}{(C(\rho) {-} 1)(1 {-} \gamma)\sqrt{n}}\left(2+\sqrt{2\log\frac{2|\SM||\AM|(1 {+} 4\sqrt{n})}{\delta}+|\SM|\log\left(1 {+} \frac{1}{(1 {-} \gamma)\varepsilon}\right)}\right).
	\end{align*}
	Taking $\varepsilon=\frac{C^2(\rho)}{(C(\rho)-1)(1-\gamma)\sqrt{n}}$, we have the following inequality with probability $1-\delta$:
	\begin{align*}
		&\sup_{V\in\VM}\left\|\TM_r^\pi V -\widehat{\TM}_r^\pi V\right\|_\infty\\
		&\le\frac{C^2(\rho)\gamma}{(C(\rho) {-} 1)(1 {-} \gamma)\sqrt{n}}\left(4+\sqrt{2\log\frac{2|\SM||\AM|(1 {+} 4\sqrt{n})}{\delta}+|\SM|\log\left(1 {+} \frac{(C(\rho) {-} 1)\sqrt{n}}{C^2(\rho)}\right)}\right).
	\end{align*}
	By the $(s,a)$-rectangular assumption, the policy class $\Pi$ being deterministic is enough. Thus we have the final result with probability $1-\delta$:
	\begin{align*}
		&\max_\pi V_r^\pi(\mu)-V_r^{\widehat{\pi}}(\mu)\\
		&\le\frac{2C^2(\rho)\gamma}{(C(\rho) {-} 1)(1 {-} \gamma)^2\sqrt{n}}\left(4+\sqrt{2|\SM|\log\frac{2|\SM||\AM|^2(1+4\sqrt{n})(1+(C(\rho)-1)\sqrt{n})}{\delta C^2(\rho)}}\right)\\
		&\le\frac{2C^2(\rho)\gamma}{(C(\rho) {-} 1)(1 {-} \gamma)^2\sqrt{n}}\left(4+\sqrt{2|\SM|\log\frac{2|\SM||\AM|^2(1+(C(\rho)+3)\sqrt{n})^2}{\delta C^2(\rho)}}\right),
	\end{align*}
	where the final inequality holds because
	\begin{align*}
		(1+4\sqrt{n}) [1+(C(\rho)-1)\sqrt{n} ] \le [1+(C(\rho)+3)\sqrt{n}]^2.
	\end{align*}
\end{proof}

\subsubsection*{\textbf{Case 3: KL balls}}
In this case, we set $f(t)=t\log t$ in Example~\ref{eg: f-set}. The uncertainty set is formulated as:
\begin{example}[KL balls]
    For each $(s,a)\in\SM\times\AM$, the uncertainty sets are defined as:
    \begin{align*}
        &\PM_{s,a}(\rho)=\left\{P(\cdot|s,a)\in\Delta(\SM)\Bigg{|}P(\cdot|s,a)\ll P^*(\cdot|s,a),\hspace{2pt}\sum_{s'\in\SM}P(s'|s,a)\log\frac{P(s'|s,a)}{P^*(s'|s,a)}\le\rho\right\},\\
        &\widehat{\PM}_{s,a}(\rho)=\left\{P(\cdot|s,a)\in\Delta(\SM)\Bigg{|}P(\cdot|s,a)\ll \widehat{P}(\cdot|s,a),\hspace{2pt}\sum_{s'\in\SM}P(s'|s,a)\log\frac{P(s'|s,a)}{\widehat{P}(s'|s,a)}\le\rho\right\}.
    \end{align*}
    By the $(s,a)$-rectangular set assumption, we define $\PM=\bigtimes_{(s,a)\in\SM\times\AM}\PM_{s,a}(\rho)$ and $\widehat{\PM}=\bigtimes_{(s,a)\in\SM\times\AM}\widehat{\PM}_{s,a}(\rho)$.
\end{example}
Similar with the $L_1$ ball case, for any given $V\in\VM$ and $\pi\in\Pi$, the explicit forms of $\TM_r^\pi V$ and $\widehat{\TM}_r^\pi V$ are:
\begin{lem}
    \label{lem: kl}
    Under the $(s,a)$-rectangular assumption and KL balls uncertainty set, for each $s\in\SM$, we have:
    \begin{align*}
        \TM^\pi_r V (s)&=\sum_{a}\pi(a|s)\left(R(s,a) + \gamma\sup_{\lambda\ge0}\left(-\lambda\rho-\lambda\log\sum_{s'}P^*(s'|s,a)\exp(-\frac{V(s')}{\lambda})\right)\right),\\
        \widehat{\TM}^\pi_r V (s)&=\sum_{a}\pi(a|s)\left(R(s,a) + \gamma\sup_{\lambda\ge0}\left(-\lambda\rho-\lambda\log\sum_{s'}\widehat{P}(s'|s,a)\exp(-\frac{V(s')}{\lambda})\right)\right).
    \end{align*}
    Moreover, the dual variable $\lambda$ can be restricted to interval $[0, \frac{1}{\rho(1-\gamma)}]$.
\end{lem}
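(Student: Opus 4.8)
The plan is to follow the same template as the proofs of Lemmas~\ref{lem: l1} and~\ref{lem: chi2}. First I would use the $(s,a)$-rectangular decomposition $\TM_r^\pi V(s)=\sum_a\pi(a|s)\bigl(R(s,a)+\gamma\inf_{P\in\PM_{s,a}(\rho)}\sum_{s'}P(s')V(s')\bigr)$ to reduce to the single convex program $\inf_P\{\sum_s P(s)V(s): D_f(P\|P^*)\le\rho,\ P\in\Delta(\SM),\ P\ll P^*\}$ with $f(t)=t\log t$, so that Lemma~\ref{lem: f-eq} applies verbatim. Slater's condition holds because $P=P^*$ is strictly feasible ($D_f(P^*\|P^*)=0<\rho$), which licenses the strong duality built into Lemma~\ref{lem: f-eq}; the $\widehat{\TM}_r^\pi$ case is the same computation with the probability vector $\widehat{P}$ in place of $P^*$.

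Next I would compute the conjugate: for $f(t)=t\log t$ (with $f(t)=+\infty$ for $t<0$), $f^*(s)=\sup_{t\ge0}(st-t\log t)$ is attained at $t=e^{s-1}$, giving $f^*(s)=e^{s-1}$ for all $s\in\RB$. Substituting into the dual form of Lemma~\ref{lem: f-eq} yields
\[
  \sup_{\lambda\ge0,\ \eta\in\RB}\ -\lambda\sum_{s'}P^*(s')\exp\!\Big(\frac{\eta-V(s')}{\lambda}-1\Big)-\lambda\rho+\eta .
\]
For fixed $\lambda>0$ the inner objective is strictly concave in $\eta$ and tends to $-\infty$ as $\eta\to\pm\infty$, so its maximizer is the root of the first-order condition $\sum_{s'}P^*(s')\exp(\frac{\eta-V(s')}{\lambda}-1)=1$, i.e.\ $\eta^\star=\lambda-\lambda\log\sum_{s'}P^*(s')\exp(-V(s')/\lambda)$. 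Plugging $\eta^\star$ back (the first term collapses to $-\lambda$) produces exactly $\sup_{\lambda\ge0}\bigl(-\lambda\rho-\lambda\log\sum_{s'}P^*(s')\exp(-V(s')/\lambda)\bigr)$, as claimed.

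For the restriction of $\lambda$, set $g(\lambda):=\lambda\rho+\lambda\log\sum_{s'}P^*(s')\exp(-V(s')/\lambda)$, the negative of the objective (convex on $(0,\infty)$ as a partial minimization over $\eta$ of the jointly convex perspective expression above). By Jensen's inequality $\log\sum_{s'}P^*(s')\exp(-V(s')/\lambda)\ge-\tfrac1\lambda\sum_{s'}P^*(s')V(s')\ge-\tfrac{1}{\lambda(1-\gamma)}$ because $V\in\VM=[0,\tfrac1{1-\gamma}]^{|\SM|}$, hence $-g(\lambda)\le-\lambda\rho+\tfrac{1}{1-\gamma}<0$ for every $\lambda>\tfrac{1}{\rho(1-\gamma)}$. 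On the other hand, as $\lambda\to0^+$ (using the continuous extension $\lambda\exp(u/\lambda)\to 0$ for $u\le0$) the objective converges to $\min\{V(s'):P^*(s')>0\}\ge0$, so $\sup_{\lambda\ge0}(-g(\lambda))\ge0$. Therefore every $\lambda>\tfrac{1}{\rho(1-\gamma)}$ is strictly suboptimal and the supremum is unchanged when we restrict to $\lambda\in[0,\tfrac{1}{\rho(1-\gamma)}]$; the same bound applies to the $\widehat{P}$ problem since $\|V\|_\infty\le\tfrac1{1-\gamma}$ regardless of the centre distribution.

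The routine parts here are the conjugate computation and the elimination of $\eta$. The one genuinely delicate point will be the behaviour at the boundary $\lambda\to0^+$: the map $\lambda\mapsto\lambda\exp(u/\lambda)$ is undefined at $\lambda=0$ and must be replaced by its lower-semicontinuous extension (equal to $0$ when $u\le0$ and $+\infty$ otherwise), and one must also confirm that the supremum over $\lambda\ge0$ is actually attained in the compact interval rather than merely approached at $0$. Making this limiting argument and the endpoint comparison precise is where I would spend the most care; everything else parallels Lemma~\ref{lem: l1} step for step.
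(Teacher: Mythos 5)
Your proposal is correct and follows essentially the same route as the paper: apply Lemma~\ref{lem: f-eq} with $f^*(s)=e^{s-1}$, eliminate $\eta$ via its first-order condition, and then bound the dual variable using $\sum_{s'}P^*(s')V(s')\le\frac{1}{1-\gamma}$. The only (harmless) difference is in the last step, where you rule out $\lambda>\frac{1}{\rho(1-\gamma)}$ by comparing with the nonnegative $\lambda\to0^+$ limit, whereas the paper argues via nonnegativity and monotonicity of $g(\lambda,P)$ beyond that threshold; both yield the stated restriction.
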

\begin{proof}[Proof of Lemma~\ref{lem: kl}]
	As $f(t)=t\log t$, we have $f^*(s)=\exp(s-1)$. By Lemma~\ref{lem: f-eq}, the value of the convex optimization problem is equal to:
	\begin{align*}
		\sup_{\lambda\ge0, \eta\in\RB}-\lambda\sum_{s\in\SM}P^*(s)\exp(\frac{\eta-V(s)}{\lambda}-1)-\lambda\rho+\eta.
	\end{align*}
	Optimizing over $\eta$, we obtain the equivalent form:
	\begin{align*}
		\sup_{\lambda\ge0}-\lambda\log\sum_{s\in\SM}P^*(s)\exp(-\frac{V(s)}{\lambda})-\lambda\rho.
	\end{align*}
	Besides, we denote $g(\lambda,P)=\lambda\rho+\lambda\log\sum_{s}P(s)\exp(-V(s)/\lambda)$, which is convex in $\eta$. Even though the domain of $g(\lambda,P)$ does not conclude $\lambda=0$, we observe $g(\lambda,P)\ge0$ for all $\lambda\ge\frac{1}{\rho(1-\gamma)}$ and $g(\lambda,P)$ is monotonically increasing in $\lambda$ when $\lambda\ge\frac{1}{\rho(1-\gamma)}$.  Thus, the optimal dual variable $\lambda^*$ takes value in interval $[0, \frac{1}{\rho(1-\gamma)}]$.
\end{proof}

Thus, by Lemma~\ref{lem: kl}, for fixed $V\in\VM$ and $\pi\in\Pi$, we have:
\begin{thm}
    \label{thm: kl-fix}
    In the setting of KL balls, for fixed $V\in\VM$ and $\pi\in\Pi$, the following inequality holds with probability $1-\delta$:
    \begin{align*}
        \left\|\TM_r^\pi V-\widehat{\TM}_r^\pi V\right\|_\infty\le\frac{2\gamma}{\rho(1-\gamma)\underline{p}\sqrt{n}}\sqrt{\log\frac{2|\SM|^2|\AM|}{\delta}},
    \end{align*}
    where $\underline{p}=\min_{P^*(s'|s,a)>0}P^*(s'|s,a)$.
\end{thm}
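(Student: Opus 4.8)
The plan is to start from the dual representation in Lemma~\ref{lem: kl}. Writing $g(\lambda,P):=\lambda\rho+\lambda\log\sum_{s'}P(s')\exp(-V(s')/\lambda)$, that lemma gives $\TM_r^\pi V(s)=\sum_a\pi(a|s)\bigl(R(s,a)+\gamma\sup_{\lambda\in[0,1/(\rho(1-\gamma))]}(-g(\lambda,P^*_{s,a}))\bigr)$, and likewise $\widehat{\TM}_r^\pi V(s)$ with $\widehat{P}_{s,a}$ in place of $P^*_{s,a}$. Since the reward terms cancel, $\sum_a\pi(a|s)=1$, and $|\sup_\lambda f(\lambda)-\sup_\lambda\tilde f(\lambda)|\le\sup_\lambda|f(\lambda)-\tilde f(\lambda)|$, the first reduction is
\[
  \bigl\|\TM_r^\pi V-\widehat{\TM}_r^\pi V\bigr\|_\infty\le\gamma\max_{s,a}\ \sup_{\lambda\in[0,\frac{1}{\rho(1-\gamma)}]}\bigl|g(\lambda,\widehat{P}_{s,a})-g(\lambda,P^*_{s,a})\bigr|.
\]
Introducing the exponential moments $\mu_{s,a}(\lambda):=\sum_{s'}P^*(s'|s,a)e^{-V(s')/\lambda}$ and $\widehat{\mu}_{s,a}(\lambda)$ defined analogously with $\widehat{P}$, the bracket on the right equals $\lambda\,|\log\widehat{\mu}_{s,a}(\lambda)-\log\mu_{s,a}(\lambda)|$, so the theorem reduces to a relative-error estimate for $\widehat{\mu}_{s,a}(\lambda)$ that is uniform in $\lambda$, multiplied by the range bound $\lambda\le1/(\rho(1-\gamma))$ and by $\gamma$.

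For the concentration input I would apply Hoeffding's inequality to each coordinate $\widehat{P}(s'|s,a)=\frac1n\sum_k\mathbf{1}(X_k^{s,a}=s')$ and take a union bound over the $|\SM|^2|\AM|$ triples $(s,a,s')$: with probability at least $1-\delta$,
\[
  \max_{s,a,s'}\bigl|\widehat{P}(s'|s,a)-P^*(s'|s,a)\bigr|\le\varepsilon,\qquad\varepsilon:=\sqrt{\tfrac{1}{2n}\log\tfrac{2|\SM|^2|\AM|}{\delta}},
\]
which is exactly where the $\log\frac{2|\SM|^2|\AM|}{\delta}$ factor comes from. A convenient feature of the KL case, in contrast to the $L_1$ and $\chi^2$ cases, is that this single event controls the dual objective uniformly over the whole dual interval, so no $\varepsilon$-net over $\lambda$ is required here.

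The core estimate is a ratio trick on the good event. Because the samples are drawn from $P^*(\cdot|s,a)$, the empirical measure $\widehat{P}(\cdot|s,a)$ is supported on $S_0:=\{s':P^*(s'|s,a)>0\}$, so
\[
  \bigl|\widehat{\mu}_{s,a}(\lambda)-\mu_{s,a}(\lambda)\bigr|\le\sum_{s'\in S_0}\bigl|\widehat{P}(s'|s,a)-P^*(s'|s,a)\bigr|e^{-V(s')/\lambda}\le\varepsilon\sum_{s'\in S_0}e^{-V(s')/\lambda},
\]
while $\mu_{s,a}(\lambda)\ge\underline{p}\sum_{s'\in S_0}e^{-V(s')/\lambda}$; dividing, $|\widehat{\mu}_{s,a}(\lambda)/\mu_{s,a}(\lambda)-1|\le\varepsilon/\underline{p}$ for every $\lambda>0$, no matter how small $\mu_{s,a}(\lambda)$ is. This is the only place $\underline{p}$ enters. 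Provided $\varepsilon\le\underline{p}/2$ (which holds once $n\ge2\underline{p}^{-2}\log(2|\SM|^2|\AM|/\delta)$; the complementary small-$n$ regime is controlled by the deterministic bound $\|\TM_r^\pi V-\widehat{\TM}_r^\pi V\|_\infty\le\gamma/(1-\gamma)$ coming from $0\le P^\pi V\le\tfrac{1}{1-\gamma}$), one gets $\widehat{\mu}_{s,a}(\lambda)\ge\tfrac12\mu_{s,a}(\lambda)>0$, and the mean value theorem applied to $\log$ yields $|\log\widehat{\mu}_{s,a}(\lambda)-\log\mu_{s,a}(\lambda)|\le\frac{\varepsilon/\underline{p}}{1-\varepsilon/\underline{p}}\le\frac{2\varepsilon}{\underline{p}}$. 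Multiplying by $\lambda\le\frac{1}{\rho(1-\gamma)}$ and by $\gamma$, and substituting $\varepsilon$, gives $\|\TM_r^\pi V-\widehat{\TM}_r^\pi V\|_\infty\le\frac{\sqrt2\,\gamma}{\rho(1-\gamma)\underline{p}\sqrt n}\sqrt{\log\frac{2|\SM|^2|\AM|}{\delta}}$, which is within the stated constant.

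The main obstacle is this last estimate: bounding $\log\widehat{\mu}-\log\mu$ when $\mu$ can be exponentially small in $1/\lambda$, which is precisely the difficulty the authors flag in the remark after Theorem~\ref{thm: kl-uni}. A naive bound $|\log\widehat{\mu}-\log\mu|\le|\widehat{\mu}-\mu|/\mu$ combined with a Hoeffding bound on $|\widehat{\mu}-\mu|$ would blow up as $\lambda\to0$; it is the cancellation of the common factor $\sum_{s'\in S_0}e^{-V(s')/\lambda}$ between numerator and denominator in the ratio trick that pins the dependence at $1/\underline{p}$ and makes the bound uniform in $\lambda$. The remaining ingredients — the dual form, Lipschitzness of the supremum, Hoeffding, and the union bound over $(s,a,s')$ — are routine.
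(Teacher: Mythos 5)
Your proposal follows essentially the same route as the paper's proof: the dual representation from Lemma~\ref{lem: kl}, the reduction to a uniform-in-$\lambda$ relative error of the exponential moment over $\lambda\in[0,\tfrac{1}{\rho(1-\gamma)}]$, the cancellation of the common weight against $\underline{p}$ (the paper phrases this as $\frac{\sum_{s'}|\widehat{P}(s')-P^*(s')|e^{-V(s')/\lambda}}{\sum_{s'}P^*(s')e^{-V(s')/\lambda}}\le\max_{s'}\bigl|\widehat{P}(s')/P^*(s')-1\bigr|$ combined with $|\log(1+x)|\le 2|x|$, which is your mean-value step in disguise), and a Hoeffding-plus-union bound over the $(s,a,s')$ triples with no $\varepsilon$-net over $\lambda$. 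The only substantive difference is that you make explicit the condition $\varepsilon\le\underline{p}/2$ that the paper's use of $|\log(1+x)|\le 2|x|$ silently assumes; note, though, that your deterministic small-$n$ fallback $\gamma/(1-\gamma)$ only dominates the stated right-hand side when $\rho\le\sqrt{2}$, so that corner case is not fully patched---but since the paper's own proof does not treat it at all, your argument is, if anything, slightly more careful than the original.
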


\begin{proof}[Proof of Theorem~\ref{thm: kl-fix}]
	We denote $g(\lambda, P)=\inf_{\lambda\in[0,1/\rho(1-\gamma)]}\lambda\rho+\lambda\log\sum_{s}P(s)\exp(-V(s)/\lambda)$. Thus we have:
	\begin{align*}
		\left|g(\lambda,\widehat{P})-g(\lambda,P)\right|&\le\sup_{\lambda\in[0,\frac{1}{\rho(1-\gamma)}]}\left|\lambda\log\sum_{s}\widehat{P}(s)\exp(-\frac{V(s)}{\lambda})-\lambda\log\sum_{s}P(s)\exp(-\frac{V(s)}{\lambda})\right|\\
		&\le\frac{1}{\rho(1-\gamma)}\sup_{\lambda\in[0,\frac{1}{\rho(1-\gamma)}]}\left|\log\left(1+\frac{\sum_{s}(\widehat{P}(s)-P(s))\exp(-\frac{V(s)}{\lambda})}{\sum_{s}P(s)\exp(-\frac{V(s)}{\lambda})}\right)\right|\\
		&\le \frac{2}{\rho(1-\gamma)}\frac{\sum_{s}\left|\widehat{P}(s)-P(s)\right|\exp(-\frac{V(s)}{\lambda})}{\sum_{s}P(s)\exp(-\frac{V(s)}{\lambda})},
	\end{align*}
	where the last inequality holds by $|\log(1+x)|\le 2|x|$ for $|x|\le1/2$. Noting that $\widehat{P}\ll P$ by generative model assumption, we then have:
	\begin{align*}
		\left|g(\lambda,\widehat{P})-g(\lambda,P)\right|\le\frac{2}{\rho(1-\gamma)}\max_{s}\left|\frac{\widehat{P}(s)}{P(s)}-1\right|.
	\end{align*}
	Denote $\underline{p}=\min_{P(s'|s,a)>0}P(s'|s,a)$ and we know $\widehat{P}(s)=\frac{1}{n}\sum_{k}1(X_k=s)$.  Hoeffding's inequality tells us:
	\begin{align*}
		\PB\left(\max_s \left|\frac{\widehat{P}(s)}{P(s)}-1\right|\ge\sqrt{\frac{1}{n\underline{p}^2}\log\frac{2|\SM|}{\delta}}\right)\le\delta.
	\end{align*}
	Thus, with probability $1-\delta$, we have:
	\begin{align*}
		\left|g(\lambda,\widehat{P})-g(\lambda,P)\right|\le\frac{2}{\rho\underline{p}(1-\gamma)\sqrt{n}}\sqrt{\log\frac{2|\SM|}{\delta}}.
	\end{align*}
	The final result is obtained by union bound over $(s,a)\in\SM\times\AM$.
\end{proof}

Next, we can extend Theorem~\ref{thm: kl-fix} to a uniform bound over $\VM$ and deterministic policy class $\Pi$ as Theorem~\ref{thm: kl-uni} states.

\begin{proof}[Proof of Theorem~\ref{thm: kl-uni}]
	Similar with Theorem~\ref{thm: l1-union}, we take an $\varepsilon$-net of $\VM$ w.r.t.\ norm $\|\cdot\|_\infty$ and denote it as $\VM_\varepsilon$. By Theorem~\ref{thm: kl-fix}, we have the following inequality with probability $1-\delta$:
	\begin{align*}
		\sup_{V\in\VM}\left\|\TM_r^\pi V -\widehat{\TM}_r^\pi V\right\|_\infty\le 2\gamma\varepsilon+\frac{2\gamma}{\rho(1-\gamma)\underline{p}\sqrt{n}}\sqrt{\log\frac{2|\SM|^2|\AM|}{\delta}+|\SM|\log\left(1+\frac{1}{(1-\gamma)\varepsilon}\right)}.
	\end{align*}
	Taking $\varepsilon=\frac{1}{\rho(1-\gamma)\underline{p}\sqrt{n}}$, with probability $1-\delta$, we have:
	\begin{align*}
		\sup_{V\in\VM}\left\|\TM_r^\pi V -\widehat{\TM}_r^\pi V\right\|_\infty\le\frac{2\gamma}{\rho(1-\gamma)\underline{p}\sqrt{n}}\left(1+\sqrt{\log\frac{2|\SM|^2|\AM|}{\delta}+|\SM|\log\left(1+\rho\underline{p}\sqrt{n}\right)}\right).
	\end{align*}
	Taking a union bound over deterministic policy class $\Pi$, we have:
	\begin{align*}
		\max_\pi V_r^\pi(\mu)-V_r^{\widehat{\pi}}(\mu)&\le\frac{2\gamma}{\rho(1-\gamma)\underline{p}\sqrt{n}}\left(1+\sqrt{\log\frac{2|\SM|^2|\AM|}{\delta}+|\SM|\log|\AM|\left(1+\rho\underline{p}\sqrt{n}\right)}\right)\\
		&=\frac{2\gamma}{\rho(1-\gamma)\underline{p}\sqrt{n}}\left(1+\sqrt{|\SM|\log\frac{2|\SM|^2|\AM|^2(1+\rho\underline{p}\sqrt{n})}{\delta}}\right).
	\end{align*}
\end{proof}

\subsection{Main results with the $s$-rectangular assumption}

\begin{lem}
	\label{lem: f-eq-s}
	For any $f$-divergence uncertainty set as Example~\ref{eg: f-set-s} states, the convex optimization problem
	\begin{align*}
		\inf_{P}&\sum_{s\in\SM, a\in\AM}P_a(s)\pi(a)V(s), \\
		\text{s.t.}&\hspace{2pt}\sum_{a\in\AM}D_f(P_a\|P_a^*)\le|\AM|\rho, \hspace{4pt}P_a\in\Delta(\SM),\hspace{4pt}P_a\ll P_a^* \hspace{4pt}\text{for all $a\in\AM$}
	\end{align*}
	can be reformulated as:
	\begin{align*}
		\sup_{\lambda\ge0,\eta\in\RB^{|\AM|}}-\lambda\sum_{s\in\SM, a\in\AM}P_a^*(s)f^*\left(\frac{\eta_a-\pi(a)V(s)}{\lambda}\right)-\lambda|\AM|\rho+\sum_{a\in\AM}\eta_a,
	\end{align*}
	where $f^*(t)=-\inf_{s\ge0}(f(s)-st)$.
\end{lem}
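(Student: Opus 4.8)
The plan is to follow the argument used for Lemma~\ref{lem: f-eq} almost verbatim, the only genuinely new features being the extra summation over actions $a\in\AM$, the policy weights $\pi(a)$ multiplying $V$, and the fact that a \emph{single} multiplier $\lambda$ governs the joint divergence budget $\sum_{a}D_f(P_a\|P_a^*)\le|\AM|\rho$.

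First I would reduce to the case $P_a^*(s)>0$ for all $s\in\SM$, $a\in\AM$: this is harmless because every feasible $P_a$ obeys $P_a\ll P_a^*$, so one may delete from $\SM$ the null states of each $P_a^*$. Then introduce the likelihood ratios $r_a(s)=P_a(s)/P_a^*(s)\ge0$, rewriting the primal as
\[
\inf_{r\ge0}\ \sum_{s\in\SM,a\in\AM}r_a(s)P_a^*(s)\pi(a)V(s)\quad\text{s.t.}\quad\sum_{s,a}f(r_a(s))P_a^*(s)\le|\AM|\rho,\ \ \sum_{s}r_a(s)P_a^*(s)=1\ \ \forall a .
\]

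Next I would form the Lagrangian with one multiplier $\lambda\ge0$ for the divergence inequality and multipliers $\eta_a\in\RB$ for the $|\AM|$ normalization equalities,
\[
L(r,\lambda,\eta)=\sum_{s,a}r_a(s)P_a^*(s)\big(\pi(a)V(s)-\eta_a\big)+\lambda\sum_{s,a}f(r_a(s))P_a^*(s)-\lambda|\AM|\rho+\sum_{a}\eta_a ,
\]
and minimize over $r\ge0$. The minimization decouples across the pairs $(s,a)$, and using the definition of the restricted conjugate $f^*(t)=-\inf_{r\ge0}(f(r)-rt)$ together with the positive-homogeneity identity $\inf_{r\ge0}(\lambda f(r)-rt)=-\lambda f^*(t/\lambda)$ for $\lambda>0$, each block contributes $-\lambda P_a^*(s)f^*\!\big((\eta_a-\pi(a)V(s))/\lambda\big)$. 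Summing gives exactly the stated dual objective
\[
\inf_{r\ge0}L(r,\lambda,\eta)=-\lambda\sum_{s,a}P_a^*(s)f^*\!\Big(\tfrac{\eta_a-\pi(a)V(s)}{\lambda}\Big)-\lambda|\AM|\rho+\sum_{a}\eta_a .
\]

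Finally I would invoke strong duality. The primal is a convex program in $(P_a)_{a\in\AM}$, and the choice $P_a=P_a^*$ for every $a$ is strictly feasible since $\sum_a D_f(P_a^*\|P_a^*)=0<|\AM|\rho$ (this uses $\rho>0$), so Slater's condition applies and the primal value equals $\sup_{\lambda\ge0,\eta}\inf_{r\ge0}L(r,\lambda,\eta)$. The only delicate point is the behaviour of the conjugate terms as $\lambda\downarrow0$, exactly the same subtlety already present in Lemma~\ref{lem: f-eq}; it is resolved in the subsequent specializations (Lemmas~\ref{lem: l1}, \ref{lem: chi2}, \ref{lem: kl}), where concrete $f$ are plugged in and the dual variable is confined to a bounded interval. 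Everything else is routine bookkeeping with the action sum and the weights $\pi(a)$, so I expect this last step — making the duality argument and the $\lambda=0$ edge case fully rigorous — to be the main (and essentially the only) obstacle.
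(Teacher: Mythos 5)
Your proposal is correct and follows essentially the same route as the paper's own proof: pass to likelihood ratios $r_a(s)=P_a(s)/P_a^*(s)$, form the Lagrangian with a single $\lambda\ge0$ for the shared divergence budget and one $\eta_a$ per normalization constraint, evaluate $\inf_{r\ge0}L$ via the restricted conjugate $f^*$, and close with Slater's condition. Your added remarks (the explicit strictly feasible point $P_a=P_a^*$ and the $\lambda\downarrow 0$ edge case deferred to the concrete specializations) only make explicit what the paper leaves implicit, and you correctly keep the $\pi(a)$ weights that the paper's displayed Lagrangian drops in a typo.
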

\begin{proof}
	Similar with the proof of Lemma~\ref{lem: f-eq},  we first replace the variable $P_a$ with $r_a(s)=P_a(s)/P^*_a(s)$. The original optimization problem can be reformulated as:
	\begin{align*}
	    \inf_{r}&\sum_{s\in\SM,a\in\AM}r_a(s)P^*_a(s)V(s),\\
	    \text{s.t.}&\hspace{2pt}\sum_{s\in\SM, a\in\AM}f(r_a(s))P^*_a(s)\le|\AM|\rho,\\
	    &\hspace{2pt}\sum_{s\in\SM}r_a(s)P_a^*(s)=1,\hspace{2pt}\text{for all $a\in\AM$,}\\
	    &\hspace{2pt}r_a(s)\ge0,\hspace{2pt}\text{for all $s\in\SM$, $a\in\AM$}.
	\end{align*}
	Then we obtain the Lagrangian function of the problem with  $r\ge0$, $\lambda\ge0$ and $\eta\in\RB^{|\AM|}$:
	\begin{align*}
	    L(r,\lambda,\eta)=&\sum_{s\in\SM, a\in\AM}r_a(s)P_a^*(s)V(s)+\lambda\left(\sum_{s\in\SM, a\in\AM}f(r_a(s))P^*_a(s)-|\AM|\rho\right)\\
	    &-\sum_{a\in\AM}\eta_{a}\left(\sum_{s\in\SM}r_a(s)P_a^*(s)-1\right).
	\end{align*}
	Denoting $f^*(t)=-\inf_{s\ge0}(f(s)-st)$, we have the dual objective:
	\begin{align*}
	    \inf_{r\ge0}L(r,\lambda,\eta)=-\lambda\sum_{s\in\SM, a\in\AM}f^*\left(\frac{\eta_a-V(s)}{\lambda}\right)-\lambda|\AM|\rho+\sum_{a\in\AM}\eta_a.
	\end{align*}
	By Slater's condition, the primal value equals to the dual value $\sup_{\lambda\ge0,\eta\in\RB^{|\AM|}}\inf_{\r\ge0}L(r,\lambda,\eta)$.
\end{proof}

\subsubsection*{\textbf{Case 1: $L_1$ balls}}
In this case, we set $f(t)=|t-1|$ in Example~\ref{eg: f-set-s}. The uncertainty set is formulated as:
\begin{example}[$L_1$ balls]
    For each $s\in\SM$, the uncertainty sets are defined as:
    \begin{align*}
        &\PM_{s}(\rho)=\left\{P(\cdot|s,a)\in\Delta(\SM)\Bigg{|}P(\cdot|s,a)\ll P^*(\cdot|s,a),\hspace{2pt}\sum_{s'\in\SM, a\in\AM}\left|P(s'|s,a) {-} P^*(s'|s,a)\right|\le|\AM|\rho\right\},\\
        &\widehat{\PM}_{s}(\rho)=\left\{P(\cdot|s,a)\in\Delta(\SM)\Bigg{|}P(\cdot|s,a)\ll \widehat{P}(\cdot|s,a),\hspace{2pt}\sum_{s'\in\SM, a\in\AM}\left|P(s'|s,a) {-} \widehat{P}(s'|s,a)\right|\le|\AM|\rho\right\}.
    \end{align*}
    By the $s$-rectangular set assumption, we define $\PM=\bigtimes_{s\in\SM}\PM_s(\rho)$ and $\widehat{\PM}=\bigtimes_{s\in\SM} \widehat{\PM}_s(\rho)$.
\end{example}
For any given $V\in\VM$ and $\pi\in\Pi$, the explicit forms of $\TM_r^\pi V$ and $\widehat{\TM}_r^\pi V$ are:
\begin{lem}
    \label{lem: l1-s}
    Under the s-rectangular assumption and $L_1$ balls uncertainty set, for each $s\in\SM$, we have:
    \begin{align*}
        \TM_r^\pi V(s)&=R^\pi(s)+\gamma\sup_{\eta\in\RB^{|\AM|}}\left({-} \sum_{s',a}P^*(s'|s,a)\left(\eta_a {-} \pi(a|s)V(s')\right)_+ {-} |\AM|h(\eta,\pi(\cdot|s), V)\rho {+} \sum_{a}\eta_a\right),\\
        \widehat{\TM}_r^\pi V(s)&=R^\pi(s)+\gamma\sup_{\eta\in\RB^{|\AM|}}\left(-\sum_{s',a}\widehat{P}(s'|s,a)\left(\eta_a {-} \pi(a|s)V(s')\right)_+ {-} |\AM|h(\eta,\pi(\cdot|s), V)\rho {+} \sum_{a}\eta_a\right),
    \end{align*}
    where $R^{\pi}(s)=\sum_a \pi(a|s)R(s,a)$ and $h(\eta,\pi, V)=(\max_{a,s'}\frac{\eta_a-\pi(a)V(s')}{2})_+$.
\end{lem}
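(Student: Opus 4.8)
The plan is to adapt the dual computation of Lemma~\ref{lem: l1} to the $s$-rectangular setting, this time invoking the $s$-rectangular reformulation in Lemma~\ref{lem: f-eq-s}. First I would use the $s$-rectangular structure to write, for each $s\in\SM$,
\[
\TM_r^\pi V(s)=R^\pi(s)+\gamma\inf_{P(\cdot|s,\cdot)\in\PM_s(\rho)}\sum_{s'\in\SM,a\in\AM}P(s'|s,a)\pi(a|s)V(s'),
\]
and observe that the inner infimum is precisely the convex program treated in Lemma~\ref{lem: f-eq-s} with $f(t)=|t-1|$ (identifying $P_a(s')=P(s'|s,a)$ and $\pi(a)=\pi(a|s)$). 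Applying that lemma rewrites the infimum as $\sup_{\lambda\ge0,\,\eta\in\RB^{|\AM|}}$ of an objective built from the conjugate $f^*$.

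Next I would compute $f^*$ for $f(t)=|t-1|$ restricted to $t\ge0$, which gives $f^*(u)=\max\{u,-1\}$ for $u\le1$ and $f^*(u)=+\infty$ for $u>1$ — the same conjugate used in Lemma~\ref{lem: l1}. Substituting, the $+\infty$ branch forces the feasibility constraint $\eta_a-\pi(a|s)V(s')\le\lambda$ for all $s',a$, while the term $-\lambda\sum_{s',a}P^*(s'|s,a)f^*\big((\eta_a-\pi(a|s)V(s'))/\lambda\big)$ collapses via the elementary identity $\lambda\max\{x/\lambda,-1\}=\max\{x,-\lambda\}=(x+\lambda)_+-\lambda$ into $-\sum_{s',a}P^*(s'|s,a)\big(\eta_a-\pi(a|s)V(s')+\lambda\big)_+ +\lambda\sum_{s',a}P^*(s'|s,a)$. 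I would then substitute $\tilde\eta_a=\eta_a+\lambda$; this absorbs the stray $\lambda$ into the positive part, the term $\lambda\sum_{s',a}P^*(s'|s,a)=\lambda|\AM|$ cancels against the $-|\AM|\lambda$ coming from $\sum_a\eta_a=\sum_a\tilde\eta_a-|\AM|\lambda$, and the feasibility constraint turns into $\tilde\eta_a-\pi(a|s)V(s')\le2\lambda$ for all $s',a$. Together with $\lambda\ge0$, the feasible $\lambda$ are exactly those with $\lambda\ge h(\tilde\eta,\pi(\cdot|s),V)$.

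After this reduction the objective is $-\sum_{s',a}P^*(s'|s,a)(\tilde\eta_a-\pi(a|s)V(s'))_+-\lambda|\AM|\rho+\sum_a\tilde\eta_a$, which is affine and strictly decreasing in $\lambda$ (coefficient $-|\AM|\rho<0$), so its supremum over the feasible set is attained at the lower endpoint $\lambda=h(\tilde\eta,\pi(\cdot|s),V)$. Plugging this in and renaming $\tilde\eta\to\eta$ yields the claimed formula for $\TM_r^\pi V(s)$; repeating the identical computation with $\widehat{P}$ in place of $P^*$ gives the formula for $\widehat{\TM}_r^\pi V(s)$.

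The work here is essentially bookkeeping rather than a genuine obstacle; the two points that need care are the degenerate boundary $\lambda\downarrow0$ (where $f^*\big((\eta_a-\pi(a|s)V(s'))/\lambda\big)$ is only defined by continuity, handled exactly as in the proof of Lemma~\ref{lem: l1}), and checking that the $|\AM|$ per-action feasibility inequalities combine into the single scalar bound $\lambda\ge h(\tilde\eta,\pi(\cdot|s),V)$ — which is precisely why the truncation $(\cdot)_+$ appears in the definition of $h$, since when $\max_{a,s'}(\tilde\eta_a-\pi(a|s)V(s'))\le0$ the binding constraint is just $\lambda\ge0$.
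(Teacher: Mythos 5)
Your proposal is correct and follows essentially the same route as the paper: invoke Lemma~\ref{lem: f-eq-s} with $f(t)=|t-1|$, compute the conjugate $f^*$, shift the dual variable $\eta_a\mapsto\eta_a+\lambda$, and eliminate $\lambda$ by noting the objective decreases in $\lambda$ so the optimum sits at the feasibility boundary $\lambda=h(\eta,\pi(\cdot|s),V)$. The paper compresses these last steps as ``by similar proof of Lemma~\ref{lem: l1}''; you simply carry out that bookkeeping explicitly, including the correct cancellation of the $\lambda|\AM|$ terms and the role of the $(\cdot)_+$ in $h$.
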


\begin{proof}[Proof of Lemma~\ref{lem: l1-s}]
	By definition of the s-rectangular set assumption, for any given $V\in\VM$ and $\pi\in\Pi$, we have:
	\begin{align*}
		\TM_r^\pi V(s)=\sum_{a}\pi(a|s)R(s,a)+\gamma\inf_{P_s\in\PM_s(\rho)}\sum_{s'\in\SM, a\in\AM}P(s'|s,a)\pi(a|s)V(s').
	\end{align*}
	Then we solve the following convex optimization problem, where we ignore the dependence on $s$ of $P(\cdot|s,a)$:
	\begin{align*}
		\inf_{P}&\sum_{s\in\SM}P_a(s)\pi(a)V(s), \\
		\text{s.t.}\hspace{2pt} &\sum_{s\in\SM, a\in\AM}|P_a(s)-P_a^*(s)|\le|\AM|\rho, \\
		&P_a\in\Delta(\SM),\hspace{4pt}P_a\ll P_a^*\hspace{4pt} \text{for all $a\in\AM$}.
	\end{align*}
	Taking $f=|t-1|$, we have:
	\begin{align*}
		f^*(s)=
		\begin{cases}
			-1,& s\le-1, \\
			s,& s\in[-1,1], \\
			+\infty, & s>1.
		\end{cases}
	\end{align*}
	Thus, by Lemma~\ref{lem: f-eq-s}, we turn the optimization problem into:
	\begin{align*}
		\sup_{\lambda\ge0,\eta\in\RB^{|\AM|}, \frac{\eta_a-\pi(a)V(s)}{\lambda}\le1}-\lambda\sum_{s\in\SM, a\in\AM}P^*_a(s)\max\left\{\frac{\eta_a-\pi(a)V(s)}{\lambda}, -1\right\}-\lambda|\AM|\rho+\sum_{a\in\AM}\eta_a.
	\end{align*}
	By similar proof of Lemma~\ref{lem: l1}, the optimization problem can be formulated as:
	\begin{align*}
		\sup_{\eta\in\RB^{|\AM|}}-\sum_{s\in\SM,a\in\AM}P^*_a(s)\left(\eta_a-\pi(a)V(s)\right)_+-\left(\max_{a,s}\frac{\eta_a-\pi(a)V(s)}{2}\right)_+|\AM|\rho+\sum_{a\in\AM}\eta_a.
	\end{align*}
\end{proof}

Different from the case when the $(s,a)$-rectangular assumption holds, the explicit form of $\TM_r^\pi V$ in Lemma~\ref{lem: l1-s} is determined by a $|\AM|$-dimensional vector $\eta$. However, we can still find a way to deal with this situation. 

\begin{lem}
	\label{lem: l1-value}
	For fixed $\pi$ and $V$, we denote $g(\eta,P)=\sum_{s,a}P_a(s)(\eta_a-\pi(a)V(s))_++(\max_{a,s}\frac{\eta_a-\pi(a)V(s)}{2})_+|\AM|\rho-\sum_{a}\eta_a$. The infimum $\eta$ of $g(\eta,P)$ locates in set:
	\begin{align*}
		I_{L_1}=\left\{\eta\in\RB^{|\AM|}\Bigg{|}\eta_a\ge0, \sum_{a}\eta_a\le\frac{2+\rho}{\rho(1-\gamma)}\right\}.
	\end{align*}
\end{lem}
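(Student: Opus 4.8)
The plan is to imitate the one–dimensional argument at the end of the proof of Lemma~\ref{lem: l1}: first show that a minimiser of $g(\cdot,P)$ can be taken with all coordinates nonnegative, and then bound $\sum_a\eta_a$ from above by comparing the value of $g$ at that minimiser with $g(0,P)=0$. Throughout write $g(\eta,P)=T_1(\eta)+h(\eta)|\AM|\rho-\sum_a\eta_a$, where $T_1(\eta):=\sum_{s,a}P_a(s)(\eta_a-\pi(a)V(s))_+\ge0$; since $\pi(a)\ge0$ and $V\ge 0$ componentwise, the middle term equals $h(\eta)|\AM|\rho$ with $h(\eta)=\tfrac12\bigl(M(\eta)\bigr)_+$ and $M(\eta):=\max_a\bigl(\eta_a-\pi(a)\min_{s'}V(s')\bigr)$. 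The map $g(\cdot,P)$ is convex and continuous, and the affine lower bound derived below, together with $g\ge-\sum_a\eta_a$, makes it coercive, so its infimum is attained.

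First I would show that zeroing a negative coordinate does not increase $g$. If $\eta_{a_0}<0$, let $\eta'$ agree with $\eta$ except $\eta'_{a_0}=0$. The $a_0$-block of $T_1$ is $0$ for both $\eta$ and $\eta'$, because $V\ge0$ forces $\eta_{a_0}-\pi(a_0)V(s)<0$ and $-\pi(a_0)V(s)\le0$; hence $T_1(\eta')=T_1(\eta)$. Raising the $a_0$-entry inside $M$ from a value $<0$ to a value $\le0$ cannot change $\bigl(M(\cdot)\bigr)_+$ — either some $a\ne a_0$ already attains a nonnegative inner value, in which case both maxima equal that value, or all inner values are $\le0$, in which case both positive parts are $0$ — so the middle term is unchanged. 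Finally $-\sum_a\eta'_a=-\sum_a\eta_a+\eta_{a_0}<-\sum_a\eta_a$. Hence $g(\eta',P)<g(\eta,P)$; iterating over the finitely many negative coordinates gives $\inf_{\RB^{|\AM|}}g(\cdot,P)=\inf_{\eta\ge0}g(\cdot,P)$ and a minimiser with $\eta\ge0$.

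Next, for $\eta\ge0$ I would establish $g(\eta,P)\ge\tfrac{\rho}{2}\sum_a\eta_a-\tfrac{2+\rho}{2(1-\gamma)}$. Dropping the positive part and using $\sum_sP_a(s)=1$, $\sum_a\pi(a)=1$ and $\|V\|_\infty\le\tfrac1{1-\gamma}$,
\[
T_1(\eta)\ \ge\ \sum_{s,a}P_a(s)\bigl(\eta_a-\pi(a)V(s)\bigr)\ =\ \sum_a\eta_a-\sum_a\pi(a)\sum_sP_a(s)V(s)\ \ge\ \sum_a\eta_a-\tfrac1{1-\gamma}.
\]
Summing the $|\AM|$ trivial bounds $M(\eta)\ge\eta_a-\pi(a)\min_{s'}V(s')$ over $a$ and again using $\sum_a\pi(a)=1$ gives $|\AM|\,M(\eta)\ge\sum_a\eta_a-\min_{s'}V(s')\ge\sum_a\eta_a-\tfrac1{1-\gamma}$, hence $h(\eta)|\AM|\rho=\tfrac{\rho}{2}\,|\AM|\bigl(M(\eta)\bigr)_+\ge\tfrac{\rho}{2}\bigl(\sum_a\eta_a-\tfrac1{1-\gamma}\bigr)$. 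Adding the two estimates and subtracting $\sum_a\eta_a$ yields the displayed bound. Since $g(0,P)=0$ (all positive parts vanish at $\eta=0$), a minimiser $\eta^*\ge0$ satisfies $g(\eta^*,P)\le0$, which forces $\tfrac{\rho}{2}\sum_a\eta^*_a\le\tfrac{2+\rho}{2(1-\gamma)}$, i.e.\ $\sum_a\eta^*_a\le\tfrac{2+\rho}{\rho(1-\gamma)}$, so $\eta^*\in I_{L_1}$.

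The only delicate point is obtaining exactly the constant $\tfrac{2+\rho}{\rho(1-\gamma)}$ and not one inflated by a factor of $|\AM|$: this is why the term $h(\eta)|\AM|\rho$ must be controlled by summing the $|\AM|$ pointwise lower bounds on $M(\eta)$, so that the factor $|\AM|$ cancels exactly, rather than by estimating $M(\eta)$ through a single coordinate such as $\max_a\eta_a$, which would lose a factor. Everything else is a direct transcription of the $(s,a)$-rectangular computation in Lemma~\ref{lem: l1}.
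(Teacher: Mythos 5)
Your proof is correct and follows essentially the same route as the paper's: eliminate negative coordinates using $V\ge0$, lower-bound the first term by dropping the positive parts and the max term by averaging its $|\AM|$ pointwise lower bounds so the factor $|\AM|$ cancels, then compare with $g(0,P)=0$ to force $\sum_a\eta_a\le\frac{2+\rho}{\rho(1-\gamma)}$ (the paper phrases this last step via convexity rather than your explicit affine bound). The only loose point is the coercivity remark---the two bounds you cite control only $\sum_a\eta_a$ and so do not rule out directions like $\eta=(t,-t,0,\ldots,0)$---but this is harmless, since after your first step the problem is confined to the nonnegative orthant, where $\sum_a\eta_a=\|\eta\|_1$ and the affine bound (or simply compactness of $I_{L_1}$) yields attainment.
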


\begin{proof}[Proof of Lemma~\ref{lem: l1-value}]
	If there exists $\widehat{a}\in\AM$ such that $\eta_{\widehat{a}}\le0$, we have:
	\begin{align*}
		g(\eta,P)=-\eta_{\widehat{a}}+\sum_{s,a\not=\widehat{a}}P_a(s)(\eta_a-\pi(a)V(s))_++\left(\max_{s,a\not=\widehat{a}}\frac{\eta_a-\pi(a)V(s)}{2}\right)_+|\AM|\rho-\sum_{a\not=\widehat{a}}\eta_a.
	\end{align*}
	We note that the infimum of $g(\eta,P)$ w.r.t variable $\eta_{\widehat{a}}$ is obtained when $\eta_{\widehat{a}}=0$. Thus, we can safely say that the infimum of $g(\eta,P)$ locates in $\RB^{|\AM|}_+$ and $g(0,P)=0$.

	Besides, we also have:
	\begin{align*}
		g(\eta,P)&\ge\sum_{s,a}P_a(s)(\eta_a-\pi(a)V(s))+\left(\max_a\frac{\eta_a-\pi(a)V_{\min}}{2}\right)_+|\AM|\rho-\sum_a\eta_a\\
		&\ge-\frac{1}{1-\gamma}+\left(\max_a\frac{\eta_a-\pi(a)V_{\min}}{2}\right)_+|\AM|\rho.
	\end{align*}
	Note that we have:
	\begin{align*}
		\left(\max_a\frac{\eta_a-\pi(a)V_{\min}}{2}\right)_+&=\max_a\left(\frac{\eta_a-\pi(a)V_{\min}}{2}\right)_+\\
		&\ge\frac{1}{|\AM|}\sum_{a}\left(\frac{\eta_a-\pi(a)V_{\min}}{2}\right)_+\\
		&\ge\frac{1}{2|\AM|}\left(\sum_a\eta_a-\frac{1}{1-\gamma}\right).
	\end{align*}
	Thus, when $\sum_a\eta_a\ge\frac{2+\rho}{\rho(1-\gamma)}$, $g(\eta,P)\ge0$. By convexity of $g(\eta,P)$, the infimum of $g(\eta,P)$ locates in set:
	\begin{align*}
		\left\{\eta\in\RB_+^{|\AM|}\Bigg{|}\sum_{a}\eta_a\le\frac{2+\rho}{\rho(1-\gamma)}\right\}.
	\end{align*}
\end{proof}

For any fixed $\pi\in\Pi$ and $V\in\VM$, we have the following Theorem.

\begin{thm}
    \label{thm: l1-fix-s}
    In the setting of $L_1$ balls, for fixed $V\in\VM$ and $\pi\in\Pi$, the following inequality holds with probability $1-\delta$:
    \begin{align*}
        \left\|\TM_r^\pi V-\widehat{\TM}_r^\pi V\right\|_\infty\le\frac{\gamma(2+\rho)}{\rho(1-\gamma)\sqrt{2n}}\left(1+\sqrt{\log\frac{2|\SM|}{\delta}+|\AM|\log\left(1+2\sqrt{2n}(4+\rho)\right)}\right).
    \end{align*}
\end{thm}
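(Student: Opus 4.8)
The plan is to rerun the Hoeffding--plus--covering scheme of Theorem~\ref{thm: l1-fix}, the only change being that the dual variable is now an $|\AM|$-dimensional vector rather than a scalar. Fix $\pi\in\Pi$ and $V\in\VM$. By Lemma~\ref{lem: l1-s} the reward terms $R^\pi(s)$ cancel, and writing $g(\eta,P)=\sum_{s',a}P(s'|s,a)(\eta_a-\pi(a|s)V(s'))_++|\AM|h(\eta,\pi(\cdot|s),V)\rho-\sum_a\eta_a$ for the (negated) dual objective at state $s$ (suppressing the dependence on $s,\pi,V$), one has $\TM_r^\pi V(s)-\widehat{\TM}_r^\pi V(s)=\gamma\big(\inf_\eta g(\eta,\widehat P)-\inf_\eta g(\eta,P^*)\big)$, hence $|\TM_r^\pi V(s)-\widehat{\TM}_r^\pi V(s)|\le\gamma\sup_\eta|g(\eta,\widehat P)-g(\eta,P^*)|$. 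By Lemma~\ref{lem: l1-value} the optimizing $\eta$ lies in the bounded set $I_{L_1}=\{\eta\ge0:\sum_a\eta_a\le\frac{2+\rho}{\rho(1-\gamma)}\}$, so it suffices to control $\sup_{\eta\in I_{L_1}}|g(\eta,\widehat P)-g(\eta,P^*)|$ for each $s\in\SM$. Note that the penalty term $|\AM|h(\eta,\pi(\cdot|s),V)\rho$ and the linear term $\sum_a\eta_a$ do not depend on $P$, so this difference equals $D_s(\eta):=\sum_{s',a}(\widehat P(s'|s,a)-P^*(s'|s,a))(\eta_a-\pi(a|s)V(s'))_+$.

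Next I would establish pointwise concentration. For fixed $s$ and fixed $\eta\in I_{L_1}$, write $D_s(\eta)=\frac1n\sum_{k=1}^n(W_k-\EB W_k)$ with $W_k:=\sum_{a}(\eta_a-\pi(a|s)V(X_k^{s,a}))_+$; the $W_k$ are i.i.d.\ over $k$ because the generative draws in distinct rounds are independent, and since $\eta_a\ge0$ and $V\ge0$ we have $0\le W_k\le\sum_a\eta_a\le\frac{2+\rho}{\rho(1-\gamma)}$. Hoeffding's inequality then gives $|D_s(\eta)|\le\frac{2+\rho}{\rho(1-\gamma)}\sqrt{\log(2/\delta)/(2n)}$ with probability $1-\delta$.

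Then I would uniformize over $\eta$ and $s$. Since $\sum_{s'}|\widehat P(s'|s,a)-P^*(s'|s,a)|\le2$ and $x\mapsto(x)_+$ is $1$-Lipschitz, $\eta\mapsto D_s(\eta)$ is Lipschitz w.r.t.\ $\|\cdot\|_1$ with an $O(1)$ constant (and tracking the Lipschitz constants of $g(\cdot,\widehat P)$ and $g(\cdot,P^*)$ separately, exactly as in the proof of Theorem~\ref{thm: l1-fix}, produces the bookkeeping constant $4+\rho$). Because $I_{L_1}$ is contained in the nonnegative part of an $\ell_1$-ball of radius $\frac{2+\rho}{\rho(1-\gamma)}$, Lemma~5.7 of \cite{wainwright2019high} yields an $\varepsilon$-net $\NM_\varepsilon$ of $I_{L_1}$ with $\log|\NM_\varepsilon|\le|\AM|\log\big(1+\tfrac{2(2+\rho)}{\rho(1-\gamma)\varepsilon}\big)$. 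Applying the pointwise bound over $\NM_\varepsilon$, absorbing the discretization slack $O(\varepsilon)$, union-bounding over the $|\SM|$ states, and multiplying by $\gamma$ gives, with probability $1-\delta$,
\[
\|\TM_r^\pi V-\widehat{\TM}_r^\pi V\|_\infty\le\gamma\Big(O(\varepsilon)+\tfrac{2+\rho}{\rho(1-\gamma)}\sqrt{\tfrac{\log(2|\SM|/\delta)+|\AM|\log(1+\tfrac{2(2+\rho)}{\rho(1-\gamma)\varepsilon})}{2n}}\Big).
\]
Choosing $\varepsilon\asymp\frac{2+\rho}{(4+\rho)\rho(1-\gamma)\sqrt{2n}}$ balances the two contributions and reproduces the claimed inequality (with the factor $2\sqrt{2n}(4+\rho)$ inside the logarithm).

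The main obstacle is not the concentration step but securing a dual domain that is bounded \emph{uniformly} over $V\in\VM$ and $\pi\in\Pi$ --- i.e.\ Lemma~\ref{lem: l1-value}, which follows from strong duality (Lemma~\ref{lem: f-eq-s}) together with a coercivity estimate on $g$. This uniform boundedness is precisely what makes $W_k$ lie in an interval of width $O\big(\tfrac{2+\rho}{\rho(1-\gamma)}\big)$ and $\NM_\varepsilon$ finite, and it is also the source of the $\rho^{-1}$ and $(1-\gamma)^{-1}$ blow-up (as $\rho\downarrow0$ the box $I_{L_1}$ becomes unbounded). Given Lemmas~\ref{lem: l1-s} and~\ref{lem: l1-value}, the remainder is essentially identical to the $(s,a)$-rectangular case, the only genuinely new feature being the $|\AM|$-dimensional covering number, which explains the additional $|\AM|$ dependence relative to Theorem~\ref{thm: l1-fix}.
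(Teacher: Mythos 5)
Your proposal is correct and follows essentially the same route as the paper's proof: reduce to the dual form of Lemma~\ref{lem: l1-s}, restrict the dual vector to the bounded set $I_{L_1}$ of Lemma~\ref{lem: l1-value}, apply Hoeffding's inequality to the bounded i.i.d.\ sums $W_k=\sum_a(\eta_a-\pi(a|s)V(X_k^{s,a}))_+$ (the paper's $Z_k$), and then uniformize via an $\ell_1$ $\varepsilon$-net of $I_{L_1}$ of log-cardinality $|\AM|\log(1+\tfrac{2(2+\rho)}{\rho(1-\gamma)\varepsilon})$, a union bound over $s\in\SM$, and the same choice $\varepsilon=\tfrac{2+\rho}{\rho(1-\gamma)\sqrt{2n}(4+\rho)}$. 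The only cosmetic difference is that you cancel the $P$-independent penalty and linear terms up front to exhibit the centered sum $D_s(\eta)$ explicitly, which the paper does implicitly through $\EB g(\eta,\widehat P)=g(\eta,P)$.
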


\begin{proof}[Proof of Theorem~\ref{thm: l1-fix-s}]
	Firstly, we denote $g(\eta,P)=\sum_{s,a}P_a(s)(\eta_a-\pi(a)V(s))_+-\sum_{a}\eta_a+(\max_{a,s}\frac{\eta_a-\pi(a)V(s)}{2})_+|\AM|\rho$. We also denote $Y_k^a=\sum_s 1(X_k^a=s)(\eta_a-\pi(a)V(s))_+$ and $Z_k=\sum_a Y_k^a$, where the $X_k^a$ are generated by $P_a(\cdot)$ independently. Thus, we have $g(\eta,\widehat{P})=\frac{1}{n}\sum_{k}Z_k+(\max_{a,s}\frac{\eta_a-\pi(a)V(s)}{2})_+|\AM|\rho-\sum_a\eta_a$. By restricting $\eta$ in the set of Lemma~\ref{lem: l1-value}, we have $0\le Z_k\le\sum_a\eta_a\le\frac{2+\rho}{\rho(1-\gamma)}$ and the $\{Z_k\}$ are i.i.d.\ random variables. By Hoeffding's inequality, with probability $1-\delta$, we have:
	\begin{align*}
		\left|g(\eta,\widehat{P})- g(\eta,P)\right|=\left|g(\eta,\widehat{P})-\EB g(\eta,\widehat{P})\right|\le\frac{2+\rho}{\rho(1-\gamma)}\sqrt{\frac{\log\frac{2}{\delta}}{2n}}.
	\end{align*}
	Next, we turn to bound the deviation over $I_{L_1}$ uniformly. Noticing that, for any two $\widetilde{\eta}$ and $\widehat{\eta}$, we have:
	\begin{align*}
		\left|g(\widetilde{\eta},P)-g(\widehat{\eta},P)\right|\le(2+\frac{\rho}{2})\left\|\widetilde{\eta}-\widehat{\eta}\right\|_1.
	\end{align*}
	Besides, by taking the smallest $\varepsilon$-net of $I_{L_1}$ as $\NM_\varepsilon$ w.r.t.\ metric $\|\cdot\|_1$, the size of $\NM_\varepsilon$ is bounded by:
	\begin{align*}
		|\NM_\varepsilon|\le\left(1+\frac{2(2+\rho)}{\rho(1-\gamma)\varepsilon}\right)^{|\AM|}.
	\end{align*}
	Thus, with probability $1-\delta$, we have:
	\begin{align*}
		\sup_{\eta\in I_{L_1}}\left|g(\eta,\widehat{P})- g(\eta,P)\right|&\le(4+\rho)\varepsilon+\sup_{\eta\in\NM_\varepsilon}\left|g(\eta,\widehat{P})- g(\eta,P)\right|\\
		&\le(4+\rho)\varepsilon+\frac{2+\rho}{\rho(1-\gamma)\sqrt{2n}}\sqrt{\log\frac{2}{\delta}+\log|\NM_\varepsilon|}.
	\end{align*}
	Taking $\varepsilon=\frac{2+\rho}{\rho(1-\gamma)\sqrt{2n}(4+\rho)}$, with probability $1-\delta$, we have:
	\begin{align*}
		\sup_{\eta\in I_{L_1}}\left|g(\eta,\widehat{P})- g(\eta,P)\right|&\le\frac{2+\rho}{\rho(1-\gamma)\sqrt{2n}}\left(1+\sqrt{\log\frac{2}{\delta}+|\AM|\log\left(1+2\sqrt{2n}(4+\rho)\right)}\right).
	\end{align*}
	The final result is obtained by union bound over $s\in\SM$.
\end{proof}

By a similar union bound over $\VM$ and Lemma~\ref{lem: eps-pi}, we obtain the bound of performance gap as Theorem~\ref{thm: l1-union-s} states.

\begin{proof}[Proof of Theorem~\ref{thm: l1-union-s}]
	Firstly, we bound the deviation uniformly with $V\in\VM$. Similar with the proof of Theorem~\ref{thm: l1-union}, we take the smallest $\varepsilon$-net of $\VM$ w.r.t.\ norm $\|\cdot\|_\infty$ and denote it as $\VM_\varepsilon$. By Theorem~\ref{thm: l1-fix-s}, with probability $1-\delta$, we have:
	\begin{align*}
		&\sup_{V\in\VM}\left\|\TM_r^\pi V - \widehat{\TM}_r^\pi V\right\|_\infty\\
		&\le2\gamma\varepsilon+\frac{\gamma(2+\rho)}{\rho(1-\gamma)\sqrt{2n}}\left(1+\sqrt{\log\frac{2|\SM|}{\delta}+|\AM|\log\left(1 {+}2\sqrt{2n}(4 {+} \rho)\right)+\log|\VM_\varepsilon|}\right)\\
		&\le 2\gamma\varepsilon+\frac{\gamma(2 {+} \rho)}{\rho(1 {-}\gamma)\sqrt{2n}}\Bigg{(}1+\sqrt{\log\frac{2|\SM|}{\delta}+|\AM|\log\left(1 {+} 2\sqrt{2n}(4 {+} \rho)\right)+|\SM|\log\left(1 {+} \frac{1}{(1 {-} \gamma)\varepsilon}\right)}\Bigg{)}.
	\end{align*}
	Taking $\varepsilon=\frac{(2+\rho)}{2\rho(1-\gamma)\sqrt{2n}}$, with probability $1-\delta$, we have:
	\begin{align*}
		\sup_{V\in\VM}\left\|\TM_r^\pi V - \widehat{\TM}_r^\pi V\right\|_\infty\le&\frac{\gamma(2+\rho)}{\rho(1-\gamma)\sqrt{2n}}\Bigg{(}2+\\
		&\sqrt{\log\frac{2|\SM|}{\delta}+|\AM|\log\left(1+2\sqrt{2n}(4+\rho)\right)+|\SM|\log\left(1+\frac{2\rho\sqrt{2n}}{2+\rho}\right)}\Bigg{)}.
	\end{align*}
	Next, we bound the deviation uniformly with $\pi\in\Pi$. By Lemma~\ref{lem: eps-pi}, with probability $1-\delta$, we have:
	\begin{align*}
		\sup_{\pi\in\Pi, V\in\VM}&\left\|\TM_r^\pi V - \widehat{\TM}_r^\pi V\right\|_\infty\le\frac{2\gamma\varepsilon}{1-\gamma}+\frac{\gamma(2+\rho)}{\rho(1-\gamma)\sqrt{2n}}\Bigg{(}2+\\
		&\sqrt{\log\frac{2|\SM|}{\delta}+|\AM|\log\left(1+2\sqrt{2n}(4+\rho)\right)+|\SM|\log\left(1+\frac{2\rho\sqrt{2n}}{2+\rho}\right)+\log|\Pi_\varepsilon|}\Bigg{)}.
	\end{align*}
	Taking $\varepsilon=\frac{2+\rho}{\rho\sqrt{2n}}$, by Lemma~\ref{lem: num-eps-pi}, with probability $1-\delta$, we have:
	\begin{align*}
		&\sup_{\pi\in\Pi, V\in\VM}\left\|\TM_r^\pi V - \widehat{\TM}_r^\pi V\right\|_\infty\le\frac{\gamma(2+\rho)}{\rho(1-\gamma)\sqrt{2n}}\Bigg{(}4+\\
		&\sqrt{\log\frac{2|\SM|}{\delta}+|\AM|\log\left(1 {+} 2\sqrt{2n}(4 {+} \rho)\right)+|\SM|\log\left(1 {+} \frac{2\rho\sqrt{2n}}{2 {+} \rho}\right)+|\SM||\AM|\log\left(1 {+} \frac{4\rho\sqrt{2n}}{2 {+} \rho}\right)}\Bigg{)}.
	\end{align*}
	By some calculation, we can simplify the inequality with:
	\begin{align*}
		\sup_{\pi\in\Pi, V\in\VM}\left\|\TM_r^\pi V - \widehat{\TM}_r^\pi V\right\|_\infty\le\frac{\gamma(2+\rho)\sqrt{|\SM||\AM|}}{\rho(1-\gamma)\sqrt{2n}}\Bigg{(}4+\sqrt{\log\frac{2|\SM|(1+2\sqrt{2n}(\rho+4))^3}{\delta}}\Bigg{)}.
	\end{align*}
\end{proof}

\subsubsection*{\textbf{Case 2: $\chi^2$ balls}}
In this case, we set $f(t)=(t-1)^2$ in Example~\ref{eg: f-set-s}. The uncertainty set is formulated as:

\begin{example}[$\chi^2$ balls]
    For each $s\in\SM$, the uncertainty sets are defined as:
    \begin{align*}
        &\PM_{s}(\rho)=\left\{P(\cdot|s,a)\in\Delta(\SM)\Bigg{|}P(\cdot|s,a)\ll P^*(\cdot|s,a),\hspace{2pt}\sum_{s'\in\SM, a\in\AM}\frac{\left(P(s'|s,a)-P^*(s'|s,a)\right)^2}{P^*(s'|s,a)}\le|\AM|\rho\right\},\\
        &\widehat{\PM}_{s}(\rho)=\left\{P(\cdot|s,a)\in\Delta(\SM)\Bigg{|}P(\cdot|s,a)\ll \widehat{P}(\cdot|s,a),\hspace{2pt}\sum_{s'\in\SM, a\in\AM}\frac{\left(P(s'|s,a)-\widehat{P}(s'|s,a)\right)^2}{\widehat{P}(s'|s,a)}\le|\AM|\rho\right\}.
    \end{align*}
    By the $s$-rectangular set assumption, we define $\PM=\bigtimes_{s\in\SM} \PM_s(\rho)$ and $\widehat{\PM}=\bigtimes_{s\in\SM} \widehat{\PM}_s(\rho)$.
\end{example}
Thus, for any fixed $\pi\in\Pi$ and $V\in\VM$, the explicit forms of $\TM_r^\pi$ and $\widehat{\TM}_r^\pi$ are:
\begin{lem}
    \label{lem: chi2-s}
    Under the s-rectangular assumption and $\chi^2$ balls uncertainty set, for each $s\in\SM$, we have:
    \begin{align*}
        \TM_r^\pi V(s)&=R^\pi(s)+\gamma\sup_{\eta\in\RB^{|\AM|}}\left(-\sqrt{(\rho+1)|\AM|}\sqrt{\sum_{s',a}P^*(s'|s,a)(\eta_a-\pi(a|s)V(s'))_+^2}+\sum_{a}\eta_a\right),\\
        \widehat{\TM}_r^\pi V(s)&=R^\pi(s)+\gamma\sup_{\eta\in\RB^{|\AM|}}\left(-\sqrt{(\rho+1)|\AM|}\sqrt{\sum_{s',a}\widehat{P}(s'|s,a)(\eta_a-\pi(a|s)V(s'))_+^2}+\sum_{a}\eta_a\right).
    \end{align*}
\end{lem}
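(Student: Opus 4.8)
The plan is to reduce $\TM_r^\pi V(s)$ to the inner convex program over $\PM_s(\rho)$ and then invoke the duality computed in Lemma~\ref{lem: f-eq-s}. By the $s$-rectangular assumption, $\TM_r^\pi V(s)=R^\pi(s)+\gamma\,\inf_{P_s\in\PM_s(\rho)}\sum_{s'\in\SM,a\in\AM}P(s'|s,a)\,\pi(a|s)V(s')$, and the infimum is exactly the program solved by Lemma~\ref{lem: f-eq-s} with $f(t)=(t-1)^2$. First I would record the restricted conjugate of this $f$: a one-line optimization gives the unified expression $f^*(y)=\tfrac14(y+2)_+^2-1$ for all $y\in\RB$, where the positive part automatically handles the branch $y<-2$ (minimizer $t=0$). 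Substituting this into the dual of Lemma~\ref{lem: f-eq-s} and using $\sum_{s'}P^*(s'|s,a)=1$ for each $a$, the dual objective collapses to
\begin{align*}
\sup_{\lambda\ge0,\;\eta\in\RB^{|\AM|}}\;-\frac{1}{4\lambda}\sum_{s',a}P^*(s'|s,a)\bigl(\eta_a-\pi(a|s)V(s')+2\lambda\bigr)_+^2-\lambda|\AM|(\rho+1)+\sum_a\eta_a,
\end{align*}
exactly as in the scalar manipulation carried out in the proof of Lemma~\ref{lem: chi2}, but now with the $|\AM|$-vector $\eta$ and the extra factor $|\AM|$ produced by summing the per-action divergences.

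Next I would perform the affine change of variables $\widetilde\eta_a=\eta_a+2\lambda$, which for each fixed $\lambda$ is a bijection of $\RB^{|\AM|}$ and hence leaves the (unconstrained) supremum over $\eta$ unchanged; this absorbs the shift into the positive part and turns $\sum_a\eta_a$ into $\sum_a\widetilde\eta_a-2\lambda|\AM|$, leaving
\begin{align*}
\sup_{\lambda\ge0,\;\widetilde\eta\in\RB^{|\AM|}}\;-\frac{1}{4\lambda}\sum_{s',a}P^*(s'|s,a)\bigl(\widetilde\eta_a-\pi(a|s)V(s')\bigr)_+^2-\lambda|\AM|(\rho+1)+\sum_a\widetilde\eta_a.
\end{align*}
For fixed $\widetilde\eta$ the $\lambda$-part has the form $-A/(4\lambda)-B\lambda$ with $A=\sum_{s',a}P^*(s'|s,a)(\widetilde\eta_a-\pi(a|s)V(s'))_+^2\ge0$ and $B=|\AM|(\rho+1)>0$, whose maximum over $\lambda>0$ is $-\sqrt{AB}=-\sqrt{(\rho+1)|\AM|}\sqrt{A}$; plugging in and renaming $\widetilde\eta$ back to $\eta$ yields the stated formula for $\TM_r^\pi V(s)$. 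The formula for $\widehat\TM_r^\pi V(s)$ then follows verbatim, since $\widehat P(\cdot|s,a)\in\Delta(\SM)$ so that $\sum_{s'}\widehat P(s'|s,a)=1$ is still available.

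I expect no serious obstacle; the delicate points are purely bookkeeping. One must invoke strong duality for the inner program — already supplied via Slater's condition inside Lemma~\ref{lem: f-eq-s} — before interchanging operations, observe that because the $\chi^2$ generator has a conjugate finite on all of $\RB$ there is (unlike the $L_1$ case) no piecewise upper bound on $\eta$ surviving into the dual, verify that the reparametrization $\widetilde\eta=\eta+2\lambda$ genuinely preserves the feasible set of $\eta$ before carrying out the closed-form maximization in $\lambda$, and dispose of the degenerate cases $A=0$ and the boundary $\lambda\to0^+$ by continuity so that they do not alter the supremum.
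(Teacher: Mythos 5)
Your proposal is correct and follows essentially the same route as the paper: apply the $s$-rectangular dual of Lemma~\ref{lem: f-eq-s} with the conjugate $f^*(y)=\tfrac{1}{4}(y+2)_+^2-1$ of $f(t)=(t-1)^2$, shift $\widetilde\eta_a=\eta_a+2\lambda$, and then optimize the resulting $-A/(4\lambda)-B\lambda$ form over $\lambda$ to produce the $\sqrt{(\rho+1)|\AM|}$ factor. The only blemish is a bookkeeping slip in your first intermediate display, whose constant term should be $-\lambda|\AM|(\rho-1)$ (it becomes $-\lambda|\AM|(\rho+1)$ only after the shift absorbs the extra $-2\lambda|\AM|$ from $\sum_a\eta_a$); your subsequent displays and the final formula agree with the paper.
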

\begin{proof}[Proof of Lemma~\ref{lem: chi2-s}]
	Similar with the proof of Lemma~\ref{lem: l1-s}, when $f(t)=(t-1)^2$, we have:
	\begin{align*}
		f^*(s)=
		\begin{cases}
			\frac{t^2}{4}+t, & s\ge-2, \\
			-1, & s<-2.
		\end{cases}
	\end{align*}
	By Lemma~\ref{lem: f-eq-s}, the optimization problem turns into:
	\begin{align*}
		\sup_{\lambda\ge0, \eta\in\RB^{|\AM|}}-\frac{1}{4\lambda}\sum_{s,a}P_a(s)\left(\eta_a-\pi(a)V(s)\right)_+^2-\lambda|\AM|(\rho+1)+\sum_{a}\eta_a.
	\end{align*}
 By	optimizing over $\lambda$, the problem becomes:
	\begin{align*}
		\sup_{\eta\in\RB^{|\AM|}}-\sqrt{(\rho+1)|\AM|}\sqrt{\sum_{s,a}P_a(s)(\eta_a-\pi(a)V(s))_+^2}+\sum_{a}\eta_a.
	\end{align*}
\end{proof}
Similar with the case of $L_1$ balls, $\TM_r^\pi$ and $\widehat{\TM}_r^\pi$ depend on an $|\AM|$-dimensional vector. By a similar approach with the case of $L_1$ balls under $s$-rectangular assumption, we can also obtain the following result.
\begin{lem}
	\label{lem: chi2-value}
	Denote $g(\eta)=\sqrt{(\rho+1)|\AM|}\sqrt{\sum_{s,a}P_a(s)(\eta_a-\pi(a)V(s))_+^2}-\sum_{a}\eta_a$. The optimal infimum of $g(\eta)$ lies in set 
	\begin{align*}
		I_{\chi^2}=\left\{\eta\in\RB^{|\AM|}\Bigg{|}\eta_a\ge 0, \sum_{a}\eta_a\le\frac{C(\rho)}{(C(\rho)-1)(1-\gamma)}\right\},
	\end{align*}
	where $C(\rho)=\sqrt{1+\rho}$.
\end{lem}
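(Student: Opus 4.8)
The plan is to follow the template of the proof of Lemma~\ref{lem: l1-value}, exploiting that $g$ is convex on $\RB^{|\AM|}$ and vanishes at the origin. Convexity is immediate: $(\eta_a-\pi(a)V(s))_+^2$ is convex in $\eta$, $\sqrt{\cdot}$ is concave and nondecreasing so $\sqrt{\sum_{s,a}P_a(s)(\cdot)_+^2}$ is convex, and $-\sum_a\eta_a$ is linear. Moreover $g(0)=0$, because $V\ge 0$ forces every $(0-\pi(a)V(s))_+=0$, so the square-root term is zero. Once these two facts are in hand, it suffices to show that $g(\eta)\ge g(0)=0$ for every $\eta$ outside $I_{\chi^2}$, and then convexity guarantees that some minimizer lies in $I_{\chi^2}$.

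First I would reduce to the nonnegative orthant. Suppose $\eta_{\widehat a}<0$ for some $\widehat a$. Since $V(s)\ge 0$, the quantity $(\eta_{\widehat a}-\pi(\widehat a)V(s))_+$ is already $0$ and remains $0$ as $\eta_{\widehat a}$ is raised to $0$, so the square-root term does not increase, while $-\sum_a\eta_a$ strictly decreases. Hence replacing $\eta_{\widehat a}$ by $0$ does not increase $g$, and we may restrict attention to $\eta_a\ge 0$ for all $a$, which is the first half of the description of $I_{\chi^2}$.

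The main step is the lower bound on $\sum_a\eta_a$. Using $(x)_+\ge x$, the uniform bound $V(s)\le \tfrac{1}{1-\gamma}$ (which can only decrease each clipped term), $\sum_s P_a(s)=1$, and the Cauchy--Schwarz inequality $\sum_a x_a\le\sqrt{|\AM|}\,\bigl(\sum_a x_a^2\bigr)^{1/2}$ applied to $x_a=(\eta_a-\tfrac{\pi(a)}{1-\gamma})_+$, I would obtain
\[
\sqrt{\sum_{s,a}P_a(s)\bigl(\eta_a-\pi(a)V(s)\bigr)_+^2}\ \ge\ \sqrt{\sum_{a}\Bigl(\eta_a-\tfrac{\pi(a)}{1-\gamma}\Bigr)_+^2}\ \ge\ \frac{1}{\sqrt{|\AM|}}\Bigl(\sum_a\eta_a-\tfrac{1}{1-\gamma}\Bigr),
\]
where the last step uses $\sum_a\pi(a)=1$. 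Multiplying by $\sqrt{(\rho+1)|\AM|}=C(\rho)\sqrt{|\AM|}$ gives $g(\eta)\ge(C(\rho)-1)\sum_a\eta_a-\tfrac{C(\rho)}{1-\gamma}$, and since $C(\rho)=\sqrt{1+\rho}>1$ this is $\ge 0=g(0)$ as soon as $\sum_a\eta_a\ge \tfrac{C(\rho)}{(C(\rho)-1)(1-\gamma)}$. Combining with the orthant reduction and convexity (if a minimizer violated the sum bound, then $\eta=0$ would be an equally good minimizer inside $I_{\chi^2}$) finishes the argument.

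The only mildly delicate point — the ``hard part'' such as it is — is keeping the chain of lower bounds on the square-root term valid: one must replace $V(s)$ by its uniform upper bound inside the positive part before summing, and apply Cauchy--Schwarz to the clipped vector $\bigl(\eta_a-\tfrac{\pi(a)}{1-\gamma}\bigr)_+$ rather than to $\eta_a-\tfrac{\pi(a)}{1-\gamma}$ directly, bridging the two via $(x)_+\ge x$. Everything else is bookkeeping identical in spirit to the $L_1$ case of Lemma~\ref{lem: l1-value}, using the dual form from Lemma~\ref{lem: f-eq-s} and Lemma~\ref{lem: chi2-s}.
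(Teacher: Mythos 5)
Your proposal is correct and follows essentially the same route as the paper's proof: reduce to the nonnegative orthant by pushing any negative $\eta_{\widehat a}$ up to $0$, then lower-bound the square-root term via Cauchy--Schwarz and $(x)_+\ge x$ together with $\|V\|_\infty\le\frac{1}{1-\gamma}$ to get $g(\eta)\ge(C(\rho)-1)\sum_a\eta_a-\frac{C(\rho)}{1-\gamma}\ge0=g(0)$ outside the sum constraint, and conclude by convexity. The only difference is cosmetic: you substitute the bound on $V$ inside the positive part before applying Cauchy--Schwarz over $a$, whereas the paper applies Cauchy--Schwarz to the $P$-weighted sum first and bounds $\sum_{s,a}P_a(s)\pi(a)V(s)$ at the end.
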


\begin{proof}[Proof of Lemma~\ref{lem: chi2-value}]
	If there exists $\widehat{a}\in\AM$ such that $\eta_{\widehat{a}}\le0$, we have:
	\begin{align*}
		g(\eta)=-\eta_{\widehat{a}}+\sqrt{(\rho+1)|\AM|}\sqrt{\sum_{s,a\not=\widehat{a}}P_a(s)(\eta_a-\pi(a)V(s))_+^2}-\sum_{a\not=\widehat{a}}\eta_a.
	\end{align*}
	Thus, the infimum of $g(\eta)$ could be obtained when $\eta_{\widehat{a}}=0$. Besides, by the Cauchy-Schwartz inequality, we have:
	\begin{align*}
		g(\eta)&\ge\sqrt{(\rho+1)|\AM|}\frac{\sum_{s,a}P_a(s)(\eta_a-\pi(a)V(s))_+}{\sqrt{\sum_{s,a}P_a(s)}}-\sum_a \eta_a\\
		&=\sqrt{1+\rho}\sum_{s,a}P_a(s)(\eta_a-\pi(a)V(s))_+-\sum_a\eta_a\\
		&\ge\sqrt{1+\rho}\sum_{s,a}P_a(s)(\eta_a-\pi(a)V(s))-\sum_a\eta_a\\
		&=\left(\sqrt{1+\rho}-1\right)\sum_a\eta_a - \sqrt{1+\rho}\sum_{s,a}P_a(s)\pi(a)V(s).
	\end{align*}
	Thus, when $\sum_a\eta_a\ge\frac{C(\rho)}{(C(\rho)-1)(1-\gamma)}$, we have:
	\begin{align*}
		g(\eta)\ge\sqrt{1+\rho}\left(\frac{1}{1-\gamma}-\sum_{s,a}P_a(s)\pi(a)V(s)\right)\ge0.
	\end{align*}
	By convexity of $g$, the optimal infimum of $g(\eta)$ locates in set:
	\begin{align*}
		\left\{\eta\in\RB^{|\AM|}\Bigg{|}\eta_a\ge 0, \sum_{a}\eta_a\le\frac{C(\rho)}{(C(\rho)-1)(1-\gamma)}\right\},
	\end{align*}
	where $C(\rho)=\sqrt{1+\rho}$.
\end{proof}
\begin{thm}
    \label{thm: chi2-s-fix}
    In the setting of $\chi^2$ balls, for fixed $V\in\VM$ and $\pi\in\Pi$, the following inequality holds with probability $1-\delta$:
    \begin{align*}
        \left\|\TM_r^\pi V - \widehat{\TM}_r^\pi V\right\|_\infty\le\frac{\gamma C^2(\rho)\sqrt{|\AM|}}{(C(\rho)-1)(1-\gamma)\sqrt{n}}\left(2+\sqrt{2\log\frac{2}{\delta}+2|\AM|\log(1+8\sqrt{n})}\right),
    \end{align*}
    where $C(\rho)=\sqrt{\rho+1}$.
\end{thm}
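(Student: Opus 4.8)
The plan is to reproduce, with the extra bookkeeping forced by the $s$-rectangular structure, the argument used for the $(s,a)$-rectangular $\chi^2$ bound (Theorem~\ref{thm: chi2-fix}). First I would fix a state $s$ and suppress it in the notation. By Lemma~\ref{lem: chi2-s}, writing $C(\rho)=\sqrt{1+\rho}$ and
\[
g(\eta,P):=C(\rho)\sqrt{|\AM|}\,\sqrt{\sum_{s',a}P_a(s')\bigl(\eta_a-\pi(a)V(s')\bigr)_+^2}-\sum_{a\in\AM}\eta_a,
\]
one has $\TM_r^\pi V(s)-\widehat{\TM}_r^\pi V(s)=\gamma\bigl(\inf_\eta g(\eta,\widehat P)-\inf_\eta g(\eta,P^*)\bigr)$, and by Lemma~\ref{lem: chi2-value} both infima are attained on the bounded set $I_{\chi^2}$. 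Hence it suffices to bound $\sup_{\eta\in I_{\chi^2}}\bigl|g(\eta,\widehat P)-g(\eta,P^*)\bigr|$, pull out the factor $\gamma$, and take a union bound over $s\in\SM$.

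The next step is a pointwise-in-$\eta$ concentration bound. For fixed $\eta\in I_{\chi^2}$ I would set $Y_k^a:=\sum_{s'}\mathbf{1}\{X_k^a=s'\}\bigl(\eta_a-\pi(a)V(s')\bigr)_+$; since the indicators are mutually exclusive, $(Y_k^a)^2$ reproduces the summand with the $(\cdot)_+^2$, so stacking the $Y_k^a$ (which are independent across both $k$ and $a$ under the generative model) into a vector $Y$ gives $g(\eta,\widehat P)=\tfrac{C(\rho)\sqrt{|\AM|}}{\sqrt n}\|Y\|_2-\sum_a\eta_a$, a $\tfrac{C(\rho)\sqrt{|\AM|}}{\sqrt n}$-Lipschitz function of $Y$ in $\|\cdot\|_2$ whose coordinates are bounded by $\sum_a\eta_a\le\tfrac{C(\rho)}{(C(\rho)-1)(1-\gamma)}$. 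I would then apply Lemma~6 of \cite{duchi2018learning} to the fluctuation $\bigl|g(\eta,\widehat P)-\EB g(\eta,\widehat P)\bigr|$, and Jensen's inequality together with the second-moment lower bound on $\EB\|Y\|_2$ (Lemma~8 of \cite{duchi2018learning}) to the bias $\bigl|\EB g(\eta,\widehat P)-g(\eta,P^*)\bigr|$, exactly as in Theorem~\ref{thm: chi2-fix}; combining, each fixed $\eta$ would satisfy $\bigl|g(\eta,\widehat P)-g(\eta,P^*)\bigr|\lesssim\tfrac{C^2(\rho)\sqrt{|\AM|}}{(C(\rho)-1)(1-\gamma)\sqrt n}\bigl(1+\sqrt{2\log(2/\delta)}\bigr)$ with probability $1-\delta$.

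Finally I would upgrade this to a uniform bound over $\eta$ by a covering argument. The map $\eta\mapsto g(\eta,P)$ is $\bigl(1+C(\rho)\sqrt{|\AM|}\bigr)$-Lipschitz in $\|\cdot\|_1$: the linear term is $1$-Lipschitz, and the square-root term is $C(\rho)\sqrt{|\AM|}$-Lipschitz because reweighting the vector $\bigl((\eta_a-\pi(a)V(s'))_+\bigr)_{s',a}$ by $\sqrt{P_a(s')}$ contracts $\|\cdot\|_2$ and $\|\cdot\|_2\le\|\cdot\|_1$. Since $I_{\chi^2}$ lies inside the $\ell_1$ ball of radius $R:=\tfrac{C(\rho)}{(C(\rho)-1)(1-\gamma)}$, Lemma~5.7 of \cite{wainwright2019high} supplies an $\varepsilon$-net $\NM_\varepsilon$ with $|\NM_\varepsilon|\le(1+2R/\varepsilon)^{|\AM|}$. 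A union bound over $\NM_\varepsilon$, the choice $\varepsilon\asymp\tfrac{C^2(\rho)\sqrt{|\AM|}}{(C(\rho)-1)(1-\gamma)\sqrt n}\bigl(1+C(\rho)\sqrt{|\AM|}\bigr)^{-1}$ to match the discretization error to the stochastic error, the simplification $C(\rho)=\sqrt{1+\rho}\ge1$ (which lets one absorb numerical constants into the $8$), and a last union bound over $s\in\SM$ then yield the stated inequality.

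The main obstacle I anticipate is the bias term: because $\EB\|Y\|_2\neq\sqrt{\EB\|Y\|_2^2}$ one cannot just use Jensen, and the variance-type lower bound of \cite{duchi2018learning} must be invoked carefully so that the $\sqrt{|\AM|}$ factors — which enter simultaneously through the constraint level $|\AM|\rho$ (hence the $\sqrt{|\AM|}$ in $g$), through the Lipschitz constant of $g$ in $Y$, and through the ambient dimension of the dual vector $\eta$ (the source of the $|\AM|\log(1+8\sqrt n)$ term via the net cardinality) — combine into the single $\sqrt{|\AM|}$ prefactor and the $|\AM|$ inside the logarithm claimed in the statement. Everything else is a mechanical repetition of the $(s,a)$-rectangular $\chi^2$ calculation.
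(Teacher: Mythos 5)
Your proposal follows the paper's own proof essentially step for step: the same dual representation from Lemma~\ref{lem: chi2-s}, the same restriction to $I_{\chi^2}$ via Lemma~\ref{lem: chi2-value}, the same Lipschitz-concentration plus second-moment-bias argument from Lemmas~6 and~8 of \cite{duchi2018learning} (your stacking of the $Y_k^a$ into one long vector is equivalent to the paper's per-sample aggregation $Z_k=\sqrt{\sum_a (Y_k^a)^2}$, since only $\|Y\|_2=\|Z\|_2$ and the coordinate bound $\sum_a\eta_a$ enter), and the same $(1+C(\rho)\sqrt{|\AM|})$-Lipschitz $\ell_1$-covering of $I_{\chi^2}$ with the balancing choice of $\varepsilon$ and final union bound over $s\in\SM$. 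The argument is correct and there is nothing of substance to add.
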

\begin{proof}[Proof of Theorem~\ref{thm: chi2-s-fix}]
	Similar with the proof of Theorem~\ref{thm: chi2-uni}, we firstly ignore dependence on $s$ and denote $g(\eta,P)=C(\rho)\sqrt{|\AM|}\sqrt{\sum_{s,a}P_a(s)(\eta_a-\pi(a)V(s))_+^2}-\sum_a\eta_a$, where $C(\rho)=\sqrt{\rho+1}$. Besides, we also denote $Y_{k}^a=\sum_s 1(X_k^a=s)(\eta_a-\pi(a)V(s))_+$ and $Z_k = \sqrt{\sum_a (Y_k^a)^2}$, where the $X_k^a$ are generated by $P_a(\cdot)$ independently. Thus, we have $g(\eta,\widehat{P})=C(\rho)\sqrt{\frac{1}{n}\sum_{k=1}^n Z_k^2}-\sum_a\eta_a$. By restricting $\eta$ in the set of Lemma~\ref{lem: chi2-value}, we have $Z_k\le\sqrt{\sum_a \eta_a^2}\le\sum_a \eta_a$ and the $\{Z_k\}$ are i.i.d.\ random variables. By Lemma 6 of \cite{duchi2018learning}, with probability $1-\delta$, we have:
	\begin{align*}
		\left|g(\eta,\widehat{P})-\EB g(\eta,\widehat{P})\right|\le\frac{\sqrt{2|\AM|}C^2(\rho)}{(C(\rho)-1)(1-\gamma)\sqrt{n}}\sqrt{\log\frac{2}{\delta}}.
	\end{align*}
	Besides, by Lemma 8 of \cite{duchi2018learning}, we have:
	\begin{align*}
		\frac{1}{\sqrt{n}}\EB\|Z\|_2\ge\sqrt{\sum_{k=1}^n \EB Z_k^2}-\sqrt{\frac{C(\rho)}{(C(\rho)-1)(1-\gamma)}}\frac{1}{\sqrt{n}}.
	\end{align*}
	Combing these together, with probability $1-\delta$, we have:
	\begin{align*}
		\left|g(\eta,\widehat{P})-g(\eta,P)\right|\le\frac{C^2(\rho)\sqrt{|\AM|}}{(C(\rho)-1)(1-\gamma)\sqrt{n}}\left(1+\sqrt{2\log\frac{2}{\delta}}\right).
	\end{align*}
	Next, we turn to bound the deviation over $I_{\chi^2}$ uniformly. Note that for any two $\widetilde{\eta}$ and $\widehat{\eta}$ we have:
	\begin{align*}
		\left|g(\widetilde{\eta},P)-g(\widehat{\eta}, P)\right|\le (C(\rho)\sqrt{|\AM|}+1)\left\|\widetilde{\eta}-\widehat{\eta}\right\|_1.
	\end{align*}
	By taking the smallest $\varepsilon$-net of $I_{\chi^2}$ as $\NM_\varepsilon$ w.r.t.\ metric $\|\cdot\|_1$, the size of $\NM_\varepsilon$ is bounded by:
	\begin{align*}
		|\NM_\varepsilon|\le\left(1+\frac{2C(\rho)}{(C(\rho)-1)(1-\gamma)\varepsilon}\right)^{|\AM|}.
	\end{align*}
	Thus, we have:
	\begin{align*}
		\sup_{\eta\in I_{\chi^2}}\left|g(\eta,\widehat{P})-g(\eta,P)\right|\le 2(1+C(\rho)\sqrt{|\AM|})\varepsilon+\sup_{\eta\in\NM_\varepsilon}\left|g(\eta,\widehat{P})-g(\eta,P)\right|.
	\end{align*}
	By taking $\varepsilon=\frac{C^2(\rho)\sqrt{|\AM|}}{2(1+\sqrt{|\AM|}C(\rho))(C(\rho)-1)(1-\gamma)\sqrt{n}}$, with probability $1-\delta$, we have:
	\begin{align*}
		&\sup_{\eta\in I_{\chi^2}}\left|g(\eta,\widehat{P})-g(\eta,P)\right|\le\frac{C^2(\rho)\sqrt{|\AM|}}{(C(\rho) {-} 1)(1 {-} \gamma)\sqrt{n}}\left(2+\sqrt{2\log\frac{2|\NM_\varepsilon|}{\delta}}\right)\\
		&\le\frac{C^2(\rho)\sqrt{|\AM|}}{(C(\rho)-1)(1-\gamma)\sqrt{n}}\left(2+\sqrt{2\log\frac{2}{\delta}+2|\AM|\log\left(1+\frac{4\sqrt{n}(1+C(\rho)\sqrt{|\AM|})}{C(\rho)\sqrt{|\AM|}}\right)}\right).
	\end{align*}
	Noting that $C(\rho)=\sqrt{\rho+1}\ge1$ and $|\AM|\ge1$, we can simplify the inequality with:
	\begin{align*}
		\sup_{\eta\in I_{\chi^2}}\left|g(\eta,\widehat{P})-g(\eta,P)\right|\le\frac{C^2(\rho)\sqrt{|\AM|}}{(C(\rho)-1)(1-\gamma)\sqrt{n}}\left(2+\sqrt{2\log\frac{2}{\delta}+2|\AM|\log(1 {+} 8\sqrt{n})}\right).
	\end{align*}
	Thus, the final result is obtained by union bound over $s\in\SM$.
\end{proof}

Finally, by union bounding over $\VM$ and Lemma~\ref{lem: eps-pi}, we obtain the bound of performance gap as Theorem~\ref{thm: chi2-s-union} states.

\begin{proof}[Proof of Theorem~\ref{thm: chi2-s-union}]
	Firstly, we bound the deviation uniformly with $V\in\VM$. Similar with the proof of Theorem~\ref{thm: l1-union}, we take the smallest $\varepsilon$-net of $\VM$ w.r.t.\ norm $\|\cdot\|_\infty$ and denote it as $\VM_\varepsilon$. By Theorem~\ref{thm: chi2-s-fix}, with probability $1-\delta$, we have:
	\begin{align*}
		\sup_{V\in\VM}\left\|\TM_r^\pi V-\widehat{\TM}_r^\pi V\right\|_\infty\le&2\gamma\varepsilon+\frac{\gamma C^2(\rho)\sqrt{|\AM|}}{(C(\rho)-1)(1-\gamma)\sqrt{n}}\Bigg{(}2+\\
		&\sqrt{2\log\frac{2|\SM|}{\delta}+2|\AM|\log\left(1+8\sqrt{n}\right)+2|\SM|\log\left(1+\frac{1}{(1-\gamma)\varepsilon}\right)}\Bigg{)}.
	\end{align*}
	By taking $\varepsilon=\frac{C^2(\rho)\sqrt{|\AM|}}{(C(\rho)-1)(1-\gamma)\sqrt{n}}$, with probability $1-\delta$, we have:
	\begin{align*}
		\sup_{V\in\VM}&\left\|\TM_r^\pi V-\widehat{\TM}_r^\pi V\right\|_\infty\le \frac{\gamma C^2(\rho)\sqrt{|\AM|}}{(C(\rho)-1)(1-\gamma)\sqrt{n}}\Bigg{(}4+\\
		&\sqrt{2\log\frac{2|\SM|}{\delta}+2|\AM|\log\left(1+8\sqrt{n}\right)+2|\SM|\log\left(1+\frac{\sqrt{n}(C(\rho)-1)}{C^2(\rho)\sqrt{|\AM|}}\right)}\Bigg{)}.
	\end{align*}

	Next, we bound the deviation uniformly with $\pi\in\Pi$. By Lemma~\ref{lem: eps-pi}, with probability $1-\delta$, we have:
	\begin{align*}
		&\sup_{\pi\in\Pi, V\in\VM}\left\|\TM_r^\pi V- \widehat{\TM}_r^\pi V\right\|_\infty\le\frac{2\gamma\varepsilon}{1-\gamma}+\frac{\gamma C^2(\rho)\sqrt{|\AM|}}{(C(\rho)-1)(1-\gamma)\sqrt{n}}\Bigg{(}4+\\
		&\sqrt{2\log\frac{2|\SM|}{\delta}+2|\AM|\log\left(1+8\sqrt{n}\right)+2|\SM|\log\left(1+\frac{\sqrt{n}(C(\rho)-1)}{C^2(\rho)\sqrt{|\AM|}}\right)+2\log|\Pi_\varepsilon|}\Bigg{)}.
	\end{align*}
	By taking $\varepsilon=\frac{C^2(\rho)\sqrt{|\AM|}}{(C(\rho)-1)\sqrt{n}}$, with probability $1-\delta$, we have:
	\begin{align*}
		&\sup_{\pi\in\Pi, V\in\VM}\left\|\TM_r^\pi V- \widehat{\TM}_r^\pi V\right\|_\infty\le\frac{\gamma C^2(\rho)\sqrt{|\AM|}}{(C(\rho)-1)(1-\gamma)\sqrt{n}}\Bigg{(}6+\\
		&\sqrt{2\log\frac{2|\SM|}{\delta}+2|\AM|\log\left(1+8\sqrt{n}\right)+2|\SM|\log\left(1+\frac{\sqrt{n}(C(\rho)-1)}{C^2(\rho)\sqrt{|\AM|}}\right)+2\log|\Pi_\varepsilon|}\Bigg{)}.
	\end{align*}
	By Lemma~\ref{lem: num-eps-pi} and some calculation, with probability $1-\delta$, we have:
	\begin{align*}
		&\sup_{\pi\in\Pi, V\in\VM}\left\|\TM_r^\pi V- \widehat{\TM}_r^\pi V\right\|_\infty\le\frac{\gamma C^2(\rho)\sqrt{|\AM|}}{(C(\rho)-1)(1-\gamma)\sqrt{n}}\Bigg{(}6+\\
		&\sqrt{2\log\frac{2|\SM|}{\delta}+2|\AM|\log\left(1+8\sqrt{n}\right)+4|\SM||\AM|\log\left(1+\frac{4\sqrt{n}(C(\rho)-1)}{C^2(\rho)\sqrt{|\AM|}}\right)}\Bigg{)}.
	\end{align*}
	By $C(\rho)\ge1$ and $|\AM|\ge1$, we can simplify it as:
	\begin{align*}
		&\sup_{\pi\in\Pi, V\in\VM}\left\|\TM_r^\pi V- \widehat{\TM}_r^\pi V\right\|_\infty\\
		&\le\frac{\gamma C^2(\rho)\sqrt{|\SM||\AM|^2}}{(C(\rho)-1)(1-\gamma)\sqrt{n}}\left(6+\sqrt{2\log\frac{2|\SM|(1+8\sqrt{n})(1+4\sqrt{n}(C(\rho)-1))^2}{\delta}}\right).
	\end{align*}
\end{proof}

\subsubsection*{\textbf{Case 3: KL balls}}
In this case, we set $f(t)=t\log t$ in Example~\ref{eg: f-set-s}. The uncertainty set is formulated as:
\begin{example}[KL balls]
    For each $(s,a)\in\SM\times\AM$, the uncertainty sets are defined as:
    \begin{align*}
        &\PM_{s}(\rho)=\left\{P(\cdot|s,a)\in\Delta(\SM)\Bigg{|}P(\cdot|s,a)\ll P^*(\cdot|s,a),\hspace{2pt}\sum_{s'\in\SM, a\in\AM}P(s'|s,a)\log\frac{P(s'|s,a)}{P^*(s'|s,a)}\le|\AM|\rho\right\},\\
        &\widehat{\PM}_{s}(\rho)=\left\{P(\cdot|s,a)\in\Delta(\SM)\Bigg{|}P(\cdot|s,a)\ll \widehat{P}(\cdot|s,a),\hspace{2pt}\sum_{s'\in\SM, a\in\AM}P(s'|s,a)\log\frac{P(s'|s,a)}{\widehat{P}(s'|s,a)}\le|\AM|\rho\right\}.
    \end{align*}
    By the $s$-rectangular set assumption, we define $\PM=\bigtimes_{s\in\SM} \PM_s(\rho)$ and $\widehat{\PM}=\bigtimes_{s\in\SM} \widehat{\PM}_s(\rho)$.
\end{example}
Thus, for any fixed $\pi\in\Pi$ and $V\in\VM$, the explicit forms of $\TM_r^\pi$ and $\widehat{\TM}_r^\pi$ are:
\begin{lem}
    \label{lem: kl-s}
    Under the s-rectangular assumption and KL balls uncertainty set, for each $s\in\SM$, we have:
    \begin{align*}
        \TM^\pi_r V (s)&=R^\pi(s) + \gamma\sup_{\lambda\ge0}\left(-\lambda|\AM|\rho-\lambda\sum_{a}\log\sum_{s'}P^*(s'|s,a)\exp\left(-\frac{\pi(a|s)V(s')}{\lambda}\right)\right),\\
        \widehat{\TM}^\pi_r V (s)&=R^\pi(s) + \gamma\sup_{\lambda\ge0}\left(-\lambda|\AM|\rho-\lambda\sum_{a}\log\sum_{s'}\widehat{P}(s'|s,a)\exp\left(-\frac{\pi(a|s)V(s')}{\lambda}\right)\right).
    \end{align*}
\end{lem}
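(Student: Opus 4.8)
The plan is to mirror the proofs of Lemmas~\ref{lem: l1-s} and~\ref{lem: chi2-s}, now specializing to $f(t)=t\log t$. First I would invoke the $s$-rectangular decomposition of the robust Bellman operator: for each fixed $s\in\SM$,
\[
\TM^\pi_r V(s)=\sum_{a\in\AM}\pi(a|s)R(s,a)+\gamma\inf_{P_s\in\PM_s(\rho)}\sum_{s'\in\SM,a\in\AM}P(s'|s,a)\pi(a|s)V(s'),
\]
so that the bracketed infimum is exactly the value of the convex program of Lemma~\ref{lem: f-eq-s} with centers $P_a^*=P^*(\cdot|s,a)$ and divergence $D_f=D_{\mathrm{KL}}$. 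Next I would record the restricted conjugate: for $f(t)=t\log t$ on $t\ge0$ one has $f^*(u)=\sup_{t\ge0}(ut-t\log t)=\exp(u-1)$, which is the same computation used in the $(s,a)$-rectangular case (Lemma~\ref{lem: kl}).

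Plugging $f^*(u)=e^{u-1}$ into Lemma~\ref{lem: f-eq-s} yields the dual objective
\[
\sup_{\lambda\ge0,\;\eta\in\RB^{|\AM|}}\;-\lambda\sum_{s'\in\SM,a\in\AM}P^*(s'|s,a)\exp\!\Big(\tfrac{\eta_a-\pi(a|s)V(s')}{\lambda}-1\Big)-\lambda|\AM|\rho+\sum_{a\in\AM}\eta_a .
\]
The key step is to eliminate the vector multiplier $\eta$. For fixed $\lambda>0$ the objective decouples over $a$, each summand being $-\lambda\sum_{s'}P^*(s'|s,a)\exp(\tfrac{\eta_a-\pi(a|s)V(s')}{\lambda}-1)+\eta_a$, which is strictly concave in $\eta_a$; setting its derivative to zero gives $e^{\eta_a/\lambda-1}\sum_{s'}P^*(s'|s,a)\exp(-\tfrac{\pi(a|s)V(s')}{\lambda})=1$, i.e.
\[
\eta_a^\star=\lambda-\lambda\log\sum_{s'}P^*(s'|s,a)\exp\!\Big(-\tfrac{\pi(a|s)V(s')}{\lambda}\Big).
\]
At this optimum the inner exponential sum equals $1$, so the $a$-th summand collapses to $-\lambda+\eta_a^\star=-\lambda\log\sum_{s'}P^*(s'|s,a)\exp(-\pi(a|s)V(s')/\lambda)$. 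Summing over $a$ and restoring $-\lambda|\AM|\rho$ gives $\sup_{\lambda\ge0}\big(-\lambda|\AM|\rho-\lambda\sum_a\log\sum_{s'}P^*(s'|s,a)\exp(-\pi(a|s)V(s')/\lambda)\big)$, which is precisely the claimed expression for $\TM^\pi_r V(s)$; adding back $R^\pi(s)$ finishes it, and the formula for $\widehat{\TM}^\pi_r V(s)$ follows verbatim with $\widehat P$ in place of $P^*$, since $\widehat P(\cdot|s,a)$ is again a probability vector absolutely continuous with respect to itself.

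The parts that actually require care are the analytic loose ends rather than the algebra: strong duality (already guaranteed by Slater's condition inside Lemma~\ref{lem: f-eq-s}, using that $P_a^*$ is strictly feasible whenever $\rho>0$); the legitimacy of restricting to $\lambda>0$ when carrying out the $\eta$-optimization (the $\lambda\to0^+$ boundary corresponds to the non-robust $\min_{s'}$ over the support and is absorbed by continuity exactly as in Lemma~\ref{lem: kl}); and the absolute-continuity convention $P_a\ll P_a^*$, which ensures the terms with $P^*(s'|s,a)=0$ drop out and keeps $f^*$ finite. None of these needs a new idea beyond the $(s,a)$-rectangular KL derivation; the one genuinely new ingredient — and the step I expect to be the most delicate to write cleanly — is the coordinatewise optimization of the $|\AM|$-dimensional multiplier $\eta$, which fortunately separates because the constraint $\sum_a D_f(P_a\|P_a^*)\le|\AM|\rho$ couples the $P_a$ only through the single scalar $\lambda$.
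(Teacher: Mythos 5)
Your proposal is correct and follows exactly the paper's route: compute $f^*(u)=e^{u-1}$ for $f(t)=t\log t$, plug into the dual of Lemma~\ref{lem: f-eq-s}, and eliminate the multiplier $\eta$ coordinatewise, which is precisely the step the paper compresses into the phrase ``we optimize over $\eta$'' (mirroring its Lemma~\ref{lem: kl} in the $(s,a)$-rectangular case). Your explicit closed form $\eta_a^\star=\lambda-\lambda\log\sum_{s'}P^*(s'|s,a)\exp(-\pi(a|s)V(s')/\lambda)$ and the resulting collapse of each summand are exactly the omitted algebra, so no gap remains.
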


\begin{proof}[Proof of Lemma~\ref{lem: kl-s}]
	Similar with the proof of Lemma~\ref{lem: l1-s}, when $f(t)=t\log t$, we have $f^*(s)=\exp(s-1)$. By Lemma~\ref{lem: f-eq-s}, we optimize over $\eta$, the optimization problem turns into:
	\begin{align*}
		\sup_{\lambda\ge0}-\lambda\sum_{a\in\AM}\log\sum_{s\in\SM}P^*_a(s)\exp\left(-\frac{\pi(a)V(s)}{\lambda}\right)-\lambda|\AM|\rho.
	\end{align*}
\end{proof}

In this case, $\TM_r^\pi$ and $\widehat{\TM}_r^\pi$ only depend on a scalar $\lambda$, which is similar with the $(s,a)$-rectangular assumption. We denote $g(\lambda,P) = \lambda|\AM|\rho + \lambda\sum_a \log\sum_{s}P_a(s)\exp(-\pi(a)V(s)/\lambda)$. Similar with the case of the $(s,a)$-rectangular assumption, when $\lambda\ge\frac{1}{|\AM|\rho(1-\gamma)}$, $g(\lambda,P)\ge0$ and it is monotonically increasing in $\lambda$. Thus, the optimal value $\lambda^*$ locates in $[0,\frac{1}{|\AM|\rho(1-\gamma)}]$. Thus, for fixed $V\in\VM$ and $\pi\in\Pi$, we have:
\begin{thm}
    \label{thm: kl-s-fix}
    In the setting of KL balls, for fixed $V\in\VM$ and $\pi\in\Pi$, the following inequality holds with probability $1-\delta$:
    \begin{align*}
        \left\|\TM_r^\pi V-\widehat{\TM}_r^\pi V\right\|_\infty\le\frac{2\gamma}{\rho\underline{p}(1-\gamma)\sqrt{n}}\sqrt{\log\frac{2|\SM||\AM|}{\delta}},
    \end{align*}
    where $\underline{p}=\min_{P^*(s'|s,a)>0}P^*(s'|s,a)$.
\end{thm}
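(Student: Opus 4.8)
The plan is to follow the same route as the $(s,a)$-rectangular KL case (Theorem~\ref{thm: kl-fix}), modified to accommodate the extra summation over actions that the $s$-rectangular dual of Lemma~\ref{lem: kl-s} introduces. First I would fix a state $s$ and, writing $g(\lambda,P)=\lambda|\AM|\rho+\lambda\sum_{a}\log\sum_{s'}P(s'|s,a)\exp(-\pi(a|s)V(s')/\lambda)$, note that Lemma~\ref{lem: kl-s} gives $\TM_r^\pi V(s)=R^\pi(s)-\gamma\inf_{\lambda\ge0}g(\lambda,P^*(\cdot|s,\cdot))$ and the analogous identity for $\widehat{\TM}_r^\pi V(s)$ with $\widehat{P}$ in place of $P^*$. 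Since $\bigl|\inf_\lambda g(\lambda,\widehat{P})-\inf_\lambda g(\lambda,P^*)\bigr|\le\sup_\lambda\bigl|g(\lambda,\widehat{P})-g(\lambda,P^*)\bigr|$, and since (as recorded just before the theorem statement) the relevant infimum is attained in $\lambda\in[0,\tfrac{1}{|\AM|\rho(1-\gamma)}]$ because $g$ is nonnegative and monotone beyond that threshold, it suffices to control $\sup_{\lambda\in[0,1/(|\AM|\rho(1-\gamma))]}\bigl|g(\lambda,\widehat{P})-g(\lambda,P^*)\bigr|$ for each $s$ and then union-bound over states. In contrast with the $L_1$ and $\chi^2$ cases, no $\varepsilon$-net over $\lambda$ is required here, since $\lambda$ is a scalar.

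Next I would estimate the per-action log-ratio. For each $a$, $\log\sum_{s'}\widehat{P}(s'|s,a)e^{-\pi(a|s)V(s')/\lambda}-\log\sum_{s'}P^*(s'|s,a)e^{-\pi(a|s)V(s')/\lambda}=\log(1+\Delta_a/D_a)$, where $D_a=\sum_{s'}P^*(s'|s,a)e^{-\pi(a|s)V(s')/\lambda}$ and $\Delta_a=\sum_{s'}(\widehat{P}(s'|s,a)-P^*(s'|s,a))e^{-\pi(a|s)V(s')/\lambda}$. Because $\widehat{P}\ll P^*$ under the generative model~\eqref{eq: gen}, a weighted-average bound gives $|\Delta_a/D_a|\le\max_{s'}|\widehat{P}(s'|s,a)/P^*(s'|s,a)-1|$, which is at most $1/2$ on a high-probability event, so $|\log(1+\Delta_a/D_a)|\le2\max_{s'}|\widehat{P}(s'|s,a)/P^*(s'|s,a)-1|$. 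Multiplying by $\lambda$, summing over the $|\AM|$ actions, and using $\lambda\le\tfrac{1}{|\AM|\rho(1-\gamma)}$, the $|\AM|$ factors cancel and we obtain $\sup_\lambda|g(\lambda,\widehat{P})-g(\lambda,P^*)|\le\tfrac{2}{\rho(1-\gamma)}\max_{a,s'}|\widehat{P}(s'|s,a)/P^*(s'|s,a)-1|$.

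Finally I would close with concentration: since $\widehat{P}(s'|s,a)=\tfrac1n\sum_k\1(X_k^{s,a}=s')$ is an average of i.i.d.\ Bernoullis, Hoeffding's inequality controls $|\widehat{P}(s'|s,a)-P^*(s'|s,a)|$; dividing by $P^*(s'|s,a)\ge\underline{p}$ and taking a union bound over the relevant $(s,a,s')$ indices yields $\max_{s,a,s'}|\widehat{P}(s'|s,a)/P^*(s'|s,a)-1|\le\tfrac{1}{\underline{p}\sqrt{n}}\sqrt{\log\tfrac{2|\SM||\AM|}{\delta}}$ with probability $1-\delta$. Combining this with the prefactor $\gamma\cdot\tfrac{2}{\rho(1-\gamma)}$ and taking the maximum over $s$ gives the stated bound on $\|\TM_r^\pi V-\widehat{\TM}_r^\pi V\|_\infty$.

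The hard part, exactly as in the $(s,a)$-rectangular case, is the unavoidable $1/\underline{p}$ factor: bounding $|g(\lambda,\widehat{P})-g(\lambda,P^*)|$ amounts to controlling the deviation of a log-partition (cumulant generating) function, whose denominator $D_a$ admits no lower bound better than one scaling with $\underline{p}$, so the $1/\underline{p}$ dependence is intrinsic and cannot be removed by a smarter argument. A secondary point requiring care is the clean cancellation of the $|\AM|$ factors, which relies on the constraint right-hand side being $|\AM|\rho$ in Example~\ref{eg: f-set-s} and hence on the dual variable being confined to an interval that shrinks like $1/|\AM|$.
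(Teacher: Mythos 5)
Your argument is essentially the paper's own proof: restrict the dual variable to $[0,\tfrac{1}{|\AM|\rho(1-\gamma)}]$, bound the difference of infima by $\sup_{\lambda}|g(\lambda,\widehat P)-g(\lambda,P^*)|$, control each per-action log-ratio via $|\log(1+x)|\le 2|x|$ by $2\max_{s'}\bigl|\widehat P(s'|s,a)/P^*(s'|s,a)-1\bigr|$ so that the $|\AM|$ factors cancel against $\lambda\le\tfrac{1}{|\AM|\rho(1-\gamma)}$, and finish with Hoeffding normalized by $\underline{p}$ plus a union bound. The only caveat is the bookkeeping inside the logarithm: a union over all relevant $(s,a,s')$ indices strictly gives $\log(2|\SM|^2|\AM|/\delta)$ (which is what the paper itself uses downstream in Theorem~\ref{thm: kl-s-union}), but since the stated bound with $\log(2|\SM||\AM|/\delta)$ reflects the paper's own statement, this is not a flaw in your reasoning.
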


\begin{proof}[Proof of Theorem~\ref{thm: kl-s-fix}]
	Similar with the proof of Theorem~\ref{thm: kl-fix}, we denote\\ $g(P)=\inf_{\lambda\in[0,1/|\AM|\rho(1-\gamma)]}\lambda|\AM|\rho+\lambda\sum_a\log\sum_{s}P_a(s)\exp(-\pi(a)V(s)/\lambda)$. We have:
	\begin{align*}
		&\left|g(\widehat{P})-g(P)\right|\\
		&\le\sup_{\lambda\in[0,\frac{1}{|\AM|\rho(1-\gamma)}]}\left|\lambda\sum_a\left(\log\sum_s \widehat{P}_a(s)\exp(-\frac{\pi(a)V(s)}{\lambda})-\log\sum_s P_a(s)\exp(-\frac{\pi(a)V(s)}{\lambda})\right)\right|\\
		&\le\frac{1}{|\AM|\rho(1-\gamma)}\sup_{\lambda\in[0,\frac{1}{|\AM|\rho(1-\gamma)}]}\left|\sum_a\log\left(1+\frac{\sum_s \left(\widehat{P}_a(s)-P_a(s)\right)\exp\left(-\frac{\pi(a)V(s)}{\lambda}\right)}{\sum_s P_a(s)\exp\left(-\frac{\pi(a)V(s)}{\lambda}\right)}\right)\right|.
	\end{align*}
	Noting that $|\log(1+x)|\le2|x|$ when $|x|\le\frac{1}{2}$, we  have:
	\begin{align*}
		\left|g(\widehat{P})-g(P)\right|&\le\frac{2}{|\AM|\rho(1-\gamma)}\sum_a\max_{s}\left|\frac{\widehat{P}_a(s)}{P_a(s)}-1\right|\le\frac{2}{\rho(1-\gamma)}\max_{s,a}\left|\frac{\widehat{P}_a(s)}{P_a(s)}-1\right|.
	\end{align*}
	Denote $\underline{p}=\min_{P(s'|s,a)>0}P(s'|s,a)$. The Hoeffding's inequality tells us:
	\begin{align*}
		\PB\left(\max_{s,a}\left|\frac{\widehat{P}_a(s)}{P_a(s)}-1\right|\ge\sqrt{\frac{1}{n\underline{p}^2}\log\frac{2|\SM||\AM|}{\delta}}\right)\le\delta.
	\end{align*}
	Thus, with probability $1-\delta$, we have:
	\begin{align*}
		\left|g(\widehat{P})-g(P)\right|\le\frac{2}{\rho(1-\gamma)}\sqrt{\frac{1}{n\underline{p}^2}\log\frac{2|\SM||\AM|}{\delta}}.
	\end{align*}
	The final result is obtained by union bound over $s\in\SM$.
\end{proof}

By union bound over $\VM$ and Lemma~\ref{lem: eps-pi}, we obtain Theorem~\ref{thm: kl-s-union}.

\begin{proof}[Proof of Theorem~\ref{thm: kl-s-union}]
	Firstly, we take the smallest $\varepsilon$-net of $\VM$ w.r.t norm $\|\cdot\|_\infty$ and denote it as $\VM_\varepsilon$. By Theorem~\ref{thm: kl-s-fix}, with probability $1-\delta$, we have:
	\begin{align*}
		\sup_{V\in\VM}\left\|\TM_r^\pi V-\widehat{\TM}_r^\pi V\right\|_\infty\le 2\gamma\varepsilon+\frac{2\gamma}{\rho\underline{p}(1-\gamma)\sqrt{n}}\sqrt{\log\frac{2|\SM|^2|\AM|}{\delta}+|\SM|\log\left(1+\frac{1}{(1-\gamma)\varepsilon}\right)}.
	\end{align*}
	By taking $\varepsilon=\frac{1}{\rho\underline{p}(1-\gamma)\sqrt{n}}$, with probability $1-\delta$, we have:
	\begin{align*}
		\sup_{V\in\VM}\left\|\TM_r^\pi V-\widehat{\TM}_r^\pi V\right\|_\infty\le \frac{2\gamma}{\rho\underline{p}(1-\gamma)\sqrt{n}}\left(1+\sqrt{\log\frac{2|\SM|^2|\AM|}{\delta}+|\SM|\log\left(1+\rho\underline{p}\sqrt{n}\right)}\right).
	\end{align*}
	Next, we bound the deviation uniformly with $\pi\in\Pi$. By Lemma~\ref{lem: eps-pi}, with probability $1-\delta$, we have:
	\begin{align*}
		&\sup_{\pi\in\Pi, V\in\VM}\left\|\TM_r^\pi V -\widehat{\TM}_r^\pi V\right\|_\infty\\
		&\le\frac{2\gamma\varepsilon}{1-\gamma}+\frac{2\gamma}{\rho\underline{p}(1-\gamma)\sqrt{n}}\left(1+\sqrt{\log\frac{2|\SM|^2|\AM|}{\delta}+|\SM|\log\left(1+\rho\underline{p}\sqrt{n}\right)+\log|\Pi_\varepsilon|}\right).
	\end{align*}
	By taking $\varepsilon=\frac{1}{\rho\underline{p}\sqrt{n}}$, with probability $1-\delta$, we have:
	\begin{align*}
		\sup_{\pi\in\Pi, V\in\VM}\left\|\TM_r^\pi V -\widehat{\TM}_r^\pi V\right\|_\infty\le\frac{2\gamma}{\rho\underline{p}(1-\gamma)\sqrt{n}}\left(2+\sqrt{\log\frac{2|\SM|^2|\AM|}{\delta}+|\SM|\log\left(1+\rho\underline{p}\sqrt{n}\right)+\log|\Pi_\varepsilon|}\right).
	\end{align*}
	By Lemma~\ref{lem: num-eps-pi}, with probability $1-\delta$, we have:
	\begin{align*}
		\sup_{\pi\in\Pi, V\in\VM}\left\|\TM_r^\pi V -\widehat{\TM}_r^\pi V\right\|_\infty&\le\frac{2\gamma}{\rho\underline{p}(1-\gamma)\sqrt{n}}\left(2+\sqrt{\log\frac{2|\SM|^2|\AM|}{\delta}+2|\SM||\AM|\log\left(1+4\rho\underline{p}\sqrt{n}\right)}\right)\\
		&\le\frac{2\gamma\sqrt{|\SM||\AM|}}{\rho\underline{p}(1-\gamma)\sqrt{n}}\left(2+\sqrt{2\log\frac{2|\SM|^2|\AM|(1+4\rho\underline{p}\sqrt{n})}{\delta}}\right).
	\end{align*}
\end{proof}
\subsection{Proofs of Section~\ref{sec: lb}}
\label{apd: lb}
First of all, we consider a simple MDP ($|\SM|=2$, $|\AM|=1$) as Fig.~\ref{fig: simplemdp} shows, where $r(z_0)=1$ and $r(z_1)=0$.
\begin{figure}[htbp!]
    \centering
    \includegraphics[scale=1]{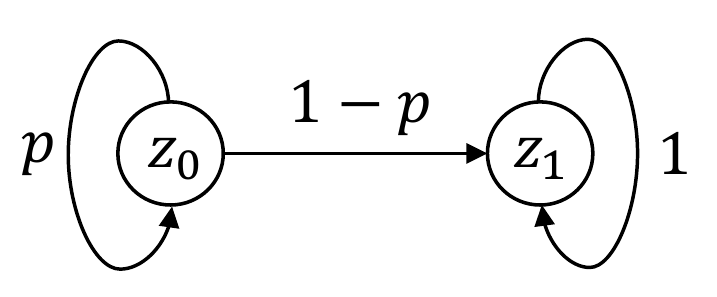}
    \caption{Simple MDP}
    \label{fig: simplemdp}
\end{figure}

We can verify that $V(z_0)=\frac{1}{1-\gamma p}$ and $V(z_1)=0$. Denoting $g(p)=\inf_{D_f(q\|p)\le\rho}q$ where $D_f(q\|p)=pf(\frac{q}{p})+(1-p)f(\frac{1-q}{1-p})$, we have $V_r(z_0)=\frac{1}{1-\gamma g(p)}$ and $V_r(z_1)=0$.

\begin{lem}
    \label{lem: f_prop}
    Assuming $f(t)$ is strictly convex at $t=1$ and continuous in $t\in\RB^+$, we have the following properties:
    \begin{enumerate}[label=(\alph*)]
        \item $D_f(q\|p)$ is non-increasing w.r.t. $q$ when $q\le p$;
        \item $D_f(q\|p)$ is non-decreasing w.r.t. $p$ when $p\ge q$;
        \item $D_f(q\|p)$ is jointly convex.
    \end{enumerate}
\end{lem}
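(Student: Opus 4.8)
\textbf{Proof proposal for Lemma~\ref{lem: f_prop}.}
The plan is to deduce all three claims from two ingredients: joint convexity of the perspective of a convex function, and a one-line Jensen bound that identifies the minimizer of $D_f(q\|p)$ in each of its two arguments. I would begin with part (c). Recall that for convex $f$ the perspective $\tilde f(x,t):=t f(x/t)$, taken with the usual recession-function convention at the boundary $t=0$, is jointly convex on $\RB_+\times\RB_+$; this is classical (see \cite{boyd2004convex}). Since
\[
    D_f(q\|p)=\tilde f(q,p)+\tilde f(1-q,1-p),
\]
each summand is $\tilde f$ precomposed with an affine map, $(q,p)\mapsto(q,p)$ or $(q,p)\mapsto(1-q,1-p)$, hence jointly convex, and a sum of convex functions is convex; this gives (c). In particular, for fixed $p$ the map $q\mapsto D_f(q\|p)$ is convex, and for fixed $q$ the map $p\mapsto D_f(q\|p)$ is convex, which is all the partial convexity the other two parts will need.

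Next I would establish the auxiliary bound $D_f(q\|p)\ge f(1)$ for every admissible pair, with equality only when $q=p$. This is Jensen's inequality with weights $p,1-p$ at the points $q/p$ and $(1-q)/(1-p)$:
\[
    D_f(q\|p)=p\,f\!\left(\tfrac{q}{p}\right)+(1-p)\,f\!\left(\tfrac{1-q}{1-p}\right)\ge f\!\left(p\cdot\tfrac{q}{p}+(1-p)\cdot\tfrac{1-q}{1-p}\right)=f(1).
\]
When $q\neq p$ the point $1$ lies strictly between $q/p$ and $(1-q)/(1-p)$, so equality would force $f$ to be affine on a neighbourhood of $1$, contradicting strict convexity at $1$; hence equality holds iff $q=p$. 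Combining with $D_f(p\|p)=f(1)$ and $D_f(q\|q)=f(1)$, this says that $q=p$ minimizes $q\mapsto D_f(q\|p)$ over $[0,1]$ and $p=q$ minimizes $p\mapsto D_f(q\|p)$ over $[0,1]$.

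For part (a) I would then argue: $q\mapsto D_f(q\|p)$ is convex on $[0,1]$ with global minimum at $q=p$, so it is non-increasing on $[0,p]$; concretely, for $q_1\le q_2\le p$ one writes $q_2=\lambda q_1+(1-\lambda)p$ with $\lambda\in[0,1]$ and combines convexity with $D_f(p\|p)\le D_f(q_1\|p)$ to get $D_f(q_2\|p)\le D_f(q_1\|p)$. Part (b) is the mirror image: $p\mapsto D_f(q\|p)$ is convex on $[0,1]$ with minimizer $p=q$, hence non-decreasing on $[q,1]$. I expect the only genuine friction to be bookkeeping at the boundary values $p,q\in\{0,1\}$, where $q/p$ or $(1-q)/(1-p)$ degenerates and the perspective and Jensen manipulations need the limiting conventions; I would handle this by proving everything on the open region $0<q,p<1$ and extending by continuity of $f$ on $\RB^+$. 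If one would rather not cite the joint convexity of the perspective, the sole extra work is re-deriving it — via its epigraph description, or a direct Hessian computation when $f\in C^2$ together with a smoothing/limiting argument — which is again routine rather than conceptual.
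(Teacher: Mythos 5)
Your proposal is correct and follows essentially the same route as the paper: joint convexity via the perspective function for part (c), and then, for (a) and (b), the interpolation $q_2=\lambda q_1+(1-\lambda)p$ (resp.\ $p_2=\lambda p_1+(1-\lambda)q$) combined with $D_f(p\|p)\le D_f(q_1\|p)$ and $D_f(q\|q)\le D_f(q\|p_1)$. The only difference is cosmetic: you make explicit, via Jensen and strict convexity at $1$, why $q=p$ minimizes the divergence (a step the paper uses implicitly through $D_f(p\|p)=f(1)\le D_f(q\|p)$), and you flag the boundary cases $p,q\in\{0,1\}$, neither of which changes the argument.
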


\begin{proof}
    (a) For any $q_1\le q_2\le p$, there exists $\lambda\in[0,1]$ s.t. $q_2=\lambda q_1 +(1-\lambda)p$, thus we have:
    \begin{align*}
        D_f(q_2\|p)\le\lambda D_f(q_1\|p)+(1-\lambda) D_f(p\|p)\le D_f(q_1\|p).
    \end{align*}

    (b) For any $q\le p_2\le p_1$, there exists $\lambda\in[0,1]$ s.t. $p_2=\lambda p_1 +(1-\lambda)q$, thus we have:
    \begin{align*}
        D_f(q\|p_2)\le\lambda D_f(q\|p_1)+(1-\lambda)D_f(q\|q)\le D_f(q\|p_1).
    \end{align*}
    
    (c) Note that $p f(\frac{q}{p})$ is convex by the fact that it's the perspective function of $f$. Thus, $D_f(q\|p)$ is jointly convex.
\end{proof}

\begin{lem}
    \label{lem: g_prop}
    $g(p)$ satisfies the following properties:
    \begin{enumerate}[label=(\alph*)]
        \item $g(p)\le p$;
        \item $D_f(g(p)\|p)=\rho$;
        \item For any $\delta>0$, $p\in(0,1)$ and $p+\delta<1$, we have $g(p+\delta)\ge g(p)$;
        \item $g(p)$ is convex for $p\in(0,1)$. Thus $g(p)$ is differentialble a.s.;
        \item $g'(p)=-\frac{\partial_p D_f(g(p)\|p)}{\partial_q D_f(g(p)\|p)}$.
    \end{enumerate}
\end{lem}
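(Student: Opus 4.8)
The plan is to peel off the five items in order, using only $f(1)=0$, continuity of $f$, and the three structural facts of Lemma~\ref{lem: f_prop}: that $q\mapsto D_f(q\|p)$ is non-increasing on $\{q\le p\}$, that $p\mapsto D_f(q\|p)$ is non-decreasing on $\{p\ge q\}$, and that $D_f(\cdot\|\cdot)$ is jointly convex. For (a), observe $D_f(p\|p)=pf(1)+(1-p)f(1)=0\le\rho$, so $q=p$ is feasible and hence $g(p)=\inf\{q: D_f(q\|p)\le\rho\}\le p$. For (b), continuity of $D_f(\cdot\|p)$ makes $\{q\in[0,1]: D_f(q\|p)\le\rho\}$ closed, so the infimum $g(p)$ is attained; moreover this set is a sublevel set of the convex function $q\mapsto D_f(q\|p)$, hence an interval, with $g(p)$ its left endpoint. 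If $g(p)>0$ and $D_f(g(p)\|p)<\rho$, then by the non-increasingness on $[0,p]$ (Lemma~\ref{lem: f_prop}(a), valid since $g(p)\le p$) a slightly smaller $q$ is still feasible, contradicting minimality; so $D_f(g(p)\|p)=\rho$. (I will record the standing hypothesis $\rho<D_f(0\|p)$ — the regime of the lower-bound construction, where $p$ is taken near $1$ — so that $g(p)>0$; otherwise $g(p)=0$ and the identity degenerates to an inequality.)

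For (c), fix $p<p+\delta<1$. By (a) we have $g(p)\le p<p+\delta$. For any $q<g(p)$, the chain $q<p<p+\delta$ lets us apply Lemma~\ref{lem: f_prop}(b) to get $D_f(q\|p+\delta)\ge D_f(q\|p)$; and $D_f(q\|p)>\rho$ because $q$ lies strictly to the left of the feasible interval $\{q: D_f(q\|p)\le\rho\}$. Hence every $q<g(p)$ is infeasible at $p+\delta$, which forces $g(p+\delta)\ge g(p)$ (the case $g(p)=0$ being trivial).

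For (d), set $S:=\{(q,p): D_f(q\|p)\le\rho\}$, which is convex by Lemma~\ref{lem: f_prop}(c). Given $p_1,p_2\in(0,1)$, $\lambda\in[0,1]$, and feasible points $(q_i,p_i)\in S$, convexity of $S$ gives $\bigl(\lambda q_1+(1-\lambda)q_2,\ \lambda p_1+(1-\lambda)p_2\bigr)\in S$, whence $g(\lambda p_1+(1-\lambda)p_2)\le\lambda q_1+(1-\lambda)q_2$; letting $q_i\to g(p_i)$ yields $g(\lambda p_1+(1-\lambda)p_2)\le\lambda g(p_1)+(1-\lambda)g(p_2)$, i.e.\ $g$ is convex on $(0,1)$, and a convex function on an open interval is differentiable off an at most countable set, giving "differentiable a.s." For (e), assume in addition that $f$ (hence $D_f$) is $C^1$ near the relevant point. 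At a $p$ where $g$ is differentiable, (b) gives the identity $D_f(g(p)\|p)\equiv\rho$ locally; since $g$ is locally Lipschitz (being convex) and $D_f$ is $C^1$, the chain rule applies and $\partial_q D_f(g(p)\|p)\,g'(p)+\partial_p D_f(g(p)\|p)=0$. Because $g(p)<p$ and $q\mapsto D_f(q\|p)$ is \emph{strictly} decreasing there (strict convexity of $f$ at $1$, so $q=p$ is the unique minimiser), $\partial_q D_f(g(p)\|p)\neq0$, and solving gives $g'(p)=-\partial_p D_f(g(p)\|p)\big/\partial_q D_f(g(p)\|p)$.

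The main obstacle I anticipate is the bookkeeping around activeness of the constraint in (b): one must either impose $\rho<D_f(0\|p)$ (which does hold in the $2$-state construction of Theorem~\ref{thm: lb} with $p$ close to $1$) or carry the boundary case $g(p)=0$ separately through (c)–(e); the monotonicity and convexity arguments survive this, but (e)'s formula only makes sense where the constraint is active. A secondary point is that the $C^1$ hypothesis needed in (e) is not part of Lemma~\ref{lem: f_prop}; it holds for $f(t)=(t-1)^2$ and $f(t)=t\log t$, while for $f(t)=|t-1|$ one replaces $g'$ by a one-sided derivative, which is all that is used downstream.
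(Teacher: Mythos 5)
Your proof is correct and follows essentially the same route as the paper's: feasibility of $q=p$ for (a), a continuity/contradiction argument for (b), monotonicity of $D_f$ in its second argument for (c), joint convexity for (d), and implicit differentiation of $D_f(g(p)\|p)=\rho$ for (e); your extra care about the inactive-constraint case $\rho\ge D_f(0\|p)$ and the regularity caveats in (e) only make explicit what the paper leaves implicit. One small correction in (b): it is continuity of $q\mapsto D_f(q\|p)$ (which you already invoke for closedness), not the non-increasingness from Lemma~\ref{lem: f_prop}(a), that makes a slightly smaller $q$ still feasible when $D_f(g(p)\|p)<\rho$ --- monotonicity pushes the divergence \emph{up} as $q$ decreases, exactly as in the paper's chain $D_f(g(p)\|p)\le D_f(g(p)-\delta\|p)<\rho$, so the two facts play opposite roles from the ones you assigned them.
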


\begin{proof}
    (a) By definition of $g(p)$, (a) is trivial. 
    
    (b) If $D_f(g(p)\|p)<\rho$, by continuous of $f$, there exists $\delta>0$ s.t. $D_f(g(p)\|p)\le D_f(g(p)-\delta\|p)<\rho$, which leads to a contradiction.

    (c) By Lemma~\ref{lem: f_prop} and (b), we have:
    \begin{align*}
        D_f(g(p+\delta)\|p)\le D_f(g(p+\delta)\|p+\delta)=D_f(g(p)\|p).
    \end{align*}
    Thus, we conclude $g(p+\delta)\ge g(p)$.

    (d) For any $\lambda\in[0,1]$ and $p_1,p_2\in(0,1)$, by joint convexity of $D_f$, we have:
    \begin{align*}
        D_f((1-\lambda)g(p_1)+\lambda g(p_2)\|(1-\lambda)p_1+\lambda p_2)\le (1-\lambda)D_f(g(p_1)\|p_1)+\lambda D_f(g(p_2)\|p_2)=\rho.
    \end{align*}
    Thus, we have $g((1-\lambda)p_1+\lambda p_2)\le(1-\lambda)g(p_1)+\lambda g(p_2)$.

    (e) By (b), we have:
    \begin{align*}
        0=\frac{d D_f(g(p)\|p)}{d p}=g'(p) \partial_q D_f(g(p)\|p)+\partial_p D_f(g(p)\|p).
    \end{align*}
\end{proof}


\begin{thm}
    \label{thm: simple_lb}
    By choosing $\delta=\frac{2\varepsilon(1-\gamma g(p))^2}{\gamma g'(p)}$, for every RL algorithm $\mathscr{A}$, there exist a simple MDP (Fig.~\ref{fig: simplemdp}) and constants $c_1, c_2>0$, such that:
    \begin{align*}
        \PB\left(|V_r(z_0, p_i)-V_r^{\mathscr{A}}(z_0)|>\varepsilon|p_i\right)>\frac{1}{c_2}\exp\left(-\frac{c_1\varepsilon^2(1-\gamma g(p))^4 n}{(g'(p))^2p(1-p)}\right),
    \end{align*}
    where $p_i\in\{p, p+\delta\}$.
\end{thm}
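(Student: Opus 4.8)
The plan is to run a two-point (Le Cam) testing argument between the simple MDP of Figure~\ref{fig: simplemdp} with self-transition probability $p$ and the same MDP with self-transition probability $p+\delta$, for the $\delta$ specified in the statement. Write $\mathbb{P}_p$ and $\mathbb{P}_{p+\delta}$ for the laws of the data returned by the generative model; since the transition out of $z_1$ is deterministic, this data is just $n$ i.i.d.\ $\mathrm{Bernoulli}$ draws recording whether each call from $z_0$ returns to $z_0$, so the two laws are product Bernoulli measures differing only in the parameter. Using $V_r(z_0,p)=\frac{1}{1-\gamma g(p)}$ together with Lemma~\ref{lem: g_prop} (which gives that $g$ is convex and non-decreasing on $(0,1)$, hence a.e.\ differentiable with $g'\ge 0$), I would first establish a value gap: writing $V_r(z_0,\cdot)$ as an integral of its derivative,
\[
  V_r(z_0,p+\delta)-V_r(z_0,p)=\int_p^{p+\delta}\frac{\gamma g'(t)}{(1-\gamma g(t))^2}\,dt\ \ge\ \delta\cdot\frac{\gamma g'(p)}{(1-\gamma g(p))^2},
\]
because $g'(t)\ge g'(p)$ by convexity and $1-\gamma g(t)\le 1-\gamma g(p)$ by monotonicity on $t\in[p,p+\delta]$. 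With $\delta=\frac{2\varepsilon(1-\gamma g(p))^2}{\gamma g'(p)}$ this makes the two true values differ by at least $2\varepsilon$.

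Next I would convert any accurate estimator into a test. Given an algorithm $\mathscr{A}$ producing $V_r^{\mathscr{A}}(z_0)$, set $\psi=\mathbf{1}\{V_r^{\mathscr{A}}(z_0)>\tfrac12(V_r(z_0,p)+V_r(z_0,p+\delta))\}$; since the two true values are $2\varepsilon$-separated, $\{|V_r^{\mathscr{A}}(z_0)-V_r(z_0,p)|\le\varepsilon\}\subseteq\{\psi=0\}$ and $\{|V_r^{\mathscr{A}}(z_0)-V_r(z_0,p+\delta)|\le\varepsilon\}\subseteq\{\psi=1\}$, so
\[
  \mathbb{P}_p\!\left(|V_r^{\mathscr{A}}(z_0)-V_r(z_0,p)|>\varepsilon\right)+\mathbb{P}_{p+\delta}\!\left(|V_r^{\mathscr{A}}(z_0)-V_r(z_0,p+\delta)|>\varepsilon\right)\ \ge\ \mathbb{P}_p(\psi=1)+\mathbb{P}_{p+\delta}(\psi=0).
\]
By the Bretagnolle--Huber inequality the right-hand side is at least $\tfrac12\exp(-D_{\mathrm{KL}}(\mathbb{P}_p\|\mathbb{P}_{p+\delta}))$, and by tensorization and the standard Bernoulli bound $D_{\mathrm{KL}}(\mathbb{P}_p\|\mathbb{P}_{p+\delta})=n\,d_{\mathrm{KL}}(\mathrm{Bernoulli}(p)\|\mathrm{Bernoulli}(p+\delta))\le \frac{n\delta^2}{(p+\delta)(1-p-\delta)}\le \frac{c\,n\delta^2}{p(1-p)}$ for a universal constant $c$, valid once $\varepsilon$ (hence $\delta$) is small enough that $p+\delta$ stays bounded away from $1$. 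Substituting the value of $\delta$ and absorbing the $\gamma$-dependence (e.g.\ for $\gamma>1/2$, the regime of the corollaries) gives $D_{\mathrm{KL}}(\mathbb{P}_p\|\mathbb{P}_{p+\delta})\le \frac{c_1\varepsilon^2(1-\gamma g(p))^4 n}{(g'(p))^2 p(1-p)}$, so at least one of the two error probabilities is at least $\tfrac14\exp(-D_{\mathrm{KL}})\ge \tfrac1{c_2}\exp\!\big(-\tfrac{c_1\varepsilon^2(1-\gamma g(p))^4 n}{(g'(p))^2 p(1-p)}\big)$; choosing the corresponding instance $p_i\in\{p,p+\delta\}$ as the hard MDP yields the claim.

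The main obstacle I expect is the value-gap lower bound: one must extract enough regularity from Lemma~\ref{lem: g_prop} to justify the integral representation of $V_r(z_0,\cdot)$ and the monotone lower bound on the integrand, given that $g$ is only convex (hence a.e.\ differentiable) rather than $C^1$, so $g'(p)$ should be read as the appropriate one-sided derivative at the base point and the argument should be phrased to avoid null-set issues. A secondary point is the uniform control of the Bernoulli KL divergence, which needs $\delta$ small relative to $\min\{p,1-p\}$ — this is why the statement carries explicit constants and an implicit smallness-of-$\varepsilon$ regime; the corollaries then only need the closed forms of $g(p)$ for $f(t)=|t-1|$ and $f(t)=(t-1)^2$ to read off the exponents.
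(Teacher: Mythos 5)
Your proposal is correct and follows essentially the same route as the paper: separate the two robust values by $2\varepsilon$ using the convexity/monotonicity of $g$ from Lemma~\ref{lem: g_prop}, then reduce to distinguishing $\mathrm{Bernoulli}(p)$ from $\mathrm{Bernoulli}(p+\delta)$ with $n$ samples, which the paper handles by citing Lemma 17 of \citet{azar2013minimax} while you prove it directly via a Le Cam two-point test with the Bretagnolle--Huber inequality and the $\chi^2$ bound on the Bernoulli KL. The only stylistic difference is in the value-gap step, where your integral representation of $V_r(z_0,\cdot)$ (with the attendant absolute-continuity care) can be replaced by the one-line supporting-hyperplane inequality $g(p+\delta)\ge g(p)+\delta g'(p)$ together with $1-\gamma g(p+\delta)\le 1-\gamma g(p)$, exactly as in the paper.
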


\begin{proof}
    By Lemma~\ref{lem: g_prop}, we have:
    \begin{align*}
        \frac{1}{1-\gamma g(p+\delta)}-\frac{1}{1-\gamma g(p)}=\frac{\gamma(g(p+\delta)-g(p))}{(1-\gamma g(p+\delta))(1-\gamma g(p))}\ge\frac{\gamma\delta g'(p)}{(1-\gamma g(p))^2}=2\varepsilon.
    \end{align*}
    Thus, by Lemma 17 in \citet{azar2013minimax}, our results are concluded.
\end{proof}

Thus, with given probability $\delta$ in Theorem~\ref{thm: simple_lb}, the sample complexity is:
\begin{align*}
    n=\frac{(g'(p))^2 p(1-p)}{c_1\varepsilon^2(1-\gamma g(p))^4}\log\frac{1}{c_2\delta}.
\end{align*}

Next, we extend the result to general MDPs as Fig.~\ref{fig: gen_mdp} shows, which is the same with \citet{azar2013minimax}. 
\begin{figure}[htbp!]
    \centering
    \includegraphics[scale=0.8]{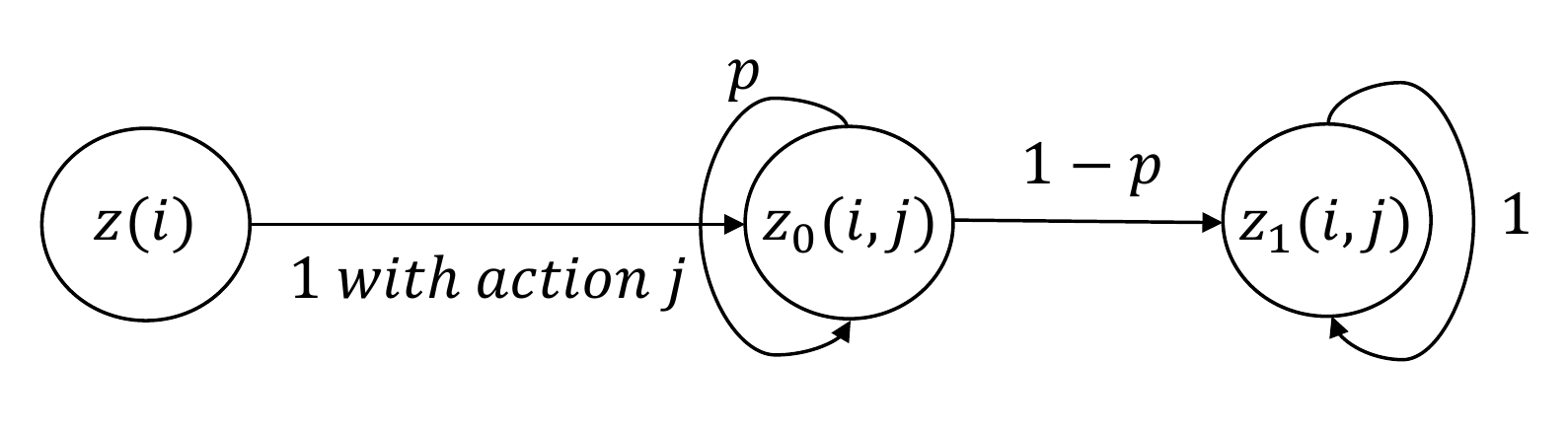}
    \caption{General MDP}
    \label{fig: gen_mdp}
\end{figure}

For each $z(i)$, $i\in[|\SM|]$, $z(i)$ can be transited to $z_0(i,j)$ with action $j\in[|\AM|]$. Thus the total state-action space is $3|\SM||\AM|$. By Lemma 18 in \citet{azar2013minimax}, we obtain the final sample complexity lower bound:
\begin{align*}
    \widetilde{\Omega}\left(\frac{|\SM||\AM|(g'(p))^2 p(1-p)}{\varepsilon^2(1-\gamma g(p))^4}\right).
\end{align*}

\begin{cor}[Lower bound of $L_1$]
    By choosing $f(t)=|t-1|$, the lower bound is:
    \begin{align*}
        \Omega\left(\frac{|\SM||\AM|(1-\gamma)}{\varepsilon^2}\min\left\{\frac{1}{(1-\gamma)^4}, \frac{1}{\rho^4}\right\}\right).
    \end{align*}
\end{cor}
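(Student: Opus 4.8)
The plan is to specialize the general sample-complexity lower bound $\widetilde{\Omega}\!\left(\frac{|\SM||\AM|(g'(p))^2 p(1-p)}{\varepsilon^2(1-\gamma g(p))^4}\right)$ established in Theorem~\ref{thm: lb} (via Theorem~\ref{thm: simple_lb}) to the choice $f(t)=|t-1|$, and then to pick the free parameter $p$ so as to maximize this quantity. The first step is a short computation of $g$: for $f(t)=|t-1|$ one has $D_f(q\|p)=p\left|\tfrac{q}{p}-1\right|+(1-p)\left|\tfrac{1-q}{1-p}-1\right|=2|q-p|$, so the feasible set $\{q\in[0,1]:D_f(q\|p)\le\rho\}$ is the interval $\bigl[(p-\rho/2)_+,\;\min(1,p+\rho/2)\bigr]$ and therefore $g(p)=\inf_{D_f(q\|p)\le\rho}q=(p-\rho/2)_+$. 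Working in the regime $\gamma$ close to $1$ (so $p\approx 1$) and $\rho<2$, the choice $p=\frac{2\gamma-1}{\gamma}$ from Corollary~\ref{cor: lb_l1} satisfies $p>\rho/2$; hence near this point $g$ is the smooth function $p\mapsto p-\rho/2$ with $g(p)>0$ and $g'(p)=1$, which also takes care of the ``a.s.\ differentiable'' caveat in Lemma~\ref{lem: g_prop}.

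The second step is to substitute and simplify. With $p=\frac{2\gamma-1}{\gamma}$ one gets $1-p=\frac{1-\gamma}{\gamma}$ and $1-\gamma g(p)=1-\gamma p+\frac{\gamma\rho}{2}=1-(2\gamma-1)+\frac{\gamma\rho}{2}=2(1-\gamma)+\frac{\gamma\rho}{2}$, so, absorbing universal constants and using $p\asymp 1$, the expression inside $\widetilde{\Omega}(\cdot)$ becomes of order $\frac{|\SM||\AM|}{\varepsilon^2}\cdot\frac{1-\gamma}{\bigl((1-\gamma)+\rho\bigr)^4}$. I would then split into the two cases $\rho\le 1-\gamma$ and $\rho>1-\gamma$: in the first $\bigl((1-\gamma)+\rho\bigr)^4\asymp(1-\gamma)^4$, in the second $\bigl((1-\gamma)+\rho\bigr)^4\asymp\rho^4$, and combining, $\frac{1-\gamma}{((1-\gamma)+\rho)^4}\asymp(1-\gamma)\min\{(1-\gamma)^{-4},\rho^{-4}\}$, which is exactly the claimed bound.

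The place where I expect the only (mild) friction is checking admissibility of this $p$ in Theorem~\ref{thm: simple_lb}: one needs $p\in(0,1)$, which holds for $\gamma>1/2$; one needs $g(p)>0$, i.e.\ $\rho/2<2-1/\gamma$, which holds once $\rho<2$ and $\gamma$ is close enough to $1$; and one needs $p+\delta<1$ for $\delta=\frac{2\varepsilon(1-\gamma g(p))^2}{\gamma g'(p)}$, which is the customary small-$\varepsilon$ requirement $\varepsilon\lesssim(1-\gamma)(1-\gamma g(p))^{-2}$ guaranteeing $\delta<1-p=\frac{1-\gamma}{\gamma}$. I would record these conditions explicitly and observe that outside this regime the asserted bound is vacuous. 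Everything else is routine algebra with the constants.
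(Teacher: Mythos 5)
Your proposal is correct and follows essentially the same route as the paper's own proof: compute $D_f(q\|p)=2|q-p|$ so that $g(p)=p-\rho/2$ and $g'(p)=1$, substitute $p=\frac{2\gamma-1}{\gamma}$ into the bound of Theorem~\ref{thm: lb}, and simplify $\frac{1-\gamma}{((1-\gamma)+\rho)^4}$ into the stated minimum. Your additional bookkeeping (the positive part in $g(p)$, the admissibility conditions on $p$, $\delta$, and $\varepsilon$) only makes explicit what the paper leaves implicit in restricting to $\gamma\in(\tfrac34,1)$.
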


\begin{proof}
    If $f(t)=|t-1|$, we know that $D_f(q\|p)=2|q-p|$. Thus,
    \begin{align*}
        &g(p)=\inf_{D_f(q\|p)\le\rho}q=p-\frac{\rho}{2},\\
        &g'(p)=1.
    \end{align*}
    By taking $p=\frac{2\gamma-1}{\gamma}$, where $\gamma\in(\frac{3}{4},1)$, we have:
    \begin{align*}
        \frac{(g'(p))^2 p(1-p)}{c_1\varepsilon^2(1-\gamma g(p))^4}\log\frac{1}{c_2\delta}\ge\Omega\left(\frac{1-\gamma}{\varepsilon^2}\min\left\{\frac{1}{(1-\gamma)^4}, \frac{1}{\rho^4}\right\}\right).
    \end{align*}
\end{proof}

\begin{cor}[Lower bound of $\chi^2$]
    By choosing $f(t)=(t-1)^2$, the lower bound is:
    \begin{align*}
        \Omega\left(\frac{|\SM||\AM|}{\varepsilon^2(1-\gamma)^2}\min\left\{\frac{1}{1-\gamma}, \frac{1}{\rho}\right\}\right).
    \end{align*}
\end{cor}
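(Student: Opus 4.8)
The plan is to specialize the general lower bound of Theorem~\ref{thm: lb} to $f(t)=(t-1)^2$ and to the parameter choice $p=\frac{2\gamma-1}{\gamma}$ (legitimate for $\gamma\in(3/4,1)$, so that $p\in(2/3,1)$), which reduces everything to a handful of explicit computations of $g(p)=\inf_{D_f(q\|p)\le\rho}q$ and $g'(p)$.

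First I would put $D_f$ in closed form. For $f(t)=(t-1)^2$ one has $p\,f(q/p)=\frac{(q-p)^2}{p}$ and $(1-p)\,f\!\left(\frac{1-q}{1-p}\right)=\frac{(p-q)^2}{1-p}$, hence $D_f(q\|p)=\frac{(q-p)^2}{p(1-p)}$. Thus the constraint $D_f(q\|p)\le\rho$ is exactly $|q-p|\le\sqrt{\rho p(1-p)}$, so
\[
g(p)=p-\sqrt{\rho p(1-p)},\qquad g'(p)=1+\frac{(2p-1)\sqrt{\rho}}{2\sqrt{p(1-p)}},
\]
whenever the first expression is nonnegative; if $\rho$ is so large that $g(p)=0$, the robust value is a constant and the bound is vacuous, so that regime may be discarded.

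Next I would substitute $p=\frac{2\gamma-1}{\gamma}$ and record the elementary identities $1-p=\frac{1-\gamma}{\gamma}$, $1-\gamma p=2(1-\gamma)$, and $p(1-p)=\frac{(2\gamma-1)(1-\gamma)}{\gamma^2}$. Since $\gamma$ lies in the fixed interval $(3/4,1)$, the quantities $p$, $2p-1$, $\gamma$ are all bounded above and below by absolute constants and $p(1-p)\asymp 1-\gamma$. It follows that $1-\gamma g(p)=2(1-\gamma)+\gamma\sqrt{\rho p(1-p)}\asymp(1-\gamma)+\sqrt{\rho(1-\gamma)}=\sqrt{1-\gamma}\bigl(\sqrt{1-\gamma}+\sqrt{\rho}\bigr)$ and $g'(p)\asymp 1+\sqrt{\rho/(1-\gamma)}=\frac{\sqrt{1-\gamma}+\sqrt{\rho}}{\sqrt{1-\gamma}}$. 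Using $(\sqrt a+\sqrt b)^2\asymp a+b$, this gives $(1-\gamma g(p))^4\asymp(1-\gamma)^2(1-\gamma+\rho)^2$ and $(g'(p))^2 p(1-p)\asymp 1-\gamma+\rho$, so that
\[
\frac{(g'(p))^2\,p(1-p)}{(1-\gamma g(p))^4}\asymp\frac{1-\gamma+\rho}{(1-\gamma)^2(1-\gamma+\rho)^2}=\frac{1}{(1-\gamma)^2(1-\gamma+\rho)}.
\]
Since $1-\gamma+\rho\asymp\max\{1-\gamma,\rho\}$, one has $\frac{1}{1-\gamma+\rho}\asymp\min\{\frac{1}{1-\gamma},\frac{1}{\rho}\}$; substituting into Theorem~\ref{thm: lb} and absorbing the $\log(1/\delta)$ factor into $\widetilde{\Omega}(\cdot)$ yields exactly $\widetilde{\Omega}\bigl(\frac{|\SM||\AM|}{\varepsilon^2(1-\gamma)^2}\min\{\frac{1}{1-\gamma},\frac{1}{\rho}\}\bigr)$.

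The only delicate point, just as in the $L_1$ case, is the choice of the free parameter $p$: one needs $p$ close to $1$ so that $1-\gamma p=\Theta(1-\gamma)$ and $p(1-p)=\Theta(1-\gamma)$, yet bounded away from $1$ by an absolute constant and with $g(p)\ge0$, so that $g'(p)$ and all the implicit $\asymp$ constants are uniform in $\gamma$; the choice $p=\frac{2\gamma-1}{\gamma}$ on $\gamma\in(3/4,1)$ (the same one used for the $L_1$ ball) achieves precisely this, so I would adopt it verbatim. Everything else is routine bookkeeping.
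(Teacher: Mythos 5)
Your proposal is correct and follows essentially the same route as the paper: compute $D_f(q\|p)=\frac{(q-p)^2}{p(1-p)}$ for $f(t)=(t-1)^2$, obtain $g(p)=p-\sqrt{\rho p(1-p)}$ and $g'(p)=1+\frac{\sqrt{\rho}(2p-1)}{2\sqrt{p(1-p)}}$, set $p=\frac{2\gamma-1}{\gamma}$ with $\gamma\in(3/4,1)$, and simplify the ratio $(g'(p))^2p(1-p)/(1-\gamma g(p))^4$ in Theorem~\ref{thm: lb} using $1-\gamma+\rho\asymp\max\{1-\gamma,\rho\}$. Your treatment is in fact slightly more careful than the paper's (explicit identities for $1-\gamma p$ and $p(1-p)$, and the remark about the degenerate regime where $g(p)$ would clamp at $0$), but it is the same argument.
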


\begin{proof}
    If $f(t)=(t-1)^2$, we know that $D_f(q\|p)=\frac{(q-p)^2}{p(1-p)}$. Thus,
    \begin{align*}
        &g(p)=\inf_{D_f(q\|p)\le\rho}q=p-\sqrt{\rho p(1-p)},\\
        &g'(p)=1+\frac{\sqrt{\rho}(2p-1)}{2\sqrt{p(1-p)}}.
    \end{align*}
    By taking $p=\frac{2\gamma-1}{\gamma}$, where $\gamma\in(\frac{3}{4},1)$, we have:
    \begin{align*}
        \frac{(g'(p))^2 p(1-p)}{c_1\varepsilon^2(1-\gamma g(p))^4}\log\frac{1}{c_2\delta}\ge\Omega\left(\frac{1}{\varepsilon^2(1-\gamma)^2}\min\left\{\frac{1}{1-\gamma}, \frac{1}{\rho}\right\}\right).
    \end{align*}
\end{proof}


\section{Non-asymptotic results with an offline dataset}
\label{apd: res-off}
Different from Section~\ref{sec: gen}, we assume there is only an access to an given offline dataset $\DM=\{(s_k,a_k,s_k', r_k)\}_{k=1}^n$, where the reward function is deterministic as Remark~\ref{rem: deter-r} states. The offline dataset is generated by the following rule:
\begin{align}
    (s_k, a_k)\sim \nu(s,a), \hspace{4pt}s_k'\sim P^*(\cdot|s_k, a_k).
\end{align}
Here $\nu$ is a probability measure on space $\SM\times\AM$. We also assume $\nu_{\min}=\min_{s,a}\nu(s,a)>0$ and discuss with this assumption in Remark~\ref{rem: concen}. Next, we  construct an estimated transition probability from the dataset $\DM$:
\begin{align}
    \widehat{P}(s'|s,a)=\frac{\sum_{k=1}^n 1({s_k, a_k, s_k'}=(s, a, s'))}{\sum_{k=1}^n 1({s_k, a_k}=(s, a))}:=\frac{n(s,a,s')}{n(s,a)}.
\end{align}

Lemma~\ref{lem: off-dis} tells us the sample size of $n(s,a)$ is large with high probability. Thus, we can extend the results of the generative model to the offline dataset with a little modification in the $(s,a)$-rectangular assumption as follows. We always assume $n\ge\frac{8}{\nu_{\min}}\log\frac{2|\SM||\AM|}{\delta}$ in the following results.

\begin{lem}
    \label{lem: off-dis}
    Denoting $\nu_{\min}=\min_{s,a,\nu(s,a)>0}\nu(s,a)$, when $n\ge\frac{8}{\nu_{\min}}\log\frac{|\SM||\AM|}{\delta}$, we have:
    \begin{align*}
        \PB\left(\forall (s,a)\in\SM\times\AM, \; n(s,a)\ge\frac{n}{2}\nu_{\min}\right)\ge1-\delta.
    \end{align*}
\end{lem}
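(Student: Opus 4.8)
The plan is a textbook multiplicative Chernoff bound followed by a union bound over state-action pairs. First I would fix $(s,a)$ with $\nu(s,a)>0$ and note that $n(s,a)=\sum_{k=1}^n \mathbf{1}\{(s_k,a_k)=(s,a)\}$ is a sum of $n$ i.i.d.\ Bernoulli random variables, each with success probability $\nu(s,a)\ge\nu_{\min}$, so $\EB[n(s,a)]=n\nu(s,a)\ge n\nu_{\min}$. (Pairs outside the support of $\nu$ never appear in $\DM$ and are excluded, consistent with the definition $\nu_{\min}=\min_{s,a,\nu(s,a)>0}\nu(s,a)$.)

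Next I would invoke the multiplicative Chernoff bound for the lower tail: for any $\theta\in(0,1)$,
\[
\PB\left(n(s,a)\le(1-\theta)\EB[n(s,a)]\right)\le\exp\left(-\frac{\theta^2\,\EB[n(s,a)]}{2}\right).
\]
Taking $\theta=1/2$ and using $\tfrac12\EB[n(s,a)]\ge\tfrac{n\nu_{\min}}{2}$ together with $\EB[n(s,a)]\ge n\nu_{\min}$ on the right-hand side yields
\[
\PB\left(n(s,a)\le\frac{n\nu_{\min}}{2}\right)\le\PB\left(n(s,a)\le\frac{1}{2}\EB[n(s,a)]\right)\le\exp\left(-\frac{n\nu_{\min}}{8}\right).
\]
A union bound over the $|\SM||\AM|$ pairs then gives
\[
\PB\left(\exists\,(s,a):\ n(s,a)<\frac{n\nu_{\min}}{2}\right)\le|\SM||\AM|\exp\left(-\frac{n\nu_{\min}}{8}\right),
\]
and the hypothesis $n\ge\frac{8}{\nu_{\min}}\log\frac{|\SM||\AM|}{\delta}$ forces the right-hand side to be at most $\delta$, which is exactly the claim.

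There is no real obstacle here; the only point worth being careful about is the choice of concentration tool. An additive Hoeffding bound on $n(s,a)$ would give $\exp(-n\nu_{\min}^2/2)$ after setting the deviation to $n\nu_{\min}/2$, which would require a sample size scaling like $\nu_{\min}^{-2}$ rather than the stated $\nu_{\min}^{-1}$; using the \emph{multiplicative} Chernoff bound is what produces the constant $8$ and matches the hypothesis on $n$. Everything else — the Bernoulli identification, the union bound, and the final algebra — is routine.
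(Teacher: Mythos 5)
Your proof is correct and follows essentially the same route as the paper's: a multiplicative Chernoff lower-tail bound on each $n(s,a)$ (the paper writes it in the form $(2/e)^{n\nu(s,a)/2}\le e^{-n\nu(s,a)/8}$, you use the $\exp(-\theta^2\mu/2)$ form with $\theta=1/2$, arriving at the same $e^{-n\nu_{\min}/8}$), followed by a union bound over the $|\SM||\AM|$ pairs and the same algebra on the sample-size condition. Your side remark that additive Hoeffding would only give a $\nu_{\min}^{-2}$ rate is accurate and explains why the multiplicative bound is the right tool here.
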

\begin{rem}
    \label{rem: concen}
    The assumption $\nu_{\min}>0$ is similar with concentrability assumption in \citet{munos2003error} and \citet{chen2019information}, which stands for that the all admissible distributions can be dominated by $\nu$. In fact, $1/\nu_{\min}$ is an upper bound of the concentrability coefficient in \citet{chen2019information}.
\end{rem}

\subsection{Results under the $(s,a)$-rectangular assumption}

Under the $(s,a)$-rectangular assumption, we can fix the randomness coming from $\nu$ and condition on the event $E=\{\forall (s,a),n(s,a)\ge\frac{n}{2}\nu_{\min}\}$. The following results could be obtained by extending the analysis in Section~\ref{sec: gen}.

\begin{thm}
    \label{thm: finite-sa-off}
    Under the offline dataset, $(s,a)$-rectangular assumption, $\nu_{\min}>0$, the following results hold:
    \begin{enumerate}[label=(\alph*), ref=\thethm (\alph*)]
        \item \label{thm: l1-union-off} Setting $f(t)=|t-1|$ in Example~\ref{eg: f-set} ($L_1$ balls), with probability $1-\delta$:
        \begin{align*}
            \max_{\pi}V_r^\pi(\mu)-V_r^{\widehat{\pi}}(\mu)\le\frac{2(2+\rho)\gamma\sqrt{|\SM|}}{\rho(1-\gamma)^2\sqrt{n\nu_{\min}}}\left(2+\sqrt{\log\frac{8|\SM||\AM|^2(1+2(2+\rho)\sqrt{2n})^2}{\delta(2+\rho)}}\right).   
        \end{align*}
        \item \label{thm: chi2-union-off} Setting $f(t)=(t-1)^2$ in Example~\ref{eg: f-set} ($\chi^2$ balls), with probability $1-\delta$:
        \begin{align*}
            \max_{\pi}V_r^\pi(\mu)-V_r^{\widehat{\pi}}(\mu)\le\frac{2C^2(\rho)\gamma\sqrt{|\SM|}}{(C(\rho){-}1)(1{-}\gamma)^2\sqrt{n\nu_{\min}}}\left(4{+}\sqrt{2\log\frac{4|\SM||\AM|^2(1{+}(C(\rho) {+} 3)\sqrt{n})^2}{\delta C^2(\rho)}}\right),
        \end{align*}
        where $C(\rho)=\sqrt{1+\rho}$.
        \item \label{thm: kl-union-off} Setting $f(t)=t\log t$ in Example~\ref{eg: f-set} (KL balls), with probability $1-\delta$:
        \begin{align*}
            \max_{\pi}V_r^\pi(\mu)-V_r^{\widehat{\pi}}(\mu)\le\frac{4\gamma\sqrt{|\SM|}}{\rho(1-\gamma)^2\underline{p}\sqrt{n\nu_{\min}}}\left(1+\sqrt{2\log\frac{2|\SM|^2|\AM|^2(1+\rho\underline{p}\sqrt{n})}{\delta}}\right),
        \end{align*}
        where $\underline{p}=\min_{P^*(s'|s,a)>0}P^*(s'|s,a)$.
    \end{enumerate}
\end{thm}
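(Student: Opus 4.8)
The plan is to reduce Theorem~\ref{thm: finite-sa-off} to the already-established generative-model results (Theorem~\ref{thm: finite-sa}) by conditioning on the high-probability event that every state-action pair has been visited sufficiently often. Concretely, I would first invoke Lemma~\ref{lem: off-dis}: with probability at least $1-\delta/2$ (after adjusting constants), the event $E=\{\forall (s,a),\ n(s,a)\ge \tfrac{n\nu_{\min}}{2}\}$ holds, provided $n\ge \tfrac{8}{\nu_{\min}}\log\frac{2|\SM||\AM|}{\delta}$. The proof of Lemma~\ref{lem: off-dis} itself is a standard multiplicative Chernoff bound for each $n(s,a)=\sum_k \mathbf{1}\{(s_k,a_k)=(s,a)\}$ (a Binomial$(n,\nu(s,a))$ variable), followed by a union bound over the $|\SM||\AM|$ pairs.

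Next, I would condition on $E$ and observe that, given the counts $\{n(s,a)\}$, the estimator $\widehat P(\cdot|s,a)$ is exactly an empirical average of $n(s,a)$ i.i.d.\ draws from $P^*(\cdot|s,a)$ — structurally identical to the generative-model estimator in Eqn.~\eqref{eq: gen}, just with a data-dependent and pair-dependent sample size. Therefore the entire chain of arguments behind Theorem~\ref{thm: finite-sa} (dual reformulations via Lemma~\ref{lem: f-eq}, the fixed-$(\pi,V)$ concentration Theorems~\ref{thm: l1-fix}, \ref{thm: chi2-fix}, \ref{thm: kl-fix}, and the union bounds over $\VM_\varepsilon$ via Lemma~\ref{lem: eps-v} and over the deterministic policy class via Lemmas~\ref{lem: uni-dev-pi}–\ref{lem: uni-dev-v}) applies verbatim, with every occurrence of $n$ replaced by $\min_{s,a} n(s,a)\ge \tfrac{n\nu_{\min}}{2}$. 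Since the bounds in Theorem~\ref{thm: finite-sa} are monotone decreasing in the sample size, substituting $n\mapsto n\nu_{\min}/2$ yields a valid (larger) bound; this is precisely where the factor $\sqrt{n\nu_{\min}}$ in the denominators of Theorem~\ref{thm: finite-sa-off} comes from, and where the extra $\log 2$ / constant inflation (e.g.\ the $8$ vs.\ $4$, the $\sqrt{2}$ in the KL case) appears. A final union bound splitting the failure probability $\delta$ between the event $E$ and the conditional concentration statement closes the argument.

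The one genuine subtlety — and the step I expect to need the most care — is that the $X_k^{s,a}$ used inside the generative-model concentration proofs are i.i.d.\ draws from $P^*(\cdot|s,a)$, whereas in the offline setting the samples that land in cell $(s,a)$ are i.i.d.\ only \emph{conditionally} on which indices $k$ satisfy $(s_k,a_k)=(s,a)$, and the number of such indices is itself random. The clean way to handle this is to note that $s_k'$ is drawn from $P^*(\cdot|s_k,a_k)$ independently of $(s_k,a_k)$, so conditioning on the sigma-algebra generated by $\{(s_k,a_k)\}_{k=1}^n$ (equivalently, on $E$ and on the exact counts) makes $\widehat P(\cdot|s,a)$ an empirical distribution of $n(s,a)$ genuine i.i.d.\ samples; all the Hoeffding-type bounds then go through conditionally, and since the resulting right-hand sides depend on the counts only through the lower bound $n(s,a)\ge n\nu_{\min}/2$ guaranteed by $E$, the conditional bound is in fact deterministic given $E$ and can be removed from under the conditioning. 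Apart from this bookkeeping, everything is a mechanical rerun of Section~3's proofs with the sample-size substitution, together with Remark~\ref{rem: deter-r} to dispose of the (here deterministic) reward.
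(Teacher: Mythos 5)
Your proposal is correct and follows essentially the same route as the paper's own proof: condition on the sigma-algebra of the state-action draws, invoke Lemma~\ref{lem: off-dis} to get the event $E=\{\forall(s,a),\,n(s,a)\ge n\nu_{\min}/2\}$, rerun the fixed-$(\pi,V)$ concentration bounds of Theorems~\ref{thm: l1-fix}, \ref{thm: chi2-fix}, \ref{thm: kl-fix} with $n$ replaced by $n(s,a)$, use monotonicity of the bounds in the sample size, take the same union bounds over $\VM_\varepsilon$ and the deterministic policy class, and finish via $\PB(A)\le\PB(A\cap E)+\PB(E^c)$. The conditional-i.i.d.\ subtlety you flag is exactly the point the paper handles by conditioning on $\sigma(X)$, so no gap remains.
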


Different from the results in the setting of the generative model, to achieve $\varepsilon$ performance gap, the sample complexity should be $\widetilde{O}\left(\frac{|\SM|}{\nu_{\min}\varepsilon^2\rho^2(1-\gamma)^4}\right)$, up to some logarithmic factors. When the dataset is balanced by $\nu(s,a)\approx\frac{1}{|\SM||\AM|}$ for all $(s,a)$ pairs, the sample complexity matches with the setting of the generative model.

\subsection{Results under the $s$-rectangular assumption}

However, the situation becomes complicated when the $s$-rectangular is assumed. In Section~\ref{sec: gen}, we assume the dataset is generated by a generative model, where the $n(s,a)$ are all the same. When we face with the setting of the offline dataset, the $n(s,a)$ are not the same. Inspired by \citet{yin2020near}, we construct a new dataset $\DM'\subset\DM$ (modified offline dataset), where we only aggregate the first $n'$ samples  for each $(s,a)$ pair in $\DM$, and $n':=n'(s,a)=\min_{s,a}n(s,a)$ for any $(s,a)\in\SM\times\AM$. From Lemma~\ref{lem: off-dis}, we know that $n'$ is large with high probability. Thus, we can extend our results in Section~\ref{sec: gen} in this modified offline dataset setting as follows. Prior to that, we also detail the reason why we construct a modified dataset under the $s$-rectangular assumption in Remark~\ref{rem: why-modify}.

\begin{rem}
    \label{rem: why-modify}
    Here we give a high-level explanation on why we construct a new offline dataset $\DM'$. Here we assume $s$ is a fixed state. In the generative model setting, where the $n(s,a)$ are all the same, assuming the $Y_k^{s,a}$ are i.i.d.\ bounded random variables for each $(s, a)$ pair, we can always deal with the concentrabtion bound of $f(\sum_{a} \frac{1}{n(s,a)}\sum_{k=1}^{n(s,a)}Y_k^{s,a})$ by constructing a new sum of i.i.d.\ random variables $\frac{1}{n}\sum_{k=1}^n Z_k^s$, where $f$ is a Lipschitz function, $n=n(s,a)$ and $Z_k^s=\sum_{a} Y_k^{s,a}$. However, in the offline dataset and assuming randomness caused by $\nu$ is fixed, $n(s,a)$ can vary a lot and it is difficult to deal with the concentration bound of $f(\sum_{a} \frac{1}{n(s,a)}\sum_{k=1}^{n(s,a)}Y_k^{s,a})$. If we bound each $\frac{1}{n(s,a)}\sum_{k=1}^{n(s,a)}Y_k^{s,a}$ for $a\in\AM$ separately, the statistical error would be amplified by a factor of $|\AM|$ when $f$ is a linear function. When $f$ is a non-linear function, the situation becomes much more complicated. Instead, we do not need to worry about this problem under the $(s,a)$-rectangular assumption, as we only need to deal with $f(\frac{1}{n_{s,a}}\sum_{k=1}^{n(s,a)}Y_k^{s,a})$ for each $(s,a)$ pair.
\end{rem}

\begin{thm}
    \label{thm: finite-s-off}
    Under the offline dataset, $s$-rectangular assumption, $\nu_{\min}>0$, the following results hold:
    \begin{enumerate}[label=(\alph*), ref=\thethm (\alph*)]
        \item \label{thm: l1-union-off-s} Setting $f(t)=|t-1|$ in Example~\ref{eg: f-set} ($L_1$ balls), with probability $1-\delta$:
        \begin{align*}
            \max_\pi V_r^\pi(\mu)-V_r^{\widehat{\pi}}(\mu)\le\frac{2\gamma(2+\rho)\sqrt{|\SM||\AM|}}{\rho(1-\gamma)^2\sqrt{n\nu_{\min}}}\Bigg{(}4+\sqrt{\log\frac{2|\SM|(1+2\sqrt{2n}(\rho+4))^3}{\delta}}\Bigg{)}.
        \end{align*}
        \item \label{thm: chi2-union-off-s} Setting $f(t)=(t-1)^2$ in Example~\ref{eg: f-set} ($\chi^2$ balls), with probability $1-\delta$:
        \begin{align*}
            \max_\pi V_r^\pi(\mu)-V^{\widehat{\pi}}_r(\mu)\le\frac{2\gamma C^2(\rho)\sqrt{|\SM||\AM|^2}}{(C(\rho)-1)(1-\gamma)^2\sqrt{n\nu_{\min}}}\left(6+\sqrt{2\log\frac{2|\SM|(1+8\sqrt{n}C(\rho))^3}{\delta}}\right),
        \end{align*}
        where $C(\rho)=\sqrt{1+\rho}$.
        \item \label{thm: kl-union-off-s} Setting $f(t)=t\log t$ in Example~\ref{eg: f-set} (KL balls), with probability $1-\delta$:
        \begin{align*}
            \max_\pi V_r^\pi(\mu)-V^{\widehat{\pi}}_r(\mu)\le\frac{4\gamma\sqrt{|\SM||\AM|}}{\rho(1-\gamma)^2\underline{p}\sqrt{n\nu_{\min}}}\left(2+\sqrt{2\log\frac{2|\SM|^2|\AM|\left(1+4\rho\underline{p}\sqrt{n}\right)}{\delta}}\right),
        \end{align*}
        where $\underline{p}=\min_{P^*(s'|s,a)>0}P^*(s'|s,a)$.
    \end{enumerate}
\end{thm}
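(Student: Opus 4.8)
The plan is to reduce the offline $s$-rectangular case to the generative-model $s$-rectangular case (Theorem~\ref{thm: finite-s}) by truncating the dataset so that every state--action pair contributes exactly the same number of i.i.d.\ transition samples. First I would invoke Lemma~\ref{lem: off-dis}: since $n\ge\frac{8}{\nu_{\min}}\log\frac{2|\SM||\AM|}{\delta}$, the event $E=\{\forall (s,a),\ n(s,a)\ge\frac{n}{2}\nu_{\min}\}$ holds with probability at least $1-\delta/2$ (after rescaling $\delta$, which is absorbed into constants). I then condition on $E$ and on the visitation counts $\{n(s,a)\}$; given this conditioning, for each $(s,a)$ the recorded successor states are still i.i.d.\ draws from $P^*(\cdot|s,a)$.

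Second, following the construction described in Remark~\ref{rem: why-modify}, I would form the modified dataset $\DM'\subset\DM$ that retains, for each $(s,a)$, only its first $n':=\min_{s,a}n(s,a)$ successor samples, and let $\widehat{P}'$ be the frequency estimator built from $\DM'$. Now $\widehat{P}'$ is precisely a generative-model estimator with per-pair sample size $n'$, so the entire pipeline developed for the generative model under the $s$-rectangular assumption applies verbatim: compute the dual forms of $\TM_r^\pi V$ and $\widehat{\TM}_r^\pi V$ (Lemmas~\ref{lem: l1-s}, \ref{lem: chi2-s}, \ref{lem: kl-s}); bound $\|\TM_r^\pi V-\widehat{\TM}_r^\pi V\|_\infty$ for fixed $\pi,V$ via Hoeffding's inequality (for $L_1$) and the concentration lemmas of \cite{duchi2018learning} (for $\chi^2$ and KL) over the restricted dual-variable ranges, i.e.\ Theorems~\ref{thm: l1-fix-s}, \ref{thm: chi2-s-fix}, \ref{thm: kl-s-fix} with $n$ replaced by $n'$; take a union bound over an $\varepsilon$-net of $\VM$ via Lemma~\ref{lem: eps-v} and over an $\varepsilon$-net of the stochastic policy class $\Pi$ via Lemmas~\ref{lem: eps-pi} and~\ref{lem: num-eps-pi}; and finally convert the uniform Bellman-error bound into a bound on $\max_\pi V_r^\pi(\mu)-V_r^{\widehat{\pi}}(\mu)$ via Lemmas~\ref{lem: uni-dev-pi} and~\ref{lem: uni-dev-v}. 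This reproduces, on the event $E$, the bounds of Theorem~\ref{thm: finite-s} (cf.\ Theorems~\ref{thm: l1-union-s}, \ref{thm: chi2-s-union}, \ref{thm: kl-s-union}) with $n\mapsto n'$ and failure probability at most $\delta/2$.

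Third, I would substitute $n'\ge\frac{n}{2}\nu_{\min}$, which turns every $1/\sqrt{n'}$ into $\sqrt{2}/\sqrt{n\nu_{\min}}$; the extra $\sqrt{2}$ and the fact that the logarithmic arguments satisfy $1+c\sqrt{n'}\le 1+c\sqrt{n}$ are absorbed into the explicit constants stated in Theorem~\ref{thm: finite-s-off}. A final union bound over the failure of $E$ and the failure of the net-level concentration gives the claim with probability $1-\delta$.

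The main obstacle is exactly the issue flagged in Remark~\ref{rem: why-modify}: under the $s$-rectangular assumption the dual objective at each state $s$ couples all actions through a single dual variable (scalar $\lambda$ in the KL case, a vector $\eta\in\RB^{|\AM|}$ in the $L_1$ and $\chi^2$ cases), so the concentration step requires the empirical quantity to be an average of i.i.d.\ blocks $Z_k^s=\sum_a Y_k^{s,a}$ indexed by a single $k$ running over the same count for every $a$. If one instead worked with the raw offline counts, where $n(s,a)$ varies with $a$, one would have to control $f\!\big(\sum_a \tfrac{1}{n(s,a)}\sum_{k=1}^{n(s,a)}Y_k^{s,a}\big)$ directly, which loses a spurious factor of $|\AM|$ even for affine $f$ and becomes genuinely awkward for nonlinear $f$. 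The truncation to $\DM'$ is what restores the common block structure; the points that need care are verifying that discarding samples and conditioning on $\{n(s,a)\}$ preserves conditional independence and keeps $\widehat{P}'$ unbiased, and that the only price paid relative to the generative-model bound is the benign $\nu_{\min}$ factor already appearing in the $(s,a)$-rectangular offline result.
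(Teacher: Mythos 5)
Your proposal matches the paper's own argument: the paper likewise truncates to the modified dataset $\DM'$ with $n'=\min_{s,a}n(s,a)$ (exactly the construction of Remark~\ref{rem: why-modify}), invokes Lemma~\ref{lem: off-dis} to get $n'\ge n\nu_{\min}/2$ on the event $E$, reruns the generative-model $s$-rectangular bounds (Theorems~\ref{thm: l1-fix-s}, \ref{thm: chi2-s-fix}, \ref{thm: kl-s-fix} plus the $\VM$- and $\Pi$-net union bounds) with $n$ replaced by $n'$, and closes via $\PB(A)\le\PB(A\cap E)+\PB(E^c)$ as in the proof of Theorem~\ref{thm: l1-union-off}. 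Up to bookkeeping of constants and the $\delta$-split, your proof is correct and essentially identical to the paper's.
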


Similar with the results in the setting of the generative model, the sample complexity in this part is about $\widetilde{O}\left(\frac{|\SM||\AM|}{\nu_{\min}\varepsilon^2\rho^2(1-\gamma)^4}\right)$ for the uncertainty sets  $L_1$ and KL balls, while it is $\widetilde{O}\left(\frac{|\SM||\AM|^2}{\nu_{\min}\varepsilon^2\rho^2(1-\gamma)^4}\right)$ in the $\chi^2$ ball. Compared with the results in the setting of the generative model, a $|\SM||\AM|$ factor is replaced with $1/\nu_{\min}$.

\section{Proofs of Section~\ref{apd: res-off}}
\begin{proof}[Proof of Lemma~\ref{lem: off-dis}]
	By the Chernoff bound, for fixed $(s,a)\in\SM\times\AM$, we have:
	\begin{align*}
		\PB\left(n(s,a)<\frac{n}{2}\nu(s,a)\right)\le\left(\frac{2}{e}\right)^{\frac{n\nu(s,a)}{2}}\le e^{-\frac{n\nu(s,a)}{8}}\le e^{-\frac{n\nu_{\min}}{8}}.
	\end{align*}
	Thus, by union bound over $\SM\times\AM$, we have:
	\begin{align*}
		\PB\left(\exists(s,a)\in\SM\times\AM, n(s,a)<\frac{n}{2}\nu(s,a)\right)\le|\SM||\AM|e^{-\frac{n\nu_{\min}}{8}}.
	\end{align*}
	Thus, when $n\ge\frac{8}{\nu_{\min}}\log\frac{|\SM||\AM|}{\delta}$, with probability $1-\delta$, for all $(s,a)\in\SM\times\AM$, we have:
	\begin{align*}
		n(s,a)\ge\frac{n}{2}\nu(s,a).
	\end{align*}
\end{proof}

We only give a proof of Theorem~\ref{thm: l1-union-off}. The other results can be obtained by trivial analogies.

\begin{proof}[Proof of Theorem~\ref{thm: l1-union-off}]
	Denote $g(\eta,P)=\sum_{s\in\SM}P(s)(\eta-V(s))_+-\frac{(\eta-\min_s V(s)_+}{2}\rho-\eta$, by Proof of Theorem~\ref{thm: l1-fix}, we have:
	\begin{align*}
		\PB\left(\sup_{\eta\in[0,\frac{2+\delta}{\delta(1-\gamma)}]}\left|g(\eta,\widehat{P}_{s,a})-g(\eta,P_{s,a})\right|>\varepsilon(s,a)\Bigg{|}\sigma(X)\right)\le\frac{\delta}{2|\SM||\AM|},
	\end{align*}
	where $\varepsilon(s,a)=\frac{2+\rho}{\sqrt{2n(s,a)}\rho(1-\gamma)}\left(1+\sqrt{\log\frac{4|\SM||\AM|(1+(4+\rho)\sqrt{2n(s,a)})}{\delta}}\right)$ and $X\sim\nu(s,a)$ are random variables generating $(s,a)$ pairs in $\DM$. $\sigma(X)$ stands for that all randomness by $\nu(s,a)$ is fixed. Denote $\varepsilon=\frac{2+\rho}{\sqrt{n\nu_{\min}}\rho(1-\gamma)}\left(1+\sqrt{\log\frac{4|\SM||\AM|(1+(4+\rho)\sqrt{2n})}{\delta}}\right)$ and $E=\{\forall (s,a), n(s,a)\ge n\nu_{\min}/2\}$. Thus, by union bound over $(s,a)$, we have:
	\begin{align*}
		&\PB\left(\exists(s,a),\sup_{\eta\in[0,\frac{2+\delta}{\delta(1-\gamma)}]}\left|g(\eta,\widehat{P}_{s,a})-g(\eta,P_{s,a})\right|>\varepsilon, E\right)\\
		\le&\sum_{s,a}\PB\left(\sup_{\eta\in[0,\frac{2+\delta}{\delta(1-\gamma)}]}\left|g(\eta,\widehat{P}_{s,a})-g(\eta,P_{s,a})\right|>\varepsilon(s,a)\right)\\
		\le&\frac{\delta}{2}.
	\end{align*}
	Thus, for any fixed $V\in\VM$ and $\pi\in\Pi$, when $n\ge\frac{8}{\nu_{\min}}\log\frac{2|\SM||\AM|}{\delta}$, with probability at most $\delta$, we have events $E$ and:
	\begin{align*}
		\left\|\TM_r^\pi V-\widehat{\TM}_r^\pi V\right\|_\infty\ge\frac{(2+\rho)\gamma}{\sqrt{n\nu_{\min}}\rho(1-\gamma)}\left(1+\sqrt{\log\frac{4|\SM||\AM|(1+(4+\rho)\sqrt{2n})}{\delta}}\right).
	\end{align*}
	At last, we can union bound over $\VM$ and $\Pi$ by similar approach in Theorem~\ref{thm: l1-union} and then combine $\PB(A)\le \PB(A\cap E)+\PB(E^c)$ for any event $A$. Letting $n\ge\frac{8}{\nu_{\min}}\log\frac{2|\SM||\AM|}{\delta}$, with probability $1-\delta$, we have:
	\begin{align*}
		\max_{\pi}V_r^\pi(\mu)-V_r^{\widehat{\pi}}(\mu)\le\frac{2(2+\rho)\gamma\sqrt{|\SM|}}{\sqrt{n\nu_{\min}}\rho(1-\gamma)^2}\left(2+\sqrt{\log\frac{8|\SM||\AM|^2(1+2(2+\rho)\sqrt{2n})^2}{\delta(2+\rho)}}\right).
	\end{align*}
\end{proof}

\begin{proof}[Proof of Theorem~\ref{thm: chi2-union-off},~\ref{thm: kl-union-off},~\ref{thm: l1-union-off-s},~\ref{thm: chi2-union-off-s}, and~\ref{thm: kl-union-off-s}]
	Similar with Proof of Theorem~\ref{thm: l1-union-off}.
\end{proof}

\section{Proofs of Section~\ref{sec: asymp}}
\label{apd: asymp}
\subsection{Proof of Theorem~\ref{thm: l1_sa_normal}}
Firstly, we prove that $\widehat{V}_r^\pi\toas V_r^\pi$. 
\begin{lem}
    \label{lem: l1_as}
  For a fixed policy $\pi$, we have $\widehat{V}_r^\pi\toas V_r^\pi$. Furthermore, if we assume $V_r^\pi(s_1)< \cdots <V_r^\pi(s_{|\SM|})$, with probability 1, there exists a number $N$ such that, for every $n>N$, we have $\widehat{V}_r^\pi(s_1)< \cdots <\widehat{V}_r^\pi(s_{|\SM|})$.
\end{lem}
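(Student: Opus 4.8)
\textbf{Proof proposal for Lemma~\ref{lem: l1_as}.}

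The plan is to prove the almost-sure convergence $\widehat{V}_r^\pi\toas V_r^\pi$ first, and then deduce the ordering statement as an easy consequence of this convergence together with the strict-inequality hypothesis. For the first part, I would start from the fixed-point characterizations $V_r^\pi=\TM_r^\pi V_r^\pi$ and $\widehat{V}_r^\pi=\widehat{\TM}_r^\pi\widehat{V}_r^\pi$, and mimic the argument used in the proof of Lemma~\ref{lem: uni-dev-v}: by the $\gamma$-contraction property of $\widehat{\TM}_r^\pi$ in $\|\cdot\|_\infty$ (which holds since $\widehat{\TM}_r^\pi$ is a robust Bellman operator, just as in the non-robust case), one obtains
\begin{align*}
    \left\|V_r^\pi-\widehat{V}_r^\pi\right\|_\infty\le\frac{1}{1-\gamma}\sup_{V\in\VM}\left\|\TM_r^\pi V-\widehat{\TM}_r^\pi V\right\|_\infty.
\end{align*}
So it suffices to show that the right-hand side tends to $0$ almost surely. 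In the $L_1$ case, by Lemma~\ref{lem: l1} each coordinate of $\TM_r^\pi V-\widehat{\TM}_r^\pi V$ is a supremum over the bounded interval $\eta\in[0,\frac{2+\rho}{\rho(1-\gamma)}]$ of a difference $|g(\eta,\widehat P_{s,a})-g(\eta,P^*_{s,a})|$, where $g$ is Lipschitz in $\eta$ uniformly over $V\in\VM$ and $g(\eta,\cdot)$ depends on $\widehat P_{s,a}$ only through the empirical mean $\frac1n\sum_k\mathbf 1(X_k^{s,a}=s')$. Since $\widehat P(s'|s,a)\toas P^*(s'|s,a)$ by the strong law of large numbers (for each of the finitely many $(s,a,s')$ triples), and since the map from transition probabilities to $\sup_{\eta,V}|g(\eta,\widehat P)-g(\eta,P^*)|$ is continuous (indeed Lipschitz) — uniformly in $V$ because of the common bounded range of $\eta$ — we get $\sup_{V\in\VM}\|\TM_r^\pi V-\widehat{\TM}_r^\pi V\|_\infty\toas 0$ on the full-probability event where all the empirical transition probabilities converge. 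This yields $\widehat{V}_r^\pi\toas V_r^\pi$.

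For the second (ordering) part: on the almost-sure event where $\widehat{V}_r^\pi(\mu)\to V_r^\pi(\mu)$ coordinatewise — equivalently $\widehat{V}_r^\pi(s)\to V_r^\pi(s)$ for every $s\in\SM$, using $\mu$ ranging over point masses, or simply noting that the convergence above is in $\|\cdot\|_\infty$ — set $\Delta:=\min_{1\le i<|\SM|}\big(V_r^\pi(s_{i+1})-V_r^\pi(s_i)\big)>0$, which is strictly positive by hypothesis. By convergence in $\|\cdot\|_\infty$ there is (sample-path dependent) $N$ such that $\|\widehat V_r^\pi-V_r^\pi\|_\infty<\Delta/2$ for all $n>N$; then for every consecutive pair, $\widehat V_r^\pi(s_{i+1})-\widehat V_r^\pi(s_i) > \big(V_r^\pi(s_{i+1})-\Delta/2\big)-\big(V_r^\pi(s_i)+\Delta/2\big) = \big(V_r^\pi(s_{i+1})-V_r^\pi(s_i)\big)-\Delta \ge 0$, in fact $>0$ is not needed but follows since the gap is $\ge\Delta-\Delta=0$; to get strict inequality one just takes $\|\widehat V_r^\pi-V_r^\pi\|_\infty<\Delta/3$, giving a strictly positive gap. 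Hence $\widehat V_r^\pi(s_1)<\cdots<\widehat V_r^\pi(s_{|\SM|})$ for all $n>N$.

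The main obstacle is the first part: making rigorous the claim that the convergence $\sup_{V\in\VM}\|\TM_r^\pi V-\widehat{\TM}_r^\pi V\|_\infty\toas 0$ is uniform over the infinite value set $\VM$. The non-asymptotic results of Section~\ref{sec: gen} already control this supremum with high probability by the $\varepsilon$-net argument, so one clean route is simply to invoke Theorem~\ref{thm: l1-fix} (or its uniform version together with Lemma~\ref{lem: eps-v}): it gives, for each $n$, a bound of order $\widetilde O(n^{-1/2})$ holding with probability $1-\delta_n$; choosing $\delta_n$ summable (e.g.\ $\delta_n=n^{-2}$) and applying Borel–Cantelli turns the high-probability bound into an almost-sure statement, and then $\sup_{V\in\VM}\|\TM_r^\pi V-\widehat{\TM}_r^\pi V\|_\infty\to 0$ a.s., whence $\widehat V_r^\pi\toas V_r^\pi$ via the displayed contraction inequality. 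I would present the argument this way to reuse the machinery already established rather than re-deriving a continuous-mapping / Glivenko–Cantelli statement from scratch; the analogous lemmas for the $\chi^2$ and KL cases follow by the same template using Theorems~\ref{thm: chi2-fix} and~\ref{thm: kl-fix} respectively.
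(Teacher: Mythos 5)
Your proposal is correct and follows essentially the same route as the paper: bound $\|\widehat{V}_r^\pi-V_r^\pi\|_\infty$ via the contraction inequality of Lemma~\ref{lem: uni-dev-v}, convert the uniform non-asymptotic deviation bounds of Section~\ref{sec: gen} into almost-sure convergence by a Borel--Cantelli argument, and then obtain the ordering statement from the strictly positive minimum gap of $V_r^\pi$ (the paper uses $\delta(\pi)/2$ where you use $\Delta/2$ or $\Delta/3$, which is the same argument). Your side remark that a direct SLLN-plus-Lipschitz-continuity argument would also work is fine but unnecessary, since the route you ultimately commit to matches the paper's proof.
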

\begin{proof}[Proof of Lemma~\ref{lem: l1_as}]
    For any $\varepsilon>0$, we denote event $A_n(\varepsilon)=\{\|\widehat{V}_r^\pi-V_r^\pi\|_\infty\ge\varepsilon\}$. Thus, we have:
    \begin{align*}
        \PB\left(A_n(\varepsilon)\right)\le\PB\left(\sup_{V\in\VM}\left\|\widehat{\TM}_r^\pi V - \TM_r^\pi V\right\|_\infty\ge(1-\gamma)\varepsilon\right).
    \end{align*}
    By the proof of Theorem~\ref{thm: l1-union}, we know that the probability of RHS is controlled by $O\left(n^\alpha\exp(-n)\right)$, where $\alpha$ is a positive constant, while we ignore other factors (which are fixed). Thus, we have
    \begin{align*}
        \sum_{n=1}^\infty\PB(A_n(\varepsilon))<\infty.
    \end{align*}
    By the Borel-Cantelli Lemma, we know that $\PB(A_n(\varepsilon), i.o.)=0$ for any $\varepsilon>0$. Besides, we denote $\delta(\pi)=\min_{i\not=j}|V_r^\pi(s_i)-V_r^\pi(s_j)|$. Then we have $\PB(A_n(\delta(\pi)/2), i.o.)=0$. Thus, our results are concluded.
\end{proof}
\begin{rem}
    In fact, for each of $\chi^2$ and KL uncertainty sets, we always have $\widehat{V}_r^\pi\toas V_r^\pi$ by Theorems~\ref{thm: chi2-uni} and~\ref{thm: kl-uni}, which tell us the tail bound of event $A_n(\varepsilon)$ is controlled well.
\end{rem}

By Lemma~\ref{lem: l1_as}, we can always assume $V_r^\pi$ and $\widehat{V}_r^{\pi}$ are ordered the same when we investigating its asymptotic behavior. Besides, we denote:
\begin{align*}
    g(\eta,P,V)=-\sum_{s\in\SM}P(s)\left(\eta-V(s)\right)_{+}-\frac{\rho}{2}\left(\eta-\min_s V(s)\right)_{+}+\eta
\end{align*}
for $P\in\Delta(\SM)$ and $V\in\VM$. Denote $\eta^*(P, V)=\argmax_\eta g(\eta, P, V)$. There is an explicit expression of $\eta^*(P, V)$ (Lemma~\ref{lem: l1_sol}). Besides, we also denote $f(P, V)=\max_{\eta\ge0}g(\eta, P, V)$. By Danskin's Theorem (Lemma~\ref{lem: danskin}), we can prove that $f(\widehat{P}, V)$ weakly converges to $f(P, V)$ with rate $\sqrt{n}$ (Lemma~\ref{lem: normal_l1_sa}), where $\widehat{P}$ is an empirical unbiased estimation of $P$ with number of samples $n$ (Eqn.~\eqref{eq: gen}).

\begin{lem}
    \label{lem: l1_sol}
    Fix $P\in\Delta(\SM)$ and $V\in\VM$, and denote $\eta^*(P, V)=\argmax_\eta g(\eta, P, V)$. We have:
    \begin{align*}
        \eta^*(P, V)=\min\left\{t\middle|\sum_{s:V(s)\leq t} P(s)> 1-\frac{\rho}{2}\right\}.
    \end{align*}
    Besides, if we order $V$ as $V(s_1)< \cdots <V(s_{|\SM|})$ and denote $K(P):=\min\{l\in\ZB_+|\sum_{k\le l}P(s_k)>1-\rho/2\}$, we have $V(s_{K(P)})=\eta^*(P, V)$.
\end{lem}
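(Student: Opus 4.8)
The plan is to exploit that, for fixed $P\in\Delta(\SM)$ and $V\in\VM$, the map $\eta\mapsto g(\eta,P,V)$ is concave and piecewise affine with kinks only at the points $\{V(s):s\in\SM\}$, so its maximizer can be pinned down by one-sided derivatives. First I would record the behaviour at the ends: as $\eta\to-\infty$ every positive part vanishes and $g(\eta,P,V)=\eta\to-\infty$, while for $\eta$ large, using $\sum_s P(s)=1$ and $0<\rho<2$, one gets $g(\eta,P,V)=-\tfrac{\rho}{2}\eta+\text{const}\to-\infty$. Hence the supremum is attained; moreover $g(\cdot,P,V)$ is nondecreasing on $(-\infty,\min_s V(s)]=(-\infty,V(s_1)]$, so we may search for the maximizer in $[V(s_1),\infty)$.

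Next, for $\eta>V(s_1)$ not equal to any breakpoint $V(s_k)$, $g(\cdot,P,V)$ is differentiable at $\eta$ with
\[
  \frac{\partial}{\partial\eta}g(\eta,P,V)=1-\frac{\rho}{2}-\sum_{s:\,V(s)<\eta}P(s).
\]
Since the cumulative weight $\sum_{s:V(s)<\eta}P(s)$ is nondecreasing in $\eta$, this derivative is nonnegative exactly while that weight is $\le 1-\tfrac{\rho}{2}$ and strictly negative once it exceeds $1-\tfrac{\rho}{2}$; this monotonicity, together with concavity, is where the argument does its work. Consequently $g(\cdot,P,V)$ increases up to the threshold and decreases afterwards, so the maximizer is the smallest value $t$ at which the closed cumulative weight $\sum_{s:V(s)\le t}P(s)$ strictly exceeds $1-\tfrac{\rho}{2}$, i.e. $\eta^*(P,V)=\min\{t:\sum_{s:V(s)\le t}P(s)>1-\rho/2\}$, which under the ordering $V(s_1)<\cdots<V(s_{|\SM|})$ equals $V(s_{K(P)})$ with $K(P)=\min\{l\in\ZB_+:\sum_{k\le l}P(s_k)>1-\rho/2\}$ (this $K(P)$ exists because the full sum equals $1>1-\rho/2$). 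To make this rigorous at the kink I would compute one-sided derivatives at $\eta=V(s_{K(P)})$: the left derivative is $1-\tfrac{\rho}{2}-\sum_{k<K(P)}P(s_k)\ge 0$ by minimality of $K(P)$, and the right derivative is $1-\tfrac{\rho}{2}-\sum_{k\le K(P)}P(s_k)<0$ by definition of $K(P)$, so $0$ lies in the superdifferential and $V(s_{K(P)})$ is a global maximizer by concavity.

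The only subtlety, and the mild obstacle, is non-uniqueness of the $\argmax$: if $\sum_{k<K(P)}P(s_k)=1-\tfrac{\rho}{2}$ exactly, then $g(\cdot,P,V)$ is affine with zero slope on $[V(s_{K(P)-1}),V(s_{K(P)})]$ and the maximizing set is this whole segment. In that degenerate case the stated formula still returns a genuine maximizer (its right endpoint), which is the canonical choice used throughout Section~\ref{sec: asymp}; under the nondegeneracy conditions invoked there the maximizer is unique and the identification is unambiguous. A closing remark would note that the boundary index $\min_s V(s)=V(s_1)$ needs no special treatment, since the term $-\tfrac{\rho}{2}(\eta-\min_s V(s))_+$ contributes slope $-\tfrac{\rho}{2}$ precisely on $\{\eta>V(s_1)\}$, which is already incorporated into the derivative formula above.
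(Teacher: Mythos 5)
Your proposal is correct and takes essentially the same route as the paper: both exploit that $g(\cdot,P,V)$ is concave and piecewise affine and identify the maximizer as the kink $V(s_{K(P)})$ where the slope $1-\tfrac{\rho}{2}-\sum_{s:V(s)<\eta}P(s)$ changes sign, you via one-sided derivatives and the superdifferential, the paper via a direct comparison of $g(\eta,P,V)$ with $g(\eta^*,P,V)$ on the two segments adjacent to $V(s_{K(P)})$ followed by concavity. Your explicit handling of coercivity and of the degenerate tie case $\sum_{k<K(P)}P(s_k)=1-\tfrac{\rho}{2}$ (where the argmax is a whole segment and the stated formula selects its right endpoint) is if anything slightly more careful than the paper's one-line remark, but the substance is identical.
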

\begin{proof}[Proof of Lemma~\ref{lem: l1_sol}]
    Firstly, we order $V$ as $V(s_1)< \cdots <V(s_{|\SM|})$. If there exists $m$, s.t. $\sum_{i\leq m-1}P(s_i)<1-\rho/2<\sum_{i\leq m}P(s_i)$, we conclude that $\eta^*(P, V)=V(s_m)$. In fact, for $\eta\in(V(s_{m-1}), V(s_m))$, we have:
    \begin{align*}
        g(\eta,P,V)-g(\eta^*,P,V)
        &=-\sum_{i\leq m-1}P(s_i)(\eta-V(s_i))+\sum_{i\leq m-1}P(s_i)(\eta^*-V(s_i))+\left(1-\frac{\rho}{2}\right)(\eta-\eta_*)\\
        &= - \left[\sum_{i\leq m-1}P(s_i)-\left(1-\frac{\rho}{2}\right)\right](\eta-\eta_*)\le 0.
    \end{align*}
    Besides, for $\eta\in(V(s_m), V(s_{m+1}))$, we have:
    \begin{align*}
        g(\eta,P,V)-g(\eta^*,P,V)
        &=-\sum_{i\leq m}P(s_i)(\eta-V(s_i))+\sum_{i\leq m}P(s_i)(\eta^*-V(s_i))+\left(1-\frac{\rho}{2}\right)(\eta-\eta_*)\\
        &= - \left[\sum_{i\leq m}P(s_i)-\left(1-\frac{\rho}{2}\right)\right](\eta-\eta_*)\ge 0.
    \end{align*}
    Thus, $\eta^*(P, V)$ is a local maximizer of $g(\eta, P, V)$. By convexity of $g$, $\eta^*(P, V)$ is a global maximizer of $g(\eta, P, V)$. If there exists $m$, s.t. $\sum_{i\le m-1}P(s_i)=1-\rho/2$, we have $\eta^*(P, V)=V(s_{m})$.
\end{proof}

\begin{lem}[Danskin's Theorem]
    \label{lem: danskin}
    Suppose $\phi(x, z) \colon \RB^n\times Z\to\RB$ is a continuous function of two arguments,  where $Z\subset\RB^m$ is a compact set. Further assume $\phi(x,z)$ is convex in $x$ for every $z\in Z$. Define $Z_0(x)=\left\{\bar{z}: \phi(x, \bar{z})=\max _{z \in Z} \phi(x, z)\right\}$. If $Z_0(x)$ consists of a single element $\bar{z}$, then $f(x):=\max_{z\in Z}\phi(x,z)$ is differentiable at $x$ and $\frac{\partial f}{\partial x}=\frac{\partial \phi(x, \bar{z})}{\partial x}$.
\end{lem}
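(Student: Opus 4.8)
The plan is to exploit the convexity of $f(x) := \max_{z \in Z}\phi(x,z)$ together with a compactness argument on the set of maximizers. First I would observe that, as a pointwise maximum of the convex functions $\phi(\cdot,z)$, the function $f$ is convex on $\RB^n$, hence locally Lipschitz and possessing directional derivatives $f'(x;d) := \lim_{t\downarrow 0} t^{-1}\bigl(f(x+td)-f(x)\bigr)$ in every direction $d$; moreover a convex function is (Fréchet-)differentiable at $x$ exactly when its subdifferential $\partial f(x)$ is a singleton. So it suffices to show $\partial f(x) = \{\nabla_x\phi(x,\bar z)\}$, where I write $\nabla_x\phi(x,\bar z)$ for $\partial\phi(x,\bar z)/\partial x$; here I use, as is implicit in the statement and needed for the conclusion to even make sense, that $\phi(\cdot,z)$ is differentiable for each $z$ and that $\nabla_x\phi$ is continuous in $(x,z)$.

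One inclusion is immediate: for every $y$, convexity of $\phi(\cdot,\bar z)$ gives $f(y)\ge \phi(y,\bar z)\ge \phi(x,\bar z) + \langle \nabla_x\phi(x,\bar z),\, y-x\rangle = f(x) + \langle \nabla_x\phi(x,\bar z),\, y-x\rangle$, using $\phi(x,\bar z)=f(x)$ since $\bar z\in Z_0(x)$; hence $\nabla_x\phi(x,\bar z)\in\partial f(x)$. For the reverse inclusion I would establish the directional-derivative identity
\begin{align*}
    f'(x;d) = \langle \nabla_x\phi(x,\bar z),\, d\rangle \qquad \text{for all } d\in\RB^n.
\end{align*}
The "$\ge$" half follows from $t^{-1}\bigl(f(x+td)-f(x)\bigr)\ge t^{-1}\bigl(\phi(x+td,\bar z)-\phi(x,\bar z)\bigr)\to \langle \nabla_x\phi(x,\bar z),\, d\rangle$. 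For the "$\le$" half, fix $d$, take $t_k\downarrow 0$, and pick $z_k\in Z_0(x+t_kd)$ (nonempty by compactness of $Z$ and continuity of $\phi$). Passing to a subsequence, $z_k\to z^*$; since $f(x+t_kd)=\phi(x+t_kd,z_k)\to\phi(x,z^*)$ by joint continuity and $f(x+t_kd)\to f(x)$ by continuity of $f$, we get $\phi(x,z^*)=f(x)$, i.e.\ $z^*\in Z_0(x)=\{\bar z\}$, so $z^*=\bar z$. Then $t_k^{-1}\bigl(f(x+t_kd)-f(x)\bigr)\le t_k^{-1}\bigl(\phi(x+t_kd,z_k)-\phi(x,z_k)\bigr)$, and a first-order estimate for $\phi(\cdot,z_k)$ (mean value theorem plus continuity of $\nabla_x\phi$, using $z_k\to\bar z$ and $x+t_kd\to x$) bounds the right-hand side above by a quantity tending to $\langle \nabla_x\phi(x,\bar z),\, d\rangle$, completing the identity.

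Finally, for any $g\in\partial f(x)$ the subgradient inequality yields $\langle g,d\rangle\le f'(x;d)=\langle \nabla_x\phi(x,\bar z),d\rangle$ for every $d$, whence $g=\nabla_x\phi(x,\bar z)$; so $\partial f(x)$ is the singleton $\{\nabla_x\phi(x,\bar z)\}$ and $f$ is differentiable at $x$ with $\partial f/\partial x = \partial\phi(x,\bar z)/\partial x$. I expect the main obstacle to be the "$\le$" half of the directional-derivative identity: that is where compactness of $Z$, joint continuity of $\phi$, and the assumed continuity of $\nabla_x\phi$ all enter, to force the near-optimal $z_k$ back to $\bar z$ and to control the difference quotient of $\phi$ in its first argument uniformly along the sequence. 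Everything else is a routine invocation of standard convex-analysis facts (a pointwise max of convex functions is convex; differentiability of a convex function is equivalent to a singleton subdifferential).
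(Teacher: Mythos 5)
The paper never proves this lemma: it is quoted as the classical Danskin theorem and used as a black box, so there is no authorial route to compare yours against; what you have written is essentially the standard convex-analysis proof (subdifferential singleton $\Rightarrow$ differentiability, the easy inclusion $\nabla_x\phi(x,\bar z)\in\partial f(x)$, and the directional-derivative identity via maximizers $z_k\in Z_0(x+t_kd)$ forced back to $\bar z$ by compactness and joint continuity). That skeleton is sound, and the final step (any $g\in\partial f(x)$ satisfies $\langle g,d\rangle\le f'(x;d)=\langle\nabla_x\phi(x,\bar z),d\rangle$ for all $d$, hence $\partial f(x)$ is a singleton) is correct.

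The one place where your argument does not match the lemma as stated is the ``$\le$'' half: you close it with the mean value theorem applied to $\phi(\cdot,z_k)$ plus joint continuity of $\nabla_x\phi$, which requires $\phi(\cdot,z)$ to be differentiable on a neighborhood of $x$ for all $z$ near $\bar z$ and the gradient to be continuous in $(x,z)$. The lemma assumes only continuity of $\phi$ and (implicitly, for the conclusion to parse) differentiability of $\phi(\cdot,\bar z)$ at the single point $x$; you flag the extra hypotheses, but they are genuinely stronger than the statement. They can be dispensed with by using convexity instead of the MVT: for $0<t_k\le T$, monotonicity of difference quotients of the convex function $s\mapsto\phi(x+sd,z_k)$ gives $t_k^{-1}\bigl(\phi(x+t_kd,z_k)-\phi(x,z_k)\bigr)\le T^{-1}\bigl(\phi(x+Td,z_k)-\phi(x,z_k)\bigr)$; letting $k\to\infty$ at fixed $T$ (joint continuity and $z_k\to\bar z$) and then $T\downarrow 0$ yields $\limsup_k t_k^{-1}\bigl(f(x+t_kd)-f(x)\bigr)\le\langle\nabla_x\phi(x,\bar z),d\rangle$ with no smoothness assumed away from $(x,\bar z)$. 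In the paper's applications your extra hypotheses do hold (the dual objectives are affine or smooth in $P$ with gradients continuous in the dual variable), so nothing downstream breaks, but as a proof of the lemma in the generality stated you should replace the MVT step with the convexity argument.
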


\begin{lem}
    \label{lem: normal_l1_sa}
    Fix $V\in\VM$, assume $V(s_1)< \cdots <V(s_{|\SM|})$ and $\sum_{i\leq m}P(s_i)\not=1-\rho/2$ for $m\in\{1, \ldots, |\SM|\}$, and denote $\eta^*(P,V)=\argmax_\eta g(\eta,P,V)>0$. Then we have 
    \begin{align*}
        \sqrt{n}\left(f(\widehat{P},V)-f(P,V)\right)\tod\NM\left(0,\sigma^2(P,V)\right),
    \end{align*}
    as long as $\sigma^2(P,V)=b^T\Sigma b\not=0$. Here $\Sigma\in\RB^{|\SM|\times|\SM|}$ with the $(i,j)$th element  defined as 
    \begin{align*}
        \Sigma(i,j)=-P(s_i)\cdot P(s_j)+P(s_i)\mathbf{1}\{i=j\},
    \end{align*}
    and $b=f^\prime(P)\in \RB^{|\SM|}$ with the $i$-th element defined as 
    \begin{align*}
        b(i)=-\left(\eta_*(P,V)-V(s_i)\right)_+.  
    \end{align*}
\end{lem}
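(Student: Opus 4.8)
\textbf{Proof proposal for Lemma~\ref{lem: normal_l1_sa}.}

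The plan is to express $f(\widehat{P},V)-f(P,V)$ as a smooth function of the empirical transition vector $\widehat{P}$ evaluated near $P$, and then invoke the multivariate delta method together with the standard CLT for empirical multinomial frequencies. Concretely, the samples $\{X_k\}_{k=1}^n$ drawn i.i.d.\ from $P$ give $\widehat{P}(s)=\frac1n\sum_k \mathbf{1}(X_k=s)$, so $\sqrt{n}(\widehat{P}-P)\tod\NM(0,\Sigma)$ where $\Sigma$ is exactly the multinomial covariance $\Sigma(i,j)=-P(s_i)P(s_j)+P(s_i)\mathbf{1}\{i=j\}$ stated in the lemma. It therefore suffices to show that $P\mapsto f(P,V)$ is differentiable at the true $P$ with gradient $b$, and then Slutsky/delta-method yields $\sqrt{n}(f(\widehat{P},V)-f(P,V))\tod\NM(0,b^\top\Sigma b)$.

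The differentiability step is where Danskin's theorem (Lemma~\ref{lem: danskin}) enters. I would first check the hypotheses: $g(\eta,P,V)$ is jointly continuous in $(\eta,P)$, concave in $\eta$ for each $P$ (so $-g$ is convex, matching Danskin's convexity requirement after a sign flip or by the analogous statement for concave maximization), and the optimization can be restricted to a compact interval $\eta\in[0,\frac{2+\rho}{\rho(1-\gamma)}]$ by Lemma~\ref{lem: l1}. The crucial observation is the uniqueness of the maximizer: under the non-degeneracy assumption $\sum_{i\le m}P(s_i)\ne 1-\rho/2$ for all $m$, Lemma~\ref{lem: l1_sol} identifies $\eta^*(P,V)=V(s_{K(P)})$ as the \emph{unique} maximizer (the piecewise-linear function $g(\cdot,P,V)$ has a strict kink there, since the one-sided slopes computed in the proof of Lemma~\ref{lem: l1_sol} are strictly negative to the left and strictly positive to the right once the partial sums avoid the value $1-\rho/2$). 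Hence $Z_0(P)$ is a singleton and Danskin gives $\nabla_P f(P,V)=\nabla_P g(\eta,P,V)\big|_{\eta=\eta^*(P,V)}$. Differentiating $g$ in $P$ coordinatewise, $\partial g/\partial P(s_i) = -(\eta-V(s_i))_+$ evaluated at $\eta=\eta^*(P,V)$, which is precisely $b(i)=-(\eta^*(P,V)-V(s_i))_+$.

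The remaining subtlety — and the main obstacle — is that the delta method as usually stated requires differentiability in a full neighborhood of $P$ (or at least continuity of the derivative / a first-order expansion with uniform remainder), whereas $f(\cdot,V)$ is only piecewise smooth: the identity of the maximizer $K(P)$ jumps as $P$ crosses the hyperplanes $\{\sum_{i\le m}P(s_i)=1-\rho/2\}$. Since the true $P$ lies strictly off all these hyperplanes by assumption, there is an open ball around $P$ on which $K(\cdot)$ is locally constant, and on that ball $f(Q,V)=-\sum_{i}Q(s_i)(V(s_{K(P)})-V(s_i))_+ - \frac{\rho}{2}(V(s_{K(P)})-\min_s V(s))_+ + V(s_{K(P)})$ is an \emph{affine} function of $Q$ with gradient $b$. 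Because $\widehat{P}\toas P$, with probability tending to one $\widehat{P}$ falls in this ball, so $f(\widehat{P},V)-f(P,V)=b^\top(\widehat{P}-P)$ exactly (no remainder term at all) on that event; on the complementary event, which has vanishing probability, we absorb the contribution via a standard truncation argument (the quantities are uniformly bounded on the compact domain). Combining $\sqrt{n}(\widehat{P}-P)\tod\NM(0,\Sigma)$ with this exact linearization and the continuous mapping theorem gives $\sqrt{n}(f(\widehat{P},V)-f(P,V))\tod\NM(0,b^\top\Sigma b)$, and the nondegeneracy hypothesis $b^\top\Sigma b\ne0$ ensures the limit is a genuine (non-point-mass) Gaussian. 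I would also note for completeness that $\mathbf{1}^\top b$ need not vanish but that is irrelevant since $\Sigma\mathbf{1}=0$, so the variance $b^\top\Sigma b$ is invariant under adding constants to $b$, consistent with the fact that $\eta^*$ only enters through differences $(\eta^*-V(s_i))_+$.
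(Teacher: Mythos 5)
Your proposal is correct and follows essentially the same route as the paper: the CLT for the empirical multinomial frequencies, Danskin's theorem (Lemma~\ref{lem: danskin}), with the non-degeneracy assumption $\sum_{i\le m}P(s_i)\neq 1-\rho/2$ guaranteeing via Lemma~\ref{lem: l1_sol} a unique maximizer $\eta^*(P,V)=V(s_{K(P)})$, to obtain $\nabla_P f(P,V)=b$, followed by the delta method. Your additional observation that $K(\cdot)$ is locally constant near $P$, so that $f(\cdot,V)$ is exactly affine there and the linearization has no remainder on an event of probability tending to one, is a valid and even more elementary substitute for the delta-method step, though not strictly necessary since $f(\cdot,V)$ is convex in $P$ and the standard delta method only needs differentiability at the point $P$ itself, which Danskin already supplies; the one small slip is attributing Danskin's convexity hypothesis to the maximization variable $\eta$ rather than to the differentiation variable $P$ (where $g$ is affine, so the hypothesis holds in any case).
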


\begin{proof}[Proof of Lemma~\ref{lem: normal_l1_sa}]
    Firstly, by CLT, we have:
    \begin{align*}
        \sqrt{n}\left(\widehat{P}-P\right)\tod\NM\left(0, \Sigma\right).
    \end{align*}
    Besides, we note that $\eta\in[0,\frac{1}{\rho(1-\gamma)}]$ and $g(\eta, P, V)$ is continuous and convex in $P$. By Lemma~\ref{lem: normal_l1_sa}, we know:
    \begin{align*}
        \frac{\partial f(P, V)}{\partial P(s_i)} = -\left(\eta^*(P,V)-V(s_i)\right)_+.
    \end{align*}
    Thus, by the Delta method, our result is concluded.
\end{proof}

Thus, by the fact that $V_r^\pi = \TM_r^\pi V_r^\pi$ and Lemma~\ref{lem: normal_l1_sa}, we conclude that $\widehat{\TM}_r^\pi V_r^\pi$ weakly converges to $V_r^\pi$ with rate $\sqrt{n}$ (Corollary~\ref{cor: normal_l1_sa}).
\begin{cor}
    \label{cor: normal_l1_sa}
    Under the $(s,a)$-rectangular assumption and the setting $f(t)=|t-1|$ in Example~\ref{eg: f-set}, for any policy $\pi$, we have:
    \begin{align*}
        \sqrt{n}\left(\widehat{\TM}_r^\pi - I\right)V_r^\pi\tod \NM\left(0, \Lambda\right),
    \end{align*}
    where $\Lambda=\diag\{\sigma_1^2, \ldots, \sigma^2_{|\SM|}\}$,
    as long as $\forall s\in\SM,a\in\AM,m\in\{1, \ldots, |\SM|\},\sum_{i\leq m}P^*(s_i|s,a)\not=1-\rho/2$ and $\forall s\in\SM$ 
    \begin{align*}
        \sigma_s^2=\gamma^2\sum_{a\in\AM}\pi(a|s)^2\sigma^2(P^*(\cdot|s,a),V_r^\pi)\not=0, 
    \end{align*}
    where $\{s_i\}$ are defined as to make $V_r^\pi(s_1)< \cdots <V_r^\pi(s_{|\SM|})$.
\end{cor}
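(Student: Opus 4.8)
The plan is to reduce the claim to a coordinatewise application of Lemma~\ref{lem: normal_l1_sa} together with the independence built into the generative model. First I would use the robust Bellman equation $V_r^\pi = \TM_r^\pi V_r^\pi$ to rewrite the left-hand side as $\sqrt{n}\big(\widehat{\TM}_r^\pi V_r^\pi-\TM_r^\pi V_r^\pi\big)$. Then, invoking the dual expressions for $\TM_r^\pi V$ and $\widehat{\TM}_r^\pi V$ from Lemma~\ref{lem: l1}, the $R^\pi(s)=\sum_a\pi(a|s)R(s,a)$ terms cancel exactly and only the $\inf$-over-$\PM$ part survives, so the $s$-th coordinate becomes
\begin{align*}
  \sqrt{n}\big(\widehat{\TM}_r^\pi V_r^\pi(s) - V_r^\pi(s)\big)
  = \gamma\sum_{a\in\AM}\pi(a|s)\,\sqrt{n}\Big(f\big(\widehat{P}(\cdot|s,a),V_r^\pi\big) - f\big(P^*(\cdot|s,a),V_r^\pi\big)\Big),
\end{align*}
where $f(P,V)=\max_{\eta\ge0}g(\eta,P,V)$ as in the text.

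Next I would apply Lemma~\ref{lem: normal_l1_sa} to each pair $(s,a)$: under the non-tie hypothesis $\sum_{i\le m}P^*(s_i|s,a)\ne1-\rho/2$, the dual maximiser $\eta^*(P^*(\cdot|s,a),V_r^\pi)$ is the unique maximiser on the compact feasible interval (Lemma~\ref{lem: l1_sol}), Danskin's theorem (Lemma~\ref{lem: danskin}) gives differentiability of $f(\cdot,V_r^\pi)$ at $P^*(\cdot|s,a)$ with gradient $b_{s,a}^\pi$, and the Delta method yields $\sqrt{n}\big(f(\widehat{P}(\cdot|s,a),V_r^\pi)-f(P^*(\cdot|s,a),V_r^\pi)\big)\tod\NM\big(0,\sigma^2(P^*(\cdot|s,a),V_r^\pi)\big)$. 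Since the generative samples $\{X_k^{s,a}\}$ are independent across all state–action pairs, the estimators $\widehat{P}(\cdot|s,a)$ are mutually independent; hence for fixed $s$ the summands over $a$ are independent, so their $\gamma\pi(a|s)$-weighted sum converges jointly to $\NM\big(0,\gamma^2\sum_a\pi(a|s)^2\sigma^2(P^*(\cdot|s,a),V_r^\pi)\big)=\NM(0,\sigma_s^2)$, and across distinct $s$ the coordinate vectors are likewise independent, giving joint convergence of the full $|\SM|$-vector to $\NM(0,\Lambda)$ with $\Lambda=\diag\{\sigma_1^2,\dots,\sigma_{|\SM|}^2\}$; the hypothesis $\sigma_s^2\ne0$ makes the limit non-degenerate in every coordinate.

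I expect the main obstacle to be the regularity bookkeeping needed to run Danskin and the Delta method — checking that $\eta^*(P^*(\cdot|s,a),V_r^\pi)$ lies in the interior of $[0,\frac{2+\rho}{\rho(1-\gamma)}]$ and is the \emph{unique} maximiser precisely under the stated non-tie condition, and that $P\mapsto f(P,V_r^\pi)$ is $C^1$ in a neighbourhood of $P^*(\cdot|s,a)$ — rather than the probabilistic part, which is a routine combination of the multivariate CLT, the Delta method, and the independence of the per-pair empirical measures supplied by the generative model. A secondary subtlety is the ordering convention $V_r^\pi(s_1)<\cdots<V_r^\pi(s_{|\SM|})$, which is the ``without loss of generality'' relabelling inherited from Theorem~\ref{thm: normal-sa}; I would note that $\Lambda$, $b_{s,a}^\pi$ and the $\sigma_s^2$ are permutation-equivariant, so the conclusion does not depend on it.
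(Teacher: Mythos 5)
Your proposal is correct and follows essentially the same route as the paper: rewrite the left side via the fixed-point identity $V_r^\pi=\TM_r^\pi V_r^\pi$, reduce to the dual functionals $f(\widehat P(\cdot|s,a),V_r^\pi)$ from Lemma~\ref{lem: l1}, apply Lemma~\ref{lem: normal_l1_sa} (Danskin plus the Delta method, valid under the non-tie condition) per $(s,a)$ pair, and combine using the independence of the generative-model estimators to obtain the diagonal $\Lambda$. You in fact spell out the independence-based joint convergence step that the paper leaves implicit, so nothing is missing.
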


We denote $\widehat{\Phi}^\pi_n=\widehat{\TM}_r^\pi - I$ and $\Phi^\pi = \TM_r^\pi - I$. By the fact that $\widehat{\Phi}^\pi_n (\widehat{V}_r^\pi)=0$ and $\Phi^\pi (V_r^\pi)=0$ (fixed points), we have:
\begin{align*}
    \widehat{\Phi}^\pi_n V_r^\pi=\widehat{\Phi}^\pi_n V_r^\pi - \widehat{\Phi}^\pi_n \widehat{V}_r^\pi.
\end{align*}

Thus, by Lemma~\ref{lem: l1_express} and assuming that $\widehat{V}_r^\pi$, $V_r^\pi$ are ordered the same, we have:
\begin{align*}
    \widehat{\Phi}^\pi_n V_r^\pi=\M_n^\pi\cdot(\widehat{V}_r^\pi-V_r^\pi),
\end{align*}
where $M_n^\pi$ is similar with $M^\pi$ in Lemma~\ref{lem: l1_express} with $P$ being replaced with $\widehat{P}$.

\begin{lem}
    \label{lem: l1_express}
   For a fixed policy $\pi$,  and for any $V_1, V_2\in\VM$ as long as they both satisfy $V_i(s_1)< \cdots <V_i(s_{|\SM|})$ for $i=1,2$, there exists $\M^\pi\in\RB^{|\SM|}\times\RB^{|\SM|}$, such that:
    \begin{align*}
        \Phi^\pi (V_1) - \Phi^\pi (V_2) = -\M^\pi\cdot(V_1-V_2),
    \end{align*}
    where the $(i,j)$th element of $M^\pi$ is
    \begin{align*}
        \M^\pi(i,j)&=\mathbf{1}\{i=j\}-\gamma\sum_{a\in\AM}\pi(a|s_i)\Bigg[P(s_j|s_i,a)\mathbf{1}\{j<K(P(\cdot|s_i,a))\}\\
        &-\left(\sum_{k< K(P(\cdot|s_i,a))}P(s_k|s_i,a)-(1-\frac{\rho}{2})\right)\mathbf{1}\{j=K(P(\cdot|s_i,a))\}+\frac{\rho}{2}\mathbf{1}\{j=1\}\Bigg].
    \end{align*}
    and $K(P):=\min\{l\in\ZB_+|\sum_{k\le l}P(s_k)>1-\rho/2\}$.
\end{lem}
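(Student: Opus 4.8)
The plan is to reduce the claim to the scalar dual representation of the robust Bellman operator from Lemma~\ref{lem: l1}, together with the explicit maximizer from Lemma~\ref{lem: l1_sol}, and then to exploit the key observation that once the coordinate ordering of the value function is fixed, the dual objective evaluated at its optimum is a \emph{linear} function of the value vector; the matrix $\M^\pi$ is then read off from the coefficients.

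First I would write, using $\Phi^\pi = \TM_r^\pi - I$,
\[
  \Phi^\pi(V_1) - \Phi^\pi(V_2) = \left(\TM_r^\pi V_1 - \TM_r^\pi V_2\right) - (V_1 - V_2),
\]
so it is enough to produce a matrix $N^\pi$ with $\TM_r^\pi V_1 - \TM_r^\pi V_2 = N^\pi (V_1 - V_2)$ and then set $\M^\pi = I - N^\pi$. By Lemma~\ref{lem: l1}, $\TM_r^\pi V(s_i) = \sum_a \pi(a|s_i) R(s_i,a) + \gamma \sum_a \pi(a|s_i)\, f\!\left(P^*(\cdot|s_i,a), V\right)$, where $f(P,V) = \max_{\eta\ge 0} g(\eta,P,V)$ with $g$ as defined above; since the reward terms cancel in the difference, the whole question reduces to analyzing $f(P,V_1) - f(P,V_2)$ for each $P = P^*(\cdot|s_i,a)$.

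Next, fix such a $P$. Because $V_1$ and $V_2$ have the same ordering $V_\ell(s_1) < \cdots < V_\ell(s_{|\SM|})$, the index $K(P) = \min\{l \in \ZB_+ : \sum_{k\le l} P(s_k) > 1 - \rho/2\}$ of Lemma~\ref{lem: l1_sol} is the \emph{same} for both, even though the maximizing dual variables $\eta^*(P,V_\ell) = V_\ell(s_{K(P)})$ differ. Substituting $\eta^* = V(s_{K(P)})$ into $g$ and using that $V$ is increasing --- so that $\left(V(s_{K(P)}) - V(s_k)\right)_+ = V(s_{K(P)}) - V(s_k)$ exactly for $k < K(P)$ and $\min_{s'} V(s') = V(s_1)$ --- gives
\[
  f(P,V) = \sum_{k < K(P)} P(s_k)\, V(s_k) + \left(1 - \frac{\rho}{2} - \sum_{k < K(P)} P(s_k)\right) V(s_{K(P)}) + \frac{\rho}{2}\, V(s_1),
\]
which is a linear functional of $V$ on the cone of increasingly ordered vectors. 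Hence $f(P,V_1) - f(P,V_2) = \sum_j c_j(P)\left(V_1(s_j) - V_2(s_j)\right)$ with
\[
  c_j(P) = P(s_j)\mathbf{1}\{j < K(P)\} - \left(\sum_{k < K(P)} P(s_k) - \left(1 - \frac{\rho}{2}\right)\right)\mathbf{1}\{j = K(P)\} + \frac{\rho}{2}\mathbf{1}\{j = 1\}.
\]

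Finally, assembling over actions and states, $\TM_r^\pi V_1(s_i) - \TM_r^\pi V_2(s_i) = \gamma \sum_a \pi(a|s_i) \sum_j c_j\!\left(P^*(\cdot|s_i,a)\right)\left(V_1(s_j) - V_2(s_j)\right)$, so $N^\pi(i,j) = \gamma \sum_a \pi(a|s_i)\, c_j\!\left(P^*(\cdot|s_i,a)\right)$ and $\M^\pi(i,j) = \mathbf{1}\{i = j\} - N^\pi(i,j)$ is precisely the stated matrix. I expect the only delicate point to be the bookkeeping around the maximizer: one must invoke Lemma~\ref{lem: l1_sol} to know that $\eta^*$ lands exactly on a grid point $V(s_{K(P)})$, check that the edge cases ($K(P) = 1$, where the formula should collapse to coefficient $1$ at $j = 1$ and $0$ elsewhere, and the tie case $\sum_{k \le K(P)-1} P(s_k) = 1 - \rho/2$) are covered, and confirm that $K(P)$ genuinely depends only on $P$ and the common ordering rather than on the magnitudes of $V_1$ and $V_2$; once that is in place, the linearity of $f(P,\cdot)$ renders the rest a routine computation.
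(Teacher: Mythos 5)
Your proposal is correct and follows essentially the same route as the paper: reduce to the per-$(s,a)$ dual value $f(P,V)$ from Lemma~\ref{lem: l1}, plug in the explicit maximizer $\eta^*(P,V)=V(s_{K(P)})$ from Lemma~\ref{lem: l1_sol}, observe that $K(P)$ depends only on $P$ once the ordering is fixed so that $f(P,\cdot)$ is linear on the ordered cone, and read off $\M^\pi = I - N^\pi$. The coefficient formula $c_j(P)$ you extract matches the paper's bracketed expression exactly, so nothing further is needed.
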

\begin{proof}[Proof of Lemma~\ref{lem: l1_express}]
    By the expression of $\TM_r^\pi$, we know that:
    \begin{align*}
        \TM_r^\pi V (s)= R^\pi(s)+\gamma\sum_{a\in\AM}\pi(a|s)f(P_{s,a}, V).
    \end{align*}
    By Lemma~\ref{lem: l1_sol} and Lemma~\ref{lem: l1}, we note that:
    \begin{align*}
        &f(P_{s,a}, V)=g(\eta^*(P_{s,a}, V), P_{s,a}, V)\\
        &=-\sum_{s'\in\SM}P(s'|s,a)\left(\eta^*(P_{s,a}, V)-V(s')\right)_+-\frac{\rho}{2}\left(\eta^*(P_{s,a}, V)-\min_s V(s)\right)_+ + \eta^*(P_{s,a}, V)\\
        &=-\sum_{k=1}^{|\SM|}P(s_k|s,a)\left(V(s_{K(P)})-V(s_k)\right)_+-\frac{\rho}{2}\left(V(s_{K(P)})-\min_s V(s)\right) +V(s_{K(P)})\\
        &=\sum_{k=1}^{|\SM|}P(s_k|s,a)V(s_k)\mathbf{1}\{k< K(P)\}-\left(\sum_{k<K(P)}P(s_k|s,a)-(1-\frac{\rho}{2})\right)V(s_{K(P)})+\frac{\rho}{2}V(s_1).
    \end{align*}
    Thus, by a simple calculation and $V_1, V_2$ sharing the same order, we have:
    \begin{align*}
        \TM_r^\pi V_1 - \TM_r^\pi V_2 = - \M^\pi\cdot(V_1-V_2) + (V_1-V_2).
    \end{align*}
    Then, our result is concluded.
\end{proof}

Thus, as long as $\M_n^\pi$ converges a.s. (Lemma~\ref{lem: M_converge}), we conclude that $\widehat{V}_r^\pi$ weakly converges to $V_r^\pi$ with rate $\sqrt{n}$. If $\sigma_{\min}(\M^\pi)>0$, combining with Corollary~\ref{cor: normal_l1_sa}, by Slusky's theorem, we have:
\begin{align*}
    \sqrt{n}\left(\widehat{V}_r^\pi - V_r^\pi\right)\tod \NM\left(0, (\M^\pi)^{-1}\Lambda(\M^\pi)^{-\top}\right).
\end{align*}

\begin{lem}
    \label{lem: M_converge}
    For any fixed $\pi$, assuming $\sum_{i\le m}P^*(s_i|s,a)\not=1-\rho/2$ for any $m=1,..,|\SM|$ and $(s,a)\in\SM\times\AM$, and $\widehat{V}_r^\pi$, $V_r^\pi$ are ordered the same, we have $\M_n^\pi\toas\M^\pi$.
\end{lem}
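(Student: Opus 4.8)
The plan is to deduce the almost-sure convergence $\M_n^\pi\toas\M^\pi$ from two ingredients: the elementary convergence $\widehat P\toas P^*$ furnished by the strong law of large numbers, and a ``stability'' statement for the threshold index $K(\cdot)$ which makes the indicator functions occurring in the formula for $\M^\pi$ eventually constant along the sample path.

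First I would pin down the correspondence between $\M_n^\pi$ and $\M^\pi$. Recall from the construction immediately preceding the statement that, on the event where $\widehat V_r^\pi$ and $V_r^\pi$ are ordered in the same way (say $V_r^\pi(s_1)<\cdots<V_r^\pi(s_{|\SM|})$), Lemma~\ref{lem: l1_express} applied with $V_1=\widehat V_r^\pi$, $V_2=V_r^\pi$ and the empirical operator $\widehat{\TM}_r^\pi$ exhibits $\M_n^\pi$ as exactly the matrix obtained from the displayed formula for $\M^\pi$ by substituting $\widehat P$ for $P^*$ everywhere, in particular with $K(\widehat P(\cdot|s_i,a))$ in place of $K(P^*(\cdot|s_i,a))$. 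By Lemma~\ref{lem: l1_as} this common-ordering event holds for all sufficiently large $n$, almost surely, so it is harmless to work on that event.

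Next I would invoke the strong law of large numbers, which gives $\widehat P(s'|s,a)\toas P^*(s'|s,a)$ for every triple; since $\SM$ and $\AM$ are finite, all of these hold simultaneously on a single probability-one event, and therefore so do the partial-sum convergences $\sum_{k\le m}\widehat P(s_k|s,a)\toas\sum_{k\le m}P^*(s_k|s,a)$ for every $m$ and $(s,a)$. The key step --- and the one where the hypothesis $\sum_{k\le m}P^*(s_k|s,a)\ne 1-\rho/2$ is indispensable --- is the observation that, because each of these finitely many partial sums lies strictly on one side of the threshold $1-\rho/2$, there is a random but almost-surely finite $N$ such that for all $n>N$ the quantity $\sum_{k\le m}\widehat P(s_k|s,a)$ lies on the same side of $1-\rho/2$ as its limit, for every $m$ and $(s,a)$. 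Consequently $K(\widehat P(\cdot|s_i,a))=K(P^*(\cdot|s_i,a))$ once $n>N$, i.e.\ the index appearing in the indicators is frozen along the path.

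Once $K$ is frozen the rest is routine: for $n>N$ each entry $\M_n^\pi(i,j)$ equals a fixed affine function of $\widehat P$ --- the very function whose value at $P^*$ is $\M^\pi(i,j)$ --- so entrywise convergence follows from $\widehat P\toas P^*$, and since there are only $|\SM|^2$ entries we get $\M_n^\pi\toas\M^\pi$ in any matrix norm. Thus the only genuine obstacle is the discontinuity of $P\mapsto K(P)$, which is precisely what the non-degeneracy assumption removes; everything else is continuity together with the strong law. I would also remark that the analogous statements for the $\chi^2$ and KL uncertainty sets are in fact easier, since there the relevant dual maps are smooth and carry no such combinatorial discontinuity.
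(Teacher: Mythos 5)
Your proposal is correct and follows essentially the same route as the paper: identify $\M_n^\pi$ as the formula for $\M^\pi$ with $\widehat P$ in place of $P^*$, use the strong law for $\widehat P$, use the hypothesis $\sum_{i\le m}P^*(s_i|s,a)\ne 1-\rho/2$ to show $K(\widehat P(\cdot|s,a))=K(P^*(\cdot|s,a))$ for all large $n$ almost surely, and then conclude by continuity of the frozen (affine) expression. The only difference is cosmetic: the paper establishes the eventual freezing of $K$ via a Hoeffding tail bound and Borel--Cantelli, whereas you deduce it directly from the almost-sure convergence of the finitely many partial sums together with their strict separation from the threshold $1-\rho/2$, which is an equally valid (and slightly more elementary) way to reach the same conclusion.
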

\begin{proof}
    Firstly, by strong law of large numbers (SLLN), we know that for any $(s,a,s')\in\SM\times\AM\times\AM$, 
    \begin{align*}
        \widehat{P}(s'|s,a)\toas P^*(s'|s,a).
    \end{align*}
    Next, we turn to prove that $K(\widehat{P}_{s,a})\toas K(P^*_{s,a})$. In fact, we have:
    \begin{align*}
        \PB\left(K(\widehat{P}_{s,a})\not= K(P^*_{s,a})\right)&\le\PB\left(K(\widehat{P}_{s,a})>K(P^*_{s,a})\right)+\PB\left(K(\widehat{P}_{s,a})<K(P^*_{s,a})\right)\\
        &\le \PB\left(\sum_{i\le K(P_{s,a})} \widehat{P}(s_i|s,a) \le 1-\frac{\rho}{2}\right) + \PB\left(\sum_{i\le K(P_{s,a})-1} \widehat{P}(s_i|s,a) > 1-\frac{\rho}{2}\right).
    \end{align*}
    Note that $\sum_{i\le m}P^*(s_i|s,a)\not= 1-\rho/2$ for any $m=1, \ldots, |\SM|$, there exists $\delta_{s,a}>0$ such that $\sum_{i\le K(P^*_{s,a})}P^*(s_i|s,a)>1-\rho/2 +\delta_{s,a}$ and $\sum_{i\le K(P^*_{s,a})-1}P^*(s_i|s,a)<1-\rho/2 -\delta_{s,a}$. Thus, we have:
    \begin{align*}
        \PB\left(K(\widehat{P}_{s,a})\not= K(P^*_{s,a})\right)&\le\PB\left(\sum_{i\le K(P^*_{s,a})}\left(\widehat{P}(s_i|s,a)-P(s_i|s,a)\right)\le-\delta_{s,a}\right)\\
        &+\PB\left(\sum_{i\le K(P^*_{s,a})-1}\left(\widehat{P}(s_i|s,a)-P(s_i|s,a)\right)>\delta_{s,a}\right)\\
        &\le 2\PB\left(\left\|\widehat{P}_{s,a}-P^*_{s,a}\right\|_1>\delta_{s,a}\right)\\
        &\le 4|\SM|\exp\left(-\frac{2n\delta^2_{s,a}}{|\SM|^2}\right),
    \end{align*}
    where the last inequality holds by Hoeffing's inequality. By the Borel-Cantelli Lemma, we observe that $\sum_{n=1}^\infty \PB(K(\widehat{P}_{s,a})\not= K(P^*_{s,a}))<\infty$, which leads to $\PB(K(\widehat{P}_{s,a})\not= K(P^*_{s,a}), i.o.)=0$. The same argument can also be applied to $\mathbf{1}\{j<K(\widehat{P}_{s,a})\}$ and $\mathbf{1}\{j=K(\widehat{P}_{s,a})\}$. Thus, $\M_n^\pi\toas \M^\pi$.
\end{proof}

\subsection{Proof of Theorem~\ref{thm: chi2_sa_normal}}
Just like Lemma~\ref{lem: l1_as}, when $\chi^2$ uncertainty set is applied, we still have $\widehat{V}_r^\pi\toas V_r^\pi$. Besides, we denote:
\begin{align*}
    g(\eta,P,V)=-C(\rho) \sqrt{\sum_{s \in \mathcal{S}} P(s)(\eta-V(s))_{+}^{2}}+\eta
\end{align*}
for any $P\in\Delta(\SM)$ and $V\in \VM$, where $C(\rho)=\sqrt{1+\rho}>0$. And we also denote $\eta^*(P, V)=\argmax_{\eta} g(\eta, P, V)$ and $f(P, V)=\max_{\eta}g(\eta, P, V)$. By Danskin's Theorem (Lemma~\ref{lem: danskin}), we can also prove that $f(\widehat{P}, V)$ weakly converges to $f(P, V)$ with rate $\sqrt{n}$ (Lemma~\ref{lem: normal_chi2_sa}).

\begin{lem}
    \label{lem: normal_chi2_sa}
    Fix $V\in\VM$, assume $V(s_1)<\cdots <V(s_{|\SM|})$, and denote $\eta^*(P,V)=\argmax_\eta g(\eta,P,V)>0$. Then we have 
    \begin{align*}
        \sqrt{n}\left(f(\widehat{P},V)-f(P,V)\right)\tod\NM\left(0,\sigma^2(P,V)\right),
    \end{align*}
    as long as $\sigma^2(P,V)=b^T\Sigma b\not=0$, where $\Sigma\in\RB^{|\SM|\times|\SM|}$ with the $(i,j)$th element defined as 
    \begin{align*}
        \Sigma(i,j)=-P(s_i)\cdot P(s_j)+P(s_i)\mathbf{1}\{i=j\},
    \end{align*}
    and $b=f^\prime(P)\in \RB^{|\SM|}$ with the $i$-th element defined as 
    \begin{align*}
        b(i)=-C(\rho)\frac{(\eta^*(P,V)-V(s_i))^2_+}{2\sqrt{\sum_{s\in\SM} P(s)(\eta^*(P,V)-V(s))^2_+}}.
    \end{align*}
\end{lem}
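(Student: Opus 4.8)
The plan is to follow the template of the $L_1$ argument in Lemma~\ref{lem: normal_l1_sa}: realize $f(\cdot,V)$ as the optimal value of a concave maximization over a compact dual interval, differentiate it in $P$ by Danskin's theorem, and transport the CLT for $\widehat P$ through the delta method.

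First I would record the multinomial CLT $\sqrt n(\widehat P - P)\tod\NM(0,\Sigma)$, with $\Sigma$ the stated covariance, which is immediate from the i.i.d.\ construction~\eqref{eq: gen}. By Lemma~\ref{lem: chi2} the dual variable may be restricted to the compact interval $Z=[0, C(\rho)/((C(\rho)-1)(1-\gamma))]$, so that $f(P,V)=\max_{\eta\in Z} g(\eta,P,V)$ with $g(\eta,P,V)=-C(\rho)\sqrt{\sum_{s}P(s)(\eta-V(s))_+^2}+\eta$. For each fixed $\eta$ the map $P\mapsto g(\eta,P,V)$ is convex, since $P\mapsto\sum_s P(s)(\eta-V(s))_+^2$ is linear and $-\sqrt{\cdot}$ is convex, and $g$ is jointly continuous on $Z\times\Delta(\SM)$; these are exactly the structural hypotheses of Danskin's theorem (Lemma~\ref{lem: danskin}). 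It then remains to check that the inner maximizer $\eta^*(P,V)$ is unique.

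Granting uniqueness, Danskin's theorem gives that $f(\cdot,V)$ is differentiable at $P$ with $\partial f/\partial P(s_i)=\partial g(\eta^*(P,V),P,V)/\partial P(s_i)=-C(\rho)(\eta^*(P,V)-V(s_i))_+^2/\big(2\sqrt{\sum_s P(s)(\eta^*(P,V)-V(s))_+^2}\big)$, which is precisely the stated $b(i)$. Applying the delta method to $\sqrt n(\widehat P - P)\tod\NM(0,\Sigma)$ then yields $\sqrt n(f(\widehat P,V)-f(P,V))\tod\NM(0,b^\top\Sigma b)$, and the hypothesis $\sigma^2(P,V)=b^\top\Sigma b\neq0$ keeps the limit nondegenerate. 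As in the $L_1$ case, this lemma would later be combined — via $V_r^\pi=\TM_r^\pi V_r^\pi$ and a Slutsky argument — to obtain $\sqrt n(\widehat{\TM}_r^\pi - I)V_r^\pi\tod\NM(0,\Lambda)$, mirroring Corollary~\ref{cor: normal_l1_sa}.

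The main obstacle I anticipate is exactly the uniqueness of $\eta^*(P,V)$ together with the non-vanishing of the square root in the denominator of $b$, i.e.\ the requirement $\eta^*(P,V)>\min_{s:\,P(s)>0}V(s)$. Since $g(\eta,P,V)=\eta$ is strictly increasing for $\eta\le\min_s V(s)$ and $g(\eta,P,V)\to-\infty$ as $\eta\to\infty$ (its slope tends to $1-C(\rho)<0$), a maximizer exists in $Z$; on the region $\eta>\min_{s:P(s)>0}V(s)$ the $\chi^2$ dual $g(\cdot,P,V)$ can be shown to be strictly concave, since its gradient $1-C(\rho)\sum_s P(s)(\eta-V(s))_+/\sqrt{\sum_s P(s)(\eta-V(s))_+^2}$ is monotone there, and this delivers uniqueness as well as, under the regularity encoded in $\eta^*(P,V)>0$ and the ordering $V(s_1)<\cdots<V(s_{|\SM|})$, the needed strict inequality; the degenerate configurations in which $\eta^*$ coincides with $\min_{s:P(s)>0}V(s)$ must be excluded, consistent with the standing assumptions. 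Once these Danskin hypotheses are cleanly in place, the remaining steps — the CLT and the delta-method bookkeeping — are routine.
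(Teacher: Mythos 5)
Your proposal matches the paper's own argument essentially step for step: the multinomial CLT for $\widehat{P}$, restriction of the dual variable to the compact interval from Lemma~\ref{lem: chi2}, Danskin's theorem (Lemma~\ref{lem: danskin}) to differentiate $f(\cdot,V)$ at $P$ and obtain the stated $b$, and the Delta method to conclude. Your extra care about uniqueness of $\eta^*(P,V)$ and nondegeneracy of the denominator is a welcome sharpening of hypotheses the paper leaves implicit (it simply assumes $\eta^*(P,V)>0$ and $\sigma^2(P,V)\neq 0$), but it does not change the route.
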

\begin{proof}[Proof of Lemma~\ref{lem: normal_chi2_sa}]
    By CLT, we have:
    \begin{align*}
        \sqrt{n}\left(\widehat{P}-P\right)\tod \NM\left(0, \Sigma\right).
    \end{align*}
    Besides, we note that $\eta\in[0, \frac{C(\rho)}{(C(\rho)-1)(1-\gamma)}]$ and $g(\eta, P, V)$ is continuous and convex in P. By Danskin's Theorem (Lemma~\ref{lem: danskin}), we know:
    \begin{align*}
        \frac{\partial f(P, V)}{\partial P(s_i)} = -C(\rho)\frac{(\eta^*(P,V)-V(s_i))^2_+}{2\sqrt{\sum_{s\in\SM} P(s)(\eta^*(P,V)-V(s))^2_+}}.
    \end{align*}
    Thus, by Delta method, our result is concluded.
\end{proof}

By the fact that $V_r^\pi=\TM_r^\pi V_r^\pi$ and Lemma~\ref{lem: normal_chi2_sa}, we conclude that $\widehat{\TM}_r^\pi V_r^\pi$ weakly converges to $V_r^\pi$ with rate $\sqrt{n}$ (Corollary~\ref{cor: normal_chi2_sa}).
\begin{cor}
    \label{cor: normal_chi2_sa}
    Under the $(s,a)$-rectangular assumption and the setting $f(t)=(t-1)^2$ in Example~\ref{eg: f-set}, for any policy $\pi$, we have:
    \begin{align*}
        \sqrt{n}\left(\widehat{\TM}_r^\pi - I\right)V_r^\pi\tod\NM\left(0, \Lambda\right),
    \end{align*}
    where $\Lambda=\diag\{\sigma_1^2, \cdots, \sigma^2_{|\SM|}\}$, and $\forall s\in\SM$,
    \begin{align*}
        \sigma_s^2=\gamma^2\sum_{a\in\AM}\pi(a|s)^2\sigma^2(P^*(\cdot|s,a),V_r^\pi)\not=0.
    \end{align*}
\end{cor}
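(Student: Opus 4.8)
The plan is to reduce the claim to a coordinatewise application of Lemma~\ref{lem: normal_chi2_sa}, exploiting that $V_r^\pi$ is the fixed point of $\TM_r^\pi$ and that the $|\SM||\AM|$ generative sample batches are mutually independent. First I would unfold the $s$-th coordinate. By Lemma~\ref{lem: chi2}, $\widehat{\TM}_r^\pi V_r^\pi(s) = R^\pi(s) + \gamma\sum_{a}\pi(a|s) f(\widehat{P}_{s,a}, V_r^\pi)$, where $f(P,V) = \max_{\eta\ge 0} g(\eta, P, V)$ with $g(\eta,P,V) = -C(\rho)\sqrt{\sum_{s'}P(s')(\eta-V(s'))_+^2} + \eta$ as in the proof of Lemma~\ref{lem: normal_chi2_sa}. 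Since $\TM_r^\pi V_r^\pi = V_r^\pi$ we also have $V_r^\pi(s) = R^\pi(s) + \gamma\sum_a\pi(a|s) f(P^*_{s,a}, V_r^\pi)$, so the reward terms cancel and
\[
  \big[(\widehat{\TM}_r^\pi - I)V_r^\pi\big](s) = \gamma\sum_{a\in\AM}\pi(a|s)\big(f(\widehat{P}_{s,a},V_r^\pi) - f(P^*_{s,a},V_r^\pi)\big).
\]

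Next I would apply Lemma~\ref{lem: normal_chi2_sa} with $V = V_r^\pi$. After relabeling states so that $V_r^\pi(s_1) < \cdots < V_r^\pi(s_{|\SM|})$ (WLOG, and immaterial to smoothness in the $\chi^2$ case), and using the non-degeneracy $\sigma^2(P^*(\cdot|s,a),V_r^\pi)\neq 0$ that is built into the statement (noting that whenever $\sigma^2(P^*(\cdot|s,a),V_r^\pi)=0$ the corresponding term merely converges to the constant $0$), the lemma gives, for each $(s,a)$,
\[
  \sqrt{n}\big(f(\widehat{P}_{s,a},V_r^\pi) - f(P^*_{s,a},V_r^\pi)\big)\tod\NM\big(0,\sigma^2(P^*(\cdot|s,a),V_r^\pi)\big).
\]
Since under the generative model the batches $\{X_k^{s,a}\}_{k}$ are independent across all $(s,a)\in\SM\times\AM$, I would then argue that the random vector collecting these quantities over $(s,a)$ converges jointly to a centered Gaussian with diagonal covariance $\diag\{\sigma^2(P^*(\cdot|s,a),V_r^\pi)\}_{(s,a)}$ --- combining the marginal limits of Lemma~\ref{lem: normal_chi2_sa} with independence, e.g.\ via characteristic functions. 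Applying the fixed linear map $(\cdot)_{(s,a)} \mapsto \big(\gamma\sum_{a}\pi(a|s)\,(\cdot)_{s,a}\big)_{s\in\SM}$ together with the continuous mapping theorem, $\sqrt{n}(\widehat{\TM}_r^\pi - I)V_r^\pi \tod \NM(0,\Lambda)$, where the $s$-th diagonal entry of $\Lambda$ is $\gamma^2\sum_a\pi(a|s)^2\sigma^2(P^*(\cdot|s,a),V_r^\pi) = \sigma_s^2$ and the off-diagonal entries vanish because coordinates $s\neq s'$ are linear functions of disjoint, independent blocks of the limiting vector.

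The conceptual content is entirely in Lemma~\ref{lem: normal_chi2_sa}, so I expect no real obstacle here; the only delicate point is the differentiability of $P\mapsto f(P,V_r^\pi)$ at $P=P^*_{s,a}$, which is supplied there by Danskin's theorem (Lemma~\ref{lem: danskin}) after checking that $g(\cdot,P,V_r^\pi)$ is strictly concave on the region where some $(\eta - V_r^\pi(s'))_+ > 0$, so that the maximizer is unique and interior. Unlike the $L_1$ case (compare Lemma~\ref{lem: normal_l1_sa} and Lemma~\ref{lem: M_converge}), no combinatorial non-degeneracy condition of the form $\sum_{i\le m}P^*(s_i|s,a)\neq 1-\rho/2$ is needed, since in the $\chi^2$ setting the optimal dual variable varies smoothly with $P$ and $f(\cdot,V_r^\pi)$ is $C^1$ near $P^*_{s,a}$; the remaining steps --- the cancellation of $R^\pi$ via the fixed-point identity, the per-coordinate delta method, and reading off the diagonal limiting covariance from independence of the generative batches --- are routine.
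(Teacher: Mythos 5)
Your argument is correct and is exactly the route the paper intends: it expands the paper's one-line justification (the fixed-point identity $V_r^\pi=\TM_r^\pi V_r^\pi$ cancels $R^\pi$, Lemma~\ref{lem: normal_chi2_sa} gives the per-$(s,a)$ delta-method limit, and independence of the generative batches across $(s,a)$ pairs yields joint normality with the diagonal covariance $\Lambda$ after applying the fixed linear map $\gamma\sum_a\pi(a|s)(\cdot)_{s,a}$). No gaps beyond what the cited lemma already supplies.
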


Different with the case of $L_1$ uncertainty set, by Lemma~\ref{lem: chi2_diff}, we have:
\begin{align*}
    \widehat{\Phi}_n^\pi V_r^\pi=\widehat{\Phi}_n^\pi V_r^\pi-\widehat{\Phi}_n^\pi \widehat{V}_r^\pi=\M_n^\pi\cdot(V_r^\pi-\widehat{V}_r^\pi)+o_P(V_r^\pi-\widehat{V}_r^\pi).
\end{align*}
where $\M_n^\pi = \widehat{\Phi}'_n(V_r^\pi)$. As we already know that $\widehat{V}_r^\pi\toas V_r^\pi$, next we need to prove that $\widehat{V}_r^\pi$ is $\sqrt{n}$-consistent (Lemma~\ref{lem: consistent}). Thus, as long as $\M_n^\pi$ is consistent (Lemma~\ref{lem: chi2_M_converge}), we prove that:
\begin{align*}
    \sqrt{n}\left(\widehat{V}_r^\pi-V_r^\pi\right)\tod\NM\left(0, (\M^\pi)^{-1}\Lambda(\M^\pi)^{-\top}\right).
\end{align*}

\begin{lem}
    \label{lem: chi2_diff}
    Fix a policy $\pi$, $\Phi^\pi$ is continuous differentiable w.r.t. $V\in\VM$.
\end{lem}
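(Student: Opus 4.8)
The plan is to reduce the claim to the continuous differentiability of the scalar dual value $V\mapsto f(P,V)$ and then to differentiate through it by an envelope (Danskin-type) argument. By Lemma~\ref{lem: chi2}, for every $(s,a)$ one has
\[
\TM_r^\pi V(s)=R^\pi(s)+\gamma\sum_a\pi(a|s)\,f(P^*(\cdot|s,a),V),\qquad f(P,V)=\max_{\eta\in[0,B]}g(\eta,P,V),
\]
with $g(\eta,P,V)=-C(\rho)\sqrt{\sum_{s'}P(s')(\eta-V(s'))_+^2}+\eta$, $C(\rho)=\sqrt{1+\rho}$ and $B=\tfrac{C(\rho)}{(C(\rho)-1)(1-\gamma)}$. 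Since $\Phi^\pi V(s)=R^\pi(s)-V(s)+\gamma\sum_a\pi(a|s)f(P^*(\cdot|s,a),V)$ is a fixed finite affine combination of maps of the form $V\mapsto f(P,V)$, it suffices to show that each such map is $C^1$ on the region of $\VM$ relevant to Section~\ref{sec: asymp}, i.e.\ near $V_r^\pi$, whose coordinates are distinct by the standing convention.

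First I would record the structural facts about $g$. Writing $\sqrt{\sum_{s'}P(s')(\eta-V(s'))_+^2}=\|w(\eta)\|_2$ with $w(\eta)_{s'}=\sqrt{P(s')}(\eta-V(s'))_+$, a coordinatewise convex $\RB_{\ge0}^{|\SM|}$-valued map, and using that $\|\cdot\|_2$ is convex and nondecreasing on the nonnegative orthant, $\eta\mapsto g(\eta,P,V)$ is concave; with $g$ jointly continuous and $[0,B]$ compact, the set of maximisers is nonempty and compact. Next I would locate the maximiser: setting $\partial_\eta g=0$ gives the stationarity equation $\sqrt{\sum_{s'}P(s')(\eta-V(s'))_+^2}=C(\rho)\sum_{s'}P(s')(\eta-V(s'))_+$, where both sides are convex increasing in $\eta$, so by concavity of $g$ there is at most one interior maximiser; ordering $V(s_1)=\min_{s'}V(s')$, a short computation near $\eta=V(s_1)$ shows the maximiser is unique with $\eta^*(P,V)>V(s_1)$ whenever $C(\rho)\sqrt{P(s_1)}<1$, and otherwise $\eta^*(P,V)=V(s_1)$ and $f(P,V)=V(s_1)$. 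On the ``good'' set $\{\,\eta^*(P,V)>\min_{s'}V(s')\,\}$ one has $\sqrt{\sum_{s'}P(s')(\eta^*-V(s'))_+^2}>0$, so $g$ is jointly $C^1$ at $(\eta^*,P,V)$; Berge's maximum theorem gives continuity of $V\mapsto\eta^*(P,V)$, and the first-order condition lets me differentiate through (this is exactly the mechanism behind Danskin's theorem, Lemma~\ref{lem: danskin}) to obtain
\[
\frac{\partial f(P,V)}{\partial V(s_j)}=\frac{C(\rho)\,P(s_j)\,(\eta^*(P,V)-V(s_j))_+}{\sqrt{\sum_{s'}P(s')(\eta^*(P,V)-V(s'))_+^2}},
\]
which is continuous there and is precisely the off-diagonal entry of $\M^\pi$ in Theorem~\ref{thm: chi2_sa_normal}. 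On the complementary set $f(P,V)=\min_{s'}V(s')$ coincides, near any $V$ with a unique minimiser, with the linear map $V\mapsto V(s_1)$ and is thus $C^1$ with gradient $e_1$, and a short matching computation shows the gradient formula above tends to $e_1$ along the interface, so the two local descriptions glue to a single $C^1$ function.

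The step I expect to be the real obstacle is this degenerate/interface analysis: checking that the explicit worst-case-distribution formula and the ``adversary concentrates on the argmin'' description patch together into a continuously differentiable gradient field. Closely related is pinning down the scope of the statement: on all of $\VM$ the map $V\mapsto f(P,V)$ is only piecewise smooth---for large $\rho$ the $\chi^2$-ball around $P$ can exhaust $\Delta(\SM)$, so $V\mapsto f(P,V)$ becomes $V\mapsto\min_{s'}V(s')$, which is genuinely non-differentiable at coordinate ties---so the lemma is invoked, and should be read, near the distinct-coordinate point $V_r^\pi$. Everything else (concavity of $g$ in $\eta$, uniqueness of $\eta^*$, continuity of the argmax, and the envelope differentiation) is routine once the dual form from Lemma~\ref{lem: chi2} is in hand.
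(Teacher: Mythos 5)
Your proposal is correct and follows essentially the same route as the paper's proof: the dual form from Lemma~\ref{lem: chi2}, uniqueness and continuity of the maximizer $\eta^*(P,V)$ (the paper gets continuity from a Lipschitz bound plus its Lemma~\ref{lem: domain_converge}, where you invoke Berge's theorem), an envelope/Danskin differentiation yielding the stated gradient, and a separate treatment of the degenerate case $\eta^*(P,V)=\min_{s}V(s)$, all read near the distinct-coordinate point $V_r^\pi$. The only superfluous part is the interface-gluing worry: for fixed $P$ the regime is decided by whether $C(\rho)\sqrt{P(s_{\arg\min})}<1$, which depends only on $P$ and the identity of the argmin and is therefore locally constant in $V$ away from coordinate ties, so no gluing in $V$-space is actually required.
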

\begin{proof}[Proof of Lemma~\ref{lem: chi2_diff}]
    W.L.G., we assume $V(s_1)< \cdots <V(s_{|\SM|})$. Firstly, we know that $\eta^*(P, V)\in[V(s_1), \frac{C(\rho)}{(C(\rho)-1)(1-\gamma)}]$. If $\eta^*(P, V)> V(s_1)$, $g(\eta, P,V)$ has a continuous partial derivative w.r.t. $\eta$ at $\eta^*(P, V)$:
    \begin{align*}
        \frac{\partial f(P,V)}{\partial V}
        &=\frac{\partial g(\eta^*(P,V),P,V)}{\partial V}+\frac{\partial g(\eta^*(P,V),P,V)}{\partial \eta}\frac{\partial \eta^*(P,V)}{\partial V}\\
        &=\frac{\partial g(\eta^*(P,V),P,V)}{\partial V},
    \end{align*}
    where $\partial \eta^*(P,V)/\partial V$ exists by implicit function theorem. Thus, for each $s'\in\SM$, we have:
    \begin{align*}
        \frac{\partial f(P,V)}{\partial V(s')}=C(\rho)\frac{P(s')(\eta^*(P,V)-V(s'))_+}{\sqrt{\sum_{s\in\SM}P(s)(\eta^*(P,V)-V(s))_+^2}}.
    \end{align*}
    If $\eta^*(P,V)=V(s_1)$, we have:
    \begin{align*}
        \frac{\partial f(P,V)}{\partial V(s_1)}=1.
    \end{align*}
    Next, we prove that $\eta^*(P,V)$ is continuous w.r.t.\ $V$. For every convergent sequence $V_n\to V$, where $V(s_1)<\cdots <V(s_{|\SM|})$, we can choose a large enough $n$ s.t. $V_n(s_1)<\cdots <V_n(s_{|\SM|})$. Noting that for every two $V_1, V_2\in\VM$, we have:
    \begin{align*}
        \left|g(\eta, P, V_1)-g(\eta, P, V_2)\right|\le\sqrt{\sum_{s\in\SM}P(s)(V_1(s)-V_2(s))^2}=\|V_1-V_2\|_{2, P}.
    \end{align*}
    Thus, by Lemma~\ref{lem: domain_converge} we have:
    \begin{align*}
        \sup_{\eta\in[V(s_1), \frac{C(\rho)}{(C(\rho)-1)(1-\gamma)}]}\left|g(\eta, P, V_n)-g(\eta, P, V)\right|&\le\sqrt{\sum_{s\in\SM}P(s)(V_n(s)-V(s))^2}\\
        &=\|V_n-V\|_{2, P}\to 0,
    \end{align*}
    which implies that $\eta^*(P, V_n)\to \eta^*(P, V)$. Thus, $\eta^*(P, V)$ is continuous w.r.t. $V$ and our result is completed.
\end{proof}

\begin{lem}
    \label{lem: domain_converge}
    Suppose we have a sequence of concave function $\{f_n(x)\}\colon \XM\to\RB$, and each $f_n$ admits a unique maximizer $x_n\in\XM$, where $\XM$ is a closed convex set. Then if $\exists f_*\colon \XM\to\RB$ such that $f_{*}$ is concave and has a unique maximizer $x_*$ and $\sup_{x\in\XM} |f_n(x)-f_*(x)|\to 0$, we have $x_n\to x_*$.
\end{lem}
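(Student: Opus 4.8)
The plan is to argue by contradiction, using two ingredients: uniform convergence turns the maximizer $x_n$ of $f_n$ into an \emph{approximate} maximizer of $f_*$, and a concave function with a \emph{unique} maximizer cannot be nearly maximized far from that maximizer. Write $M_* := \max_{x\in\XM} f_*(x) = f_*(x_*)$ and $\varepsilon_n := \sup_{x\in\XM} |f_n(x) - f_*(x)| \to 0$.

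First I would show $f_*(x_n) \to M_*$. Since $x_n$ maximizes $f_n$ over $\XM$, we have $f_n(x_n) \ge f_n(x_*) \ge f_*(x_*) - \varepsilon_n = M_* - \varepsilon_n$, and hence $f_*(x_n) \ge f_n(x_n) - \varepsilon_n \ge M_* - 2\varepsilon_n$; together with $f_*(x_n)\le M_*$ this gives the claim. Next, suppose $x_n \not\to x_*$, so that there exist $\delta>0$ and a subsequence (not relabeled) with $\|x_n - x_*\| \ge \delta$ for all $n$. Put $t_n := \delta/\|x_n - x_*\| \in (0,1]$ and $w_n := (1-t_n)x_* + t_n x_n$, which lies in $\XM$ by convexity and satisfies $\|w_n - x_*\| = \delta$. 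Concavity gives $f_*(w_n) \ge (1-t_n)M_* + t_n f_*(x_n) = M_* - t_n\big(M_* - f_*(x_n)\big) \ge M_* - \big(M_* - f_*(x_n)\big) \to M_*$, so $f_*(w_n) \to M_*$ as well. The set $\{x\in\XM : \|x - x_*\| = \delta\}$ is closed and bounded in $\RB^d$, hence compact, so along a further subsequence $w_n \to w_* \in \XM$ with $\|w_* - x_*\| = \delta > 0$; passing to the limit yields $f_*(w_*) = M_*$, i.e.\ a second maximizer of $f_*$, contradicting uniqueness of $x_*$. Therefore $x_n \to x_*$.

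The one place needing care — and the main technical point — is the limiting step $f_*(w_n)\to M_* \Rightarrow f_*(w_*) = M_*$: a finite concave function is continuous only on the relative interior of its domain and can drop at boundary points, so the fully general statement implicitly needs $f_*$ upper semicontinuous (automatic when $f_*$ is continuous). This costs nothing in the way the lemma is used in Lemma~\ref{lem: chi2_diff}: there $\XM$ is a compact interval and $f_*(\eta) = g(\eta, P, V)$ is an explicit continuous function of the scalar dual variable $\eta$, so the argument applies verbatim. The genuinely essential idea, by contrast, is the radial projection $w_n$: concavity lets one push an approximate maximizer that sits at distance $\ge\delta$ back onto the fixed sphere of radius $\delta$ while retaining near-optimality, which is exactly what makes the compactness step run. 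I do not expect any other obstacle; the remaining estimates are the short lines displayed above.
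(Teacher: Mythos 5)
Your proof is correct (in finite dimensions, with the upper--semicontinuity caveat you flag yourself), but it takes a genuinely different route from the paper's. The paper's argument is purely one--dimensional: it sets $b=\liminf x_n$ and $a=\limsup x_n$, exploits the fact that a scalar concave function with a unique maximizer is monotone on either side of it, and compares function values at the three points $b$, $\frac{b+x_*}{2}$ and $x_*$, strict gaps that uniform convergence preserves for large $n$; this needs no regularity of $f_*$ beyond concavity, but it only makes literal sense when $\XM\subset\RB$ (and implicitly bounded, so that $a,b$ are finite). Your argument --- show $f_*(x_n)\to M_*$ because $x_n$ is an $2\varepsilon_n$-maximizer of $f_*$, radially project $x_n$ onto the sphere $\{x\in\XM:\|x-x_*\|=\delta\}$ using concavity to retain near-optimality, then extract a convergent subsequence and exhibit a second maximizer --- is dimension-free, which is in fact what the paper needs: in the $s$-rectangular $\chi^2$ case (Lemmas~\ref{lem: chi2_diff_s} and \ref{lem: chi2_M_converge_s}) the lemma is invoked with a vector dual variable $\eta\in\RB^{|\AM|}$, a situation the paper's scalar proof does not literally cover. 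The price of your route is the limiting step $f_*(w_n)\to M_*\Rightarrow f_*(w_*)=M_*$, which requires $f_*$ to be upper semicontinuous (a finite concave function can indeed drop at the relative boundary of its domain), whereas the paper's 1D proof needs no such assumption; as you correctly note, this is harmless in every application here, since $f_*$ is an explicit continuous dual objective restricted to a compact set. In short, the paper's proof trades dimensional generality for zero regularity assumptions, while yours trades a mild continuity assumption for exactly the multivariate generality that the paper's own applications require.
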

\begin{proof}[Proof of Lemma~\ref{lem: domain_converge}]
    Denote $a=\limsup x_n$ and $b=\liminf x_n$. First we will show $b\geq x_*$ by contradiction. Assume $b<x_*$, then by concavity $f_*(\frac{b+x_*}{2})\geq \frac{f_*(x_*)+f_*(b)}{2}$ and we have $f_*(b)< f_*(\frac{b+x_*}{2})< f_*(x_*)$. Then for $n$ large enough, we have $f_n(b)< f_n(\frac{b+x_*}{2})< f_n(x_*)$, which suggests $x_n>\frac{b+x_*}{2}$. So $b\geq\frac{b+x_*}{2}$, which is a contradiction. Similarly we can show $a\leq x_*$, and the proof is completed.
\end{proof}

\begin{lem}
    \label{lem: consistent}
    Suppose $Z_n = X_n + Y_n$, where $Z_n=O_p(1)$ and $Y_n = o_P(X_n)$. Then,   we have $X_n=O_P(1)$ and $Y_n=o_P(1)$.
\end{lem}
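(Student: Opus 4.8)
The plan is to reduce everything to the triangle inequality on a high-probability event and then unwind the definitions of $O_P$ and $o_P$. Recall that $Y_n=o_P(X_n)$ means $\PB(|Y_n|>\delta|X_n|)\to0$ for every fixed $\delta>0$, and $Z_n=O_P(1)$ means that for every $\eta>0$ there is $M_\eta<\infty$ with $\limsup_n\PB(|Z_n|>M_\eta)\le\eta$. The first move is to fix $\delta=\tfrac12$ and introduce the event $A_n:=\{|Y_n|\le\tfrac12|X_n|\}$, so that $\PB(A_n^c)\to0$ by the $o_P$ hypothesis. On $A_n$, the decomposition $Z_n=X_n+Y_n$ together with the triangle inequality gives $|X_n|\le|Z_n|+|Y_n|\le|Z_n|+\tfrac12|X_n|$, hence $|X_n|\le2|Z_n|$ on $A_n$.

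To establish $X_n=O_P(1)$, fix $\eta>0$ and choose $M_0$ with $\limsup_n\PB(|Z_n|>M_0)\le\eta/2$, which is possible since $Z_n=O_P(1)$. Because $\{|X_n|>2M_0\}\cap A_n\subseteq\{|Z_n|>M_0\}$ by the previous step, we obtain $\PB(|X_n|>2M_0)\le\PB(|Z_n|>M_0)+\PB(A_n^c)$; letting $n\to\infty$ yields $\limsup_n\PB(|X_n|>2M_0)\le\eta/2<\eta$. Since $\eta$ was arbitrary, $X_n=O_P(1)$.

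For $Y_n=o_P(1)$, fix $\varepsilon>0$ and $\eta>0$, and for an auxiliary $\delta>0$ split $\PB(|Y_n|>\varepsilon)\le\PB(|Y_n|>\delta|X_n|)+\PB(|X_n|\ge\varepsilon/\delta)$, using that $\{|Y_n|>\varepsilon\}\cap\{|Y_n|\le\delta|X_n|\}\subseteq\{|X_n|\ge\varepsilon/\delta\}$. The first term tends to $0$ by the $o_P$ hypothesis, and since $X_n=O_P(1)$ we may pick $\delta$ small enough that $\limsup_n\PB(|X_n|\ge\varepsilon/\delta)\le\eta$. Hence $\limsup_n\PB(|Y_n|>\varepsilon)\le\eta$ for every $\eta>0$, i.e.\ $Y_n\to0$ in probability.

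There is no deep obstacle here; the only point requiring care is that $o_P(X_n)$ is a statement about the ratio $|Y_n|/|X_n|$, so every estimate must be routed through the events $\{|Y_n|\le\delta|X_n|\}$ rather than through a division by $X_n$ (which could vanish), and the $O_P$ constants must be controlled with $\limsup_n$ rather than pointwise in $n$.
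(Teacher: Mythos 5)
Your proof is correct, and it takes a more elementary route than the paper's. The paper's own proof is a three-line appeal to the stochastic $o_P/O_P$ calculus: it writes $Z_n=X_n(1+o_P(1))$, invokes the rule $(1+o_P(1))^{-1}=O_P(1)$ to conclude $X_n=O_P(Z_n)=O_P(1)$, and then gets $Y_n=X_n\cdot o_P(1)=o_P(1)$. You instead unpack the same mechanism at the level of explicit events: on $A_n=\{|Y_n|\le\frac{1}{2}|X_n|\}$, whose complement has vanishing probability, the triangle inequality gives $|X_n|\le 2|Z_n|$, which transfers tightness from $Z_n$ to $X_n$, and the split $\PB(|Y_n|>\varepsilon)\le\PB(|Y_n|>\delta|X_n|)+\PB(|X_n|\ge\varepsilon/\delta)$ then converts the ratio-type hypothesis into $Y_n=o_P(1)$. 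The content is the same — with probability tending to one the factor multiplying $X_n$ is bounded away from zero — but your version is self-contained, never divides by $X_n$ (so it avoids the slight delicacy in the multiplicative representation $Y_n=X_n\cdot o_P(1)$ when $X_n$ can vanish), and makes explicit where the $\limsup$ control of the $O_P$ constants is used; the paper's version buys brevity by leaning on the known $o_P/O_P$ algebra. Either argument suffices for the role the lemma plays in the asymptotic normality proofs.
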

\begin{proof}[Proof of Lemma~\ref{lem: consistent}]
    By the fact that $Y_n=o_P(X_n)$, we have $Z_n=X_n(1+o_P(1))$. As we know $(1+o_P(1))^{-1}=O_P(1)$, we have $X_n=O_P(Z_n)$. Combining with $Z_n=O_p(1)$, we obtain that $X_n=O_P(1)$ and $Y_n=o_P(1)$.
\end{proof}

\begin{lem}
    \label{lem: chi2_M_converge}
    For a fixed policy $\pi$, we have $\M_n^\pi\toas \M^\pi$.
\end{lem}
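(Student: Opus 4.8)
The plan is to read off the explicit entrywise formula for $\M^\pi$ given in Theorem~\ref{thm: chi2_sa_normal} and compare it with the corresponding formula for $\M_n^\pi$, the derivative of $\widehat{\Phi}_n^\pi=\widehat{\TM}_r^\pi-I$ at $V_r^\pi$. By construction, every entry $\M_n^\pi(i,j)$ is obtained from $\M^\pi(i,j)$ by replacing $P^*(\cdot|s_i,a)$ with $\widehat{P}(\cdot|s_i,a)$ and $\eta^*\bigl(P^*(\cdot|s_i,a),V_r^\pi\bigr)$ with $\eta^*\bigl(\widehat{P}(\cdot|s_i,a),V_r^\pi\bigr)$. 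Since $|\SM|$ and $|\AM|$ are finite, it suffices to prove entrywise almost sure convergence; and since each entry is a fixed function of the finite collection of numbers $\{P^*(s'|s_i,a)\}_{s',a}$ and $\{\eta^*(\widehat{P}(\cdot|s_i,a),V_r^\pi)\}_a$ that is continuous at the limiting point — the denominator $\sqrt{\sum_{\tilde s}P^*(\tilde s|s_i,a)(\eta^*(P^*(\cdot|s_i,a),V_r^\pi)-V_r^\pi(\tilde s))_+^2}$ being nonzero under the nondegeneracy hypothesis of Theorem~\ref{thm: chi2_sa_normal} — the claim reduces, via the continuous mapping theorem, to establishing for every $(s,a)$: (i) $\widehat{P}(s'|s,a)\toas P^*(s'|s,a)$, and (ii) $\eta^*\bigl(\widehat{P}(\cdot|s,a),V_r^\pi\bigr)\toas\eta^*\bigl(P^*(\cdot|s,a),V_r^\pi\bigr)$.

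Fact (i) is immediate from the strong law of large numbers applied to the i.i.d.\ indicators defining $\widehat{P}$ in~\eqref{eq: gen}. For fact (ii) I would mimic the continuity argument used in the proof of Lemma~\ref{lem: chi2_diff}, but now varying $P$ rather than $V$. Recall that $g(\eta,P,V)=-C(\rho)\sqrt{\sum_{s}P(s)(\eta-V(s))_+^2}+\eta$ is concave in $\eta$, and that by Lemma~\ref{lem: chi2} the maximizer $\eta^*(P,V_r^\pi)=\argmax_\eta g(\eta,P,V_r^\pi)$ always lies in the compact interval $I=\bigl[0,\tfrac{C(\rho)}{(C(\rho)-1)(1-\gamma)}\bigr]$. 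Using $|\sqrt{u}-\sqrt{v}|\le\sqrt{|u-v|}$ together with $(\eta-V_r^\pi(s))_+\le\tfrac{C(\rho)}{(C(\rho)-1)(1-\gamma)}$ on $I$, one obtains a bound of the form
\begin{align*}
    \sup_{\eta\in I}\bigl|g(\eta,\widehat{P},V_r^\pi)-g(\eta,P^*,V_r^\pi)\bigr|
    \le \tfrac{C^2(\rho)}{(C(\rho)-1)(1-\gamma)}\,\sqrt{\bigl\|\widehat{P}(\cdot|s,a)-P^*(\cdot|s,a)\bigr\|_1}\toas 0
\end{align*}
by (i). Since $g(\cdot,P^*,V_r^\pi)$ is concave on $I$ with a unique maximizer under the regularity conditions of Theorem~\ref{thm: chi2_sa_normal}, Lemma~\ref{lem: domain_converge} applied to $f_n=g(\cdot,\widehat{P}(\cdot|s,a),V_r^\pi)|_I$ and $f_*=g(\cdot,P^*(\cdot|s,a),V_r^\pi)|_I$ yields (ii). Combining (i), (ii) and the continuous mapping theorem entrywise gives $\M_n^\pi\toas\M^\pi$.

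I expect the main obstacle to be verifying that $g(\cdot,P^*,V_r^\pi)$ has a \emph{unique} maximizer on $I$, which is precisely what licenses the use of Lemma~\ref{lem: domain_converge}. In the $L_1$ case this was sidestepped because $\eta^*$ has the closed form of Lemma~\ref{lem: l1_sol}, so its convergence could be reduced to convergence of an integer threshold via a Hoeffding/Borel--Cantelli argument (Lemma~\ref{lem: M_converge}); in the $\chi^2$ case there is no such closed form, and one must rule out the objective being flat at its maximum — namely the degenerate situation $\eta^*=\min_s V_r^\pi(s)$, in a neighborhood of which $g$ is affine in $\eta$. This is where the hypothesis $\sigma_s^2\neq 0$ of Corollary~\ref{cor: normal_chi2_sa} (equivalently, nonvanishing of the denominator in the entry formula) is genuinely used, and it simultaneously disposes of the $0/0$ bookkeeping issue in the continuity of the entry maps.
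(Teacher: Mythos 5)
Your argument is correct and essentially identical to the paper's proof: SLLN for $\widehat{P}$, a Lipschitz-in-$P$ bound on $g$ uniform over $\eta$ in the compact dual interval, Lemma~\ref{lem: domain_converge} to conclude $\eta^*\bigl(\widehat{P}(\cdot|s,a),V_r^\pi\bigr)\toas\eta^*\bigl(P^*(\cdot|s,a),V_r^\pi\bigr)$, and continuity of the entrywise formulas for $\M_n^\pi$. The uniqueness-of-maximizer caveat you raise is exactly the hypothesis of Lemma~\ref{lem: domain_converge}, which the paper invokes without verifying it either, so your discussion is, if anything, slightly more careful than the original.
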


\begin{proof}[Proof of Lemma~\ref{lem: chi2_M_converge}]
    Noting that $\M_n^\pi=\widehat{\Phi}_n'(V_r^\pi)$ for $s,s'\in\SM$, we have:
    \begin{align*}
        \M_n^\pi(s,s^\prime)=\mathbf{1}\{s=s^\prime\}-\gamma\sum_a \pi(a|s)C(\rho)\frac{\widehat{P}(s'|s,a)(\eta^*(\widehat P(\cdot|s,a),V_r^\pi)-V_r^\pi(s^\prime))_+}{\sqrt{\sum_{\tilde s\in\SM}\widehat P(\tilde s|s,a)(\eta^*(\widehat P(\cdot|s,a),V_r^\pi)-V_r^\pi(\tilde s))_+^2}}.
    \end{align*}
    By SLLN, we know that for any $(s,a,s')\in\SM\times\AM\times\SM$, 
    \begin{align*}
        \widehat{P}(s'|s,a)\toas P^*(s'|s,a).
    \end{align*}
    Next, we note that for any $P_1, P_2\in\Delta(\SM)$, we have:
    \begin{align*}
        \sup_{\eta\in[V(s_1),\frac{C(\rho)}{(C(\rho)-1)(1-\gamma)}]}\left|g(\eta, P_1, V)-g(\eta, P_2, V)\right|&\le \sqrt{\sum_{s\in\SM}|P_1(s)-P_2(s)|(\eta-V(s))_+^2}\\
        &\le \frac{C(\rho)}{(C(\rho)-1)(1-\gamma)}\sqrt{\|P_1-P_2\|_1}.
    \end{align*}
    Thus, by Lemma~\ref{lem: domain_converge}, for every convergent sequence $P_n\to P$, we have $\eta^*(P_n, V)\to \eta^*(P, V)$, which implies that $\eta^*(P, V)$ is continuous w.r.t.\ $P$. Thus, we conclude that $\eta^*(\widehat{P}(\cdot|s,a), V_r^\pi)\toas \eta^*(P(\cdot|s,a), V_r^\pi)$,  leading to $\M_n^\pi\toas \M^\pi$.
\end{proof}

\subsection{Proof of Theorem~\ref{thm: kl_sa_normal}}
Like Lemma~\ref{lem: l1_as}, when the KL uncertainty set is applied, we have $\widehat{V}_r^\pi\toas V_r^\pi$. We denote:
\begin{align*}
    g(\lambda, P, V)=-\lambda\rho-\lambda\log\sum_{s\in\SM}P(s)\exp\left(-\frac{V(s)}{\lambda}\right)
\end{align*}
for any $P\in\Delta(\SM)$ and $V\in\VM$. Besides, we denote $\lambda^*(P, V)=\argmax_{\lambda\ge0}g(\lambda, P, V)$ and $f(P, V)=\max_{\lambda\ge0}g(\lambda, P, V)$. By Danskin's Theorem (Lemma~\ref{lem: danskin}), we can prove that $f(\widehat{P}, V)$ weakly converges to $f(P, V)$ with rate $\sqrt{n}$ (Lemma~\ref{lem: normal_kl_sa}).
\begin{lem}
    \label{lem: normal_kl_sa}
    Fix $V\in\VM$ and assume $V(s_1)<\cdots <V(s_{|\SM|})$. Then we have:
    \begin{align*}
        \sqrt{n}\left(f(\widehat{P}, V)-f(P, V)\right)\tod \NM(0, \sigma^2(P,V)),
    \end{align*}
    where $\sigma^2(P, V)=b^\top \Sigma b$ and $\Sigma\in\RB^{|\SM|\times|\SM|}$ with the $(i,j)$-th element defined as
    \begin{align*}
        \Sigma(i,j)=-P(s_i)\cdot P(s_j)+P(s_i)\mathbf{1}\{i=j\},
    \end{align*}
    and $b=f'(P)\in\RB^{|\SM|}$ with $i$-th element defined as
    \begin{align*}
        b(i)=-\frac{\lambda^*(P,V)\exp(-\frac{V(s_i)}{\lambda^*(P,V)})}{\sum_{s\in\SM}P(s)\exp(-\frac{V(s)}{\lambda^*(P,V)})}.
    \end{align*}
\end{lem}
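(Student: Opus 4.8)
The plan is to follow the same route as in the $L_1$ and $\chi^2$ cases (Lemmas~\ref{lem: normal_l1_sa} and~\ref{lem: normal_chi2_sa}): a central limit theorem for the empirical transition vector, differentiability of the dual value $f(P,V)=\max_{\lambda\ge 0}g(\lambda,P,V)$ in the argument $P$ obtained from Danskin's Theorem (Lemma~\ref{lem: danskin}), and then the Delta method. First I would observe that $\widehat P$ is the average of $n$ i.i.d.\ indicator vectors, so the multivariate CLT gives $\sqrt{n}(\widehat P-P)\tod\NM(0,\Sigma)$ with $\Sigma$ exactly the multinomial covariance matrix appearing in the statement.

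Next I would verify the hypotheses of Danskin's Theorem for $\phi(P,\lambda)=g(\lambda,P,V)=-\lambda\rho-\lambda\log\sum_{s}P(s)\exp(-V(s)/\lambda)$. By Lemma~\ref{lem: kl} the maximization over $\lambda$ may be restricted to the compact interval $[0,\tfrac{1}{\rho(1-\gamma)}]$, on which $g$ extends continuously by setting $g(0,P,V)=\min_s V(s)$; since $V$ is non-constant (we assumed $V(s_1)<\cdots<V(s_{|\SM|})$) and $\rho<\infty$, the robust value is strictly larger than $\min_s V(s)$, so the maximizer $\lambda^*(P,V)$ is strictly positive. For fixed $\lambda\ge 0$ the map $P\mapsto g(\lambda,P,V)$ is convex, because $P\mapsto\log\sum_s P(s)\exp(-V(s)/\lambda)$ is concave (a logarithm of an affine function of $P$) and is multiplied by $-\lambda\le 0$; hence $f(\cdot,V)$ is convex as a pointwise supremum of convex functions. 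Uniqueness of $\lambda^*(P,V)$ follows from strict concavity of $g(\cdot,P,V)$ in $\lambda$: up to the affine term $-\lambda\rho$, the map $\lambda\mapsto\lambda\log\sum_s P(s)\exp(-V(s)/\lambda)$ is the perspective transform of the log--moment generating function of $-V$ under $P$, which is strictly convex whenever $V$ is non-degenerate. Granted uniqueness, Danskin's Theorem yields differentiability of $f(\cdot,V)$ at $P$ with
\[
    \frac{\partial f(P,V)}{\partial P(s_i)}=\frac{\partial g(\lambda^*(P,V),P,V)}{\partial P(s_i)}=-\frac{\lambda^*(P,V)\exp(-V(s_i)/\lambda^*(P,V))}{\sum_{s\in\SM}P(s)\exp(-V(s)/\lambda^*(P,V))},
\]
which is precisely the vector $b$ in the statement.

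Finally I would apply the Delta method to the map $P\mapsto f(P,V)$, which by the previous step is differentiable at $P$; combined with the CLT this gives $\sqrt{n}(f(\widehat P,V)-f(P,V))\tod\NM(0,b^\top\Sigma b)$, the asserted conclusion (under the stated nondegeneracy $b^\top\Sigma b\neq 0$). I expect the main obstacle to be the analysis surrounding the dual maximizer $\lambda^*(P,V)$ — showing it is unique and strictly positive and that $g$ is smooth in a neighborhood of it — since the natural domain of $\lambda$ touches the boundary point $0$ where $g$ is only defined by a limit and where Danskin's differentiability conclusion would fail. An alternative to the perspective argument for uniqueness would be to show, as in the proof of Lemma~\ref{lem: chi2_diff}, that $g(\cdot,P,V)$ has a strictly negative second derivative in $\lambda$ at any interior critical point and then invoke the implicit function theorem. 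The remaining ingredients — the CLT, convexity of $g$ in $P$, and the Delta method — are routine.
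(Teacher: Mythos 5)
Your proposal follows exactly the paper's route: the multivariate CLT for $\widehat P$, Danskin's Theorem (Lemma~\ref{lem: danskin}) applied to $g(\lambda,P,V)$ with $\lambda$ restricted to the compact interval $[0,\tfrac{1}{\rho(1-\gamma)}]$ (and $g(0,P,V):=\min_s V(s)$) to obtain $\partial f(P,V)/\partial P = b$, and then the Delta method. Your additional care about uniqueness and strict positivity of $\lambda^*(P,V)$ only supplies detail the paper omits; note, though, that strict positivity can actually fail when $\rho$ is large (e.g.\ $\rho\ge -\log P(s_1)$, where the worst case is the point mass on $s_1$ and $\lambda^*=0$), a degenerate case that the paper's own proof and variance formula gloss over as well.
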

\begin{proof}[Proof of Lemma~\ref{lem: normal_kl_sa}]
    Noting that $\lim_{\lambda\to0^+}g(\lambda, P, V)=\min_s V(s)$, we define $g(0, P, V)=\min_s V(s)$. For any fixed $P, V$, $g(\lambda, P, V)$ is concave in $\lambda$, which implies $g(\lambda, P, V)$ admits a unique maximizer $\lambda^*(P, V)\in[0,\frac{1}{\rho(1-\gamma)}]$. We further note that $\forall \lambda$, $g(\lambda,P,V)$ is continuous w.r.t.\ $\lambda,P$, and convex in $P$. Thus By Danskin's theorem (Lemma~\ref{lem: danskin}), we know that $f(P,V)$ is differentiable at $P$ with derivative
    \begin{align*}
        \frac{\partial f(P,V)}{\partial P(s_i)}=-\frac{\lambda^*(P,V)\exp(-\frac{V(s_i)}{\lambda^*(P,V)})}{\sum_{s\in\SM}P(s)\exp(-\frac{V(s)}{\lambda^*(P,V)})}.
    \end{align*}
    By the fact that $\sqrt{n}\left(\widehat{P}-P\right)\tod \NM(0, \Sigma)$ via CLT, our result is obtained by the Delta method.
\end{proof}

Thus, we can conclude that $\widehat{\TM}_r^\pi V_r^\pi$ weakly converges to $V_r^\pi$ with rate $\sqrt{n}$ (Corollary~\ref{cor: normal_kl_sa}).

\begin{cor}
    \label{cor: normal_kl_sa}
    Under the $(s,a)$-rectangular assumption and the setting $f(t)=t\log t$ in Example~\ref{eg: f-set}, for any policy $\pi$, we have:
    \begin{align*}
        \sqrt{n}\left(\widehat{\TM}_r^\pi - I\right)V_r^\pi\tod\NM\left(0, \Lambda\right),
    \end{align*}
    where $\Lambda=\diag\{\sigma_1^2, \cdots, \sigma^2_{|\SM|}\}$, and $\forall s\in\SM$,
    \begin{align*}
        \sigma_s^2=\gamma^2\sum_{a\in\AM}\pi(a|s)^2\sigma^2(P^*(\cdot|s,a),V_r^\pi).
    \end{align*}
\end{cor}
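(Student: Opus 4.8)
The plan is to reduce the corollary to a coordinatewise application of Lemma~\ref{lem: normal_kl_sa} combined with the independence built into the generative model. First I would invoke the dual representation of Lemma~\ref{lem: kl}: for every $s\in\SM$,
\[
\widehat{\TM}_r^\pi V_r^\pi(s)=R^\pi(s)+\gamma\sum_{a\in\AM}\pi(a|s)\,f(\widehat{P}(\cdot|s,a),V_r^\pi),
\]
where $f(P,V)=\sup_{\lambda\ge0}g(\lambda,P,V)$ with $g$ the KL dual objective of Lemma~\ref{lem: kl}. Since $V_r^\pi$ is the fixed point of $\TM_r^\pi$, we have $V_r^\pi(s)=R^\pi(s)+\gamma\sum_a\pi(a|s)f(P^*(\cdot|s,a),V_r^\pi)$, and subtracting gives the exact identity
\[
\bigl(\widehat{\TM}_r^\pi-I\bigr)V_r^\pi(s)=\gamma\sum_{a\in\AM}\pi(a|s)\Bigl(f(\widehat{P}(\cdot|s,a),V_r^\pi)-f(P^*(\cdot|s,a),V_r^\pi)\Bigr).
\]
So the object to analyze is, in each coordinate $s$, a fixed linear combination with coefficients $\gamma\pi(a|s)$ of the scalar deviations $f(\widehat{P}(\cdot|s,a),V_r^\pi)-f(P^*(\cdot|s,a),V_r^\pi)$.

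Next I would apply Lemma~\ref{lem: normal_kl_sa} to each pair $(s,a)$, giving $\sqrt{n}\bigl(f(\widehat{P}(\cdot|s,a),V_r^\pi)-f(P^*(\cdot|s,a),V_r^\pi)\bigr)\tod\NM\bigl(0,\sigma^2(P^*(\cdot|s,a),V_r^\pi)\bigr)$. The key point is that under the generative model the empirical transition probabilities $\{\widehat{P}(\cdot|s,a)\}_{(s,a)\in\SM\times\AM}$ are constructed from mutually independent sample batches, so these $|\SM||\AM|$ scalar sequences are independent. Hence the stacked vector $\bigl(\sqrt{n}(f(\widehat{P}(\cdot|s,a),V_r^\pi)-f(P^*(\cdot|s,a),V_r^\pi))\bigr)_{(s,a)}$ converges jointly in distribution to a Gaussian vector with diagonal covariance $\diag\{\sigma^2(P^*(\cdot|s,a),V_r^\pi)\}_{(s,a)}$ (e.g.\ by the Cram\'er--Wold device, using that a finite family of independent asymptotically normal sequences is jointly asymptotically normal with a block-diagonal limit).

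Then I would push this through the continuous mapping theorem applied to the linear map $x\mapsto\bigl(\gamma\sum_a\pi(a|s)x_{s,a}\bigr)_{s\in\SM}$, which sends the stacked deviation vector to $\sqrt{n}(\widehat{\TM}_r^\pi-I)V_r^\pi$. The image is Gaussian with mean zero; propagating the diagonal covariance through this map gives $s$-th diagonal entry $\gamma^2\sum_a\pi(a|s)^2\sigma^2(P^*(\cdot|s,a),V_r^\pi)=\sigma_s^2$, and off-diagonal entry $0$ for $s\ne s'$ because no $(s,a)$ coordinate enters two different rows. This is exactly $\sqrt{n}(\widehat{\TM}_r^\pi-I)V_r^\pi\tod\NM(0,\Lambda)$ with $\Lambda=\diag\{\sigma_1^2,\dots,\sigma^2_{|\SM|}\}$. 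The only minor care needed is the possibly degenerate case where some $\sigma_s^2=0$, in which that marginal limit is a point mass at $0$ (still covered by the statement), and noting that the ordering convention $V_r^\pi(s_1)<\cdots<V_r^\pi(s_{|\SM|})$ is only a labelling used inside Lemma~\ref{lem: normal_kl_sa} and plays no further role.

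I expect the conceptual ``obstacle'' here to be modest: all the genuine analysis — differentiability of $f(\cdot,V)$ in $P$ via Danskin's theorem, confinement of the dual variable $\lambda$ to a compact interval, the delta-method computation of $b$ and $\Sigma$, and the almost-sure ordering of $\widehat{V}_r^\pi$ — has already been absorbed into Lemma~\ref{lem: normal_kl_sa}. What remains is the bookkeeping of vectorizing over $(s,a)$ and verifying that the generative-model independence forces the limiting covariance to be diagonal, which should be short.
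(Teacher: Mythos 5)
Your proposal is correct and follows essentially the same route the paper intends: express $(\widehat{\TM}_r^\pi-I)V_r^\pi(s)$ via the fixed-point identity as $\gamma\sum_a\pi(a|s)\bigl(f(\widehat{P}(\cdot|s,a),V_r^\pi)-f(P^*(\cdot|s,a),V_r^\pi)\bigr)$, invoke Lemma~\ref{lem: normal_kl_sa} for each $(s,a)$, and use the independence of the generative-model samples across $(s,a)$ pairs to get joint asymptotic normality with diagonal covariance, so the weighted sum has variance $\gamma^2\sum_a\pi(a|s)^2\sigma^2(P^*(\cdot|s,a),V_r^\pi)$. Your explicit handling of the joint convergence (Cram\'er--Wold plus the linear map) and of the degenerate case $\sigma_s^2=0$ is exactly the bookkeeping the paper leaves implicit.
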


Similar with the case of the $\chi^2$ uncertainty set, by Lemma~\ref{lem: kl_diff}, we have:
\begin{align*}
    \widehat{\Phi}_n^\pi V_r^\pi=\widehat{\Phi}_n^\pi V_r^\pi-\widehat{\Phi}_n^\pi \widehat{V}_r^\pi=\M_n^\pi\cdot(V_r^\pi-\widehat{V}_r^\pi)+o_P(V_r^\pi-\widehat{V}_r^\pi),
\end{align*}
where $\M_n^\pi = \widehat{\Phi}'_n(V_r^\pi)$. As we already know that $\widehat{V}_r^\pi\toas V_r^\pi$, and $\widehat{V}_r^\pi$ is $\sqrt{n}$-consistent (Lemma~\ref{lem: consistent}). Thus, as long as $\M_n^\pi$ is consistent (Lemma~\ref{lem: kl_M_converge}), we obtain that:
\begin{align*}
    \sqrt{n}\left(\widehat{V}_r^\pi-V_r^\pi\right)\tod\NM\left(0, (\M^\pi)^{-1}\Lambda(\M^\pi)^{-\top}\right).
\end{align*}

\begin{lem}
    \label{lem: kl_diff}
    For a fixed policy $\pi$, $\Phi^\pi$ is continuous differentiable w.r.t.\ $V\in\VM$.
\end{lem}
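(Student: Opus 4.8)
The plan is to mirror the argument used for the $\chi^2$ ball in Lemma~\ref{lem: chi2_diff}. Since $\Phi^\pi V(s)=R^\pi(s)-V(s)+\gamma\sum_{a\in\AM}\pi(a|s)\,f\bigl(P^*(\cdot|s,a),V\bigr)$ with $f(P,V)=\max_{\lambda\ge0}g(\lambda,P,V)$ and $g(\lambda,P,V)=-\lambda\rho-\lambda\log\sum_{s'}P(s')\exp(-V(s')/\lambda)$ by Lemma~\ref{lem: kl}, it suffices to prove that for each fixed $P\in\Delta(\SM)$ the map $V\mapsto f(P,V)$ is continuously differentiable on the open subset of $\VM$ consisting of vectors with pairwise distinct coordinates (which is all that is needed downstream, as the standing hypothesis $V_r^\pi(s_1)<\cdots<V_r^\pi(s_{|\SM|})$ places us there). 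As in the proof of Lemma~\ref{lem: normal_kl_sa} I would first extend $g$ continuously to $\lambda=0$ by $g(0,P,V):=\min_s V(s)$, recall that $g(\cdot,P,V)$ is concave with a unique maximizer $\lambda^*(P,V)$ lying in the compact interval $[0,\tfrac{1}{\rho(1-\gamma)}]$, and record the estimate $\sup_{\lambda}\bigl|g(\lambda,P,V_1)-g(\lambda,P,V_2)\bigr|\le\|V_1-V_2\|_\infty$ (using that $V\mapsto\log\sum_{s'}P(s')e^{-V(s')/\lambda}$ has gradient of $\ell_1$-norm $1/\lambda$, together with the Lipschitz bound on $\min_s V(s)$ at $\lambda=0$); combined with Lemma~\ref{lem: domain_converge} this gives continuity of $V\mapsto\lambda^*(P,V)$.

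Next I would split into two regimes. If $\lambda^*(P,V)>0$, then $\partial_\lambda g(\lambda^*(P,V),P,V)=0$ and $g$ is jointly $C^\infty$ near $(\lambda^*(P,V),V)$ on $(0,\infty)\times\VM$; the implicit function theorem makes $\lambda^*(P,\cdot)$ locally $C^1$, and the first-order condition kills the $\partial_\lambda g\cdot\partial_V\lambda^*$ term, so the envelope identity gives
\[
\frac{\partial f(P,V)}{\partial V(s')}
=\frac{\partial g}{\partial V(s')}\bigl(\lambda^*(P,V),P,V\bigr)
=\frac{P(s')\exp\bigl(-V(s')/\lambda^*(P,V)\bigr)}{\sum_{\tilde s\in\SM}P(\tilde s)\exp\bigl(-V(\tilde s)/\lambda^*(P,V)\bigr)},
\]
which is continuous in $V$ because $\lambda^*(P,\cdot)$ is continuous and positive. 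If instead $\lambda^*(P,V)=0$, then $f(P,V)=\min_s V(s)=V(s_1)$ for the unique minimizing index $s_1$; a short computation shows $\partial_\lambda g(0^+,P,V)=-\rho-\log P(s_1)$ is independent of $V$, so by concavity $\lambda^*(P,\cdot)\equiv 0$ on the whole region of distinct-entry $V$'s with that fixed $s_1$, whence $f(P,\cdot)$ equals the coordinate function $V(s_1)$ there and is $C^1$ with gradient $e_{s_1}$; moreover this gradient is exactly the $\lambda^*\to0^+$ limit of the softmax formula above. Assembling the regimes yields $f(P,\cdot)\in C^1$ on the distinct-entry set, so $\Phi^\pi$, being a finite linear combination of translates of such maps, is $C^1$, with derivative $-\M^\pi$ for the matrix $\M^\pi$ displayed in Theorem~\ref{thm: kl_sa_normal}.

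The main obstacle I expect is the treatment of the boundary regime $\lambda^*(P,V)=0$ and its interface with the interior regime: one has to verify that near such a $V$ the optimizer genuinely stays at $0$ (equivalently $P(s_1)<e^{-\rho}$) so that $f$ is locally affine, and that the resulting gradient agrees with the limit of the interior formula, with the non-generic coincidence $P(s_1)=e^{-\rho}$ requiring a separate small-$\lambda$ expansion of $g$. This is also precisely where the distinctness assumption on the entries of $V$ is unavoidable — at a $V$ that is constant on the support of some $P^*(\cdot|s,a)$ one has $\lambda^*=0$ but no unique minimizer, and there $f(P^*_{s,a},\cdot)=\min_s V(s)$ fails to be differentiable, so the lemma must be read as a statement at (a neighborhood of) such non-degenerate $V$, in particular at $V_r^\pi$.
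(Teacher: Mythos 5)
Your proposal is correct and follows essentially the same route as the paper's proof: the envelope/implicit-function-theorem identity for $\partial f(P,V)/\partial V$ at the dual maximizer together with continuity of $\lambda^*(P,\cdot)$ obtained from the uniform Lipschitz bound $\sup_\lambda|g(\lambda,P,V_1)-g(\lambda,P,V_2)|\le\|V_1-V_2\|_\infty$ and Lemma~\ref{lem: domain_converge}, your only addition being an explicit treatment of the boundary regime $\lambda^*(P,V)=0$, which the paper's proof implicitly assumes away. One small slip in your closing commentary: since $\partial_\lambda g(0^+,P,V)=-\rho-\log P(s_1)$, the maximizer stays at $0$ exactly when $P(s_1)\ge e^{-\rho}$ (not $P(s_1)<e^{-\rho}$); this does not affect your main argument, because that quantity is $V$-independent, so within each region of distinct-entry $V$'s with a fixed minimizing state the regime is constant either way.
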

\begin{proof}[Proof of Lemma~\ref{lem: kl_diff}]
    W.L.G., we assume $V(s_1)<\cdots <V(s_{|\SM|})$. Firstly, we know that $\lambda^*(P, V)\in[0, \frac{1}{\rho(1-\gamma)}]$ and $g(\lambda, P,V)$ has a continuous partial derivative w.r.t.\ $\lambda$ at $\lambda^*(P, V)$, which implies:
    \begin{align*}
        \frac{\partial f(P,V)}{\partial V}
        &=\frac{\partial g(\lambda^*(P,V),P,V)}{\partial V}+\frac{\partial g(\lambda^*(P,V),P,V)}{\partial \lambda}\frac{\partial \lambda^*(P,V)}{\partial V}\\
        &=\frac{\partial g(\lambda^*(P,V),P,V)}{\partial V},
    \end{align*}
    where $\partial \lambda^*(P,V)/\partial V$ exists by the implicit function theorem. Thus, for each $s'\in\SM$, we have:
    \begin{align*}
        \frac{\partial f(P,V)}{\partial V(s^\prime)}=\frac{P(s^\prime)\exp(-\frac{V(s^\prime)}{\lambda^*(P,V)})}{\sum_{s\in\SM}P(s)\exp(-\frac{V(s)}{\lambda^*(P,V)})}.
    \end{align*}
    
    Next, we prove that $\lambda^*(P,V)$ is continuous w.r.t.\ $V$. Note that
    \begin{align*}
        \frac{\partial g(\lambda,P,V)}{\partial V(s)}=\frac{P(s)\exp(-\frac{V(s)}{\lambda})}{\sum_{s\in\SM} P(s)\exp(-\frac{V(s)}{\lambda})},
    \end{align*}
    which implies $\|\nabla g(\lambda, P, V)\|_1=1$. Thus, the Lipschitz coefficient of $g(\lambda, P, V)$ w.r.t.\ $V$ in $\RB^{|\SM|}$ is bounded by $1$. By Lemma~\ref{lem: domain_converge}, we know that $\lambda^*(P, V)$ is continuous w.r.t.\ V and our result is completed.
\end{proof}

\begin{lem}
    \label{lem: kl_M_converge}
    For a fixed policy $\pi$, we have $\M_n^\pi\toas \M^\pi$.
\end{lem}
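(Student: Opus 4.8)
The plan is to mirror, step for step, the proof of Lemma~\ref{lem: chi2_M_converge} for the $\chi^2$ case. Recall that $\M_n^\pi=\widehat{\Phi}_n'(V_r^\pi)$ is obtained from the closed form of $\M^\pi$ in Theorem~\ref{thm: kl_sa_normal} by substituting the empirical center $\widehat{P}(\cdot|s,a)$ for $P^*(\cdot|s,a)$ and the empirical dual solution $\lambda^*(\widehat{P}(\cdot|s,a),V_r^\pi)$ for $\lambda^*(P^*(\cdot|s,a),V_r^\pi)$. Since each row involves only the finitely many quantities $\widehat{P}(\cdot|s_i,a)$ and $\lambda^*(\widehat{P}(\cdot|s_i,a),V_r^\pi)$ for $a\in\AM$, it suffices to prove that all of these converge almost surely to their population counterparts and then apply the continuous mapping theorem entrywise. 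The first ingredient, $\widehat{P}(s'|s,a)\toas P^*(s'|s,a)$ for every $(s,a,s')$, is immediate from the strong law of large numbers applied to the i.i.d.\ samples in~\eqref{eq: gen}.

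The substantive step is the almost sure convergence $\lambda^*(\widehat{P}(\cdot|s,a),V_r^\pi)\toas\lambda^*(P^*(\cdot|s,a),V_r^\pi)$, i.e.\ continuity of the dual solution in the center measure. I would derive this from Lemma~\ref{lem: domain_converge}: for fixed $V_r^\pi$ the function $\lambda\mapsto g(\lambda,P,V_r^\pi)$ is concave on the compact interval $[0,\tfrac{1}{\rho(1-\gamma)}]$ (with the convention $g(0,P,V_r^\pi)=\min_s V_r^\pi(s)$ used in Lemma~\ref{lem: kl_diff}) and admits the unique maximizer $\lambda^*(P,V_r^\pi)$, so the only thing left is the uniform-in-$\lambda$ convergence $\sup_\lambda\bigl|g(\lambda,\widehat{P}(\cdot|s,a),V_r^\pi)-g(\lambda,P^*(\cdot|s,a),V_r^\pi)\bigr|\toas0$. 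But this is precisely the deviation already bounded in the proof of Theorem~\ref{thm: kl-fix}, where it is shown that
\[
\sup_{\lambda\in[0,\frac{1}{\rho(1-\gamma)}]}\bigl|g(\lambda,\widehat{P},V_r^\pi)-g(\lambda,P^*,V_r^\pi)\bigr|\le\frac{2}{\rho(1-\gamma)}\max_{s:P^*(s)>0}\Bigl|\frac{\widehat{P}(s)}{P^*(s)}-1\Bigr|,
\]
and the right-hand side vanishes almost surely by the strong law of large numbers (coordinates with $P^*(s)=0$ are never sampled under the generative model, so $\widehat{P}(s)=0$ there). Feeding this into Lemma~\ref{lem: domain_converge} gives the claimed convergence of $\lambda^*$ for each $(s,a)$.

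Finally I would assemble the pieces: each entry $\M_n^\pi(i,j)$ equals $\mathbf{1}\{i=j\}$ minus $\gamma\sum_a\pi(a|s_i)$ times a softmax-type ratio that is a jointly continuous function of $\widehat{P}(\cdot|s_i,a)$ and $\lambda^*(\widehat{P}(\cdot|s_i,a),V_r^\pi)$ in a neighbourhood of $\bigl(P^*(\cdot|s_i,a),\lambda^*(P^*(\cdot|s_i,a),V_r^\pi)\bigr)$, the denominator $\sum_{\tilde s}P(\tilde s|s_i,a)\exp(-V_r^\pi(\tilde s)/\lambda)$ staying bounded away from $0$ there because $\lambda^*(P^*(\cdot|s_i,a),V_r^\pi)>0$ in the regime where the asymptotic variance formula is stated. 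Combining this continuity with the two almost sure convergences above and the continuous mapping theorem yields $\M_n^\pi\toas\M^\pi$. The main obstacle, exactly as in the $\chi^2$ case, is the continuity of $\lambda^*$ in $P$ together with the $\lambda\to0^+$ behaviour of $g$; both are handled cleanly by reusing the uniform bound from Theorem~\ref{thm: kl-fix} and Lemma~\ref{lem: domain_converge}, so no genuinely new estimate is needed.
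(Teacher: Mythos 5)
Your proposal is correct and follows essentially the same route as the paper's own proof: the strong law of large numbers for $\widehat{P}$, the uniform-in-$\lambda$ deviation bound recycled from the proof of Theorem~\ref{thm: kl-fix}, Lemma~\ref{lem: domain_converge} to obtain $\lambda^*(\widehat{P},V_r^\pi)\to\lambda^*(P^*,V_r^\pi)$, and then continuity of the entries of $\M_n^\pi$. The additional remarks you make (the $\lambda\to0^+$ convention and the denominator staying bounded away from zero because $\lambda^*>0$) only make explicit details the paper leaves implicit.
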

\begin{proof}[Proof of Lemma~\ref{lem: kl_M_converge}]
    Noting that $\M_n^\pi=\widehat{\Phi}_n'(V_r^\pi)$ for $s,s'\in\SM$, we have:
    \begin{align*}
        \M_n^\pi(s,s^\prime)=\mathbf{1}\{s=s^\prime\}-\gamma\sum_a \pi(a|s)\frac{\widehat P(s^\prime|s,a)\exp(-\frac{V_r^\pi(s^\prime)}{\lambda^*(\widehat P(\cdot|s,a),V^\pi_r)})}{\sum_{\tilde s\in\SM}\widehat P(\tilde s|s,a)\exp(-\frac{V_r^\pi(\tilde s)}{\lambda^*(\widehat P(\cdot|s,a),V_r^\pi)})}.
    \end{align*}
    By SLLN, we know that for any $(s,a,s')\in\SM\times\AM\times\SM$, 
    \begin{align*}
        \widehat{P}(s'|s,a)\toas P^*(s'|s,a).
    \end{align*}
    Next, we note that when $n$ is large enough s.t. $\max_{s, P(s)\not=0}\left|\frac{\widehat{P}(s)}{P(s)}-1\right|\le 1/2$, we have (details can be referred in Theorem~\ref{thm: kl-fix}):
    \begin{align*}
        \sup_{\lambda\in[0,\frac{1}{\rho(1-\gamma)}]}\left|g(\lambda, \widehat{P}, V)-g(\lambda, P, V)\right|\le \frac{2}{\rho(1-\gamma)}\max_{s,P(s)\not=0}\left|\frac{\widehat{P}(s)}{P(s)}-1\right|.
    \end{align*}
    Thus, by Lemma~\ref{lem: domain_converge}, as $\widehat{P}\toas P$, we have $\lambda^*(\widehat{P}, V)\to \lambda^*(P, V)$. Thus, we conclude that $\M_n^\pi\toas \M^\pi$.
\end{proof}

\subsection{Proof of Corollary~\ref{cor: sup_asymp}}
We let $\widehat{\pi}^*\in\argmax \widehat{V}_r^\pi(\mu)$ and $\pi^*\in\argmax V_r^\pi$. By the $(s,a)$-rectangular assumption, we know that $\widehat{\pi}^*$ and $\pi^*$ both can be deterministic policies (taking values on the vertices of simplex). Next, we prove that $\widehat{\pi}^*$ is exactly $\pi^*$ in probability when $n$ is sufficiently large.

\begin{lem}
    \label{lem: consistent_pi}
    We assume $\min_{s,a_1\not= a_2}\left|Q_r^*(s,a_1)-Q_r^*(s,a_2)\right|>0$. Then  we have $\lim_{n\to\infty}\PB\left(\widehat{\pi}^*\not=\pi^*\right)=0$.
\end{lem}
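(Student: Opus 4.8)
The plan is to reduce the claim to two ingredients: (i) a uniform (in probability) consistency statement $\max_{s,a}\big|\widehat{Q}_r^*(s,a)-Q_r^*(s,a)\big|\overset{P}{\to}0$; and (ii) the fact that under the $(s,a)$-rectangular assumption both $\pi^*$ and $\widehat{\pi}^*$ are greedy with respect to their robust $Q$-functions, so that once these two $Q$-functions are uniformly within $\Delta/2$ of each other, where $\Delta:=\min_{s,a_1\neq a_2}|Q_r^*(s,a_1)-Q_r^*(s,a_2)|>0$, the state-wise argmaxes must coincide and hence $\widehat{\pi}^*=\pi^*$.

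First I would show $\|\widehat{V}_r^*-V_r^*\|_\infty\overset{P}{\to}0$. By Lemmas~\ref{lem: uni-dev-pi} and~\ref{lem: uni-dev-v} together with the uniform-over-$(\VM,\Pi)$ concentration bounds established in Section~\ref{sec: gen} (the estimates underlying Theorem~\ref{thm: finite-sa}), $\sup_{\pi\in\Pi}\|V_r^\pi-\widehat{V}_r^\pi\|_\infty\overset{P}{\to}0$; since for each $s$ we have $|\widehat{V}_r^*(s)-V_r^*(s)|=|\max_\pi\widehat{V}_r^\pi(s)-\max_\pi V_r^\pi(s)|\le\sup_{\pi\in\Pi}\|V_r^\pi-\widehat{V}_r^\pi\|_\infty$, it follows that $\|\widehat{V}_r^*-V_r^*\|_\infty\overset{P}{\to}0$. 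Next, for each $(s,a)$ I would decompose
\begin{align*}
    \big|\widehat{Q}_r^*(s,a)-Q_r^*(s,a)\big|
    &\le \gamma\Big|\inf_{P\in\widehat{\PM}_{s,a}(\rho)}P^\top\widehat{V}_r^*-\inf_{P\in\widehat{\PM}_{s,a}(\rho)}P^\top V_r^*\Big|\\
    &\quad+\gamma\Big|\inf_{P\in\widehat{\PM}_{s,a}(\rho)}P^\top V_r^*-\inf_{P\in\PM_{s,a}(\rho)}P^\top V_r^*\Big|.
\end{align*}
The first term is at most $\gamma\|\widehat{V}_r^*-V_r^*\|_\infty$ and vanishes by the previous step; the second term is precisely the fixed-$V$ deviation controlled in Section~\ref{sec: gen} (Theorems~\ref{thm: l1-fix}, \ref{thm: chi2-fix}, \ref{thm: kl-fix}) applied with the deterministic vector $V=V_r^*\in\VM$, hence it too tends to $0$ in probability. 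A union bound over the finitely many state-action pairs then yields $\max_{s,a}\big|\widehat{Q}_r^*(s,a)-Q_r^*(s,a)\big|\overset{P}{\to}0$.

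To conclude, let $E_n:=\big\{\max_{s,a}|\widehat{Q}_r^*(s,a)-Q_r^*(s,a)|<\Delta/2\big\}$. On $E_n$, for every state $s$ and every $a\neq\pi^*(s)=\argmax_a Q_r^*(s,a)$ (unique by the gap assumption) we have $\widehat{Q}_r^*(s,\pi^*(s))>Q_r^*(s,\pi^*(s))-\Delta/2\ge Q_r^*(s,a)+\Delta/2>\widehat{Q}_r^*(s,a)$, so $\pi^*(s)$ is the unique maximizer of $\widehat{Q}_r^*(s,\cdot)$. Because the $(s,a)$-rectangular robust Bellman optimality equation \cite{iyengar2005robust,nilim2005robust} forces $\widehat{\pi}^*$ to act greedily with respect to $\widehat{Q}_r^*$, this gives $\widehat{\pi}^*(s)=\pi^*(s)$ for all $s$ on $E_n$, i.e.\ $\widehat{\pi}^*=\pi^*$. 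Hence $\PB(\widehat{\pi}^*\neq\pi^*)\le\PB(E_n^c)\to0$.

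I expect the main obstacle to be the clean justification that $\widehat{\pi}^*$, defined as $\argmax_\pi\widehat{V}_r^\pi(\mu)$, really is the greedy policy induced by $\widehat{Q}_r^*$ (and likewise for $\pi^*$): this uses the existence, under $(s,a)$-rectangularity, of a single stationary deterministic policy attaining $\widehat{V}_r^*(s)$ simultaneously for all $s$, after which the gap assumption pins down the maximizing action uniquely on $E_n$. The remaining steps are routine given the concentration results already proved in Section~\ref{sec: gen}.
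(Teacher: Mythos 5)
Your proof is correct and follows essentially the same route as the paper's: uniform consistency giving $\widehat{V}_r^*\to V_r^*$, the same triangle-inequality decomposition of $\widehat{Q}_r^*(s,a)-Q_r^*(s,a)$ through the intermediate quantity $\inf_{P\in\widehat{\PM}_{s,a}(\rho)}P^\top V_r^*$, and the $\Delta/2$ gap argument forcing the greedy actions to coincide. The only cosmetic differences are that you control the second term via the fixed-$V$ concentration bounds (Theorems~\ref{thm: l1-fix}, \ref{thm: chi2-fix}, \ref{thm: kl-fix}) instead of the dual-continuity arguments the paper points to (Lemmas~\ref{lem: M_converge}, \ref{lem: chi2_M_converge}, \ref{lem: kl_M_converge}), and you prove convergence in probability rather than almost surely; both choices suffice for the statement as given.
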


\begin{proof}[Proof of Lemma~\ref{lem: consistent_pi}]
    By Lemma~\ref{lem: l1_as}, we have already shown $\widehat{V}_r^\pi\toas V_r^\pi$ for any fixed $\pi\in\Pi$. We can also extend the result to a stronger version that $\sup_{\pi}\left\|\widehat{V}_r^\pi - V_r^\pi\right\|_\infty\toas 0$ by a similar proof. By the fact that $\left\|\sup_\pi \widehat{V}_r^\pi - \sup_\pi V_r^\pi\right\|_\infty\le\sup_\pi\left\|\widehat{V}_r^\pi - V_r^\pi\right\|_\infty$, we know that $\widehat{V}_r^*\toas V_r^*$. Thus, if we could also show $\widehat{Q}_r^*\toas Q_r^*$, it's guaranteed that $\widehat{\pi}^*\overset{a.s.}{=}\pi^*$ by the fact that $\widehat{\pi}^*\in\argmax \widehat{Q}_r^*$ and $\min_{s,a_1\not= a_2}|Q_r^*(s,a_1)-Q_r^*(s,a_2)|>0$.

    Indeed, by definition of robust $Q$-value function, we have:
    \begin{align*}
        \left|\widehat{Q}_r^*(s,a) - Q_r^*(s,a)\right|&=\left|\inf_{P\in\widehat{\PM}_{s,a}(\rho)}P^\top \widehat{V}_r^*-\inf_{P\in\PM_{s,a}(\rho)}P^\top V_r^*\right|\notag\\
        &\le \sup_{P\in\widehat{\PM}_{s,a}(\rho)}\left|P^\top\left(\widehat{V}_r^*-V_r^*\right)\right|+\left|\inf_{P\in\widehat{\PM}_{s,a}(\rho)}P^\top V_r^* - \inf_{P\in\PM_{s,a}(\rho)}P^\top V_r^*\right|,
    \end{align*}
    where the first term can be controlled by $\widehat{V}_r^*\toas V_r^*$. And the second term can be controlled by $\widehat{P}_{s,a}\toas P^*_{s,a}$ (refer to Lemmas~\ref{lem: M_converge},~\ref{lem: chi2_M_converge} and~\ref{lem: kl_M_converge}).
\end{proof}

Now by the results in Section~\ref{subsec: asymp_sa}, we have already shown (taking $\pi=\pi^*$):
\begin{align*}
    \sqrt{n}\left(\widehat{V}_r^{\pi^*}(\mu)-V_r^*(\mu)\right)\tod\NM(0,\mu^\top(\M^{\pi^*})^{-1}\Lambda(\M^{\pi^*})^{-\top}\mu).
\end{align*}
Thus, if we could also prove that $\sqrt{n}\left(\widehat{V}_r^*(\mu)-\widehat{V}_r^{\pi^*}(\mu)\right)=o_P(1)$, then our results can be obtained by Slutsky's theorem. Let's denote event $A_n=\{\widehat{\pi}^*=\pi^*\}$. For arbitrary $\delta>0$, we have:
\begin{align*}
    \PB\left(\sqrt{n}\left(\widehat{V}_r^*(\mu)-\widehat{V}_r^{\pi^*}(\mu)\right)>\delta\right)\le\PB\left(\sqrt{n}\left(\widehat{V}_r^{\pi^*}(\mu)-\widehat{V}_r^{\pi^*}(\mu)\right)>\delta, A_n\right)+\PB\left(A_n^c\right)=\PB\left(A_n^c\right).
\end{align*}
Thus, by Lemma~\ref{lem: consistent_pi}, we conclude that $\sqrt{n}\left(\widehat{V}_r^*(\mu)-\widehat{V}_r^{\pi^*}(\mu)\right)=o_P(1)$. 

\subsection{Proof of Theorem~\ref{thm: chi2_s_normal}}
Similar with Lemma~\ref{lem: l1_as}, by Theorem~\ref{thm: chi2-s-union} and the Borel-Cantelli Lemma, we know $\widehat{V}_r^\pi\toas V_r^\pi$. We denote
\begin{align*}
    g(\eta,P,V)=-\sqrt{(\rho+1)|\AM|}\sqrt{\sum_{s\in\SM,a\in\AM}P(s|a)(\eta_a-\pi(a)V(s))_+^2}+\sum_{a\in\AM}\eta_a
\end{align*}
for $P\in\Delta(\SM)^{|\AM|}$ and $V\in\VM$, where we omit the dependence on $\pi$, as it's always fixed. We also denote $\eta^*(P, V)=\argmax_\eta g(\eta, P,V)$ and $f(P, V)=\max_\eta g(\eta, P, V)$. By Danskin's Theorem~\ref{lem: danskin}, we can prove that $f(\widehat{P}, V)$ weakly converges to $f(P, V)$ with rate $\sqrt{n}$ (Lemma~\ref{lem: normal_chi2_s}).
\begin{lem}
    \label{lem: normal_chi2_s}
    Fix $V\in\VM$ and $\pi\in\Delta(\AM)$, and assume $V(s_1)<\cdots <V(s_{|\AM|})$. Then we have:
    \begin{align*}
        \sqrt{n}(f(\widehat P,V)-f(P,V))\tod\NM(0,\sigma^2(P,V)),
    \end{align*}
    where $\sigma^2(P,V)=b^T\Sigma b$ and $\Sigma\in\RB^{|\SM||\AM|\times|\SM||\AM|}$ with the $((s,a), (s',a'))$-th element defined as
    \begin{align*}
        \Sigma((s,a),(s',a'))=\left(-P(s|a)\cdot P(s'|a')+P(s|a)\mathbf{1}\{s=s'\}\right)\mathbf{1}\{a=a'\},
    \end{align*}
    and $b=f'(P)\in\RB^{|\SM||\AM|}$ with the $(s,a)$-th element defined as
    \begin{align*}
        b(s,a)=-\sqrt{(\rho+1)|\AM|}\frac{(\eta_a^*-\pi(a)V(s))_+^2}{2\sqrt{\sum_{s'\in\SM,a'\in\AM}P(s'|a')(\eta^*_{a'}-\pi(a')V(s'))_+^2}}.
    \end{align*}
\end{lem}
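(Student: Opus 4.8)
The plan is to prove Lemma~\ref{lem: normal_chi2_s} by the same route as Lemma~\ref{lem: normal_chi2_sa}: write $f(P,V)=\max_\eta g(\eta,P,V)$ as the value of a concave maximization over a compact dual domain, use Danskin's theorem (Lemma~\ref{lem: danskin}) to differentiate $f(\cdot,V)$ in $P$ at its dual maximizer, and then apply the delta method to $\sqrt{n}(\widehat P-P)$. The only structural change from the $(s,a)$-rectangular case is that the dual maximizer is now a vector $\eta^*\in\RB^{|\AM|}$ (as in Lemma~\ref{lem: chi2-s}) and that the empirical law $\widehat P$ stacks $|\AM|$ independent multinomials, which is what produces the block structure of $\Sigma$.

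First I would compactify the dual problem. By Lemma~\ref{lem: chi2-value}, for any $P\in\Delta(\SM)^{|\AM|}$ the supremum in $g(\eta,P,V)$ is attained inside the compact set $I_{\chi^2}=\{\eta\in\RB^{|\AM|}:\eta_a\ge0,\ \sum_a\eta_a\le C(\rho)/((C(\rho)-1)(1-\gamma))\}$ with $C(\rho)=\sqrt{1+\rho}$, so $\RB^{|\AM|}$ may be replaced by $I_{\chi^2}$ throughout. On $I_{\chi^2}$ the map $(\eta,P)\mapsto g(\eta,P,V)$ is jointly continuous, and for each fixed $\eta$ it is a constant minus $\sqrt{\sum_{s,a}P(s|a)(\eta_a-\pi(a)V(s))_+^2}$, i.e.\ minus the square root of an affine function of $P$, hence convex in $P$. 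Under the strict ordering $V(s_1)<\cdots<V(s_{|\SM|})$ one checks that $g(\cdot,P,V)$ admits a unique maximizer $\eta^*(P,V)$ (it is concave in $\eta$ and strictly concave where the radicand is positive, which contains the maximizer away from the degenerate situation in which $\eta^*$ makes every $(\eta^*_a-\pi(a)V(s))_+$ vanish). Danskin's theorem then yields differentiability of $f(\cdot,V)$ at $P$ with
\begin{align*}
    \frac{\partial f(P,V)}{\partial P(s|a)}=\frac{\partial g(\eta^*(P,V),P,V)}{\partial P(s|a)}=-\sqrt{(\rho+1)|\AM|}\,\frac{(\eta^*_a-\pi(a)V(s))_+^2}{2\sqrt{\sum_{s',a'}P(s'|a')(\eta^*_{a'}-\pi(a')V(s'))_+^2}},
\end{align*}
which is exactly the vector $b=f'(P)$ in the statement.

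Next I would record the CLT for $\widehat P$: under the generative model the block $(\widehat P(s|a))_s$ is the empirical distribution of $n$ i.i.d.\ draws from $P^*(\cdot|s,a)$ (with $s$ fixed), and blocks for distinct $a$ are independent, so $\sqrt{n}(\widehat P-P)\tod\NM(0,\Sigma)$ with $\Sigma$ block-diagonal and $(s,a),(s',a')$ entry $\left(-P(s|a)P(s'|a')+P(s|a)\mathbf{1}\{s=s'\}\right)\mathbf{1}\{a=a'\}$, matching the claimed $\Sigma$. Combining this with the differentiability above (together with $\widehat P\toas P$, which follows from Theorem~\ref{thm: chi2-s-union} and Borel--Cantelli, so that $\eta^*(\widehat P,V)$ stays bounded and the delta-method remainder is genuinely $o_P$), the delta method gives
\begin{align*}
    \sqrt{n}\bigl(f(\widehat P,V)-f(P,V)\bigr)=\sqrt{n}\,b^\top(\widehat P-P)+o_P(1)\tod\NM(0,b^\top\Sigma b)=\NM(0,\sigma^2(P,V)).
\end{align*}

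I expect the main obstacle to be the justification that Danskin's theorem applies — namely that $\eta^*(P,V)$ is unique and that $f(\cdot,V)$ is actually differentiable in $P$ at that point. The subtlety is that $\eta\mapsto(\eta_a-\pi(a)V(s))_+$ is non-smooth, so $g$ is not differentiable in $\eta$ along the kinks, and one must rule out the boundary case in which $\eta^*$ sits at a kink making the radicand zero (where $f$ would fail to be differentiable in $P$); the strict-ordering hypothesis on $V$ is what handles this. Everything else — compactness of $I_{\chi^2}$, the CLT for stacked multinomials, and the delta-method bookkeeping — is a direct transcription of the argument already given for Lemma~\ref{lem: normal_chi2_sa}.
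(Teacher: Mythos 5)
Your proposal follows essentially the same route as the paper's proof: restrict the dual variable to the compact set from Lemma~\ref{lem: chi2-value}, invoke Danskin's theorem (Lemma~\ref{lem: danskin}) to get $\partial f(P,V)/\partial P(s|a)$ equal to the stated $b(s,a)$ at the unique maximizer $\eta^*(P,V)$, and then combine the multinomial CLT for $\sqrt{n}(\widehat P-P)$ with the delta method. Your added remarks on the block-diagonal structure of $\Sigma$ and on ruling out the degenerate kink case are sound and, if anything, slightly more careful than the paper's own write-up.
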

\begin{proof}[Proof of Lemma~\ref{lem: normal_chi2_s}]
    Firstly, we note that $g(\eta,P,V)$ is continuously differentiable w.r.t.\ $\eta$. By Lemma~\ref{lem: chi2-value}, there is a unique optimal solution $\eta^*(P,V)$. Besides, $\forall\eta$, $g(\eta,P, V)$ is continuous w.r.t $\eta, P$ and convex in $P$. By Danskin's Theorem~\ref{lem: danskin},  we know that $f(P,V)$ is differentiable at $P$ with derivative
    \begin{align*}
        \frac{\partial f(P,V))}{\partial P(s|a)}=-\sqrt{(\rho+1)|\AM|}\frac{(\eta_a^*(P,V)-\pi(a)V(s))_+^2}{2\sqrt{\sum_{s'\in\SM,a'\in\AM}P(s'|a')(\eta^*_{a'}(P,V)-\pi(a')V(s'))_+^2}}.
    \end{align*}
    Since $\sqrt{n}(\hat{P}-P)\to \NM(0,\Sigma)$ via CLT, we may attain the conclusion by applying the Delta method.
\end{proof}

Thus, we can conclude that $\widehat{\TM}_r^\pi V_r^\pi$ weakly converges to $V_r^\pi$ with rate $\sqrt{n}$ (Corollary~\ref{cor: normal_chi2_s}).
\begin{cor}
    \label{cor: normal_chi2_s}
    Under the $s$-rectangular assumption and the setting $f(t)=(t-1)^2$ in Example~\ref{eg: f-set-s}, for any policy $\pi$, we have:
    \begin{align*}
        \sqrt{n}\left(\widehat{\TM}_r^\pi-I\right)V_r^\pi\tod\NM(0,\Lambda),
    \end{align*}
    where $\Lambda=\diag\{\sigma_1^2, \cdots, \sigma^2_{|\SM|}\}$, and $\forall s\in\SM$,
    \begin{align*}
        \sigma_s^2=\gamma^2\sigma^2(P^*(\cdot|s,\cdot),V_r^\pi).
    \end{align*}
\end{cor}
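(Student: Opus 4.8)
The plan is to read off Corollary~\ref{cor: normal_chi2_s} from the single-state central limit theorem of Lemma~\ref{lem: normal_chi2_s}, using the robust Bellman fixed-point identity to turn $(\widehat{\TM}_r^\pi-I)V_r^\pi$ into a pure estimation error and the independence of generative-model samples across states to make the limiting covariance diagonal. First I would invoke $V_r^\pi=\TM_r^\pi V_r^\pi$ to write $(\widehat{\TM}_r^\pi-I)V_r^\pi=\widehat{\TM}_r^\pi V_r^\pi-\TM_r^\pi V_r^\pi$. By the explicit dual expressions in Lemma~\ref{lem: chi2-s} the reward contribution $R^\pi(s)$ cancels, so the $s$-th coordinate of this vector equals $\gamma\bigl(f(\widehat{P}(\cdot|s,\cdot),V_r^\pi)-f(P^*(\cdot|s,\cdot),V_r^\pi)\bigr)$, where $f(P,V)=\max_\eta g(\eta,P,V)$ is the dual optimal value (not the divergence generator) defined before Lemma~\ref{lem: normal_chi2_s} and $\widehat{P}(\cdot|s,\cdot)=(\widehat{P}(\cdot|s,a))_{a\in\AM}\in\Delta(\SM)^{|\AM|}$. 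As in Theorem~\ref{thm: normal-s} I would relabel the states so that $V_r^\pi(s_1)<\cdots<V_r^\pi(s_{|\SM|})$, and, as in Corollary~\ref{cor: normal_chi2_sa}, impose the nondegeneracy $\sigma^2(P^*(\cdot|s,\cdot),V_r^\pi)\neq 0$ for each $s$.

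Then applying Lemma~\ref{lem: normal_chi2_s} with $V=V_r^\pi$ at each state $s$ gives the marginal convergence $\sqrt{n}\bigl(f(\widehat{P}(\cdot|s,\cdot),V_r^\pi)-f(P^*(\cdot|s,\cdot),V_r^\pi)\bigr)\tod\NM\bigl(0,\sigma^2(P^*(\cdot|s,\cdot),V_r^\pi)\bigr)$. The key structural point is that under the generative model the samples $\{X_k^{s,a}\}_k$ feeding $\widehat{P}(\cdot|s,\cdot)$ are drawn independently across distinct $s$, so these $|\SM|$ coordinate errors are jointly independent; independence together with the marginal limits then yields joint weak convergence of the whole vector to the product Gaussian, i.e. to $\NM(0,\diag\{\sigma^2(P^*(\cdot|s,\cdot),V_r^\pi)\}_{s\in\SM})$. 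Scaling coordinatewise by the constant $\gamma$ via the continuous mapping theorem gives $\sqrt{n}(\widehat{\TM}_r^\pi-I)V_r^\pi\tod\NM(0,\Lambda)$ with $\Lambda=\diag\{\gamma^2\sigma^2(P^*(\cdot|s,\cdot),V_r^\pi)\}_{s\in\SM}$, which is exactly the claim.

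I expect the genuinely delicate work to live inside Lemma~\ref{lem: normal_chi2_s} rather than in the corollary: since $g(\eta,P,V)=-\sqrt{(\rho+1)|\AM|}\sqrt{\sum_{s,a}P(s|a)(\eta_a-\pi(a)V(s))_+^2}+\sum_a\eta_a$ contains the nonsmooth maps $\eta\mapsto(\eta_a-\pi(a)V(s))_+$, differentiability of $P\mapsto f(P,V)$ is not automatic and must come from Danskin's theorem (Lemma~\ref{lem: danskin}), whose hypothesis of a unique maximizer is supplied by Lemma~\ref{lem: chi2-value} together with the compact box in which $\eta^*$ lives, after which the Delta method applies to the CLT for $\sqrt{n}(\widehat{P}-P)$. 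At the level of the corollary the only remaining care is bookkeeping: checking that for the fixed policy $\pi$ the ordering convention and the nondegeneracy hypothesis can be imposed simultaneously, and that the generative-model sampling scheme is precisely what makes $\Lambda$ block-by-state and hence diagonal.
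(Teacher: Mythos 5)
Your proposal is correct and follows essentially the same route the paper intends: the corollary is read off from Lemma~\ref{lem: normal_chi2_s} applied at each state with $V=V_r^\pi$ and $\pi=\pi(\cdot|s)$, after using the fixed-point identity $V_r^\pi=\TM_r^\pi V_r^\pi$ so the reward terms cancel and the $s$-th coordinate becomes $\gamma\bigl(f(\widehat{P}(\cdot|s,\cdot),V_r^\pi)-f(P^*(\cdot|s,\cdot),V_r^\pi)\bigr)$, with the diagonal (block-by-state) covariance coming from independence of the generative-model samples across states. Your additional remarks on joint convergence via independence, the scaling by $\gamma$, and the ordering/nondegeneracy bookkeeping are consistent with how the paper handles the analogous $(s,a)$-rectangular corollary, so there is no gap.
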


Similar with the case of $(s,a)$-rectangular set, by Lemma~\ref{lem: chi2_diff_s}, we have:
\begin{align*}
    \widehat{\Phi}_n^\pi V_r^\pi=\widehat{\Phi}_n^\pi V_r^\pi-\widehat{\Phi}_n^\pi \widehat{V}_r^\pi=\M_n^\pi\cdot(V_r^\pi-\widehat{V}_r^\pi)+o_P(V_r^\pi-\widehat{V}_r^\pi),
\end{align*}
where $\M_n^\pi = \widehat{\Phi}'_n(V_r^\pi)$. As we already know that $\widehat{V}_r^\pi\toas V_r^\pi$, next we need to prove that $\widehat{V}_r^\pi$ is $\sqrt{n}$-consistent (Lemma~\ref{lem: consistent}). Thus, as long as $\M_n^\pi$ is consistent (Lemma~\ref{lem: chi2_M_converge_s}), we prove that:
\begin{align*}
    \sqrt{n}\left(\widehat{V}_r^\pi-V_r^\pi\right)\tod\NM\left(0, (\M^\pi)^{-1}\Lambda(\M^\pi)^{-\top}\right).
\end{align*}
\begin{lem}
    \label{lem: chi2_diff_s}
    For a fixed policy $\pi$, $\Phi^\pi$ is continuous differentiable w.r.t.\ $V\in\VM$.
\end{lem}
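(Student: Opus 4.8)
The plan is to mirror the argument used for Lemma~\ref{lem: chi2_diff}, carrying the scalar dual variable over to the $|\AM|$-dimensional vector $\eta$ that appears under the $s$-rectangular assumption. First I would reduce the claim to a statement about the dual value function. By Lemma~\ref{lem: chi2-s} we have $\TM_r^\pi V(s)=R^\pi(s)+\gamma f(P(\cdot|s,\cdot),V)$, where $f(P,V):=\max_{\eta\in\RB^{|\AM|}}g(\eta,P,V)$ and $g(\eta,P,V)=-\sqrt{(\rho+1)|\AM|}\sqrt{\sum_{s',a}P(s'|a)(\eta_a-\pi(a)V(s'))_+^2}+\sum_a\eta_a$. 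Since $\Phi^\pi=\TM_r^\pi-I$ and $I$ is linear, it suffices to show that $V\mapsto f(P,V)$ is continuously differentiable on $\VM$ for every fixed $P\in\Delta(\SM)^{|\AM|}$.

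Next I would invoke Danskin's theorem (Lemma~\ref{lem: danskin}) with $x=V$ and $z=\eta$. The hypotheses are checked as follows. The map $g$ is jointly continuous in $(\eta,V)$; $g(\eta,P,\cdot)$ is convex in $V$, because each $(\eta_a-\pi(a)V(s'))_+$ is a nonnegative convex function of $V$, so by the componentwise triangle inequality together with monotonicity of the weighted $\ell_2$ norm the function $V\mapsto\sqrt{\sum_{s',a}P(s'|a)(\eta_a-\pi(a)V(s'))_+^2}$ is convex; the maximizer of $g(\cdot,P,V)$ over $\eta$ is the single point $\eta^*(P,V)$ (concavity in $\eta$ plus the uniqueness already used in Lemma~\ref{lem: normal_chi2_s}); and by Lemma~\ref{lem: chi2-value} this maximizer lies in the compact convex set $I_{\chi^2}$, so the supremum may be taken over a compact set. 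Danskin then yields differentiability of $f(P,\cdot)$ at every $V$, with $\partial f(P,V)/\partial V(s)=\partial g(\eta^*(P,V),P,V)/\partial V(s)$, which, after differentiating the closed form, gives exactly the off-diagonal entries of $\M^\pi$ stated in Theorem~\ref{thm: chi2_s_normal}.

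It then remains to show that this derivative is continuous in $V$. The explicit partial derivative is built from $(\cdot)_+$ (continuous) and elementary continuous operations, so it is jointly continuous in $(\eta,V)$ wherever the denominator $\sqrt{\sum_{s',a'}P(s'|a')(\eta_{a'}-\pi(a')V(s'))_+^2}$ is positive; hence it is enough to prove that $V\mapsto\eta^*(P,V)$ is continuous. For this I would apply the vector-valued analogue of Lemma~\ref{lem: domain_converge}: if $V_n\to V$ then, using the reverse triangle inequality and the fact that $(\cdot)_+$ is $1$-Lipschitz, $\sup_{\eta\in I_{\chi^2}}|g(\eta,P,V_n)-g(\eta,P,V)|\le\sqrt{(\rho+1)|\AM|}\sqrt{\sum_{s',a}P(s'|a)\pi(a)^2(V_n(s')-V(s'))^2}\to0$, and since each $g(\cdot,P,V_n)$ is concave with a unique maximizer this forces $\eta^*(P,V_n)\to\eta^*(P,V)$. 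Combining the three steps shows $f(P,\cdot)\in C^1(\VM)$, and therefore so is $\Phi^\pi$.

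I expect the main obstacle to be the degenerate boundary behaviour of the hinge terms: one must rule out or separately handle the case in which the denominator above vanishes at $\eta^*(P,V)$, i.e.\ $\eta_a^*\le\pi(a)V(s')$ for all $s'$ and $a$, which is precisely where the derivative formula could fail, and one must pin down the uniqueness of $\eta^*(P,V)$ rigorously. In the $(s,a)$-rectangular proof (Lemma~\ref{lem: chi2_diff}) these points were dispatched by a one-line analysis of the single corner $\eta^*=V(s_1)$; in the vector case the active-constraint set is higher dimensional, so the corner analysis must be replaced by a more careful argument (for instance, bounding $\eta^*_a$ below by $\pi(a)\min_{s'}V(s')$ using the structure of the inner problem, or using strict concavity of $g$ on the face where the relevant hinges are active), and a short compactness argument is needed to justify the vector form of Lemma~\ref{lem: domain_converge}.
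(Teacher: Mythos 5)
Your plan has the same skeleton as the paper's proof (reduce to $f(P,\cdot)$, obtain the envelope formula for $\partial f/\partial V$, then deduce continuity of the derivative from continuity of $V\mapsto\eta^*(P,V)$ via a uniform bound and Lemma~\ref{lem: domain_converge}), and your third step is essentially what the paper writes. The genuine gap is in the second step: you invoke Lemma~\ref{lem: danskin} with $x=V$, and your check of its convexity hypothesis is wrong. Your own argument shows that $V\mapsto\sqrt{\sum_{s',a}P(s'|a)(\eta_a-\pi(a)V(s'))_+^2}$ is \emph{convex}; since this term enters $g$ with a negative coefficient, $g(\eta,P,\cdot)$ is \emph{concave} in $V$, not convex. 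This cannot be repaired by a sign flip: $-f=\min_\eta(-g)$ is a minimum of convex-in-$V$ functions, which need not be convex, so the convex Danskin statement simply does not apply in the $V$-variable (it is applied legitimately in the paper only with $x=P$, where $g$ is convex). To get the envelope identity in $V$ one must argue differently, which is what the paper does: write $f(P,V)=g(\eta^*(P,V),P,V)$, use that $\partial g/\partial\eta=0$ at the unique maximizer together with the implicit function theorem to get existence of $\partial\eta^*/\partial V$, and conclude $\partial f/\partial V=\partial_V g(\eta^*(P,V),P,V)$ by the chain rule. A general non-convex Danskin/envelope theorem would also do, but that version needs $g(\eta,P,\cdot)$ to be $C^1$ in $V$ with gradient jointly continuous near the maximizer --- exactly the smoothness that can fail when $\sum_{s',a}P(s'|a)(\eta^*_a-\pi(a)V(s'))_+^2=0$, the degenerate case you flag but leave unresolved. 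So, as written, the differentiability step is not established.

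On the surrounding points you are in line with (and in places more scrupulous than) the paper: uniqueness of $\eta^*(P,V)$ is only asserted in Lemma~\ref{lem: normal_chi2_s}, the vector-valued use of Lemma~\ref{lem: domain_converge} is likewise implicit in the paper, and the paper's $s$-rectangular proof does not treat the degenerate corner at all, unlike the scalar case in Lemma~\ref{lem: chi2_diff} where $\eta^*=V(s_1)$ is handled separately. Raising these issues is reasonable, but since your route leans on a Danskin hypothesis that is false in $V$ and on smoothness that breaks precisely at the degenerate points, you should replace the Danskin-in-$V$ invocation by the stationarity-plus-implicit-function-theorem argument (or by a careful directional-derivative envelope argument) to close the proof.
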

\begin{proof}[Proof of Lemma~\ref{lem: chi2_diff_s}]
    W.L.O.G., we assume $V(s_1)< \cdots <V(s_{|\SM|})$. Firstly, we know that $g(\eta, P, V)$ has a continuous partial derivative w.r.t.\ $\eta$ at $\eta^*(P, V)$. Thus we have:
    \begin{align*}
        \frac{\partial f(P,V)}{\partial V}
        &=\frac{\partial g(\eta^*(P,V),P,V)}{\partial V}+\frac{\partial g(\eta^*(P,V), P, V)}{\partial \eta}\frac{\partial \eta^*(P,V)}{\partial V}\\
        &=\frac{\partial g(\eta^*(P,V),P,V)}{\partial V},
    \end{align*}
    where $\partial \eta^*(P,V)/\partial V$ exists by implicit function theorem. Thus, for each $s'\in\SM$, we have:
    \begin{align*}
        \frac{\partial f}{\partial V(s')}(P,V)=\sqrt{|\AM|(\rho+1)}\frac{\sum_a\pi(a) P(s'|a)(\eta^*_a(P,V)-\pi(a)V(s'))_+}{\sqrt{\sum_{s,a}P(s|a)(\eta^*_a(P,V)-\pi(a)V(s))_+^2}}.
    \end{align*}
    Next, we prove that $\eta^*(P,V)$ is continuous w.r.t.\ $V$. For every convergent sequence $V_n\to V$, where $V(s_1)<\cdots <V(s_{|\SM|})$, we can choose a large enough $n$ s.t.\ $V_n(s_1)<\cdots <V_n(s_{|\SM|})$. Noting that for every two $V_1, V_2\in\VM$, we have:
    \begin{align*}
        \left|g(\eta, P, V_1)-g(\eta, P, V_2)\right|&\le\sqrt{\sum_{s\in\SM, a\in\AM}P(s|a)\pi(a)^2(V_1(s)-V_2(s))^2}\\
        &=\sqrt{\sum_{a\in\AM}\pi(a)^2\|V_1-V_2\|^2_{2, P_a}}.
    \end{align*}
    Thus, by Lemma~\ref{lem: domain_converge} we have:
    \begin{align*}
        \sup_{\eta\in I_{\chi^2}}\left|g(\eta, P, V_n)-g(\eta, P, V)\right|&\le\sqrt{\sum_{s\in\SM, a\in\AM}P(s|a)\pi(a)^2(V_n(s)-V(s))^2}\\
        &=\sqrt{\sum_{a\in\AM}\pi(a)^2\|V_n-V\|^2_{2, P_a}}\to0,
    \end{align*}
    which implies that $\eta^*(P, V_n)\to \eta^*(P, V)$. Thus, $\eta^*(P, V)$ is continuous w.r.t.\ $V$ and our result is completed.
\end{proof}

\begin{lem}
    \label{lem: chi2_M_converge_s}
    For a fixed policy $\pi$, we have $\M_n^\pi\toas \M^\pi$.
\end{lem}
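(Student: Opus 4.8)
The plan is to repeat, almost verbatim, the argument of Lemma~\ref{lem: chi2_M_converge}, the only change being that the optimal dual variable is now a vector $\eta^*\in\RB^{|\AM|}$ ranging over the compact convex set $I_{\chi^2}$ of Lemma~\ref{lem: chi2-value} rather than a scalar. First I would record $\M_n^\pi$ explicitly: since $\M_n^\pi=\widehat{\Phi}'_n(V_r^\pi)$ with $\widehat{\Phi}_n^\pi=\widehat{\TM}_r^\pi-I$, its $(i,j)$th entry is given by the formula for $\M^\pi(i,j)$ in Theorem~\ref{thm: chi2_s_normal} with $P^*$ replaced throughout by $\widehat{P}$ and $\eta^*_a(P^*(\cdot|s_i,\cdot),V_r^\pi)$ replaced by $\eta^*_a(\widehat{P}(\cdot|s_i,\cdot),V_r^\pi)$. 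By the strong law of large numbers, $\widehat{P}(s'|s,a)\toas P^*(s'|s,a)$ for every $(s,a,s')\in\SM\times\AM\times\SM$, so the claim reduces to showing that the optimal dual vector is a.s.\ continuous in the empirical transition kernel, i.e.\ $\eta^*(\widehat{P}(\cdot|s,\cdot),V_r^\pi)\toas\eta^*(P^*(\cdot|s,\cdot),V_r^\pi)$ for each $s\in\SM$; once this is in hand, $\M_n^\pi\toas\M^\pi$ follows from the continuous mapping theorem.

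For the dual convergence I would apply Lemma~\ref{lem: domain_converge} on $\XM=I_{\chi^2}$ with $f_n(\eta)=g(\eta,\widehat{P}(\cdot|s,\cdot),V_r^\pi)$ and $f_*(\eta)=g(\eta,P^*(\cdot|s,\cdot),V_r^\pi)$, both concave with unique maximizers by Lemma~\ref{lem: chi2-value}, exactly as in the proof of Lemma~\ref{lem: chi2_diff_s}. The hypothesis $\sup_{\eta\in I_{\chi^2}}|f_n(\eta)-f_*(\eta)|\toas 0$ follows from $|\sqrt{x}-\sqrt{y}|\le\sqrt{|x-y|}$ together with the uniform bound $\max_a\eta_a\le C(\rho)/\bigl((C(\rho)-1)(1-\gamma)\bigr)$ valid on $I_{\chi^2}$: writing $g(\eta,P,V)=-\sqrt{(\rho+1)|\AM|}\,\sqrt{\sum_{s,a}P(s|a)(\eta_a-\pi(a)V(s))_+^2}+\sum_a\eta_a$ gives
\[
\sup_{\eta\in I_{\chi^2}}\bigl|g(\eta,\widehat{P},V_r^\pi)-g(\eta,P^*,V_r^\pi)\bigr|\le \frac{C(\rho)\sqrt{(\rho+1)|\AM|}}{(C(\rho)-1)(1-\gamma)}\,\sqrt{\|\widehat{P}-P^*\|_1},
\]
and the right-hand side tends to $0$ a.s.\ by the first step. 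This is the same device already used in Lemmas~\ref{lem: chi2_M_converge} and~\ref{lem: chi2_diff_s}.

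The only genuinely delicate point, and the one I would treat most carefully, is the same non-degeneracy condition that implicitly underlies Corollary~\ref{cor: normal_chi2_s}: the entries of $\M^\pi$ involve division by $\sqrt{\sum_{\tilde s,a}P^*(\tilde s|s_i,a)(\eta^*_a-\pi(a|s_i)V_r^\pi(\tilde s))_+^2}$, so one must know this quantity is strictly positive (equivalently, the robust inner minimization for state $s_i$ is not attained at a fully degenerate assignment). Granting this --- which is automatic whenever $\Lambda^\pi$ is assumed non-singular --- the map $(\widehat{P},\eta)\mapsto\M_n^\pi$ is continuous at $\bigl(P^*,\eta^*(P^*(\cdot|s,\cdot),V_r^\pi)\bigr)$, so combining $\widehat{P}\toas P^*$ with the dual convergence yields $\M_n^\pi\toas\M^\pi$. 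I expect the only real labor to be the index bookkeeping over the $|\SM||\AM|$-fold product structure; no idea beyond the $(s,a)$-rectangular proof is required.
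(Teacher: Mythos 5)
Your argument is essentially the paper's own proof: SLLN for $\widehat{P}$, a uniform (in $\eta\in I_{\chi^2}$) bound on $|g(\eta,\widehat{P},V_r^\pi)-g(\eta,P^*,V_r^\pi)|$ via $|\sqrt{x}-\sqrt{y}|\le\sqrt{|x-y|}$, Lemma~\ref{lem: domain_converge} to get a.s.\ convergence of the dual maximizer $\eta^*$, and then continuity of the entries of $\M_n^\pi$, exactly as in Lemmas~\ref{lem: chi2_M_converge} and~\ref{lem: chi2_diff_s}. Your explicit flagging of the non-degeneracy of the denominator (and of uniqueness of $\eta^*$ needed for Lemma~\ref{lem: domain_converge}) matches assumptions the paper makes implicitly, so there is no substantive difference.
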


\begin{proof}[Proof of Lemma~\ref{lem: chi2_M_converge_s}]
    Noting that $\M_n^\pi=\widehat{\Phi}_n'(V_r^\pi)$, for $s,s'\in\SM$, we have:
    \begin{align*}
        \M_n^\pi(s,s^\prime)=\mathbf{1}\{s=s^\prime\}-\gamma\sqrt{|\AM|(\rho+1)}\frac{\sum_a \pi(a|s)\widehat P(s^\prime|s,a)(\eta^*_a(\widehat P(\cdot|s,\cdot),V_r^\pi)-\pi(a|s)V_r^\pi(s^\prime))_+}{\sqrt{\sum_{\tilde s,a}\widehat P(\tilde s|s,a)(\eta^*_a(\widehat P(\cdot|s,\cdot),V_r^\pi)-\pi(a|s)V_r^\pi(\tilde s))_+^2}}.
    \end{align*}
    By SLLN, we know that for any $(s,a,s')\in\SM\times\AM\times\SM$, 
    \begin{align*}
        \widehat{P}(s'|s,a)\toas P^*(s'|s,a).
    \end{align*}
    Next, we note that for any $P_1, P_2\in\Delta(\SM)$, we have:
    \begin{align*}
        \sup_{\eta\in I_{\chi^2}}\left|g(\eta, P_1, V)-g(\eta, P_2, V)\right|&\le \sqrt{\sum_{s\in\SM, a\in\AM}|P_1(s|a)-P_2(s|a)|(\eta_a-\pi(a)V(s))_+^2}\\
        &\le \frac{C(\rho)}{(C(\rho)-1)(1-\gamma)}\sqrt{\sum_{a\in\AM}\|P_1(\cdot|a)-P_2(\cdot|a)\|_1}.
    \end{align*}
    Thus, by Lemma~\ref{lem: domain_converge}, for every convergent sequence $P_n\to P$, we have $\eta^*(P_n, V)\to \eta^*(P, V)$, which implies that $\eta^*(P, V)$ is continuous w.r.t. $P$. Thus, we conclude that $\eta^*(\widehat{P}(\cdot|s,\cdot), V_r^\pi)\toas \eta^*(P(\cdot|s,\cdot), V_r^\pi)$, which leads to $\M_n^\pi\toas \M^\pi$.
\end{proof}

\subsection{Proof of Theorem~\ref{thm: kl_s_normal}}
Firstly, we still have $\widehat{V}_r^\pi\toas V_r^\pi$ in KL case. And, for any fixed policy $\pi$, we denote 
\begin{align*}
    g(\lambda, P, V)=-\lambda|\AM|\rho - \lambda\sum_{a\in\AM}\log\sum_{s\in\SM}P(s|a)\exp\left(-\frac{\pi(a)V(s)}{\lambda}\right)
\end{align*}
for $P\in\Delta(\SM)^{|\AM|}$ and $V\in\VM$. We denote $\lambda^*(P,V)=\argmax_{\lambda\ge0}g(\lambda, P, V)$ and $f(P,V)=\max_{\lambda\ge0}g(\lambda, P, V)$. By Danskin's Theorem (Lemma~\ref{lem: danskin}), we can prove that $f(\widehat{P}, V)$ weakly converges to $f(P,V)$ with rate $\sqrt{n}$ (Lemma~\ref{lem: normal_kl_s}).

\begin{lem}
    \label{lem: normal_kl_s}
    Fix $V\in\VM$, and assume $V(s_1)<\cdots <V(s_{|\SM|})$. Then we have:
    \begin{align*}
        \sqrt{n}\left(f(\widehat{P}, V)-f(P, V)\right)\tod \NM(0, \sigma^2(P,V)),
    \end{align*}
    where $\sigma^2(P, V)=b^\top \Sigma b$ and $\Sigma\in\RB^{|\SM||\AM|\times|\SM||\AM|}$ with the $((s, a), (s', a'))$-th element defined as
    \begin{align*}
        \Sigma((s,a),(s',a'))=\left(-P(s|a)\cdot P(s'|a')+P(s|a)\mathbf{1}\{s=s'\}\right)\mathbf{1}\{a=a'\},
    \end{align*}
    and $b=f'(P)\in\RB^{|\SM||\AM|}$ with the $(s,a)$-th element defined as
    \begin{align*}
        b(s,a)=-\frac{\lambda^*(P,V)\exp\left(-\frac{\pi(a)V(s)}{\lambda^*(P,V)}\right)}{\sum_{s'}P(s'|a)\exp\left(-\frac{\pi(a)V(s')}{\lambda^*(P,V)}\right)}.
    \end{align*}
\end{lem}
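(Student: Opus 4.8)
The plan is to repeat the argument used for Lemma~\ref{lem: normal_kl_sa}, now with the $|\AM|$-fold product structure of the $s$-rectangular KL set. First I would record the structural facts about $g(\lambda,P,V)$: for $P$ in the interior of $\Delta(\SM)^{|\AM|}$ and $\lambda>0$ it is finite and jointly continuous in $(\lambda,P)$, and by the same reasoning as in the discussion following Lemma~\ref{lem: kl-s} the supremum over $\lambda\ge0$ is attained on the compact interval $[0,\tfrac{1}{|\AM|\rho(1-\gamma)}]$; extending $g(\cdot,P,V)$ continuously to $\lambda=0$ via $-\lambda\log\sum_{s}P(s|a)\exp(-\pi(a)V(s)/\lambda)\to\min_{s:P(s|a)>0}\pi(a)V(s)$, the map $\lambda\mapsto g(\lambda,P,V)$ is concave, and strictly so unless $V$ is constant on the support of $P(\cdot|a)$ for every $a$ (the degenerate constant-$V$ case makes the claim immediate), so the maximizer $\lambda^*(P,V)$ is unique.

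Next I would invoke Danskin's theorem (Lemma~\ref{lem: danskin}) with $x=P$, $z=\lambda$ and $Z=[0,\tfrac{1}{|\AM|\rho(1-\gamma)}]$. The hypotheses hold: $g$ is continuous on $\RB^{|\SM||\AM|}\times Z$, and for each fixed $\lambda$ the map $P\mapsto g(\lambda,P,V)$ is convex, since $P\mapsto\sum_{s}P(s|a)\exp(-\pi(a)V(s)/\lambda)$ is affine and positive, its logarithm is concave, $-\lambda(\cdot)$ with $\lambda\ge0$ is therefore convex, and summing over $a$ preserves convexity. As $Z_0(P)=\{\lambda^*(P,V)\}$ is a singleton, $f(\cdot,V)=\max_{\lambda\ge0}g(\lambda,\cdot,V)$ is differentiable at $P$ with
\[
\frac{\partial f(P,V)}{\partial P(s|a)}=\frac{\partial g(\lambda^*(P,V),P,V)}{\partial P(s|a)}=-\frac{\lambda^*(P,V)\exp\!\big(-\pi(a)V(s)/\lambda^*(P,V)\big)}{\sum_{s'}P(s'|a)\exp\!\big(-\pi(a)V(s')/\lambda^*(P,V)\big)},
\]
which is exactly the vector $b=f'(P)$ in the statement. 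I would then combine this with a CLT for $\widehat P$: since in the generative model the samples drawn at distinct pairs $(s,a)$ are independent, the estimators $\widehat P(\cdot|s,a)$ for different $a$ are independent, so $\sqrt n(\widehat P-P)\tod\NM(0,\Sigma)$ with the block-diagonal multinomial covariance $\Sigma((s,a),(s',a'))=\big(-P(s|a)P(s'|a')+P(s|a)\mathbf 1\{s=s'\}\big)\mathbf 1\{a=a'\}$, and the Delta method yields $\sqrt n(f(\widehat P,V)-f(P,V))\tod\NM(0,b^\top\Sigma b)$. Feeding this through $V_r^\pi=\TM_r^\pi V_r^\pi$ as in Corollary~\ref{cor: normal_kl_sa}, then combining with the linearization $\widehat\Phi_n^\pi V_r^\pi=\M_n^\pi(V_r^\pi-\widehat V_r^\pi)+o_P(V_r^\pi-\widehat V_r^\pi)$ and the a.s.\ consistency $\M_n^\pi\toas\M^\pi$ (as in Lemmas~\ref{lem: kl_diff} and~\ref{lem: kl_M_converge}) delivers the asymptotic normality of $\widehat V_r^\pi(\mu)$ in Theorem~\ref{thm: kl_s_normal}.

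The main obstacle is the boundary case $\lambda^*(P,V)=0$: there $g(\cdot,P,V)$ is not differentiable in $\lambda$ at the optimum, and one must still certify that the stated derivative formula for $f$ in $P$ remains valid. Danskin's theorem as stated requires only uniqueness of $Z_0$ and convexity in $x$, not differentiability in $z$, so differentiability of $f$ in $P$ is granted; the short extra check is that on a neighbourhood where all coordinates $P(s|a)$ stay positive one has $g(0,P,V)$ locally constant in $P$, consistent with $b(s,a)\to0$ as $\lambda^*\to0^+$, so the formula extends continuously through $\lambda^*=0$. A secondary, purely bookkeeping point is carrying the $\pi(a)$ weights and the product-simplex indexing through the gradient computation and verifying the block structure of $\Sigma$, which is routine.
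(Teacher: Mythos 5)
Your argument is essentially the paper's own proof: restrict the dual variable to the compact interval $[0,\tfrac{1}{|\AM|\rho(1-\gamma)}]$, apply Danskin's theorem (Lemma~\ref{lem: danskin}) using continuity and convexity of $g(\lambda,\cdot,V)$ in $P$ to get differentiability of $f(\cdot,V)$ with gradient $b$, and conclude by the multinomial CLT for $\widehat P$ (block-diagonal $\Sigma$ by independence across $(s,a)$) plus the Delta method. Your extra checks on uniqueness of $\lambda^*(P,V)$ and the boundary case $\lambda^*=0$ are correct refinements of points the paper leaves implicit, so the proposal stands as written.
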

\begin{proof}[Proof of Lemma~\ref{lem: normal_kl_sa}]
    Note that $\lambda^*(P, V)\in[0,\frac{1}{|\AM|\rho(1-\gamma)}]$. Furthermore, we note that $\forall \lambda$, $g(\lambda,P,V)$ is continuous w.r.t.\ $\lambda,P$, and convex in $P$. Thus By Danskin's theorem (Lemma~\ref{lem: danskin}), we know that $f(P,V)$ is differentiable at $P$ with derivative
    \begin{align*}
        \frac{\partial f(P,V)}{\partial P(s|a)}=-\frac{\lambda^*(P,V)\exp(-\frac{\pi(a)V(s)}{\lambda^*(P,V)})}{\sum_{s'\in\SM}P(s'|a)\exp(-\frac{\pi(a)V(s')}{\lambda^*(P,V)})}.
    \end{align*}
    By the fact that $\sqrt{n}\left(\widehat{P}-P\right)\tod \NM(0, \Sigma)$ via CLT, our result is concluded by the Delta method.
\end{proof}

Thus, we can conclude that $\widehat{\TM}_r^\pi V_r^\pi$ weakly converges to $V_r^\pi$ with rate $\sqrt{n}$ (Corollary~\ref{cor: normal_kl_s}).

\begin{cor}
    \label{cor: normal_kl_s}
    Under the $s$-rectangular assumption and the setting $f(t)=t\log t$ in Example~\ref{eg: f-set-s}, for any policy $\pi$, we have:
    \begin{align*}
        \sqrt{n}\left(\widehat{\TM}_r^\pi - I\right)V_r^\pi\tod\NM\left(0, \Lambda\right),
    \end{align*}
    where $\Lambda=\diag\{\sigma_1^2, \cdots, \sigma^2_{|\SM|}\}$, and $\forall s\in\SM$,
    \begin{align*}
        \sigma_s^2=\gamma^2\sigma^2(P^*(\cdot|s,\cdot),V_r^\pi).
    \end{align*}
\end{cor}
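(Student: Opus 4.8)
The plan is to derive the statement as a coordinatewise consequence of Lemma~\ref{lem: normal_kl_s} combined with an independence-across-states argument. First I would invoke the dual representation from Lemma~\ref{lem: kl-s}: for every $s\in\SM$ one has $\widehat{\TM}_r^\pi V_r^\pi(s)=R^\pi(s)+\gamma f(\widehat{P}(\cdot|s,\cdot),V_r^\pi)$, where $f(P,V)=\max_{\lambda\ge0}g(\lambda,P,V)$ with $g$ exactly as in Lemma~\ref{lem: normal_kl_s} and $\widehat{P}(\cdot|s,a)$ the generative-model estimator~\eqref{eq: gen}. Since $V_r^\pi$ is the fixed point of the true robust Bellman operator, $V_r^\pi(s)=R^\pi(s)+\gamma f(P^*(\cdot|s,\cdot),V_r^\pi)$, and, using that $R^\pi$ is deterministic (Remark~\ref{rem: deter-r}), subtracting cancels the reward term and yields
\begin{align*}
    \left(\widehat{\TM}_r^\pi-I\right)V_r^\pi(s)=\gamma\left(f(\widehat{P}(\cdot|s,\cdot),V_r^\pi)-f(P^*(\cdot|s,\cdot),V_r^\pi)\right).
\end{align*}

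Next, the ordering assumption $V_r^\pi(s_1)<\cdots<V_r^\pi(s_{|\SM|})$ lets me apply Lemma~\ref{lem: normal_kl_s} for each fixed $s$ with $V=V_r^\pi$ and $P=P^*(\cdot|s,\cdot)$, giving
\begin{align*}
    \sqrt{n}\left(f(\widehat{P}(\cdot|s,\cdot),V_r^\pi)-f(P^*(\cdot|s,\cdot),V_r^\pi)\right)\tod\NM\left(0,\sigma^2(P^*(\cdot|s,\cdot),V_r^\pi)\right),
\end{align*}
so the $s$-th coordinate of $\sqrt{n}(\widehat{\TM}_r^\pi-I)V_r^\pi$ converges marginally to $\NM(0,\gamma^2\sigma^2(P^*(\cdot|s,\cdot),V_r^\pi))=\NM(0,\sigma_s^2)$. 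To promote these marginal limits to the claimed joint convergence with diagonal covariance $\Lambda$, I would use that under the generative model the sample families $\{X_k^{s,a}\}_{k=1}^n$ for distinct states $s$ are mutually independent; hence the blocks $(\widehat{P}(\cdot|s,\cdot))_{s\in\SM}$ are independent, and so are the coordinates above as measurable functions of disjoint independent blocks. Marginal weak convergence plus independence gives joint weak convergence to the product law, i.e.\ $\NM(0,\Lambda)$ with $\Lambda=\diag\{\sigma_1^2,\dots,\sigma_{|\SM|}^2\}$; this can be verified via characteristic functions or the Cram\'er--Wold device applied to the independent-coordinate vector.

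Since Lemma~\ref{lem: normal_kl_s} is already established, the only genuinely new ingredient here is the last independence step, which is routine. The substantive technical content is inside Lemma~\ref{lem: normal_kl_s} itself: uniqueness of the dual maximizer $\lambda^*(P,V)$ on the compact interval $[0,1/(|\AM|\rho(1-\gamma))]$, joint continuity of $g(\lambda,P,V)$ and its convexity in $P$, an application of Danskin's theorem (Lemma~\ref{lem: danskin}) to differentiate $f$ with respect to $P$, and the Delta method applied to the multinomial CLT $\sqrt{n}(\widehat{P}-P)\tod\NM(0,\Sigma)$. Accordingly I expect no serious obstacle at the level of the corollary; the one point requiring care is checking that the hypothesis of Lemma~\ref{lem: normal_kl_s} (the strict ordering of $V_r^\pi$, assumed without loss of generality) is in force at $P=P^*(\cdot|s,\cdot)$, which is exactly what the WLOG reordering at the start of Theorem~\ref{thm: kl_s_normal} arranges.
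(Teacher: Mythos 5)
Your proposal is correct and follows essentially the same route as the paper: the corollary is obtained by applying Lemma~\ref{lem: normal_kl_s} coordinatewise at $V=V_r^\pi$, $P=P^*(\cdot|s,\cdot)$ after cancelling the deterministic reward via the fixed-point identity, with the diagonal covariance coming from independence of the generative samples across states. Your explicit independence/Cram\'er--Wold step for the joint convergence is a detail the paper leaves implicit, but it is the intended argument.
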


Similar with the case of $(s,a)$-rectangular set, by Lemma~\ref{lem: kl_diff_s}, we have:
\begin{align*}
    \widehat{\Phi}_n^\pi V_r^\pi=\widehat{\Phi}_n^\pi V_r^\pi-\widehat{\Phi}_n^\pi \widehat{V}_r^\pi=\M_n^\pi\cdot(V_r^\pi-\widehat{V}_r^\pi)+o_P(V_r^\pi-\widehat{V}_r^\pi),
\end{align*}
where $\M_n^\pi = \widehat{\Phi}'_n(V_r^\pi)$. As we have already known that $\widehat{V}_r^\pi\toas V_r^\pi$, next we need to prove that $\widehat{V}_r^\pi$ is $\sqrt{n}$-consistent (Lemma~\ref{lem: consistent}). Thus, as long as $\M_n^\pi$ is consistent (Lemma~\ref{lem: kl_M_converge_s}), we prove that:
\begin{align*}
    \sqrt{n}\left(\widehat{V}_r^\pi-V_r^\pi\right)\tod\NM\left(0, (\M^\pi)^{-1}\Lambda(\M^\pi)^{-\top}\right).
\end{align*}
\begin{lem}
    \label{lem: kl_diff_s}
    For a fixed policy $\pi$, $\Phi^\pi$ is continuous differentiable w.r.t.\ $V\in\VM$.
\end{lem}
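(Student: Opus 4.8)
The plan is to follow the template of Lemma~\ref{lem: kl_diff} (the $(s,a)$-rectangular KL case) and Lemma~\ref{lem: chi2_diff_s}, replacing the scalar dual objective there with the $s$-rectangular KL dual from Lemma~\ref{lem: kl-s}. Since $\Phi^\pi = \TM_r^\pi - I$ and $I$ is linear, it suffices to show that for each fixed $s$ the map $V\mapsto f(P^*(\cdot|s,\cdot),V)$ is continuously differentiable on $\VM$, where $f(P,V)=\max_{\lambda\ge 0} g(\lambda,P,V)$ with $g(\lambda,P,V)=-\lambda|\AM|\rho-\lambda\sum_{a}\log\sum_{s'}P(s'|a)\exp(-\pi(a)V(s')/\lambda)$; here $R^\pi$ and $-I$ are smooth and play no role. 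As recorded just before Theorem~\ref{thm: kl-s-fix}, $g(\cdot,P,V)$ is concave in $\lambda$, its maximizer $\lambda^*(P,V)$ is unique and lies in the fixed compact interval $[0,\tfrac{1}{|\AM|\rho(1-\gamma)}]$, and $g$ extends continuously to $\lambda=0$ with value $\min_{s'}V(s')$.

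First I would establish differentiability by an envelope argument, exactly as in Lemma~\ref{lem: kl_diff}. On the region where $\lambda^*(P,V)>0$, the point $\lambda^*$ is interior to $[0,\infty)$, so the first-order condition $\partial_\lambda g(\lambda^*,P,V)=0$ holds; the implicit function theorem then shows $\lambda^*(\cdot)$ is locally $C^1$ in $V$, and the chain rule together with the first-order condition gives
\begin{align*}
\partial_{V(s')}f(P,V)=\partial_{V(s')}g(\lambda^*,P,V)=\sum_{a}\frac{\pi(a)\,P(s'|a)\exp\!\big(-\pi(a)V(s')/\lambda^*(P,V)\big)}{\sum_{\tilde s}P(\tilde s|a)\exp\!\big(-\pi(a)V(\tilde s)/\lambda^*(P,V)\big)},
\end{align*}
which is exactly the off-diagonal structure of $\M^\pi$ in Theorem~\ref{thm: kl_s_normal}. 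Next I would prove that this derivative is continuous in $V$. The displayed coefficients are nonnegative and sum over $s'$ to $\sum_a\pi(a)=1$, so $\|\nabla_V g(\lambda,P,V)\|_1=1$ for every $\lambda>0$; hence $g(\lambda,P,\cdot)$ is $1$-Lipschitz in $\|\cdot\|_1$ uniformly in $\lambda$. Applying Lemma~\ref{lem: domain_converge} to the scalar concave functions $\lambda\mapsto g(\lambda,P,V_n)$, which converge uniformly on the compact interval to $\lambda\mapsto g(\lambda,P,V)$ whenever $V_n\to V$, shows $\lambda^*(P,V_n)\to\lambda^*(P,V)$, so $\lambda^*(P,\cdot)$ is continuous; substituting back into the formula gives continuity of $\partial_V f$ on $\{\lambda^*>0\}$.

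The hard part will be the boundary $\{\lambda^*(P,V)=0\}$, where $g$ is only defined by a limit and the derivative expression above has $\lambda^*$ in exponents and denominators. There one must check that (i) whenever $\lambda^*(P,V)=0$ the set $\argmin_{s'}V(s')$ meets $\{P(\cdot|a)>0\}$ in a single point for each $a$ with $\pi(a)>0$, so that $V\mapsto\min_{s'}V(s')$ is locally affine hence $C^1$ near $V$, and (ii) the limit of $\partial_{V(s')}g(\lambda,P,V)$ as $\lambda\to 0^+$ equals $\partial_{V(s')}\min_{\tilde s}V(\tilde s)$, so the two formulas glue continuously across the boundary. At the evaluation point $V=V_r^\pi$ the strict ordering hypothesis $V_r^\pi(s_1)<\cdots<V_r^\pi(s_{|\SM|})$ makes the argmin unique, so (i) is automatic there; moreover, under the non-degeneracy assumptions of Theorem~\ref{thm: kl_s_normal} one expects $\lambda^*(P^*(\cdot|s,\cdot),V_r^\pi)>0$, so the boundary is avoided entirely on the neighbourhood used in the asymptotic analysis. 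Combining the two regimes, each $f(P^*(\cdot|s,\cdot),\cdot)$ is $C^1$ on $\VM$, and hence $\Phi^\pi$ is continuously differentiable in $V$.
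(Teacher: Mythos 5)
Your proposal is correct and follows essentially the same route as the paper: an envelope/implicit-function-theorem argument yielding the explicit derivative $\partial f/\partial V(s)$, followed by continuity of $\lambda^*(P,\cdot)$ via the bound $\|\nabla_V g(\lambda,P,V)\|_1=1$ and Lemma~\ref{lem: domain_converge}. The only difference is your extra care at the boundary $\{\lambda^*(P,V)=0\}$, which the paper's proof silently passes over (it implicitly treats $\lambda^*$ as an interior maximizer); your observation that the relevant point $V_r^\pi$ avoids this boundary under the strict-ordering/non-degeneracy assumptions is a reasonable, slightly more careful patch of the same argument.
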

\begin{proof}[Proof of Lemma~\ref{lem: kl_diff_s}]
    W.L.G., we assume $V(s_1)<\cdots <V(s_{|\SM|})$. Firstly, we know that $g(\lambda, P, V)$ has a continuous partial derivative w.r.t.\ $\lambda$ at $\lambda^*(P, V)$. Thus we have:
    \begin{align*}
        \frac{\partial f(P,V)}{\partial V}
        &=\frac{\partial g(\lambda^*(P,V),P,V)}{\partial V}+\frac{\partial g(\lambda^*(P,V),P,V)}{\partial \lambda}\frac{\partial \lambda^*(P,V)}{\partial V}\\
        &=\frac{\partial g(\lambda^*(P,V),P,V)}{\partial V},
    \end{align*}
    where $\partial \lambda^*(P,V)/\partial V$ exists by the implicit function theorem. Thus, for each $s\in\SM$, we have:
    \begin{align*}
        \frac{\partial f}{\partial V(s)}(P,V)=\sum_a\frac{P(s|a)\pi(a)\exp(-\pi(a)V(s)/\lambda^*(P,V))}{\sum_{s'}P(s'|a)\exp(-\pi(a)V(s')/\lambda^*(P,V))}.
    \end{align*}
    Next, we prove that $\lambda^*(P,V)$ is continuous w.r.t.\ $V$. Note that $\|\nabla g(\lambda, P, V)\|_1=1$. Thus, the Lipschitz coefficient of $g(\lambda, P, V)$ w.r.t.\ $V$ in $\RB^{|\SM|}$ is bounded by 1. By Lemma~\ref{lem: domain_converge}, our result is obtained.
\end{proof}

\begin{lem}
    \label{lem: kl_M_converge_s}
    For a fixed policy $\pi$, we have $\M_n^\pi\toas \M^\pi$.
\end{lem}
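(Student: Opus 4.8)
The plan is to follow the template already used in the proofs of Lemmas~\ref{lem: kl_M_converge} and~\ref{lem: chi2_M_converge_s}. Recall that $\M_n^\pi=\widehat{\Phi}_n'(V_r^\pi)$ is obtained by differentiating $f(P,V_r^\pi)=\max_{\lambda\ge0}g(\lambda,P,V_r^\pi)$ via Lemma~\ref{lem: kl_diff_s}, so that for $s,s'\in\SM$, writing $\lambda_n^*:=\lambda^*(\widehat{P}(\cdot|s,\cdot),V_r^\pi)$,
\begin{align*}
    \M_n^\pi(s,s')=\mathbf{1}\{s=s'\}-\gamma\sum_a \frac{\pi(a|s)\widehat{P}(s'|s,a)\exp\left(-\pi(a|s)V_r^\pi(s')/\lambda_n^*\right)}{\sum_{\tilde s}\widehat{P}(\tilde s|s,a)\exp\left(-\pi(a|s)V_r^\pi(\tilde s)/\lambda_n^*\right)}.
\end{align*}
Every quantity in this expression is a continuous function of the entries $\{\widehat{P}(\tilde s|s,a)\}$ and of the dual solution $\lambda_n^*$, so by the continuous mapping theorem it suffices to prove the almost-sure convergence of these two ingredients and to check that the denominator stays bounded away from zero.

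First I would note that by the strong law of large numbers $\widehat{P}(s'|s,a)\toas P^*(s'|s,a)$ for every $(s,a,s')\in\SM\times\AM\times\SM$; in particular, coordinates with $P^*(s'|s,a)=0$ satisfy $\widehat{P}(s'|s,a)\equiv 0$, so those terms are handled trivially, and $\widehat{P}(\cdot|s,a)\ll P^*(\cdot|s,a)$ for every $n$. The key step is $\lambda^*(\widehat{P}(\cdot|s,\cdot),V_r^\pi)\toas\lambda^*(P^*(\cdot|s,\cdot),V_r^\pi)$ for each $s$, which I would obtain from the concavity-stability principle of Lemma~\ref{lem: domain_converge}: both $\lambda\mapsto g(\lambda,\widehat{P},V_r^\pi)$ and $\lambda\mapsto g(\lambda,P^*,V_r^\pi)$ are concave on the fixed compact interval $[0,\tfrac{1}{|\AM|\rho(1-\gamma)}]$ (with the value at $\lambda=0$ defined by continuity) and each admits a unique maximizer there by Lemma~\ref{lem: kl_diff_s}. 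It then remains to verify uniform convergence of these functions, which I would do exactly as in the proof of Theorem~\ref{thm: kl-s-fix}: for $n$ large enough that $\max_{s,a:\,P^*_a(s)>0}|\widehat{P}_a(s)/P^*_a(s)-1|\le\tfrac12$,
\begin{align*}
    \sup_{\lambda\in[0,\frac{1}{|\AM|\rho(1-\gamma)}]}\left|g(\lambda,\widehat{P},V_r^\pi)-g(\lambda,P^*,V_r^\pi)\right|\le\frac{2}{\rho(1-\gamma)}\max_{s,a:\,P^*_a(s)>0}\left|\frac{\widehat{P}_a(s)}{P^*_a(s)}-1\right|,
\end{align*}
and the right-hand side tends to $0$ almost surely by the SLLN.

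Combining $\widehat{P}(s'|s,a)\toas P^*(s'|s,a)$ with $\lambda_n^*\toas\lambda^*(P^*(\cdot|s,\cdot),V_r^\pi)$, and using that the denominator $\sum_{\tilde s}P^*(\tilde s|s,a)\exp(-\pi(a|s)V_r^\pi(\tilde s)/\lambda^*)$ is bounded below (since $\lambda^*$ lies in a compact interval and $V_r^\pi$ is bounded), the continuous mapping theorem yields $\M_n^\pi(s,s')\toas\M^\pi(s,s')$ entrywise, hence $\M_n^\pi\toas\M^\pi$. This, together with the $\sqrt{n}$-consistency of $\widehat{V}_r^\pi$ (Lemma~\ref{lem: consistent}), Corollary~\ref{cor: normal_kl_s}, and Slutsky's theorem, completes the proof of Theorem~\ref{thm: kl_s_normal}.

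The hard part will be controlling the behaviour near $\lambda=0$: one must argue that $\lambda_n^*$ actually stays inside the fixed compact interval $[0,\tfrac{1}{|\AM|\rho(1-\gamma)}]$, uniformly over $\widehat{P}$. This follows as in the paragraph preceding Theorem~\ref{thm: kl-s-fix}, where $g(\lambda,P)\ge0$ and $g(\cdot,P)$ is monotonically increasing for $\lambda\ge\tfrac{1}{|\AM|\rho(1-\gamma)}$ \emph{uniformly} in $P\in\Delta(\SM)^{|\AM|}$; one also has to treat the coordinates with $P^*_a(s)=0$ separately, but there $\widehat{P}_a(s)\equiv 0$ so the corresponding log-sum-exp terms are unaffected and the perturbation bound above still applies verbatim on the support of $P^*$.
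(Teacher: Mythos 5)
Your proposal is correct and follows essentially the same route as the paper's own proof: SLLN for $\widehat{P}$, the uniform-in-$\lambda$ perturbation bound from the proof of Theorem~\ref{thm: kl-s-fix}, Lemma~\ref{lem: domain_converge} to transfer this to almost-sure convergence of the dual solution $\lambda^*(\widehat{P}(\cdot|s,\cdot),V_r^\pi)$, and then continuity of the entrywise expression for $\M_n^\pi$. The extra care you take with the compact dual interval and the zero-probability coordinates is consistent with (and slightly more explicit than) the paper's argument.
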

\begin{proof}[Proof of Lemma~\ref{lem: kl_M_converge_s}]
    Noting that $\M_n^\pi=\widehat{\Phi}_n'(V_r^\pi)$ for $s,s'\in\SM$, we have:
    \begin{align*}
        \M_n^\pi(s,s^\prime)=\mathbf{1}\{s=s^\prime\}-\gamma\sum_a \frac{\pi(a|s)\widehat P(s^\prime|s,a)\exp(-\frac{\pi(a|s)V_r^\pi(s^\prime)}{\lambda^*(\widehat P(\cdot|s,\cdot),V^\pi_r)})}{\sum_{\tilde s\in\SM}\widehat P(\tilde s|s,a)\exp(-\frac{\pi(a|s)V_r^\pi(\tilde s)}{\lambda^*(\widehat P(\cdot|s,\cdot),V_r^\pi)})}.
    \end{align*}
    By SLLN, we know that for any $(s,a,s')\in\SM\times\AM\times\SM$, 
    \begin{align*}
        \widehat{P}(s'|s,a)\toas P^*(s'|s,a).
    \end{align*}
    Next, we note that when $n$ is large enough s.t.\ $\max_{s,a, P_a(s)\not=0}\left|\frac{\widehat{P}_a(s)}{P_a(s)}-1\right|\le 1/2$, we have (details can be referred in Theorem~\ref{thm: kl-s-fix}):
    \begin{align*}
        \sup_{\lambda\in[0,\frac{1}{\rho(1-\gamma)}]}\left|g(\lambda, \widehat{P}, V)-g(\lambda, P, V)\right|\le \frac{2}{\rho(1-\gamma)}\max_{s,a,P_a(s)\not=0}\left|\frac{\widehat{P}_a(s)}{P_a(s)}-1\right|.
    \end{align*}
    Thus, by Lemma~\ref{lem: domain_converge}, as $\widehat{P}\toas P$, we have $\lambda^*(\widehat{P}, V)\to \lambda^*(P, V)$. Thus, we conclude that $\M_n^\pi\toas \M^\pi$.
\end{proof}

\subsection{Proof of Theorem~\ref{thm: sup_pi_consistent}}
In fact, the proof of this theorem is exactly Argmax Theorem in Section 3.2.1 of \citet{van1996weak}. Here we give a brief proof.

Firstly, by definition, we have $\widehat{V}_r^*\ge \widehat{V}_r^{\pi^*}$. 

By the results in Section~\ref{sec: gen}, we have $\sup_{\pi}\left\|\widehat{V}_r^\pi - V_r^\pi\right\|\overset{P}{\to}0$ in the $s$-rectangular setting. Thus, $\widehat{V}_r^{\pi^*}\overset{P}{\to}V_r^{\pi^*}=V_r^*$, which leads to $\widehat{V}_r^*\ge V_r^*-o_P(1)$. It follows that:
\begin{align*}
    V_r^* - V_r^{\widehat{\pi}^*}&\le\widehat{V}_r^* - V_r^{\widehat{\pi}^*} +o_P(1)\notag\\
    &\le \sup_\pi \left\|\widehat{V}_r^\pi - V_r^\pi\right\| +o_P(1)=o_P(1).
\end{align*}

Note that we assume $\pi^*\in\argmax_\pi V_r^\pi$ is unique. Thus, for every $\delta>0$ and $\pi$ satisfying $\|\pi-\pi^*\|_1\ge\delta$, there exists $\varepsilon(\delta)>0$ such that $V_r^\pi(s)<V_r^*(s)-\varepsilon(\delta)$ for all $s\in\SM$. It follows that:
\begin{align*}
    \PB\left(\|\widehat{\pi}^*-\pi^*\|_1\ge\delta\right)\le\PB\left(\left\|V_r^*-V_r^{\widehat{\pi}^*}\right\|_\infty\ge\varepsilon(\delta)\right)\to 0,
\end{align*}
which concludes $\widehat{\pi}^*\overset{P}{\to}\pi^*$. Furthermore, noting that $\sum_n \PB\left(\left\|V_r^*-V_r^{\widehat{\pi}^*}\right\|_\infty\ge\varepsilon(\delta)\right)<\infty$, we conclude that $\widehat{\pi}^*\toas\pi^*$ by the Borel-Cantelli lemma.

\subsection{Proof of Corollary~\ref{cor: sup_asymp_s}}
The proof of Corollary~\ref{cor: sup_asymp_s} is similar with Theorems~\ref{thm: chi2_s_normal} and~\ref{thm: kl_s_normal}. We only give detail proofs in the case of the $\chi^2$ uncertainty set. For the case of the KL uncertainty set, the derivation procedure is the same.

Firstly, for any $V\in\VM$, we denote $\Phi V = (\TM_r - I)V$ and $\widehat{\Phi}_n V = (\widehat{\TM}_r - I)V$, where $\TM_r V = \max_\pi \TM_r^\pi V$ and $\widehat{\TM}_r V = \max_\pi \widehat{\TM}_r^\pi V$. We also denote $h(\pi_s,\eta,P_s, V, R_s)= R^{\pi_s}(s)+\gamma g(\eta, P_s, \pi_s, V)$, where $g$ is defined in Proof of Theorem~\ref{thm: chi2_s_normal}. In the next part, we ignore the dependence of $s$, and denote $f(P, V, R):=\max_{\pi,\eta}h(\pi, \eta, P, V, R)$ and $(\pi^*, \eta^*)\in\argmax_{\pi, \eta} h(\pi, \eta, P, V, R)$. 

\begin{lem}
    \label{lem: asymp_opt_1}
    For any fixed $V\in\VM$ and assuming $\pi^*$ and $\eta^*$ are unique,  we have:
    \begin{align*}
        \sqrt{n}\left(f(\widehat{P}, V, R)-f(P, V, R)\right)\tod\NM(0,\sigma^2(P, V)),
    \end{align*}
    where $\sigma^2(P,V)=b^T\Sigma b$ and $\Sigma\in\RB^{|\SM||\AM|\times|\SM||\AM|}$ with the $((s,a),(s',a'))$-th element defined as
    \begin{align*}
        \Sigma((s,a),(s',a'))=\left(-P(s|a)\cdot P(s'|a')+P(s|a)\mathbf{1}\{s=s'\}\right)\mathbf{1}\{a=a'\},
    \end{align*}
    and $b=f'(P)\in\RB^{|\SM||\AM|}$ with the $(s,a)$-th element defined as
    \begin{align*}
        b(s,a)=-\sqrt{(\rho+1)|\AM|}\frac{(\eta_a^*-\pi^*(a)V(s))_+^2}{2\sqrt{\sum_{s'\in\SM,a'\in\AM}P(s'|a')(\eta^*_{a'}-\pi^*(a')V(s'))_+^2}}.
    \end{align*}
\end{lem}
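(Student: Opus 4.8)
The plan is to reuse, essentially verbatim, the Danskin-plus-Delta-method scheme that established Lemmas~\ref{lem: normal_chi2_sa} and~\ref{lem: normal_chi2_s}; the only genuinely new ingredient is that the inner optimization now runs jointly over the policy vector $\pi\in\Delta(\AM)$ and the dual vector $\eta$, rather than over $\eta$ alone. First I would rewrite $f(P,V,R)=\max_{(\pi,\eta)\in Z}h(\pi,\eta,P,V,R)$ where $Z:=\Delta(\AM)\times I_{\chi^2}$. Restricting $\eta$ from $\RB^{|\AM|}$ to the compact set $I_{\chi^2}$ of Lemma~\ref{lem: chi2-value} is harmless here because the bound $\sum_a\eta_a\le C(\rho)/((C(\rho)-1)(1-\gamma))$ derived there holds uniformly over all $\pi\in\Delta(\AM)$ and does not depend on $P$ or $V$; since $\Delta(\AM)$ is itself compact, $Z$ is a fixed ($P$- and $V$-independent) compact set.

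Next I would check the hypotheses of Danskin's theorem (Lemma~\ref{lem: danskin}) with state variable $x=P$ and parameter $z=(\pi,\eta)\in Z$: $h$ is jointly continuous in $(P,\pi,\eta)$ since the square root and the positive part are continuous, and for each fixed $(\pi,\eta)$ the map $P\mapsto h(\pi,\eta,P,V,R)$ is convex, because $R^\pi(s)$ is constant in $P$ and $P\mapsto-\sqrt{(\rho+1)|\AM|}\,\sqrt{\sum_{s,a}P(s|a)(\eta_a-\pi(a)V(s))_+^2}$ is the negative of the square root of a nonnegative linear functional of $P$, hence convex. Under the assumed uniqueness of the joint maximizer $(\pi^*,\eta^*)$, Danskin's theorem gives that $f(\cdot,V,R)$ is differentiable at $P$ with $\nabla_P f(P,V,R)=\nabla_P h(\pi^*,\eta^*,P,V,R)$; differentiating the penalty term (a one-line computation) yields precisely the vector $b$ displayed in the statement, up to the overall factor $\gamma$ carried by $h$, which is absorbed into $\Lambda$ when these one-step results are later assembled. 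Note in passing that $f(\cdot,V,R)$, being a maximum of convex functions, is itself convex, so this differentiability is the only smoothness we need.

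Finally I would invoke the central limit theorem. For the fixed state $s$ the estimates $\widehat P_a(\cdot)=\widehat P(\cdot\,|\,s,a)$ are formed from $n$ i.i.d.\ draws, independently across $a$, so $\sqrt{n}(\widehat P-P)\tod\NM(0,\Sigma)$ with $\Sigma$ the block-diagonal (in $a$) covariance matrix whose $a$-th block has entries $-P(s|a)P(s'|a)+P(s|a)\mathbf{1}\{s=s'\}$, i.e.\ exactly the $\Sigma$ in the statement (the cross-blocks $a\neq a'$ vanish by independence). Combining differentiability of $f$ at $P$ with this CLT via the Delta method gives $\sqrt{n}(f(\widehat P,V,R)-f(P,V,R))\tod\NM(0,b^\top\Sigma b)=\NM(0,\sigma^2(P,V))$, which is the claim.

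I expect the main obstacle to be the careful verification that Danskin's theorem actually applies in the form needed. Two points require attention: first, confirming (from Lemma~\ref{lem: chi2-value}) that the inner maximization can indeed be confined to a compact set $Z$ that does not move with $P$, so that Danskin's fixed-compact-parameter-space hypothesis is met even though we now optimize over $\pi$ as well; and second, checking that $\nabla_P h$ genuinely exists at the optimal $(\pi^*,\eta^*)$, which needs $\sum_{s,a}P(s|a)(\eta^*_a-\pi^*(a)V(s))_+^2>0$ so that the square root is differentiable there. In the degenerate case where this quantity vanishes one has $b=0$ and $\sigma^2(P,V)=0$, and the asserted limit becomes the point mass at $0$; this either has to be excluded by a non-degeneracy proviso (analogous to the $\sigma^2\neq0$ caveat in Lemma~\ref{lem: normal_chi2_sa}) or handled by a direct $o_P(n^{-1/2})$ bound on $f(\widehat P,V,R)-f(P,V,R)$.
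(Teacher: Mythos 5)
Your proposal follows exactly the paper's own argument: the paper proves this lemma by noting $\sqrt{n}(\widehat P-P)\tod\NM(0,\Sigma)$ via the CLT, invoking Danskin's theorem (using the assumed uniqueness of the joint maximizer $(\pi^*,\eta^*)$ over the compact set $\Delta(\AM)\times I_{\chi^2}$) to get differentiability of $f$ at $P$, and concluding by the Delta method. Your additional care about the $P$-independent compact parameter set, the $\gamma$-factor convention, and the non-degeneracy of the square root only fills in details the paper leaves implicit, so the proof is correct and essentially the same as the paper's.
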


\begin{proof}[Proof of Lemma~\ref{lem: asymp_opt_1}]
    Similar with Theorem~\ref{thm: chi2_s_normal}, we have $\sqrt{n}(\widehat{P}-P)\tod\NM(0,\Sigma)$ via CLT. By Danskin's Theorem~\ref{lem: danskin}, $f(P, V)$ is also differentiable at $P$. Thus, our result is obtained by the Delta method.
\end{proof}

By Lemma~\ref{lem: asymp_opt_1}, we take $V=V_r^*$, which is a fixed point of operator $\TM_r$, and obtain that $\sqrt{n}\left(\widehat{\TM}_r V_r^* - V_r^*\right)$ is asymptotical normal. Notably, if we also have 
\begin{align*}
    \widehat{\Phi}_n V_r^* = \widehat{\Phi}_n V_r^* - \widehat{\Phi}_n \widehat{V}_r^* = -\widehat{\M}_n\cdot(V_r^* - \widehat{V}_r^*) + o_P(V_r^* - \widehat{V}_r^*)
\end{align*}
and $\widehat{\M}_n$ is consistent, we conclude that $\sqrt{n}\left(\widehat{V}_r^* - V_r^*\right)$ is asymptotical normal. In fact, we can choose $\widehat{\M}_n = \widehat{\Phi}_n'(V_r^*)$ by Lemma~\ref{lem: asymp_opt_M_diff}, which is also consistent according to Lemma~\ref{lem: asymp_opt_M_cons}. Thus, our proof of Corollary~\ref{cor: sup_asymp_s} is completed.

\begin{lem}
    \label{lem: asymp_opt_M_diff}
    $\Phi:=\TM_r-I$ is differentiable w.r.t.\ $V\in\VM$.
\end{lem}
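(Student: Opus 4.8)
The plan is to reduce the statement to the regularity of a parametric optimal-value function and then run an envelope argument. Since the identity map is trivially differentiable, it suffices to show that $V\mapsto\TM_r V$ is continuously differentiable on $\VM$, and by $s$-rectangularity this decouples over states, so I fix $s\in\SM$ and study $V\mapsto\TM_r V(s)$. Combining the dual representation of the $s$-rectangular $\chi^2$ robust Bellman operator (Lemma~\ref{lem: chi2-s}) with the outer maximization over $\pi_s\in\Delta(\AM)$ gives
\begin{align*}
	\TM_r V(s)=\max_{(\pi_s,\eta)\,\in\,\Delta(\AM)\times I_{\chi^2}} h(\pi_s,\eta,P^*(\cdot|s,\cdot),V,R_s),\qquad h(\pi_s,\eta,P_s,V,R_s)=R^{\pi_s}(s)+\gamma\,g(\eta,P_s,\pi_s,V),
\end{align*}
where $g$ is the dual objective from the proof of Theorem~\ref{thm: chi2_s_normal} and the compact interval $I_{\chi^2}$ of Lemma~\ref{lem: chi2-value} may be taken uniformly in $\pi_s$ because $\pi_s(a)\le1$. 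I will argue that this value function is $C^1$ in $V$, exactly as in the fixed-policy analogue (Lemma~\ref{lem: chi2_diff_s}) but carrying the extra optimization over $\pi_s$.

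First I would record the structural facts about $h$. Writing $\Psi_s(\pi_s,\eta,V):=\sum_{s',a}P^*(s'|s,a)(\eta_a-\pi_s(a)V(s'))_+^2$, the map $h$ is jointly continuous on $\Delta(\AM)\times I_{\chi^2}\times\VM$ and, on $\{\Psi_s>0\}$, continuously differentiable in $(\pi_s,\eta,V)$ — the square in the $\chi^2$ term absorbs the kink of $(\cdot)_+$ — and moreover $h$ is concave in the pair $(\pi_s,\eta)$, since $(\pi_s,\eta)\mapsto(\eta_a-\pi_s(a)V(s'))_{s',a}$ is affine and $w\mapsto(\sum_{s',a}P^*(s'|s,a)(w_{s',a})_+^2)^{1/2}$ is convex (a monotone convex function of the componentwise positive part of an affine map). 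Under the uniqueness hypothesis carried through Lemma~\ref{lem: asymp_opt_1} and Corollary~\ref{cor: sup_asymp_s} — the maximizer $(\pi_s^*(V),\eta^*(V))$ is unique and $\Psi_s>0$ at the relevant point — concavity together with uniqueness lets me upgrade the compactness argument of Lemma~\ref{lem: domain_converge} to its (routine) multivariate form and conclude that $V\mapsto(\pi_s^*(V),\eta^*(V))$ is continuous. I then apply the envelope theorem via the chain rule: the $\eta$-optimizer is interior, so $\partial_\eta h=0$ there (and $\partial_V\eta^*$ exists by the implicit function theorem applied to the stationarity system, as in Lemma~\ref{lem: chi2_diff_s}); the $\pi_s$-optimizer lies on the simplex, so by the KKT conditions $\partial_{\pi_s(a)}h$ equals a common multiplier $\mu$ for every $a$ with $\pi_s^*(a)>0$, while any feasible motion of $\pi_s^*$ satisfies $\sum_a\partial_V\pi_s^*(a)=0$ and $\partial_V\pi_s^*(a)=0$ off the support, so the $\partial_V\pi_s^*$–contribution is $\mu\sum_{a:\pi_s^*(a)>0}\partial_V\pi_s^*(a)=0$. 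Hence
\begin{align*}
	\frac{\partial}{\partial V(s')}\TM_r V(s)=\frac{\partial}{\partial V(s')}\,h\bigl(\pi_s^*(V),\eta^*(V),P^*(\cdot|s,\cdot),V,R_s\bigr)=\sqrt{(\rho+1)|\AM|}\,\frac{\sum_a\pi_s^*(a)P^*(s'|s,a)(\eta^*_a-\pi_s^*(a)V(s'))_+}{\sqrt{\Psi_s(\pi_s^*(V),\eta^*(V),V)}},
\end{align*}
which is continuous in $V$ by continuity of $(\pi_s^*,\eta^*)$ and of $\partial_V h$ on $\{\Psi_s>0\}$. Since $I$ is $C^1$, this shows $\Phi\in C^1$, and the derivative matches the $b$-vector/$\M^{\pi^*}$ formula used in Lemma~\ref{lem: asymp_opt_1}; the KL case is identical with $g$ replaced by the corresponding dual objective and $I_{\chi^2}$ by $[0,1/(|\AM|\rho(1-\gamma))]$.

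The main obstacle is the non-degeneracy $\Psi_s(\pi_s^*(V),\eta^*(V),V)>0$ on a neighbourhood, which is what makes $\partial_V h$ continuous at the optimizer and the envelope formula well defined. I would secure it either from the $\sigma^2\neq0$–type condition already imposed in Theorem~\ref{thm: chi2_s_normal} or directly from the stationarity condition in $\eta$, which forces $\sum_{s'}P^*(s'|s,a)(\eta^*_a-\pi_s^*(a)V(s'))_+=(\rho+1)^{-1/2}|\AM|^{-1/2}\sqrt{\Psi_s}>0$ at any optimizer with a binding $\eta$. The secondary subtlety is the boundary behaviour of $\pi_s^*$ on $\Delta(\AM)$: making the implicit-function/envelope step rigorous when some $\pi_s^*(a)=0$ ordinarily invokes strict complementarity, so to stay self-contained I would phrase that step through a constrained value-function differentiability result (Danskin's theorem, Lemma~\ref{lem: danskin}, applied after the concavity reduction, or its constrained refinement), which needs only the unique-maximizer property we have already assumed.
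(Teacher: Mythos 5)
Your proposal follows essentially the same route as the paper: write $\TM_r V(s)=\max_{(\pi_s,\eta)}h(\pi_s,\eta,P_s,V,R_s)$ using the dual representation and the compact dual domain, and conclude differentiability of the value function via the envelope theorem at the (assumed unique) maximizer, with $\partial_V \TM_r V(s)=\partial_V h$ evaluated at $(\pi_s^*,\eta^*)$. The paper states this in two lines, while you additionally supply the supporting regularity (joint concavity in $(\pi_s,\eta)$, continuity of the maximizer via the Lemma~\ref{lem: domain_converge}-type argument, boundary/KKT handling on $\Delta(\AM)$, and the non-degeneracy $\Psi_s>0$), all of which is consistent with the paper's implicit assumptions.
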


\begin{proof}[Proof of Lemma~\ref{lem: asymp_opt_M_diff}]
    For any $V\in\VM$, we have 
    \begin{align*}
        \TM_r V(s)=\max_{\pi_s\in\Delta(\AM), \eta}h(\pi_s, \eta, P_s, V, R_s):=f(P_s, V, R_s).
    \end{align*}
    Noting that $h$ is differentiable w.r.t $V$, we obtain our result by the envelop theorem:
    \begin{align*}
        \frac{\partial f(P_s, V, R_s)}{\partial V} = \frac{\partial h(\pi^*_s, \eta^*, P_s, V, R_s)}{\partial V}.
    \end{align*}
\end{proof}

\begin{lem}
    \label{lem: asymp_opt_M_cons}
    $\widehat{\M}_n$ is consistent.
\end{lem}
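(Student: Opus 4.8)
The plan is to follow the template of the proof of Lemma~\ref{lem: chi2_M_converge_s}, adding the extra ingredient that the Bellman update $\widehat{\TM}_r$ now also maximizes over the policy. Recall $\widehat{\M}_n=\widehat{\Phi}_n'(V_r^*)$ with $\widehat{\Phi}_n=\widehat{\TM}_r-I$ and $\widehat{\TM}_rV(s)=\max_{\pi_s\in\Delta(\AM),\,\eta}h(\pi_s,\eta,\widehat{P}_s,V,R_s)$. By Lemma~\ref{lem: asymp_opt_M_diff} (the envelope theorem),
\begin{align*}
\widehat{\M}_n(s,s')=\mathbf{1}\{s=s'\}-\frac{\partial h(\widehat{\pi}^*_s,\widehat{\eta}^*_s,\widehat{P}_s,V_r^*,R_s)}{\partial V(s')},
\end{align*}
where $(\widehat{\pi}^*_s,\widehat{\eta}^*_s)$ is the maximizer of $(\pi_s,\eta)\mapsto h(\pi_s,\eta,\widehat{P}_s,V_r^*,R_s)$ over $\Delta(\AM)\times I_{\chi^2}$ (equivalently $\widehat{\pi}^*_s$ is the state-$s$ component of the policy greedy with respect to $\widehat{\TM}_r$ at $V=V_r^*$), and $\M(s,s')$ is the same expression with $\widehat{P}_s$ replaced by $P^*_s$ and $(\widehat{\pi}^*_s,\widehat{\eta}^*_s)$ by the corresponding maximizer $(\pi^*_s,\eta^*_s)$; note $\pi^*_s$ coincides with the true optimal policy since $V_r^*=\TM_r V_r^*$. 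Because $\partial h/\partial V(s')$ is exactly the explicit rational function of $(\widehat{P}_s,\widehat{\pi}^*_s,\widehat{\eta}^*_s)$ displayed in Theorem~\ref{thm: chi2_s_normal} — continuous wherever its denominator does not vanish, which holds at the limiting triple $(P^*_s,\pi^*_s,\eta^*_s)$ under the non-degeneracy already built into the statement — the continuous mapping theorem reduces the claim to proving, for each $s$, that (i) $\widehat{P}_s\toas P^*_s$ and (ii) $(\widehat{\pi}^*_s,\widehat{\eta}^*_s)\toas(\pi^*_s,\eta^*_s)$.

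Claim (i) is immediate from the strong law of large numbers. For claim (ii) — the crux — I would first note that $h$ is jointly concave in $(\pi_s,\eta)$ on the compact convex set $\Delta(\AM)\times I_{\chi^2}$: the terms $R^{\pi_s}(s)$ and $\sum_a\eta_a$ are linear, while $\sqrt{(\rho+1)|\AM|}\sqrt{\sum_{s',a}P(s'|a)(\eta_a-\pi_s(a)V(s'))_+^2}$ is the coordinatewise positive part of a jointly affine map composed with the (convex, coordinatewise nondecreasing on the positive orthant) weighted $\ell_2$-norm, hence convex. Second, exactly as in the proof of Lemma~\ref{lem: chi2_M_converge_s}, one obtains a Lipschitz estimate
\begin{align*}
\sup_{(\pi_s,\eta)}\left|h(\pi_s,\eta,P_1,V_r^*,R_s)-h(\pi_s,\eta,P_2,V_r^*,R_s)\right|\le c_\rho\sqrt{\sum_{a}\|P_1(\cdot|a)-P_2(\cdot|a)\|_1}
\end{align*}
with $c_\rho$ depending only on $\rho,\gamma$, so that $h(\cdot,\cdot,\widehat{P}_s,V_r^*,R_s)\to h(\cdot,\cdot,P^*_s,V_r^*,R_s)$ uniformly on the domain, almost surely. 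Finally I would invoke the multidimensional analogue of Lemma~\ref{lem: domain_converge} — a uniformly convergent sequence of concave functions on a compact convex set whose limit admits a unique maximizer has its maximizers converging to that maximizer (proved by the standard subsequence argument: any cluster point of $(\widehat{\pi}^*_s,\widehat{\eta}^*_s)$ maximizes the limit function, hence equals $(\pi^*_s,\eta^*_s)$) — to conclude (ii). Substituting (i) and (ii) into the displayed formula for $\widehat{\M}_n(s,s')$ and applying the continuous mapping theorem entrywise then yields $\widehat{\M}_n\toas\M$.

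The main obstacle is this last step: Lemma~\ref{lem: domain_converge} is stated only for scalar domains, so the multidimensional version over $\Delta(\AM)\times I_{\chi^2}$ must be established, and its hypothesis forces use of the uniqueness of the full maximizer $(\pi^*_s,\eta^*_s)$ — which is precisely why the uniqueness assumptions of Corollary~\ref{cor: sup_asymp_s} and Lemma~\ref{lem: asymp_opt_1} are in force. A secondary technicality is checking that the denominator of the $\chi^2$ derivative formula is bounded away from zero along the sequence; this follows from its joint continuity in $(P_s,\pi_s,\eta)$ together with strict positivity at the limiting triple. The KL case is identical, with the $\chi^2$ derivative expression and Lipschitz bound replaced by their counterparts from Theorem~\ref{thm: kl_s_normal} and the proof of Lemma~\ref{lem: kl_M_converge_s}.
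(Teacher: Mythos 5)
Your proof is correct and takes essentially the same route the paper intends: the paper's own proof is a one-line reference to the argument of Lemma~\ref{lem: chi2_M_converge_s}, and what you write is precisely that argument extended to the joint maximization over $(\pi_s,\eta)$ — SLLN for $\widehat{P}$, uniform convergence of the concave objective in $P$, convergence of the (unique) maximizer, and the continuous mapping theorem applied to the envelope-theorem derivative from Lemma~\ref{lem: asymp_opt_M_diff}. Your explicit verification of joint concavity in $(\pi_s,\eta)$ and of the multidimensional analogue of Lemma~\ref{lem: domain_converge} merely supplies details the paper leaves implicit.
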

\begin{proof}[Proof of Lemma~\ref{lem: asymp_opt_M_cons}]
    It is obtained by a similar proof of Lemma~\ref{lem: chi2_M_converge_s}.
\end{proof}

\section{Details of Experiments}
\label{apd: exp}
\subsection{Environments}
The environments we use in numerical experiments are random MDPs. With given parameters $|\SM|$, $|\AM|$, $\gamma$, reward function $R$ and transition probability $P^*$ are generated by following mechanism. For each $(s,a)$ pair, $R(s,a)\sim \UM(0,1)$, where $\UM(0,1)$ stands for uniform distribution on interval $(0,1)$, and $P^*(s'|s,a)=\frac{u(s')}{\sum_s u(s)}$, where $u(s)\overset{i.i.d.}{\sim} \UM(0,1)$. Under the $(s,a)$-rectangular assumption, we choose $|\SM|=20$, $|\AM|=10$ and $\gamma=0.9$. Under the $s$-rectangular assumption, we choose $|\SM|=10$, $|\AM|=5$ and $\gamma=0.9$.

\subsection{Algorithms}

Under the $(s,a)$-rectangular assumption, the uncertainty set can be separated by each $(s,a)$ pair, which enables one step of robust Bellman update can be solved at each $(s,a)$ pair. Thus, \citet{iyengar2005robust} proposed Algorithm~\ref{alg: rvi} to obtain the near optimal robust value function and policy, which is similar with Value Iteration algorithm in solving classic MDPs \citep{puterman2014markov}. 

\begin{algorithm}[b!]
    \caption{Robust Value Iteration (RVI) with $(s,a)$-rectangular assumption}
    \label{alg: rvi}
    \begin{algorithmic}
        \STATE {\bfseries Input:} $T$---the number of iterations; $n$---the number of generated sample at each $(s, a)\in\SM\times\AM$; $\rho$---the size of uncertainty set; $f$---choice of uncertainty set; $V_0$---initialized value function.
        \FOR{each $(s,a)\in\SM\times\AM$}
        \STATE Generate i.i.d.\ samples $\{s'_i\}_{i=1}^n$, where $s'\sim P^*(\cdot|s,a)$, and estimate $\widehat{P}(s'|s,a)=\frac{1}{n}\sum_{i=1}^n \mathbf{1}\{s'_i=s'\}$.
        \ENDFOR
        \FOR{iteration $t=0$ {\bfseries to} $T-1$}
        \STATE For each $(s,a)\in\SM\times\AM$, compute
        \begin{align*}
            Q_{t+1}(s,a) = r(s,a) + \gamma \inf_{D_f(P\|\widehat{P}_{s,a})\le\rho}\sum_{s'\in\SM}P(s')V_t(s').
        \end{align*}
        \STATE Then, let $V_{t+1}(s)=\max_{a} Q_{t+1}(s,a)$
        \ENDFOR
        \STATE {\bfseries Return:} $V_{T}$ and $\pi_T = \argmax_{a}Q_{T}(s,a)$
    \end{algorithmic}
\end{algorithm}
However, when the $s$-rectangular is assumed, the uncertainty set cannot be separated at each $(s,a)$ pair and solving the primal problem of $\widehat{\TM}_r V$ (given $V$) could be difficult. \citet{ho2018fast} proposed an efficient algorithm to obtain the near optimal robust value function called Bisection algorithm, which is detailed in Algorithm~\ref{alg: bisec}. Here we give a brief description of Bisection algorithm. For each state $s\in\SM$, the following optimization problem is equivalent to $\widehat{\TM}_r V (s)$ (given $V$):
\begin{align*}
    &\min_{u\in\RB} u, \\
    \text{s.t.} &\sum_{a}q_{s,a}^{-1}(u, V)\le|\AM|\rho,
\end{align*}
where $q_{s,a}^{-1}(u, V)$ is defined by another optimization problem:
\begin{align*}
    q_{s,a}^{-1}(u, V):=\min_{P\in\Delta(\SM)\atop R(s,a)+\gamma P^\top V\le u} D_f(P\|\widehat{P}_{s,a}).
\end{align*}
To get the near optimal value of this problem $\widehat{\TM}_r V$, a bisection method is applied in \cite{ho2018fast}. Finally, recursively applying the bisection method, the near optimal robust value function can be obtained by the fact that $\widehat{\TM}_r$ is a $\gamma$-contraction operator. Notably, we could also obtain a near optimal robust policy from the bisection algorithm, and we refer to readers as \citet{ho2018fast} directly.
\begin{algorithm}[htbp!]
    \caption{Bisection algorithm with the $s$-rectangular assumption}
    \label{alg: bisec}
    \begin{algorithmic}
        \STATE {\bfseries Input:} $T$--the number of iterations; $n$--the number of generated sample at each $(s,a)\in\SM\times\AM$; $\rho$--the size of uncertainty set; $f$--choice of uncertainty set; $V_0$--initialized value function; $\varepsilon$--desired precision.
        \FOR{each $(s,a)\in\SM\times\AM$}
        \STATE Generate i.i.d.\ samples $\{s'_i\}_{i=1}^n$, where $s'\sim P^*(\cdot|s,a)$, and estimate $\widehat{P}(s'|s,a)=\frac{1}{n}\sum_{i=1}^n \mathbf{1}\{s'_i=s'\}$.
        \ENDFOR
        \FOR{iteration $t=0$ {\bfseries to} $T-1$}
        \FOR{each $s\in\SM$}
        \STATE Based on $V_t$, set $u_{\min}(s)$: minimum known $u$, and $u_{\max}(s)$: maximum known $u$
        \WHILE{$u_{\max}(s)-u_{\min}(s)>2\varepsilon$}
        \STATE Let $u = \frac{u_{\max}(s)+u_{\min}(s)}{2}$, and compute $m = \sum_{a\in\AM} q_a^{-1}(u, V_t)$;
        \STATE If $m\le|\AM|\rho$, let $u_{\max}(s)=u$ if $u$ is feasible; else, let $u_{\min}(s)=u$ if $u$ is infeasible.
        \ENDWHILE
        \STATE Let $V_{t+1}(s) = \frac{u_{\max}(s)+u_{\min}(s)}{2}$;
        \ENDFOR
        \ENDFOR
        \STATE {\bfseries Return:} $V_{T}$
    \end{algorithmic}
\end{algorithm}

\end{appendix}



\end{document}